\newtheorem{thm}{Theorem}
\newtheorem{lm}{Lemma}
\newtheorem{claim}{Claim}
\newtheorem{defn}{Definition}
\newtheorem{prop}{Proposition}
\begin{document}
\title{Convergence rate of stochastic $k$-means}

\author{ Cheng Tang\thanks{Department of Computer Science, George Washington University}\\ 
                       \texttt{tangch@gwu.edu} 
                        \and Claire Monteleoni\footnotemark[1]\\
                        \texttt{cmontel@gwu.edu}
                        }

 \maketitle

%

%

\begin{abstract}
We analyze online \cite{BottouBengio} and mini-batch \cite{Sculley} $k$-means variants. Both scale up the widely used Lloyd's algorithm via stochastic approximation, and have become popular for large-scale clustering and unsupervised feature learning.
We show, for the first time, that they have global convergence towards ``local optima'' at rate $O(\frac{1}{t})$ under general conditions.  
In addition, we show if the dataset is clusterable, with suitable initialization, mini-batch $k$-means converges to an optimal $k$-means solution at rate $O(\frac{1}{t})$ with high probability. 
The $k$-means objective is non-convex and non-differentiable: we exploit ideas from non-convex gradient-based optimization by providing a novel characterization of the trajectory of $k$-means algorithm on its solution space, and circumvent its non-differentiability via geometric insights about $k$-means update.
\end{abstract}

\section{Introduction}
\label{sec:intro}
Stochastic $k$-means, including online \cite{BottouBengio} and mini-batch $k$-means \cite{Sculley},
has gained increasing attention for large-scale clustering and is included in widely used machine learning packages, such as \texttt{Sofia-ML} \cite{Sculley} and \texttt{scikit-learn} \cite{scikit-learn}.
Figure \ref{fig:demo} demonstrates the efficiency of stochastic $k$-means against batch $k$-means on the \texttt{RCV1} dataset \cite{data:rcv1}. The advantage is clear, and the results raise some natural questions:
Can we characterize the convergence rate of stochastic $k$-means?
Why do the algorithms appear to converge to different ``local optima''?
Why and how does mini-batch size affect the quality of the final solution?
Our goal is to address these questions rigorously.

Given a discrete dataset of $n$ input data points, denoted by $X:=\{x, x\in \mathbb{R}^d\}$, a common way to cluster $X$ is to first select a set of $k$ centroids, denoted by
$C = \{c_r\in \mathbb{R}^d, r\in [k]\}$, and assign each $x$ to a centroid. 
Data points assigned to the same centroid $c_r$, form
a cluster $A_r\subset X$, and $\cup_{r\in [k]}A_r=X$; we let $A:=\{A_r, r\in [k]\}$ denote the resulting clustering. 
This is center-based clustering, where each clustering is specified by a tuple $(C,A)$.
The $k$-means cost of $(C,A)$ is defined as:
$\phi_{X}(C,A):=\sum_{r\in [k]}\sum_{x\in A_r} \|x-c_r\|^2$.
The $k$-means clustering problem is cast as the optimization problem:
$
\min_{C,A} \phi_{X}(C,A)
$.
This is an NP-hard problem \cite{NPhard}.

However, if either $C$ or $A$ is fixed, the problem can be easily solved.
Fixing the set of $k$ centroids $C$, the smallest $k$-means cost is achieved by choosing the clustering that assigns each point $x$ to it closest center, which we denote by $C(x):=\min_{c_r\in C}\|x-c_r\|$. That is,
\vspace{-0.15cm}
\begin{eqnarray}\label{kmobj1}
\phi_{X}(C):= \min_{A} \phi_{X}(C,A) = \sum_{r\in [k]}\sum_{C(x)=c_r }\|x-c_r\|^2
\end{eqnarray}
This clustering can also be induced by the Voronoi diagram of $C$, denoted by $V(C):=\{V(c_r), r\in [k]\}$, where
$$
V(c_r):=\{x\in \mathbb{R}^d, \|x-c_r\|\le\|x-c_s\|,\forall s\ne r\}
$$
Clustering $A$ induced by $V(C)$ is such that $\forall A_r\in A, A_r = V(c_r)\cap X$. Subsequently, we will use $V(C)\cap X$ to denote this induced clustering.
Likewise, fixing a $k$-clustering $A$ of $X$, the smallest $k$-means cost is achieved by setting the new centers as the mean of each cluster, denoted by $m(A_r)$,
\vspace{-0.2cm}
\begin{eqnarray}\label{kmobj2}
\phi_{X}(A):=\min_{C} \phi_{X}(C,A)= \sum_{r\in [k]}\sum_{x\in A_r}\|x-m(A_r)\|^2
\end{eqnarray}
Batch $k$-means or Lloyd's algorithm \cite{lloyd}, one of the most popular clustering heuristics \cite{jain:kmeans},
essentially proceeds by finding the solution to \eqref{kmobj1} and \eqref{kmobj2} alternatively:
At $t=0$,
it initializes the position of $k$ centroids, $C^0$, via a seeding algorithm. 
$\forall t\ge 1$, it alternates between two steps,
\begin{description}
\item[Step 1]
\vspace{-0.2cm}
Fix $C^{t-1}$,
find $A^t$ such that 
$$ A^t = \arg\min_{A}\phi_X(C^{t-1}, A)=V(C^{t-1})\cap X$$
\item[Step 2]
\vspace{-0.2cm}
Fix $A^t$, 
find $C^{t}$ such that 
$$
C^t = \arg\min_{C}\phi_X(C, A^t)=m(A^t)
$$
\vspace{-0.4cm}
\end{description}
\vspace{-0.6cm}
However, Step 1 requires computation of the closest centroid to every point in the dataset. Even with fast implementations such as \cite{elkan:kmeans_fast}, which reduces the computation for finding the closest centroid of each point, the per-iteration running time is still $O(n)$, making it a computational bottleneck for large datasets.  

To scale up batch $k$-means, ``stochastic approximation'' was proposed \cite{BottouBengio, Sculley}, which we present as Algorithm \ref{alg:MBKM}.
\footnote{In Claim \ref{claim:equivAlg} of the Appendix, we formally show Algorithm \ref{alg:MBKM} subsumes both online and mini-batch $k$-means.} The main idea is, at each iteration, the centroids are updated using one (online \cite{BottouBengio}) or a few (mini-batch \cite{Sculley}) randomly sampled points, denoted by $S^t$, instead of the entire dataset $X$.
At every iteration, Algorithm \ref{alg:MBKM} first approximates Step 1 and 2 using a random sample $S^t$ of constant size,
then it computes $C^t$ by interpolating between $C^{t-1}$ and $\hat C^t$.
Thus, stochastic $k$-means never directly clusters $X$ but keeps updating its set of centroids using constant sized random samples $S^t$, so the per-iteration computation cost is reduced from $O(n)$ in the batch case to $O(1)$.
\paragraph{Notation:}
In the paper, superscripts index a particular clustering, e.g., $A^t$ denotes the clustering at the $t$-th iteration in Algorithm \ref{alg:MBKM} (or batch $k$-means);
subscripts index individual clusters or centroids: $c_r$ denotes the $r$-th centroid in $C$ corresponding to the $r$-th cluster $A_r$.
We use letter $n$ to denote cardinality, $n = |X|$, $n_r=|A_r|$, etc.
$conv(X)$ denotes the convex hull of set $X$.
We let $\phi(C,A)$ denote the $k$-means cost of $(C, A)$;
we let $\phi(C)$, $\phi(A)$ denote the induced (optimal) $k$-means cost of a fixed $C$ or $A$;
as a shorthand, we often move the superscript (subscript) on the input of $\phi(\cdot)$ to $\phi$, e.g., 
we use $\phi^t$ to denote $\phi(C^t)$, and $\phi^t_r$ to denote the cost of the $r$-th cluster at $t$.
We denote the largest $k$-means cost on $X$ as $\phi_{\max}$ and the smallest $k$-means cost as $\phi_{opt}$.
Finally, we let $\pi(\cdot)$ to denote a permutation on $[k]$.
\begin{figure}
\centering
\includegraphics[height = \linewidth, width = 0.5\textwidth, angle = -90]{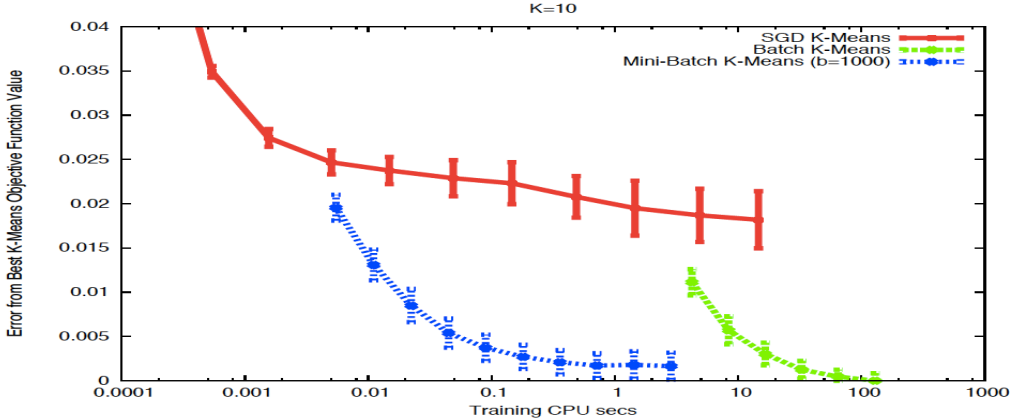}
\caption{Figure from \cite{Sculley}, demonstrating the relative performance
of online, mini-batch, and batch $k$-means.
}
\label{fig:demo}
\end{figure}
\section{Related work and our contributions}
Our work builds on analysis of batch $k$-means, which is notoriously difficult \cite{dasguptaOpen}. 
A great breakthrough was made recently by Kumar and Kannan \cite{kumar}, where they showed the \textit{$k$-SVD + constant $k$-means approximation + $k$-means update} scheme efficiently and correctly clusters most data points on well-clusterable instances, and that the algorithm converges to an approximately optimal solution at geometric rate until reaching a plateau.
Subsequent progress has been made on relaxing the assumptions \cite{awasthi:improved} and simplifying the seeding \cite{tang_montel:aistats16}.
However, these works only study local convergence of batch $k$-means in the sense that they require that the initial centroids are already close to the optimal solution.
\vspace{-0.31cm}
\paragraph{Alternating Minimization and EM}
Temporarily abusing notation, we let $X\in \mathbb{R}^{d\times n}$ be a matrix whose columns represent data points; similarly, let columns of $C\in\mathbb{R}^{d\times k}$ represent centroids, and let columns of $A\in\mathbb{R}^{k\times n}$ indicate cluster membership, i.e., $A_{ij}=1$ if and only if $x_i$ belongs to cluster $j$. The $k$-means problem can be equivalently formulated as 
\begin{eqnarray*}
\min_{C,A}\|X - CA\|_F^2
\mbox{~subject to~} A_{ij}\in \{0,1\}, \|A_i\|_2=1
\end{eqnarray*}
This is a non-convex low-rank matrix factorization problem.
In particular, with the discrete constraint on $A$, the $k$-means problem is close to the sparse coding problem (also known as dictionary learning or basis pursuit), where $C$ corresponds to a set of dictionary items and $A$ the coding matrix, and batch $k$-means can be viewed as a variant of Alternating Minimization (AM) for sparsity-constrained matrix factorization. 
In statistics, the algorithm is also well known to relate to EM for gaussians mixture models.
Both AM and EM are popular heuristics for non-convex unsupervised learning problems, ranging from matrix completion to latent variable models, where growing interest and exciting progress towards understanding their convergence behavior are emerging \cite{JNS:AM, hardt:AM, arora:sparsecode, balakrishnan:EM}.

However, existing analyses of AM or EM do not apply to $k$-means.
For one thing, the prevalent assumptions for AM-style algorithms is \textit{incoherence+sparsity}, while EM applies to generative models. Both are different from our deterministic assumption on the geometry of the dataset.
For another, the \cite{JNS:AM, hardt:AM, arora:sparsecode, balakrishnan:EM} all rely on closed form update expression; usually partial gradients are easily derivable in these problems.
In contrast, we cannot obtain a closed form update rule for Step 1 of batch $k$-means, due to the discrete constraint on $A$.
Instead, we work around this problem using geometric insights about $k$-means update developed from the clustering literature \cite{kumar, tang_montel:aistats16}.
\vspace{-0.38cm}
\paragraph{Non-convex stochastic optimization}
For convex problems, stochastic optimization methods are well studied \cite{rakhlin}. Much less can be said about non-convex problems. 
Our analysis of stochastic $k$-means is influenced by two recent works in non-convex stochastic optimization \cite{ge:sgd_tensor, balsubramani13}.
The first \cite{ge:sgd_tensor} studies the convergence of stochastic gradient descent (SGD) for 
tensor decomposition problems, which amounts to finding a local optimum of a non-convex objective function with only saddle points and local optima. 
Inspired by their analysis framework, we divide our analysis of Algorithm \ref{alg:MBKM} into two phases, those of global and local convergence, according to the distance from the current solution to the set of all ``local optima''.
At the local convergence phase, since multiple local optima are present, stochastic noise may drive the algorithm's iterate off the neighborhood of attraction, and the algorithm may fail to converge locally. 
To deal with this, we adapted techniques that bound martingale large deviation from \cite{balsubramani13} to our local convergence analysis; the work in \cite{balsubramani13} studies the convergence of stochastic PCA algorithms, where the objective function is the non-convex Rayleigh quotient.

\paragraph{Our contributions}
Our contributions are two-fold. 
For users of stochastic $k$-means, Theorem \ref{thm:km} guarantees that it converges to a local optimum with any reasonable seeding (it only requires the seeds be in the convex hull of the dataset) and a properly chosen learning rate, with $O(\frac{1}{t})$ expected convergence rate, where the convergence is with respect to $k$-means objective.
In contrast to recent batch $k$-means analysis \cite{kumar, awasthi:improved, tang_montel:aistats16}, it establishes a global convergence result for stochastic $k$-means, since it applies to almost any initialization $C^0$; it also applies to a wide range of datasets, without requiring a strong clusterability assumption. 

Theoretically, we have three major contributions.
First, our analysis provides a novel analysis framework for $k$-means algorithms, by connecting the discrete optimization approach to that of gradient-based continuous optimization.
Under this framework, we identify a ``Lipschitz'' condition of a local optimum, under which stochastic $k$-means converges locally. This approach to establish local convergence is similar in-spirit to the local convergence analysis of \cite{arora:sparsecode, balakrishnan:EM}.
Second, we show this ``Lipschitz'' condition relates to clusterability assumptions of the dataset.
Consequently, Theorem \ref{thm:solution} 
shows, just as batch $k$-means, stochastic $k$-means can also converge to an optimal $k$-means solution, under a clusterability assumption similar to \cite{kumar}, with a scalable seeding algorithm developed in \cite{tang_montel:aistats16}.
This result 
extends the batch $k$-means results on well-clusterable instances \cite{kumar, awasthi:improved, tang_montel:aistats16} to stochastic $k$-means, and shows the two are equally powerful at finding an optimal $k$-means solution with strong enough clusterability assumption. 
Finally, the martingale technique, which we modified from \cite{balsubramani13}, can be applied to future analysis of non-convex stochastic optimization problems.
\begin{algorithm}[t]
   \caption{Stochastic $k$-means}
   \label{alg:MBKM}
\begin{algorithmic}
   \STATE {\bfseries Input:} dataset $X$, number of clusters $k$, mini-batch size $m$, learning rate $\eta_r^t, r\in [k]$,
     \texttt{convergence\_criterion}
   \STATE{\bfseries Seeding:} Apply seeding algorithm $\mathcal{T}$ on $X$ and obtain seeds $C^0=\{c_1^0,\dots,c_k^0\}$; 
   \REPEAT
   \STATE At iteration $t$ ($t\ge 1$), obtain sample $S^t\subset X$ of size $m$ uniformly at random with replacement;
     set count $\hat{n}^t_r \leftarrow 0$ and set $S^t_{r}\leftarrow \emptyset$, $\forall r\in [k]$
   \FOR{$s\in S^t$} 
   	   \STATE Find $I(s)$ s.t. $c_{I(s)}=C(s)$
        \STATE $S^t_{I(s)}\leftarrow S^t_{I(s)}\cup s$; $\hat{n}_{I(s)}^t \leftarrow \hat{n}_{I(s)}^t + 1$
   \ENDFOR
   \FOR{$c_r^{t-1}\in C^{t-1}$}
   \IF{$\hat{n}_r^t\ne 0$}
     \STATE $c_r^t \leftarrow (1-\eta^t_r)c_r^{t-1} + \eta^t_r\hat{c}_r^t$ with 
     $\hat{c}_r^t:=\frac{\sum_{s\in S^t_r} s}{\hat{n}_r^t}$
   \ENDIF
   \ENDFOR
   \UNTIL{\texttt{convergence\_criterion} is satisfied}
\end{algorithmic}
\end{algorithm}
\section{Revisiting batch $k$-means, local convergence, and clusterability}
This section introduces the framework we construct to study $k$-means updates, under which we characterize the ``local optima'' of batch $k$-means and identify a sufficient condition for a local optimum to be a locally stable attractor\footnote{By ``locally stable'', we mean within a radius, batch $k$-means always converges to this local optimum.} to the algorithm.
Then we show how the clusterability of the dataset in fact determines the strength of a local optimum as an attractor.
\subsection{Batch $k$-means as an alternating mapping}
\label{sec:framework}
Corresponding to the two steps in an iteration of batch $k$-means, it alternates between two solution spaces: the \textit{continuous space} of sets of $k$ centroids, which we denote by $\{C\}$, and the \textit{finite set} of all $k$-clusterings, which we denote by $\{A\}$. 
Our key observation is that $\{C\}$ can be partitioned into equivalence classes by the clustering they induce on $X$, and the algorithm stops if and only if two consecutive iterations stay within the same equivalence class in \{C\}:
for any $C$, let 
$
v(C)
$ 
denote the clustering induced by its Voronoi diagram, i.e., 
$v$ is the mapping
$v(C):= V(C)\cap X$, where $V(C)\cap X$ is as defined in Section \ref{sec:intro}.
It can be shown that $v$ is a well-defined function if and only if $C$ is not a boundary point.
\begin{defn}[Boundary points]
$C$ is a boundary point if 
$\exists A\in V(C)\cap X$ s.t. for some $r\in [k], s\ne r$ and $x\in A_r\cup A_s$, 
$
\|x-c_r\|=\|x-c_s\|
$.
\end{defn}
We used ``$A\in V(C)\cap X$'' instead of $A= V(C)\cap X$, since $V(C)\cap X$ may induce more than one clustering if $C$ is a boundary point (see Lemma \ref{lm:equivalent_boundary}). 
So we abuse notation $V(C)\cap X$ to let it be the set of all possible clusterings of $C$.
For now, we ignore boundary points.
Then we say $C_1,C_2$ are equivalent if they induce the same clustering, i.e., $C_1\sim C_2$ if $v(C_1)=v(C_2)$.
This construction reveals that $\{C\}$ can be partitioned into a finite number of equivalence classes; each corresponds to a unique clustering $A\in \{A\}$.

An iteration of batch $k$-means can be viewed as applying the composite mapping $m\circ v:\{C\}\rightarrow\{C\}$, where Step 1 goes from $\{C\}$ to $\{A\}$ via mapping $v$, and Step 2 goes from $\{A\}$ to $\{C\}$ via the mean operation $m$.
\paragraph{``Local optima''}
It is well known that batch $k$-means stops at $t$ when  
$A^{t+1}=A^t$. Since 
$A^{t+1}=v(C^{t})= v\circ m(A^t)$
and $A^t=v(C^{t-1})$, 
this implies it stops at $t$ if and only if
$
v(C^t)=v(C^{t-1})
$, or equivalently,
$
v\circ m(A^t)=A^t
$.
Similarly, we can derive the algorithm stops if and only if
$
m(A^{t+1})=C^t
$, or
$
m\circ v(C^t)=C^t
$.

We can thus visualize batch $k$-means as an iterative mapping $m\circ v$ on $\{C\}$ that jumps from one equivalence class to another until it stays in the same equivalence class in two consecutive iterations, i.e., $v(C^{t+1})=v(C^t)$, and stops at some $C^*\in \{C^*\}$ 
(see Figure \ref{fig:solution_spaces} in the Appendix).
This stopping condition also provides a natural way of defining the ``local optima'' of $k$-means cost, i.e., they are the fixed points of mapping $m\circ v$ and $v\circ m$, which we call stationary solutions.
\begin{defn}[Stationary solutions]
We define $C^*\in \{C\}$ such that $m\circ v(C^*)=C^*$, and call it a \textbf{stationary point} of batch $k$-means. We let $\{C^*\}$ denote the set of all stationary points. Similarly, we call $A^*\in \{A\}$ a \textbf{stationary clustering} if $v\circ m (A^*)=A^*$, and we let $\{A^*\}$ denote the set of all stationary clusterings.
\end{defn}
Finally, to characterize local convergence we need to define a suitable measure of distance on each space.
\begin{defn}[Centroidal distance]
\label{defn:centroidal_dist}
For $C^{\prime}$ and $C$, we define centroidal distance 
$
\Delta(C^{\prime},C):=\min_{\pi: [k]\rightarrow [k]}\sum_r n_r\|c^{\prime}_{\pi(r)}-c_{r}\|^2
$, where $n_r=|A_r|$. 
\end{defn}
\begin{defn}[Clustering distance]
For $v(C^{\prime})$ and $v(C)$, we define the clustering distance
$
\mbox{ClustDist}(v(C^{\prime}),v(C)):=\max_r\frac{|A^{\prime}_{\pi(r)}\triangle A_r|}{n_r}
$, where $A^{\prime}:=v(C^{\prime})$, $A=v(C)$, $\triangle$ denotes set difference, and $\pi$ is the permutation attaining $\Delta(C^{\prime},C)$.
\end{defn}
Both distances are asymmetric, non-negative, and evaluates to zero if and only if two sets of centroids (clusterings) coincide.
If $C^*$ is a stationary point, then for any solution $C$, $\Delta(C,C^*)$ upper bounds the difference of $k$-means objective, $\phi(C)-\phi(C^*)$ (Lemma \ref{lm:kmdist_cdist}).
\paragraph{Remark}
For clarity of presentation, we have ignored the fact that $k$-means may produce degenerate solutions, where one or more clusters may be empty;
similarly, the definitions of stationary solutions and centroidal distance here ignore the possible existence of boundary points. 
In our analysis, we used more general definitions to handle these issues, whose details are provided in the Appendix. 
\subsection{A sufficient condition for the local convergence of batch $k$-means}
When is a stationary point also a locally stable attractor on $\{C\}$? 
We propose to characterize stability as a local Lipschitz condition on mapping $v(\cdot)$: we require $\mbox{ClustDist}(v(C), v(C^*))$ to be upper bounded by $\Delta(C,C^*)$ locally.
\begin{defn}\label{defn:stable}
We call $C^{*}$ a $(b_0,\alpha)$-stable stationary point if for any $C\in \{C\}$ such that 
$\Delta(C,C^*)\le b^{\prime}\phi^*$, $b^{\prime}\le b_0$, 
we have
$\mbox{ClustDist}(v(C),v(C^*))\le \frac{b}{5b+4(1+\phi(C)\slash\phi^*)}$,
with $b\le \alpha b^{\prime}$ for some $\alpha\in [0,1)$.
\end{defn}
Generalizing combinatorial arguments about batch $k$-means update in \cite{kumar, tang_montel:aistats16}, we show that the definition above is indeed a sufficient condition for local convergence of batch $k$-means.
\begin{lm}
\label{lm:stable_stat}
Let $C^*$ be a $(b_0,\alpha)$-stable stationary point. For any $C$ such that 
$\Delta(C,C^{*})\le b^{\prime}\phi^*$, $b^{\prime}\le b_0$,
apply one step of batch $k$-means update on $C$ results in a new solution $C^1$ such that
$\Delta(C^1,C^*)\le \alpha b^{\prime}\phi^*$.
\end{lm}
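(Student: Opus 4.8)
The plan is to unfold both $C^{1}$ and $C^{*}$ as cluster means and reduce the statement to a combinatorial bound on how far a cluster's mean can move when an $\epsilon$-fraction of its points is re-assigned. Since $C^{*}$ is stationary, $C^{*}=m(v(C^{*}))=m(A^{*})$ with $A^{*}:=v(C^{*})$; and one batch step by definition gives $C^{1}=m(v(C))=m(A^{1})$ with $A^{1}:=v(C)$. Write $c_{r}^{1}=m(A_{r}^{1})$, $c_{r}^{*}=m(A_{r}^{*})$, $n_{r}^{1}=|A_{r}^{1}|$, $n_{r}=|A_{r}^{*}|$, and let $\pi$ be the permutation attaining $\Delta(C,C^{*})$, which by Definition~\ref{defn:stable} also realizes $\mathrm{ClustDist}(v(C),v(C^{*}))$; after relabeling we may take $\pi$ to be the identity. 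Since $\Delta(C^{1},C^{*})$ is a minimum over permutations, it suffices to upper bound $\sum_{r}n_{r}\|c_{r}^{1}-c_{r}^{*}\|^{2}$. Abbreviate $\epsilon:=\mathrm{ClustDist}(v(C),v(C^{*}))$; the stability hypothesis gives $\epsilon\le \frac{b}{5b+4(1+\phi(C)/\phi^{*})}$ with $b\le\alpha b'$, and in particular $\epsilon<1$ and $|A_{r}^{1}\triangle A_{r}^{*}|\le\epsilon n_{r}$ for every $r$.

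The first step is the elementary identity $n_{r}^{1}(c_{r}^{1}-c_{r}^{*})=\sum_{x\in A_{r}^{1}\setminus A_{r}^{*}}(x-c_{r}^{*})-\sum_{x\in A_{r}^{*}\setminus A_{r}^{1}}(x-c_{r}^{*})$, which holds because $\sum_{x\in A_{r}^{*}}(x-c_{r}^{*})=0$. Hence $\|c_{r}^{1}-c_{r}^{*}\|\le \frac{1}{n_{r}^{1}}\sum_{x\in A_{r}^{1}\triangle A_{r}^{*}}\|x-c_{r}^{*}\|$; using $n_{r}^{1}\ge(1-\epsilon)n_{r}$, the bound $|A_{r}^{1}\triangle A_{r}^{*}|\le\epsilon n_{r}$, and Cauchy--Schwarz, this yields $n_{r}\|c_{r}^{1}-c_{r}^{*}\|^{2}\le \frac{\epsilon}{(1-\epsilon)^{2}}\sum_{x\in A_{r}^{1}\triangle A_{r}^{*}}\|x-c_{r}^{*}\|^{2}$. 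Summing over $r$ reduces everything to bounding $E:=\sum_{r}\sum_{x\in A_{r}^{1}\triangle A_{r}^{*}}\|x-c_{r}^{*}\|^{2}$.

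To bound $E$, split each symmetric difference. For $x\in A_{r}^{*}\setminus A_{r}^{1}$, the term $\|x-c_{r}^{*}\|^{2}$ is a summand of $\phi(A^{*})=\phi(C^{*})=\phi^{*}$ (using $C^{*}=m(A^{*})$), so these contribute at most $\phi^{*}$ in total. For $x\in A_{r}^{1}\setminus A_{r}^{*}$, $x$ is assigned to $c_{r}$ under $C$, so use $\|x-c_{r}^{*}\|^{2}\le 2\|x-c_{r}\|^{2}+2\|c_{r}-c_{r}^{*}\|^{2}$: the first terms are summands of $\phi(C)=\phi(C,v(C))$, contributing at most $2\phi(C)$, and the second terms contribute at most $2\sum_{r}|A_{r}^{1}\setminus A_{r}^{*}|\,\|c_{r}-c_{r}^{*}\|^{2}\le 2\epsilon\sum_{r}n_{r}\|c_{r}-c_{r}^{*}\|^{2}=2\epsilon\,\Delta(C,C^{*})\le 2\epsilon b'\phi^{*}$. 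Thus $E\le \phi^{*}+2\phi(C)+2\epsilon b'\phi^{*}$, giving $\Delta(C^{1},C^{*})\le \frac{\epsilon}{(1-\epsilon)^{2}}\bigl(1+2\phi(C)/\phi^{*}+2\epsilon b'\bigr)\phi^{*}$. It then remains to verify the scalar inequality $\frac{\epsilon}{(1-\epsilon)^{2}}\bigl(1+2\phi(C)/\phi^{*}+2\epsilon b'\bigr)\le b$ for $\epsilon=\frac{b}{5b+4(1+\phi(C)/\phi^{*})}$; using $\epsilon\cdot(5b+4(1+\phi(C)/\phi^{*}))=b$ to eliminate $\phi(C)/\phi^{*}$, this reduces to $2\epsilon^{2}b'\le \frac{b}{2}(1+\epsilon+2\epsilon^{2})$, which holds since $\epsilon\le b/4$ and $b,b'\le b_{0}$ is small. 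Then $b\le\alpha b'$ gives $\Delta(C^{1},C^{*})\le b\phi^{*}\le\alpha b'\phi^{*}$, as claimed. The denominator $5b+4(1+\phi(C)/\phi^{*})$ is calibrated precisely so that the $(1-\epsilon)^{-2}$ blow-up, the $1+2\phi(C)/\phi^{*}$ factor, and the $2\epsilon b'$ correction are all absorbed.

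The main obstacle is the $x\in A_{r}^{1}\setminus A_{r}^{*}$ term: bounding the distance from $x$ to the mean of the cluster it does \emph{not} lie in under $A^{*}$ forces one to route through the center $c_{r}$ of $C$ and hence reintroduce a self-referential $\Delta(C,C^{*})$ term, which only closes because $b\le\alpha b'\le b_{0}$ is small; keeping the constants \emph{exact} (rather than up to absolute constants) through the Cauchy--Schwarz and triangle-inequality steps is the delicate bookkeeping. This is also where one must be careful about degeneracies --- empty clusters, and boundary points $C$ at which $v(C)$ is multivalued --- which are handled by the more general definitions of stationarity, centroidal distance, and $\mathrm{ClustDist}$ given in the appendix; the argument above runs verbatim on any single clustering in $v(C)$.
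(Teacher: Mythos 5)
There is a genuine gap, and it sits exactly where you flagged the "delicate bookkeeping": your final scalar inequality is false in general, not merely under-justified. By bounding the incoming points via $\|x-c_r^*\|^2\le 2\|x-c_r\|^2+2\|c_r-c_r^*\|^2$ with the \emph{old} centroids, you pick up the term $2\epsilon\,\Delta(C,C^*)\le 2\epsilon b'\phi^*$, which after the prefactor $\frac{\epsilon}{(1-\epsilon)^2}$ contributes roughly $\frac{2\epsilon^2}{(1-\epsilon)^2}b'\phi^*$ to your bound on $\Delta(C^1,C^*)$. The target is $\alpha b'\phi^*$, so you need $\epsilon=O(\sqrt{\alpha})$, but Definition \ref{defn:stable} only guarantees $\epsilon\le \frac{b}{5b+4(1+\phi(C)/\phi^*)}$, which caps $\epsilon$ at roughly $\min\{b/4,\,1/5\}$ and nothing smaller; and nothing in the lemma makes $b_0$ (hence $b'$, $b$) small --- in the paper's own application (Proposition \ref{prop:geom}) $b_0=f(\alpha)^2/16^2$ with $f(\alpha)>64^2$, which is enormous. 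Concretely, take $\alpha=0.01$, $b'=10^4\le b_0$, $b=\alpha b'=100$, $\phi(C)/\phi^*=1$, and $\epsilon=\frac{100}{508}\approx 0.197$ (admissible under the stability hypothesis): your reduced inequality $2\epsilon^2 b'\le \frac{b}{2}(1+\epsilon+2\epsilon^2)$ reads $775\le 64$, and your bound gives $\Delta(C^1,C^*)\lesssim 1200\,\phi^*$ while the lemma demands $\le 100\,\phi^*$. The justification "$\epsilon\le b/4$ and $b,b'\le b_0$ is small" silently assumes something like $bb'\le 4$, which is not part of the hypothesis.

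The repair is to make the cross term self-referential in the \emph{new} centroids rather than paying $\Delta(C,C^*)$: bound $\|x-c_r^*\|^2\le 2\|x-c_r^1\|^2+2\|c_r^1-c_r^*\|^2$ (or, as the paper does via Lemma \ref{lm:diff}, bound $\sum_r\sum_{s\ne r}\sum_{x\in A_r\cap A_s^*}\|x-c_r^*\|^2$ by $\phi(C^1)+\sum_r n_r\|c_r^1-c_r^*\|^2$, using $\phi(C^1)\le\phi(C)$). This yields an inequality of the form $D\le \frac{\epsilon}{(1-\epsilon)^2}\bigl(\phi^*+2\phi(C)+2\epsilon D\bigr)$ with $D:=\sum_r n_r^*\|c_r^1-c_r^*\|^2$, and since the self-referential coefficient is strictly less than $1$ (the paper's $4\rho_{\max}(1+\rho_{\max})<1$, using $\rho_{\max}<1/5$), it can be absorbed into the left-hand side. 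The resulting bound $D\le \frac{\rho_{\max}}{1-5\rho_{\max}}(\phi^*+\phi(C))\le \frac{b}{4(1+\phi(C)/\phi^*)}(1+\phi(C)/\phi^*)\phi^*\le b\phi^*\le\alpha b'\phi^*$ has no dependence on $b'$ in the error term, which is precisely what the denominator $5b+4(1+\phi(C)/\phi^*)$ is calibrated for. Everything up to that point in your argument (the mean-shift identity, the $\frac{\epsilon}{(1-\epsilon)^2}$ Cauchy--Schwarz step, and the $\phi^*+2\phi(C)$ part of the bound on $E$) is correct and closely parallels the paper's decomposition into $\rho_{out}^r,\rho_{in}^r$; only the treatment of the $A_r^1\setminus A_r^*$ term must be changed as above.
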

\vspace{-0.3cm}
\paragraph{Neighborhood of attraction}
By Lemma \ref{lm:stable_stat}, we can view $b_0$ as the radius of the neighborhood of attraction and $\alpha$ the strength of the attractor, since it determines the convergence rate.
A special case of $(b_0, \alpha)$-stability is when $\alpha=0$, which implies
$v(C)=v(C^*)$ if $C$ is within radius $b_0$ to $C^*$. In this case, batch $k$-means converges in one iteration.
Per our construction in Section \ref{sec:framework}, $b_0$ in this case is the radius of the equivalence class that maps to clustering $A^*=v(C^*)$.
In general, when $\alpha>0$, we expect the radius $b_0$ to be much larger.

Our characterization of local convergence of batch $k$-means does not depend on a specific clusterability assumption, unlike previous work \cite{kumar,awasthi:improved,tang_montel:aistats16}.
Instead, we will see that clusterability implies local Lipschitzness of mapping $v$.
\subsection{Local Lipschitzness and clusterability}
As discussed in Section \ref{sec:framework}, boundary points are problematic to the definition of mapping $v$, but how likely do they arise in practice? We answer this question by revealing the geometric implication of a boundary point, which will lead us to discover the connection between local Lipschitzness of $v$ and clusterability.

Consider any $C\in \{C\}$, and
let $A^{\prime}\in V(C)\cap X$: for a point $x\in A_r^{\prime}\cup A_s^{\prime}$, $s\ne r$,
let $\bar{x}$ denote the projection of $x$ onto the line joining $c_r, c_s$, we define
$$
\Delta_{rs}(C):=\min_{x\in A_r^{\prime}\cup A_s^{\prime}} |\|\bar{x}-c_r\| - \|\bar{x}-c_s\||
$$
\begin{defn}[$\delta$-margin]
For any $C$, we say $V(C)$ has a $\delta$-margin with respect to $X$
if $\exists A \in V(C)\cap X$ such that 
$
\min_{r,s\ne r}\Delta_{rs}(C)=\delta
$.
\end{defn}
\begin{lm}\label{lm:equivalent_boundary}
The following are equivalent
\begin{enumerate}
\vspace{-.4cm}
\item
$C$ is a boundary point 
\vspace{-.2cm}
\item
$V(C)$ has a zero margin with respect to $X$
\vspace{-.2cm}
\item
$|V(C)\cap X|>1$, i.e., the clustering determined by $V(C)$ is not unique.
\vspace{-.3cm}
\end{enumerate}
\end{lm}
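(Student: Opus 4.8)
The plan is to prove the three-way equivalence by establishing a cycle of implications, say $(1)\Rightarrow(2)\Rightarrow(3)\Rightarrow(1)$, using only the definitions of boundary point, $\delta$-margin, and the quantities $\Delta_{rs}(C)$.

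For $(1)\Rightarrow(2)$: Suppose $C$ is a boundary point, so there exist $A\in V(C)\cap X$, indices $r\ne s$, and a point $x\in A_r\cup A_s$ with $\|x-c_r\|=\|x-c_s\|$. I want to show $\Delta_{rs}(C)=0$, hence $\min_{r,s}\Delta_{rs}(C)=0$ for this choice of $A$. The key geometric observation is that $\|x-c_r\|=\|x-c_s\|$ forces the projection $\bar x$ of $x$ onto the line through $c_r,c_s$ to be equidistant from $c_r$ and $c_s$: indeed, by the Pythagorean decomposition $\|x-c_r\|^2=\|x-\bar x\|^2+\|\bar x-c_r\|^2$ and similarly for $s$, so $\|\bar x-c_r\|=\|\bar x-c_s\|$, giving $|\,\|\bar x-c_r\|-\|\bar x-c_s\|\,|=0$. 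Then $\Delta_{rs}(C)$, being a minimum of nonnegative quantities one of which is $0$, equals $0$, so $V(C)$ has a zero margin.

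For $(2)\Rightarrow(3)$: Suppose $V(C)$ has a zero margin, witnessed by some $A\in V(C)\cap X$, some pair $r\ne s$, and a point $x\in A_r\cup A_s$ with $\|\bar x-c_r\|=\|\bar x-c_s\|$. Running the Pythagorean identity in reverse gives $\|x-c_r\|=\|x-c_s\|$, i.e. $x$ lies on the bisecting hyperplane of $c_r$ and $c_s$ and is thus equidistant from its two nearest (or tied-nearest) centers. This means $x$ can legitimately be assigned to either cluster $r$ or cluster $s$ while still respecting the Voronoi assignment rule — so one can produce a second clustering $A'\in V(C)\cap X$ differing from $A$ only in the assignment of $x$ (one must be a little careful that $x$'s distance to $c_r$ is still $\le$ its distance to every other center, which holds because $x\in A_r\cup A_s$ under the Voronoi assignment). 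Hence $|V(C)\cap X|>1$. For $(3)\Rightarrow(1)$: if two distinct clusterings arise from $V(C)\cap X$, some data point $x$ is assigned to different centers $c_r,c_s$ in the two clusterings; since both assignments are Voronoi-consistent, $\|x-c_r\|\le\|x-c_t\|$ and $\|x-c_s\|\le\|x-c_t\|$ for all $t$, forcing $\|x-c_r\|=\|x-c_s\|$, which is exactly the boundary-point condition.

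The main obstacle I anticipate is the bookkeeping in $(2)\Rightarrow(3)$ and $(3)\Rightarrow(1)$: one has to be precise about the fact that ``$A\in V(C)\cap X$'' already ranges over \emph{all} valid Voronoi-consistent clusterings, and that a tie in distances between two centers is exactly the freedom needed to build a distinct clustering — while checking that reassigning the tied point does not violate optimality against the \emph{other} $k-2$ centers. The Pythagorean step in $(1)\Leftrightarrow(2)$ is the clean geometric core and should be routine; the subtlety is purely combinatorial, ensuring the witness indices and the witness point line up consistently across the definitions. I would also remark that the argument implicitly uses that $x$ is a point of $X$, so the margin and boundary notions are genuinely about the discrete dataset, not all of $\mathbb{R}^d$.
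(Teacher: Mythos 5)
Your proposal is correct and follows essentially the same route as the paper: the cycle $(1)\Rightarrow(2)\Rightarrow(3)\Rightarrow(1)$, with the Pythagorean observation that $\|x-c_r\|=\|x-c_s\|$ iff $\|\bar x-c_r\|=\|\bar x-c_s\|$ for $(1)\Leftrightarrow(2)$, and the reassignment of the tied point to build a second Voronoi-consistent clustering for $(2)\Rightarrow(3)$. Your $(3)\Rightarrow(1)$ is the direct form of the paper's contrapositive argument, and your extra check that the tied point remains closest among the other $k-2$ centers is a valid (and welcome) detail the paper leaves implicit.
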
 
Thus, a set of $k$ centroids $C$ is a boundary point in space $\{C\}$ if and only if there is a data point $x\in X$ that sits exactly on the bisector of two centroids in $C$. 
We believe a symmetric configuration like this has a low probability to arise in practice, due to random perturbations in the real world, e.g., computational error.
With this insight, we define a general dataset to be one that is free of boundary stationary points. 
\paragraph{Assumption A}[\textbf{General dataset}]
$X$ is a general dataset if
$\forall C^*\in \{C^*\}$, $C^*$ has $\delta$-margin with $\delta>0$.

Note $\{C^*\}$ is a finite set, since $\{A^*\}$ is finite and we have the relation $C^*=m(A^*)$ (Lemma \ref{lm:stat_sols}). Thus, Assumption A is a mild condition, as it only requires that a finite subset of the continuous space $\{C\}$ to be free of boundary points, hence the name ``general''.

We show that for a general dataset, every stationary point is a locally stable attractor (in fact its neighborhood of attraction is exactly its equivalence class induced by $v$). In other words, mapping $v$ is always locally Lipschitz on a sufficiently small neighborhood of any stationary point. 
Moreover, on a general dataset, we can lower bound the centroidal distance between two consecutive $k$-means iteration, provided the algorithm has not converged.
Both results, summarized in Lemma \ref{lm:stat_stab2}, are important building blocks for our proof of Theorem \ref{thm:km}.
\begin{lm}
\label{lm:stat_stab2} 
If $X$ is a general dataset, then $\exists r_{\min}>0$ s.t.
\begin{enumerate}
\item
$\forall C^*\in \{C^*\}$, $C^*$ is a $(r_{\min},0)$-stable stationary point.
\item
Let $m(A^{\prime})\notin \{C^*\}$ for some $A^{\prime}\in \{A\}$ and let 
$A^{\prime}\in V(C^{\prime})\cap X$, then 
$\Delta(C^{\prime},m(A^{\prime}))\ge r_{\min}\phi(m(A^{\prime}))$.
\end{enumerate}
\end{lm}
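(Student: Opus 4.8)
The plan is to prove the two statements essentially separately, producing a positive radius for each, and then set $r_{\min}$ to be the smaller of the two; in both cases positivity comes from minimizing over a \emph{finite} set.

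For part 1, fix $C^*\in\{C^*\}$; by Assumption A it has a $\delta$-margin for some $\delta=\delta(C^*)>0$, and I would first convert this margin into two elementary facts about $A^*=v(C^*)$ and $c^*_r=m(A^*_r)$: (i) every pair of stationary centroids is separated, $\|c^*_r-c^*_s\|\ge\delta$; and (ii) for every $x\in A^*_r$ and every $s\ne r$ the \emph{squared}-distance gap satisfies $\|x-c^*_s\|^2-\|x-c^*_r\|^2\ge\delta^2$. Both follow from the Pythagorean identity $\|x-c^*_r\|^2=\|\bar x-c^*_r\|^2+\|x-\bar x\|^2$, with $\bar x$ the projection of $x$ onto the line through $c^*_r,c^*_s$: it rewrites the gap as $(\|\bar x-c^*_s\|-\|\bar x-c^*_r\|)(\|\bar x-c^*_s\|+\|\bar x-c^*_r\|)$, where the first factor is $\ge\delta$ by the margin (and nonnegative since $x$ is assigned to $c^*_r$) and the second is $\ge\|c^*_r-c^*_s\|\ge\delta$.

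With (i)--(ii) in hand, suppose $\Delta(C,C^*)\le r_{\min}\phi^*$. Relabelling by the optimal permutation gives $n^*_r\|c_r-c^*_r\|^2\le r_{\min}\phi^*$, hence $\|c_r-c^*_r\|\le\sqrt{r_{\min}\phi^*}$ for every $r$. Using $\|x-c^*_r\|^2\le\phi^*_r\le\phi^*$ for $x\in A^*_r$, and the dataset diameter $D$ to bound $\|x-c^*_s\|$ and $\|x-c_s\|$, the identity $\|x-c_r\|^2-\|x-c^*_r\|^2=\langle c^*_r-c_r,\,2x-c_r-c^*_r\rangle$ (and its analogue at $s$) yields, for every $x\in A^*_r$ and every $s\ne r$,
\[
\|x-c_r\|^2-\|x-c_s\|^2 \;\le\; -\,\delta^2 \;+\; O\!\big(\sqrt{r_{\min}}\,\big),
\]
where the hidden constant depends only on $\phi^*$ and $D$. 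Thus for $r_{\min}$ small enough (depending on $C^*$ and $X$) the right-hand side is strictly negative for all such $x,s$, so no data point is ambiguous: $C$ is not a boundary point and $v(C)=A^*=v(C^*)$. This is precisely $(r_{\min},0)$-stability for that $C^*$ in the sense of Definition \ref{defn:stable} (with $\alpha=0$ forcing $b=0$ and $\mathrm{ClustDist}=0$). Since $\{C^*\}$ is finite (Lemma \ref{lm:stat_sols}), taking the minimum of these radii gives a uniform $r^{(1)}_{\min}>0$.

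For part 2, I would argue by contradiction that for each clustering $A'$ with $m(A')\notin\{C^*\}$ the ratio $\Delta(C',m(A'))\slash\phi(m(A'))$ is bounded below by a positive constant over all $C'$ with $A'\in V(C')\cap X$; the case $\phi(m(A'))=0$ is trivial since then $\Delta\ge0$ already suffices. If not, there is a sequence $C'_n$, each inducing $A'$, with $\Delta(C'_n,m(A'))\to0$; since the surviving cluster sizes are $\ge1$, passing to a subsequence (to fix the optimal permutation) forces $C'_n\to m(A')$ coordinatewise. Passing to the limit in the (weak) Voronoi inequalities $\|x-c'_{n,r}\|\le\|x-c'_{n,s}\|$, valid for all $x\in A'_r$, gives $\|x-m(A'_r)\|\le\|x-m(A'_s)\|$, i.e.\ $A'\in V(m(A'))\cap X$, so $m(A')$ is a (generalized) stationary point and $m(A')\in\{C^*\}$ --- a contradiction. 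There are only finitely many clusterings, hence finitely many with $m(A')\notin\{C^*\}$, so the minimum of these positive lower bounds is a positive $r^{(2)}_{\min}$. Setting $r_{\min}:=\min\{r^{(1)}_{\min},r^{(2)}_{\min}\}$ proves both claims.

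The hard part will be the geometric estimate in part 1: a bound on the aggregate centroidal distance $\Delta(C,C^*)$ only controls $\|c_r-c^*_r\|$ up to a $1/\sqrt{n^*_r}$ factor, yet one must show it still prevents \emph{every} data point from switching Voronoi cells --- which means correctly matching the first-order perturbation of each squared distance (of order $\sqrt{\Delta}$) against the fixed second-order margin quantity $\delta^2$, with the right data-dependent constants ($D$, $\phi^*$, cluster sizes), and then verifying these combine into one $r_{\min}$ over the finite set $\{C^*\}$. The degenerate-solution and boundary-point caveats are handled exactly as flagged in the Remark, via the generalized definitions in the Appendix; the limit/finiteness argument for part 2 is comparatively routine.
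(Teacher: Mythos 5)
Your proposal is correct and follows essentially the same route as the paper: part 1 mirrors Lemma \ref{lm:stat_stab} (a positive per-$C^*$ radius below which no data point can switch Voronoi cells, which you derive from the $\delta$-margin via squared-distance gaps while the paper picks $r^*$ directly from the strict distance inequalities at the non-boundary point $C^*$), and part 2 mirrors Lemma \ref{lm:stat_stab1}, your sequential-limit contradiction showing $m(A^{\prime})\in Cl(v^{-1}(A^{\prime}))$ being an equivalent (and somewhat cleaner) rephrasing of the paper's closure/boundary-point argument. The final $r_{\min}$ is obtained, exactly as in the paper, by exploiting finiteness of $\{C^*\}_{[k]}$ and of the set of clusterings and taking the minimum of the two radii.
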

In the lemma, $r_{\min}$ is a lower bound on the radius of attraction for points in $\{C^*\}$.
As discussed below Lemma \ref{lm:stable_stat}, this radius, although positive, can be very small.
The radius of an attractor, as we show next, can be related to the strength of margin $\delta$.
\paragraph{Assumption B}[\textbf{$f(\alpha)$-clusterability}]
We say a dataset-solution pair $(X,C^*)$ is $f(\alpha)$-clusterable, if $C^{*}\in \{C^*\}$ and $C^*$ has $\delta$-margin s.t.
$\forall r\in [k]$, $s\ne r$, 
$$
\delta\ge f(\alpha)\sqrt{\phi^{*}}(\frac{1}{\sqrt{n_r^{*}}}+\frac{1}{\sqrt{n_s^{*}}})
\mbox{~~for~} \alpha \in (0,1)
$$ with $f(\alpha)>\max\{64^2,\frac{5\alpha+5}{256\alpha},\max_{r\in [k],s\ne r} \frac{n_r^{*}}{n_s^{*}}\}$. 
\begin{prop}
\label{prop:geom}
Suppose $(X, C^*)$ satisfies Assumption (B).
Then, for any $C$ such that 
$\Delta(C,C^*)\le b\phi^*$ for some $b\le \frac{f(\alpha)^2}{16^2}$,
we have
$
\max_{r\in [k]}\frac{|A_r\triangle A_r^*|}{n_r^*}
\le
\frac{b}{f(\alpha)^3}
$.
That is, $C^*$ is 
$
(\frac{f(\alpha)^2}{16^2}, \alpha)
$ 
-stable.
\end{prop}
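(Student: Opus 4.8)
\emph{Proof proposal.}
The plan is to reduce the claimed bound on $|A_r\triangle A_r^*|/n_r^*$ to a geometric fact: a data point can be assigned to different clusters by $C$ and by $C^*$ only if it lies near a bisector of two centers of $C^*$, and the $\delta$-margin forces such a point to be \emph{far} from its own center of $C^*$; since the squared distances of points in cluster $r$ sum to $\phi_r^*\le\phi^*$, there cannot be many such points. First I would fix the permutation attaining $\Delta(C,C^*)$ and relabel so that it is the identity; write $\epsilon_r:=\|c_r-c_r^*\|$ and $b_r:=n_r^*\epsilon_r^2/\phi^*$, so $\sum_r b_r=\Delta(C,C^*)/\phi^*\le b$ and each $b_r\le b$. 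Using Assumption~(B) together with $b\le f(\alpha)^2/16^2$ one gets, for every pair $p\ne q$, the preliminary bounds $\epsilon_p+\epsilon_q\le\sqrt{b}\,\sqrt{\phi^*}\big(\tfrac{1}{\sqrt{n_p^*}}+\tfrac{1}{\sqrt{n_q^*}}\big)\le\tfrac{\sqrt b}{f(\alpha)}\,\delta\le\tfrac{\delta}{16}$, and (applying the $\delta$-margin, and the triangle inequality, to any point of $A_p^*$) $\|c_p^*-c_q^*\|\ge\delta$.

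The geometric core: let $x\in A_p^*$ be assigned to some $q\ne p$ under $C$, and let $\bar x$ be the projection of $x$ onto the line through $c_p^*,c_q^*$. By the Pythagorean identity the term $\|x-\bar x\|^2$ cancels, so $\|x-c_q^*\|^2-\|x-c_p^*\|^2=\|\bar x-c_q^*\|^2-\|\bar x-c_p^*\|^2\ge\delta\,\|c_p^*-c_q^*\|$, the inequality using the $\delta$-margin (since $x\in A_p^*$ resolves the absolute value) and $\|\bar x-c_p^*\|+\|\bar x-c_q^*\|\ge\|c_p^*-c_q^*\|$. On the other hand, $x$ being assigned to $q$ gives $\|x-c_q^*\|-\|x-c_p^*\|\le\epsilon_p+\epsilon_q$. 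Combining these two displays with $\|x-c_q^*\|\le\|x-c_p^*\|+\|c_p^*-c_q^*\|$ and the preliminary bounds yields $\|x-c_p^*\|^2\ge\delta^4/\big(16(\epsilon_p+\epsilon_q)^2\big)$.

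Counting: set $M_{pq}:=\{x\in A_p^*: C \text{ assigns } x \text{ to } q\}$, so $|A_r\triangle A_r^*|=\sum_{s\ne r}|M_{rs}|+\sum_{t\ne r}|M_{tr}|$. Since the points of $M_{pq}$ lie in $A_p^*$, the core bound gives $\phi_p^*\ge\sum_{q\ne p}|M_{pq}|\,\delta^4/\big(16(\epsilon_p+\epsilon_q)^2\big)$; substituting $\delta\ge f(\alpha)\sqrt{\phi^*}\big(\tfrac1{\sqrt{n_p^*}}+\tfrac1{\sqrt{n_q^*}}\big)$, the estimate $(\epsilon_p+\epsilon_q)^2\le 2\phi^* b\big(\tfrac1{\sqrt{n_p^*}}+\tfrac1{\sqrt{n_q^*}}\big)^2$ (which uses $b_p,b_q\le b$), and $\big(\tfrac1{\sqrt{n_p^*}}+\tfrac1{\sqrt{n_q^*}}\big)^2\ge 1/n_p^*$, this collapses to $\sum_{q\ne p}|M_{pq}|\le 32b\,n_p^*\,\phi_p^*/(f(\alpha)^4\phi^*)\le 32b\,n_p^*/f(\alpha)^4$. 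For the incoming part I instead sum $\phi_t^*\ge\sum_{r\ne t}|M_{tr}|\,\delta^4/\big(16(\epsilon_t+\epsilon_r)^2\big)$ over all source clusters $t$ and use $\sum_t\phi_t^*=\phi^*$ to obtain $\sum_r\tfrac1{n_r^*}\sum_{t\ne r}|M_{tr}|\le 32b/f(\alpha)^4$, hence $\sum_{t\ne r}|M_{tr}|\le 32b\,n_r^*/f(\alpha)^4$ for each $r$. Adding the two contributions and using $f(\alpha)>64^2>64$ gives $|A_r\triangle A_r^*|/n_r^*\le 64b/f(\alpha)^4\le b/f(\alpha)^3$; taking the maximum over $r$ is the displayed inequality. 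Finally $\mathrm{ClustDist}(v(C),v(C^*))$ equals this maximum (its permutation is exactly the one attaining $\Delta$, i.e.\ our relabeling), so to deduce $(\tfrac{f(\alpha)^2}{16^2},\alpha)$-stability in the sense of Definition~\ref{defn:stable} I put the definition's parameter equal to $\alpha b'$, with $b'=\Delta(C,C^*)/\phi^*$, and check $\tfrac{b'}{f(\alpha)^3}\le\tfrac{\alpha b'}{5\alpha b'+4(1+\phi(C)/\phi^*)}$, equivalently $5\alpha b'+4(1+\phi(C)/\phi^*)\le\alpha f(\alpha)^3$; this follows from $b'\le f(\alpha)^2/16^2$, from $\phi(C)/\phi^*\le 1+b'$ (Lemma~\ref{lm:kmdist_cdist}), and the hypotheses $f(\alpha)>64^2$ and $f(\alpha)>\tfrac{5\alpha+5}{256\alpha}$.

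The main obstacle is the bookkeeping in the counting step, not the geometry. The naive per-point bound carries a cluster-dependent factor $\epsilon_q$ in the denominator, and summing over the $k$ possible destination clusters threatens a spurious factor of $k$; the remedy is to work with the normalized weights $b_r$ (which sum to at most $b$) and to control the incoming misclassifications by summing the cost inequality over source clusters rather than handling each cluster in isolation. Lining up the powers of $f(\alpha)$ and $\phi^*$ so that the residual constant is absorbed by $f(\alpha)>64^2$ is where the care lies; once the $\delta$-margin is translated from projected distances to actual distances via the Pythagorean identity, the geometric core lemma itself is short.
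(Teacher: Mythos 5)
Your proposal is correct, and it follows the same skeleton as the paper's argument (center-perturbation bounds from $\Delta(C,C^*)\le b\phi^*$, a per-point distance lower bound for misclassified points, charging their squared distances against $\phi^*$, then verifying Definition~\ref{defn:stable} via Lemma~\ref{lm:kmdist_cdist} exactly as in the paper). Where you differ is in the geometric core: the paper delegates this step to Lemma~\ref{lm:kumar}, an adaptation of Kumar--Kannan's Theorem 5.4, which decomposes $x$ relative to the line joining the \emph{current} centers $c_r^t,c_s^t$ and tracks the orthogonal component $u$, arriving at $\|x-c_r^*\|\ge\frac{\Delta_{rs}\|c_r^*-c_s^*\|}{64\Delta}$; you instead project onto the line joining the \emph{optimal} centers $c_p^*,c_q^*$, use the Pythagorean cancellation and a difference-of-squares factorization to get $\|x-c_p^*\|\gtrsim\frac{\delta\,\|c_p^*-c_q^*\|}{\epsilon_p+\epsilon_q}$ directly. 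This is a cleaner, self-contained derivation (and your observation that $\|c_p^*-c_q^*\|\ge\delta$ already follows from the reverse triangle inequality along the line is enough here, whereas the paper proves the stronger $\|c_r^*-c_s^*\|\ge\Delta_{rs}$ via the mean property). Your counting also differs in a minor way: the paper handles both outgoing and incoming misclassifications for a fixed $r$ by charging each point's squared distance to its own optimal center and bounding the total by $\phi^*$ directly, while you sum the incoming inequality over source clusters and then extract each $r$; both are valid and avoid any spurious factor of $k$ for the same reason (the separation carries both $1/\sqrt{n_p^*}$ and $1/\sqrt{n_q^*}$, so the destination cluster's size can always be used for normalization). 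Your constants land at $64b/f(\alpha)^4\le b/f(\alpha)^3$, which only needs $f(\alpha)\ge 64$ and is comfortably covered by $f(\alpha)>64^2$ in Assumption (B), and your final verification of $(\frac{f(\alpha)^2}{16^2},\alpha)$-stability is the same computation the paper performs.
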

$f(\alpha)$-clusterability assumption is a simplified version of the proximity assumption in \cite{kumar}. It essentially requires that 
$\delta = \Omega(\sqrt{k}\sigma_{\max})$ for a stationary point $C^*$, where $\sigma_{\max}$ is the maximal standard deviation of an individual cluster.
It serves as an example showing how clusterability implies local Lipschitzness of $v$. 
Furthermore, Proposition \ref{prop:geom} reveals that the larger $f(\alpha)$ is, the larger the radius of attraction.
\section{Convergence analysis of stochastic $k$-means}
This section has three components. We first describe the technique we developed that helps us establish the local convergence of stochastic $k$-means, which will be used in the proof of both our main theorems.
Then, we provide proof outlines of Theorem \ref{thm:km} and Theorem \ref{thm:solution}, respectively.
Throughout our analysis, we consider learning rate of the form:
\vspace{-0.35cm}
\begin{align}\label{learning_rate}
\eta_r^t=\eta^t=\frac{c^{\prime}}{t_o+t} \mbox{~,~~}\forall r\in [k]
\end{align}
\subsection{Local convergence in the presence of stochastic noise}
Unlike a convex problem, the difficulty of establishing local convergence in our case is, if the algorithm's solution is driven off the current neighborhood of attraction by stochastic noise at any iteration, it may be drawn to a different attractor.
Fixing a $(b_0,\alpha)$-stable stationary point $C^*$, suppose the algorithm is within the neighborhood of attraction of $C^*$ at time $\tau$.
The event ``the algorithm's iterate is within radius $b_0$ to $C^*$ up to $t-1$'' can be formalized as:
\vspace{-0.2cm}
\begin{eqnarray}
\Omega_t:=\{\Delta(C^i,C^*)\le b_0\phi^*, \forall \tau\le i<t\}
\end{eqnarray}
Letting $t\rightarrow\infty$ leads to the following definition:
\vspace{-0.15cm}
\begin{eqnarray}
\Omega_{\infty}:=\{\Delta(C^i,C^*)\le b_0\phi^*, \forall i\ge \tau\}
\end{eqnarray}
Clearly, local convergence to $C^*$ implies $\Omega_{\infty}$; Lemma \ref{lm:stable_stat} also requires $\Omega_{\infty}$ as a prerequisite. So we need to show
$Pr(\Omega_{\infty})\approx 1$, even with stochastic noise.
\subsubsection{Inequality for a martingale-like process}
\label{sec:martingale}
We use $\Delta^t:=\Delta(C^t,C^{*})$ as a shorthand and we let $E_{\Omega_t}[\cdot]$ denote the expectation conditioning on $\Omega_t$.
Let $\Omega$ represent the sample space starting from $\tau$, then 
$\Omega_{t+1}\subset\Omega_{t}\subset \Omega, \forall t> \tau$. 
Conditioning on $\Omega_t$, we can apply Lemma \ref{lm:stable_stat} to get
\vspace{-0.3cm}
\begin{eqnarray*}
\Delta^t\le
\Delta^{t-1}(1-\frac{\beta}{t_o+t})
+[\frac{c^{\prime}}{t_o+t}]^2\epsilon_1^t 
+\frac{2c^{\prime}}{t_o+t}\epsilon_2^t 
~~| \Omega_t
\end{eqnarray*}
where with probability $1$, $\beta\ge 2$, and the stochastic noise terms $\epsilon_1^t, \epsilon_2^t$ are of order $O(\phi^{t-1})$.
Therefore, $(\Delta^t)$ is a supermartingale-like process with bounded stochastic noise, conditioning on $\Omega_t$.

To exploit this conditional structure, we partition the failure event $\Omega\setminus \Omega_{\infty}$, i.e., the event that the algorithm eventually escapes this neighborhood, as a disjoint union of events $\Omega_{t}\setminus\Omega_{t+1}$, and then our task becomes upper bounding $Pr(\Omega_{t}\setminus\Omega_{t+1})$ for all $t$.
To achieve this, we first derive an upper bound on the conditional moment generating function $E_{\Omega_t}[\exp\lambda\Delta^t]$ as a function of $b_0\phi^*$ and the noise terms, using ideas in \cite{balsubramani13}.
Then applying conditional Markov's inequality, we get
\vspace{-0.2cm}
\begin{eqnarray*}
Pr(\Omega_{t}\setminus\Omega_{t+1})
=
Pr\{\Delta^t>b_0\phi^*|\Omega_t\}
\le \frac{E_{\Omega_t}[\exp\lambda\Delta^t]}{\exp{\lambda b_0\phi^*}}
\end{eqnarray*}
Since the inequality holds for all $\lambda>0$, we can choose $\lambda$ as a function of $\ln t$, which enables us to bound 
$Pr(\Omega_{t}\setminus\Omega_{t+1})$ by $\frac{\delta}{(t+1)^2}$, for all $t\ge 1$, $\delta>0$, 
with sufficiently large $c^{\prime}$ and $t_o$ in \eqref{learning_rate}. 
This implies 
\vspace{-0.2cm}
$$
P(\Omega_{\infty})=1-\sum_{t\ge \tau}Pr(\Omega_{t}\setminus\Omega_{t+1})\ge 1-\delta
$$
Essentially, this is our variant of martingale large deviation bound. Comparing to related work \cite{balsubramani13, ge:sgd_tensor}, our technique yields a tighter bound on the failure probability than \cite{ge:sgd_tensor}, which uses Azuma's inequality, and is much simpler than \cite{balsubramani13};
the latter constructs a complex nested sample space and applies Doob's inequality, 
whereas ours simply uses Markov's inequality. 
In addition, our technique allows us to explore the noise dependence on $\Omega_t$, which leads to a weaker dependence of parameter $t_o$ on the initial condition $b_o\phi^*$.

We believe this technique can be useful for other non-convex analysis of stochastic methods. 
We provide one example here. Our current analysis considers the flat learning rate in \eqref{learning_rate}. 
However, in practice the following adaptive learning rate is commonly used:
\vspace{-0.265cm}
\begin{eqnarray}
\label{defn:adaptive_rate}
\eta^t_r:=\frac{\hat{n}_r^t}{\sum_{i\le t}\hat{n}_r^i}
\end{eqnarray}
We conjecture that stochastic $k$-means with the above learning rate also has $O(\frac{1}{t})$ convergence, as supported by our experiments (see Section \ref{sec:experiments}).
However, it is difficult to incorporate \eqref{defn:adaptive_rate} into our analysis: $\hat n_r^i$ is a random quantity whose probability depends on the clustering configuration $v(C^{i-1})$. 
To establish $O(\frac{1}{t})$ convergence, we need to show $E\eta_r^t\approx \Theta(\frac{1}{t})$. 
Without additional information, this is hopeless, as $\eta_r^t$ depends on information of the entire history of the process. 
But conditioning on $\Omega_t$, we can show that $n_r^i \approx n_r^*$, for all $r\in [k], i\ge \tau$.
Using this relation, we may approximate $E\eta_r^t$. 
Since our technique allows this conditional dependence, we may extend our local convergence analysis to incorporate the case where $\eta_r^t$ is adaptive.

Finally, conditioning on $\Omega_t$, we can combine Lemma \ref{lm:stable_stat} with the standard arguments in stochastic gradient descent \cite{rakhlin, arora:sparsecode, balakrishnan:EM} to obtain the $O(\frac{1}{t})$ local convergence rate (Theorem \ref{thm:mbkm_local}):
$
E_{\Omega_t}[\Delta(C^t,C^*)] = O(\frac{1}{t})
$.
\subsubsection{Proof sketch of main theorems}
Equipped with the necessary ingredients, we are ready to explain the analysis that leads to our main theorems.

Theorem \ref{thm:km} is a global convergence result. To prove it, we divide our analysis of Algorithm \ref{alg:MBKM} into two phases, that of global convergence and local convergence, indicated by the distance from the current solution to stationary points, $\Delta(C^t,C^*)$.

We define global convergence phase as a time interval of random length $\tau$ such that
$\forall t <\tau$, 
$\forall C^*\in\{C^*\}$, $\Delta(C^t,C^*)> \frac{1}{2}r_{\min}\phi^*$ ($r_{\min}$ as defined in Lemma \ref{lm:stat_stab2}).
During this phase, we obtain a lower bound on the expected decrease in $k$-means objective (Lemma \ref{lm:kmeans}): 
\begin{eqnarray*}
E[\phi^{t+1}-\phi^t|F_t]
\le
-2\eta^{t+1}p_{\min}^{t+1} (\phi^t - \tilde\phi^t) 
+(\eta^{t+1})^26\phi^t
\end{eqnarray*}
Here,
$\tilde\phi^t:=\sum_r\sum_{x\in v(c_r^{t})}\|x-m(v(c_r^{t}))\|^2$ 
and $p_{\min}^{t}:=\min_{r,p_r^{t}(m)>0}p_r^{t}(m)$, with
$
p_r^t(m)
=Pr\{c_r^{t-1}\mbox{~is updated at~}t\mbox{~with sample size~}m\}
= 1- (1-\frac{n_r^{t}}{n})^m
$.
Thus, the term $p_{\min}^{t+1} (\phi^t - \tilde\phi^t)$
lower bounds the drop in $k$-means objective. 
For $p_r^{t+1}(m)>0$, by the discrete nature of cluster assignment, $n_r^t\ge 1$.
So
$p_{\min}^{t+1}\ge 1- (1-\frac{1}{n})^m\ge 1- e^{-\frac{m}{n}}$.

On the other hand, $\phi^t-\tilde\phi^t=\Delta(C^t, m(v(C^t)))$ by Lemma \ref{centroidal}. 
Thus, to lower bound the decrease by zero, we only need to lower bound $\Delta(C^t, m(v(C^t)))$.
The idea is, in case $m(v(C^t)))$ is a non-stationary point, by part 2 of Lemma \ref{lm:stat_stab2},
$\Delta(C^t, m(v(C^t)))> \frac{1}{2}r_{\min}\phi(m(v(C^t)))$.
Otherwise, $m(v(C^t)))$ is a stationary point, and by definition of the global convergence phase, the same lower bound applies, which implies
$p_{\min}^t (\phi^t - \tilde\phi^t)$ is lower bounded by a positive constant in the global convergence phase. 
Since we choose $\eta^t:=\Theta(\frac{1}{t})$, 
the expected per iteration drop of cost is of order $\Omega(\frac{1}{t})$, which forms a divergent series; after a sufficient number of iterations the expected drop can be arbitrarily large. We conclude that $\Delta(C^t,C^*)$ cannot be bounded away from zero asymptotically, since the $k$-means cost of any clustering is positive (Lemma \ref{lm:limit_cluster}).
Hence, starting from any initial point $C^0$, the algorithm will always be drawn to a stationary point, ending its global convergence phase after a finite number of iterations, i.e., $Pr(\tau<\infty)=1$. 

At the beginning of the local convergence phase,
$\Delta(C^{\tau},C^*)\le \frac{1}{2}r_{\min}\phi^*$ for some $C^*\in \{C^*\}$.
Again by Lemma \ref{lm:stat_stab2}, the algorithm is within the neighborhood of attraction of $C^*$, and thus we can apply the local convergence result in Theorem \ref{thm:mbkm_local}. Combining both phases leads us to Theorem \ref{thm:km}.
\begin{thm}\label{thm:km}
Suppose $X$ satisfies Assumption (A). Fix any $0<\delta <\frac{1}{e}$, if we run Algorithm \ref{alg:MBKM} with arbitrary
$C^0$ such that $C^0\subset conv(X)$, and any mini-batch size $m\ge 1$, and choose learning rate $\eta^t=\frac{c^{\prime}}{t+t_o}$ 
such that 
$$
c^{\prime}
> 
\max\{
\frac{\phi_{\max}}{(1-e^{-\frac{m}{n}})r_{\min}\phi_{opt}},
\frac{1}{(1-e^{\frac{4m}{5n}})}
\}
$$
$$
t_o\ge 768(c^{\prime})^2(1+\frac{1}{r_{\min}})^2n^2\ln^2\frac{1}{\delta}
$$
Then there exists events $G(A^{*})$, parametrized by $A^{*}$, such that 
$$
Pr\{\cup_{A^{*}\in \{A^*\}_{[k]}} G(A^{*})\}\ge 1-\delta
$$
For any stationary clustering $A^*$, we have
$\forall t\ge 0$,
$$
E\{\phi^t-\phi^{*}|G(A^{*})\} = O(\frac{1}{t})
$$
\end{thm}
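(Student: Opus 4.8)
The plan is to split the run of Algorithm \ref{alg:MBKM} into a global phase and a local phase and glue them with a stopping-time argument, exactly as the paragraphs above outline. Let $\tau$ be the first iteration at which $\Delta(C^\tau, C^*) \le \tfrac12 r_{\min}\phi^*$ for some $C^* \in \{C^*\}$, so that for $t<\tau$ the iterate is $\tfrac12 r_{\min}\phi^*$-far from every stationary point.

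\textbf{Step 1 (the global phase ends a.s.).} For $t<\tau$ I would feed two lower bounds into Lemma \ref{lm:kmeans}: first $p_{\min}^{t+1}\ge 1-e^{-m/n}$ by integrality of cluster sizes; second, rewriting $\phi^t-\tilde\phi^t=\Delta(C^t,m(v(C^t)))$ via Lemma \ref{centroidal} and using part 2 of Lemma \ref{lm:stat_stab2} when $m(v(C^t))\notin\{C^*\}$ and the definition of the global phase otherwise, one gets $\phi^t-\tilde\phi^t \ge \tfrac12 r_{\min}\phi(m(v(C^t))) \ge \tfrac12 r_{\min}\phi_{opt}$. With $\phi^t\le\phi_{\max}$ and $\eta^t = c'/(t+t_o)$, the lower bound on $c'$ renders the $O((\eta^{t+1})^2)$ term negligible against the $\Theta(\eta^{t+1})$ drop, so $E[\phi^{t+1}-\phi^t\mid F_t]\le -c_0\,\eta^{t+1}$ for a positive constant $c_0$ while $t<\tau$. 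Since $\sum_t\eta^t=\infty$ while $\phi^t\ge\phi_{opt}>0$ (Lemma \ref{lm:limit_cluster}), a supermartingale/optional-stopping argument on the stopped process forces $\Pr(\tau<\infty)=1$: staying in the global phase forever would drive the expected cost below any bound.

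\textbf{Step 2 (define the good events and lower-bound their probability).} On $\{\tau<\infty\}$ the iterate at time $\tau$ lies within $\tfrac12 r_{\min}\phi^*$ of at least one stationary point; fix the corresponding stationary clustering $A^*$ (up to relabeling by $\pi$, whence the union over $\{A^*\}_{[k]}$). By part 1 of Lemma \ref{lm:stat_stab2}, $C^*=m(A^*)$ is $(r_{\min},0)$-stable, so $\tau$ opens the local phase with $\Omega_\tau$ holding and room to spare. Let $G(A^*)$ be the event that $\tau<\infty$, the iterate at $\tau$ is $\tfrac12 r_{\min}\phi^*$-close to $m(A^*)$, and $\Omega_\infty$ holds (the iterate never leaves the radius-$r_{\min}$ neighborhood of $C^*$ afterwards). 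The martingale large-deviation bound of Section \ref{sec:martingale} — whose hypotheses are met precisely by the stated $c'$ and $t_o\ge 768(c')^2(1+1/r_{\min})^2 n^2\ln^2(1/\delta)$ — gives $\Pr(\Omega_\infty\mid\text{close at }\tau)\ge 1-\delta$. Since the finitely many ``close at $\tau$'' events cover $\{\tau<\infty\}$, which has probability $1$, averaging over them yields $\Pr\{\cup_{A^*}G(A^*)\}\ge 1-\delta$.

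\textbf{Step 3 ($O(1/t)$ rate on each $G(A^*)$, and the main obstacle).} Conditioning on $G(A^*)$ places us inside $\Omega_\infty\subseteq\Omega_t$ for every $t$, so the recursion for $\Delta^t:=\Delta(C^t,C^*)$ from Section \ref{sec:martingale} applies; iterating it with the standard SGD bookkeeping of Theorem \ref{thm:mbkm_local} and $\Delta^\tau\le\tfrac12 r_{\min}\phi^*$ gives $E\{\Delta^t\mid G(A^*)\}=O(1/t)$, and Lemma \ref{lm:kmdist_cdist} then gives $\phi^t-\phi^*\le\Delta^t$, hence $E\{\phi^t-\phi^*\mid G(A^*)\}=O(1/t)$. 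The delicate part I expect is precisely this handoff at the random time $\tau$: $\tau$ is an unbounded stopping time, the identity of the attracting $C^*$ is itself random, and the final conditioning is on $G(A^*)\subseteq\Omega_\infty$ rather than on the $\Omega_t$'s used to derive the recursion, so one must check that conditioning on this smaller event preserves the supermartingale recursion and that the $O(1/t)$ constant does not secretly depend on $\tau$ in an uncontrolled way. Verifying that the prescribed $c'$ and $t_o$ are exactly what the moment-generating-function estimate needs is the other fiddly point, but that is inherited bookkeeping from the martingale lemma rather than a genuine difficulty.
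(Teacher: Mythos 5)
Your two-phase plan (stopping time $\tau$, global phase via Lemma \ref{lm:kmeans} and part 2 of Lemma \ref{lm:stat_stab2}, local phase via the $(r_{\min},0)$-stability of Lemma \ref{lm:stat_stab2} and Theorem \ref{thm:mbkm_local}, probability of the good events via the martingale bound, and $\phi^t-\phi^*\le\Delta^t$ from Lemma \ref{lm:kmdist_cdist}) is the same skeleton as the paper's proof. However, there is a genuine gap: the theorem asserts $E\{\phi^t-\phi^*\mid G(A^*)\}=O(\frac{1}{t})$ for \emph{all} $t\ge 0$, including iterations $t<\tau$, and your argument never establishes a rate in that regime. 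Your Step 1 only uses the drop $E[\phi^{t+1}-\phi^t\mid F_t]\le -c_0\eta^{t+1}$ to force $\Pr(\tau<\infty)=1$, and your Step 3 starts the $O(\frac{1}{t})$ recursion only at the random time $\tau$, so the resulting bound has a prefactor of the form $(\frac{t_o+\tau+1}{t_o+t+1})^{\beta}\Delta^{\tau}$ whose constant grows with $\tau$ (which is unbounded); you name this as the ``delicate part'' but do not resolve it, and without a pre-$\tau$ rate it cannot be resolved, since conditioning on $G(A^*)=\cup_T G_T(A^*)$ requires a bound uniform over the value of $\tau$.

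The paper closes exactly this hole with Lemma \ref{lm:thm_part1}: conditioning on the disjoint events $G_T(A^*)=\{\tau=T\}\cap\{A^*(\tau)=A^*\}\cap\{\forall t\ge T,\ \Delta^t\le r_{\min}\phi^*\}$, it shows that during the global phase ($t<T$) one has $1-\frac{\tilde\phi^t}{\phi^t}\ge\frac{r_{\min}}{2}\frac{\phi_{opt}}{\phi_{\max}}$, and the first condition on $c^{\prime}$, namely $c^{\prime}>\frac{\phi_{\max}}{(1-e^{-m/n})r_{\min}\phi_{opt}}$, is used precisely to make the contraction coefficient $a>1$ in the recursion of Lemma \ref{lm:tech_2}, yielding $E[\phi^t-\phi(A^*)\mid G_T(A^*)]\le \frac{t_o+1}{t_o+t+1}(\phi^0-\phi(A^*))+O(\frac{1}{t})$ with constants independent of $T$; in your proposal that condition on $c^{\prime}$ is only used to make the second-order term ``negligible,'' which is weaker. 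Combining this pre-$T$ bound with Theorem \ref{thm:mbkm_local} for $t\ge T$ and then observing that the resulting $O(\frac{1}{t})$ bound does not depend on $T$ is what lets the paper pass from $G_T(A^*)$ to $G(A^*)$. Relatedly, the explicit conditioning on $\{\tau=T\}$ (an $F_T$-measurable event) is what legitimizes invoking Theorem \ref{thm:mbkm_local} and Proposition \ref{prop:high_prob} conditionally and then summing the disjoint pieces to get $\Pr\{\cup_{A^*}G(A^*)\}\ge 1-\delta$; your ``average over the covering events'' step should be carried out at this level of granularity.
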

\paragraph{Remark:}
$\cup_{A^{*}\in \{A^*\}} G(A^{*})$ is contained in the event that Algorithm \ref{alg:MBKM} converges to a stationary point. Thus, Theorem \ref{thm:km} implies that, with any reasonable initialization and sufficiently large $c^{\prime}$, $t_o$, stochastic $k$-means converges globally almost surely; conditioning on global convergence to a stationary point $A^*$, the convergence rate is $O(\frac{1}{t})$ in expectation. 
Also note $\phi_{\max}$ is upper bounded, since $C^0\subset conv(X)$ implies $C^t\subset conv(X)$, $\forall t\ge 1$ (see Claim \ref{claim:conv_hull}). 
Finally, note that Theorem \ref{thm:km} establishes global convergence to a local optimum, but it does not guarantee that stochastic $k$-means converges to the same local optimum as its batch counterpart, even with the same initialization.
 
Our next theorem complements Theorem \ref{thm:km} in the sense that it provides local convergence to a global optimum of $k$-means problem: it shows that if we use Algorithm \ref{alg:seeding} as our seeding algorithm and the optimal clustering satisfies $f(\alpha)$-clusterability, then stochastic $k$-means converges to a global optimum of $k$-means objective at rate $O(\frac{1}{t})$. 
Its proof has three parts: First, we show $f(\alpha)$-clusterability implies $(b_0,\alpha)$-stability, as stated in Proposition \ref{prop:geom}. 
Second, we show $C^0$ found by Algorithm \ref{alg:seeding} is within the neighborhood of attraction of the optimal solution with high probability, by adapting Theorem of \cite{tang_montel:aistats16} with the additional assumption that
\vspace{-0.2cm}
\begin{align}\label{eqn:add_assumption}
f(\alpha)\ge 5\sqrt{\frac{1}{2w_{\min}}\ln (\frac{2}{\xi p^*_{\min}}\ln\frac{2k}{\xi})}
\end{align}
where $w_{\min}$ and $p^*_{\min}$ are geometric properties of clustering $v(C^*)$ (see definitions in Appendix).
Finally, combining these with Theorem \ref{thm:mbkm_local} completes the proof.
\begin{algorithm}[t]
\caption{Buckshot seeding \cite{tang_montel:aistats16}}
\label{alg:seeding}
\begin{algorithmic}
\STATE $\{\nu_i,i\in [m_o]\} \leftarrow $ sample $m_o$ points from $X$ uniformly at random with replacement
\STATE $\{S_1,\dots,S_k\} \leftarrow $run Single-Linkage on $\{\nu_i,i\in [m_o]\}$ until there are only $k$ connected components left
\STATE $C^0=\{\nu_r^{*},r\in [k]\}\leftarrow$ take the mean of the points in each connected component $S_r,r\in [k]$
\end{algorithmic}
\end{algorithm}
\begin{thm}
\label{thm:solution}
Suppose $(X, C^*)$ satisfies Assumption (B) and $f(\alpha)$ in addition satisfies
\eqref{eqn:add_assumption} for any $0<\alpha<1$, $\xi>0$.
Fix $\beta\ge 2$, and $0<\delta<\frac{1}{e}$.
If we initialize $C^o$ in Algorithm \ref{alg:MBKM} by Algorithm \ref{alg:seeding}, with $m_o$  satisfying
$
\frac{\log\frac{2k}{\xi}}{p^*_{\min}}
<m_o
<\frac{\xi}{2}\exp\{2(\frac{f(\alpha)}{4}-1)^2w_{\min}^2\}
$, 
and running Algorithm \ref{alg:MBKM} with learning rate of the form 
$\eta^t = \frac{c^{\prime}}{t+t_o}$ and mini-batch size $m$ so that
$$
m > \frac{\ln (1-\sqrt{\alpha})}{\ln (1-\frac{4}{5}p_{\min}^*)}
$$
\vspace{-0.32cm}
$$
c^{\prime}>\frac{\beta}{2[1-\sqrt{\alpha}-e^{-\frac{4}{5}mp_{\min}^*}]}
\mbox{~and~~}
t_o\ge
867(c^{\prime})^2n^2\ln^2\frac{1}{\delta}
$$
Then $\forall t\ge 1$, there exists event $G_t\subset \Omega$ s.t.
$$
Pr\{G_t\}\ge (1-\delta)(1-\xi) \mbox{~~and}
$$ 
$$
E[\phi^t|G_t]-\phi^{*}\le E[\Delta(C^t,C^{*})|G_t] = O(\frac{1}{t})
$$
\end{thm}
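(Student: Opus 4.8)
The plan is to follow the three-part outline indicated after the statement: (i) upgrade $f(\alpha)$-clusterability to $(b_0,\alpha)$-stability, (ii) show the Buckshot seed $C^0$ lands inside the neighborhood of attraction of $C^*$ with probability at least $1-\xi$, and (iii) run the local-convergence machinery of Section~\ref{sec:martingale} starting from $\tau=0$. For (i), Proposition~\ref{prop:geom} applied to the pair $(X,C^*)$ gives immediately that $C^*$ is $\bigl(b_0,\alpha\bigr)$-stable with $b_0=f(\alpha)^2/16^2$, so everything downstream uses $b_0\phi^*$ as the size of the basin. Since Assumption~(B) already requires $C^*\in\{C^*\}$, $C^*$ is a genuine stationary point, and by Lemma~\ref{lm:kmdist_cdist} any $C$ with $\Delta(C,C^*)\le b_0\phi^*$ satisfies $\phi(C)-\phi^*\le\Delta(C,C^*)\le b_0\phi^*$; in particular $\phi(C)=O(\phi^*)$ throughout the basin, which is what keeps the stochastic-noise terms bounded in step (iii).

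For step (ii), I would adapt the seeding guarantee of \cite{tang_montel:aistats16} for Algorithm~\ref{alg:seeding}. Two things must hold for Single-Linkage on the $m_o$ subsampled points to recover the $k$ clusters of $v(C^*)$ and for the resulting component means to be close to $C^*$: every cluster must be hit by a sampled point --- a coupon-collector / Chernoff argument needs $m_o\gtrsim \log(2k/\xi)/p^*_{\min}$ --- and, within the sample, no within-cluster linkage gap may exceed the inter-cluster margin, which, by the $\delta$-margin hypothesis of Assumption~(B) and a union bound over pairs of subsampled points, holds once $m_o$ is not too large, precisely $m_o<\tfrac{\xi}{2}\exp\{2(\tfrac{f(\alpha)}{4}-1)^2w_{\min}^2\}$. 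The extra hypothesis \eqref{eqn:add_assumption} is exactly the inequality that makes these two bounds on $m_o$ compatible, i.e.\ that makes the window $\log(2k/\xi)/p^*_{\min}<m_o<\tfrac{\xi}{2}\exp\{2(\tfrac{f(\alpha)}{4}-1)^2w_{\min}^2\}$ nonempty. A short vector-concentration step then bounds the deviation of each empirical component mean from the true $m(A_r^*)$, yielding $\Delta(C^0,C^*)\le b_0\phi^*$ (with a constant factor to spare) on a ``good seeding'' event $G_0$ with $Pr(G_0)\ge 1-\xi$.

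Step (iii) is then a direct invocation of the tools already built. Conditioning on $G_0$ we set $\tau=0$ and $\Omega_\infty=\{\Delta(C^i,C^*)\le b_0\phi^*\ \forall i\ge 0\}$. Since $C^*$ is $(b_0,\alpha)$-stable, Lemma~\ref{lm:stable_stat} applies conditionally on $\Omega_t$ and yields the supermartingale-like recursion $\Delta^t\le \Delta^{t-1}\bigl(1-\tfrac{\beta}{t_o+t}\bigr)+\bigl(\tfrac{c'}{t_o+t}\bigr)^2\epsilon_1^t+\tfrac{2c'}{t_o+t}\epsilon_2^t$ with $\beta\ge 2$; the lower bound $\beta\ge2$ is where the stated constraints on $m$ and $c'$ enter, since the per-step contraction from mixing in $\hat c_r^t$ is of $1-(1-p^*_{\min})^m$ type and the conditions $m>\ln(1-\sqrt\alpha)/\ln(1-\tfrac{4}{5} p^*_{\min})$ and $c'>\beta/(2[1-\sqrt\alpha-e^{-\frac{4}{5} m p^*_{\min}}])$ guarantee the effective coefficient exceeds $2/(t_o+t)$. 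Feeding this into the conditional moment-generating-function / conditional-Markov bound of Section~\ref{sec:martingale} gives $Pr(\Omega_t\setminus\Omega_{t+1})\le \delta/(t+1)^2$ once $t_o\ge 867(c')^2n^2\ln^2(1/\delta)$, hence $Pr(\Omega_\infty\mid G_0)\ge 1-\delta$; and the standard SGD-style recursion on $E_{\Omega_t}[\Delta^t]$ (Theorem~\ref{thm:mbkm_local}) gives $E_{\Omega_t}[\Delta^t]=O(1/t)$. Taking $G_t:=\Omega_t\cap G_0$ and using that the seeding sample is independent of the later mini-batches, $Pr(G_t)\ge(1-\delta)(1-\xi)$, and $E[\phi^t\mid G_t]-\phi^*\le E[\Delta(C^t,C^*)\mid G_t]=O(1/t)$ by Lemma~\ref{lm:kmdist_cdist}.

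The main obstacle is step (ii): one must translate the abstract $\delta$-margin of $C^*$ into a correctness guarantee for Single-Linkage on a \emph{random subsample}, controlling simultaneously the probability that a cluster is missed and the probability that two subsampled points from different clusters get linked before a cluster is completed, and matching the constants so the $m_o$-window is nonempty is exactly what forces \eqref{eqn:add_assumption}. A secondary, more bookkeeping obstacle is the apparent circularity in step (iii): bounding $\epsilon_1^t,\epsilon_2^t=O(\phi^{t-1})$ requires $\phi^{t-1}=O(\phi^*)$, which only holds on $\Omega_t$ --- but that is precisely the event being conditioned on, so the dependence is benign once the argument is organized around the nested events $\Omega_{t+1}\subset\Omega_t$ as in Section~\ref{sec:martingale}.
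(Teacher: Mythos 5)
Your proposal follows essentially the same route as the paper's proof: Proposition \ref{prop:geom} gives $(f(\alpha)^2/16^2,\alpha)$-stability, the Buckshot seeding guarantee adapted from \cite{tang_montel:aistats16} (the paper's Lemmas \ref{lm:adapt_tang} and \ref{lm:augmented}, with \eqref{eqn:add_assumption} exactly ensuring the $m_o$-window is nonempty) places $C^0$ within half the basin radius with probability $1-\xi$, and conditioning on that event one invokes Theorem \ref{thm:mbkm_local} and Lemma \ref{lm:kmdist_cdist}, setting $G_t=\Omega_t\cap F$ to get $Pr\{G_t\}\ge(1-\delta)(1-\xi)$ and the $O(1/t)$ rate. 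This matches the paper's argument, including the observation that the stated $c'$ and $t_o$ conditions imply those required by Theorem \ref{thm:mbkm_local}.
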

\paragraph{Remark:}
Theorem \ref{thm:solution} in fact applies to any stationary point satisfying $f(\alpha)$-clusterability, which includes the optimal $k$-means solution. 
Interestingly, we cannot provide guarantee for online $k$-means ($m=1$) here.
Our intuition is, instead of allowing stochastic $k$-means to converge to any stationary point as in Theorem \ref{thm:km}, it studies convergence to a target stationary point; a larger $m$ provides more stability to the algorithm and prevents it from straying away from the target.
\vspace{-0.3cm}
\begin{figure*}[t]
\begin{subfigure}{.55\textwidth}
  \centering
  \includegraphics[width=\linewidth, height = 0.5\textwidth]{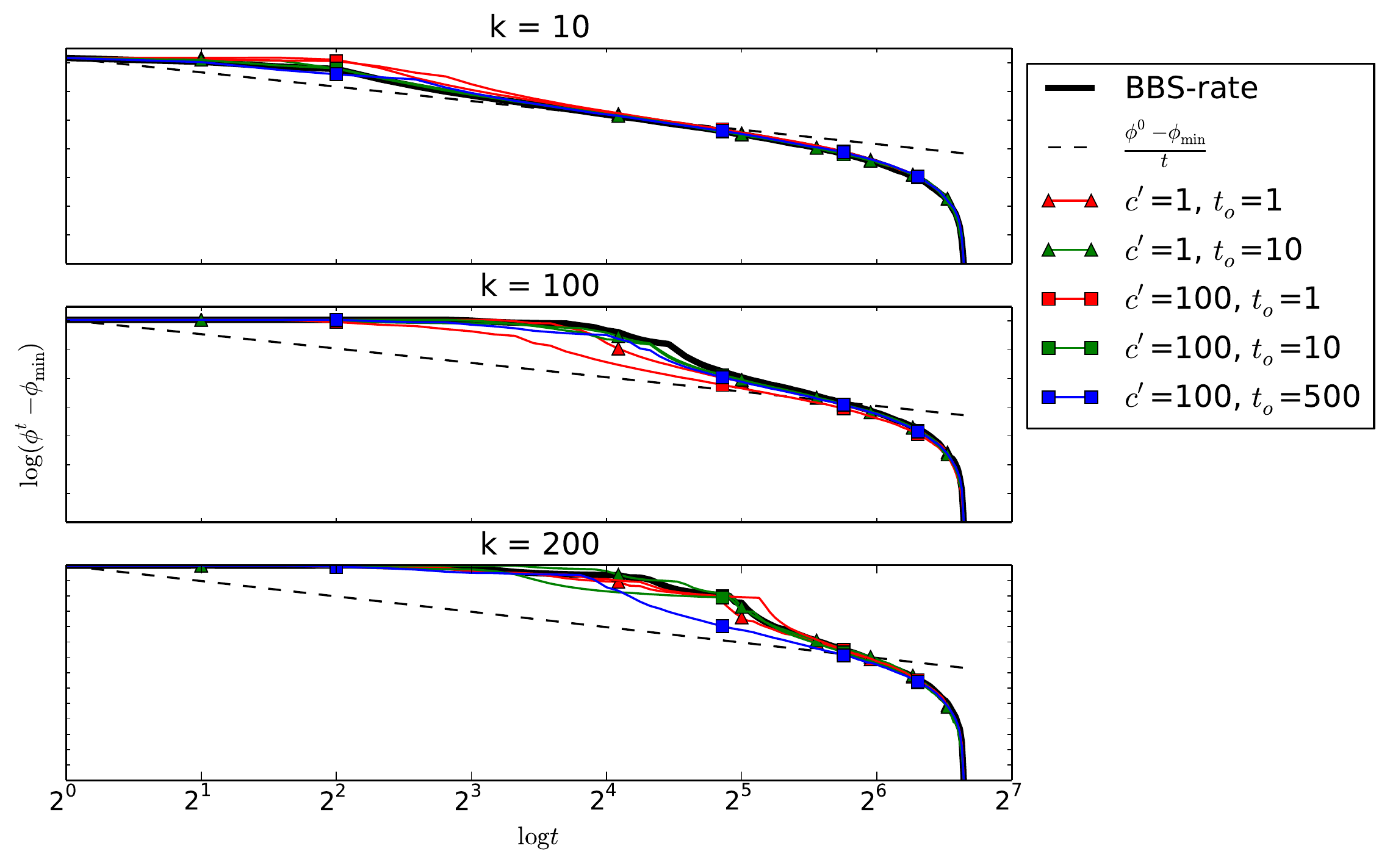}
  \caption{$m=1$, log-log plot}
  \label{fig:mb1_log}
\end{subfigure}%
\begin{subfigure}{.4\textwidth}
  \centering
  \includegraphics[width=\linewidth, height = 0.68\textwidth]{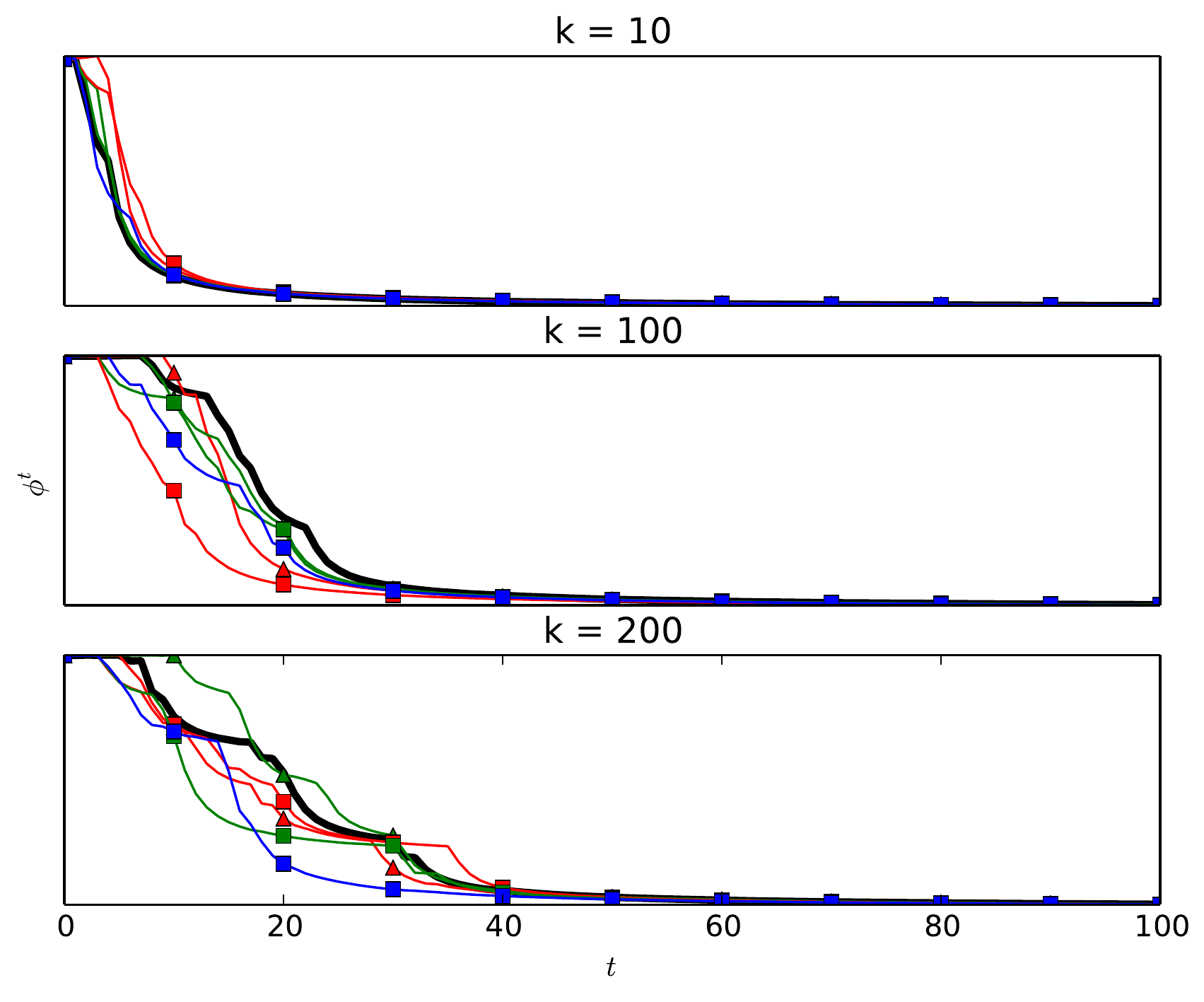}
  \caption{$m=1$, original plot}
  \label{fig:mb1_origin}
\end{subfigure}
\vspace{-0.1cm}
\vspace{-0.3cm}
\end{figure*}
\setcounter{figure}{1}
\begin{figure*}
\begin{subfigure}{.33\linewidth}
  \centering
  \includegraphics[width=\linewidth, height = 0.8\textwidth]{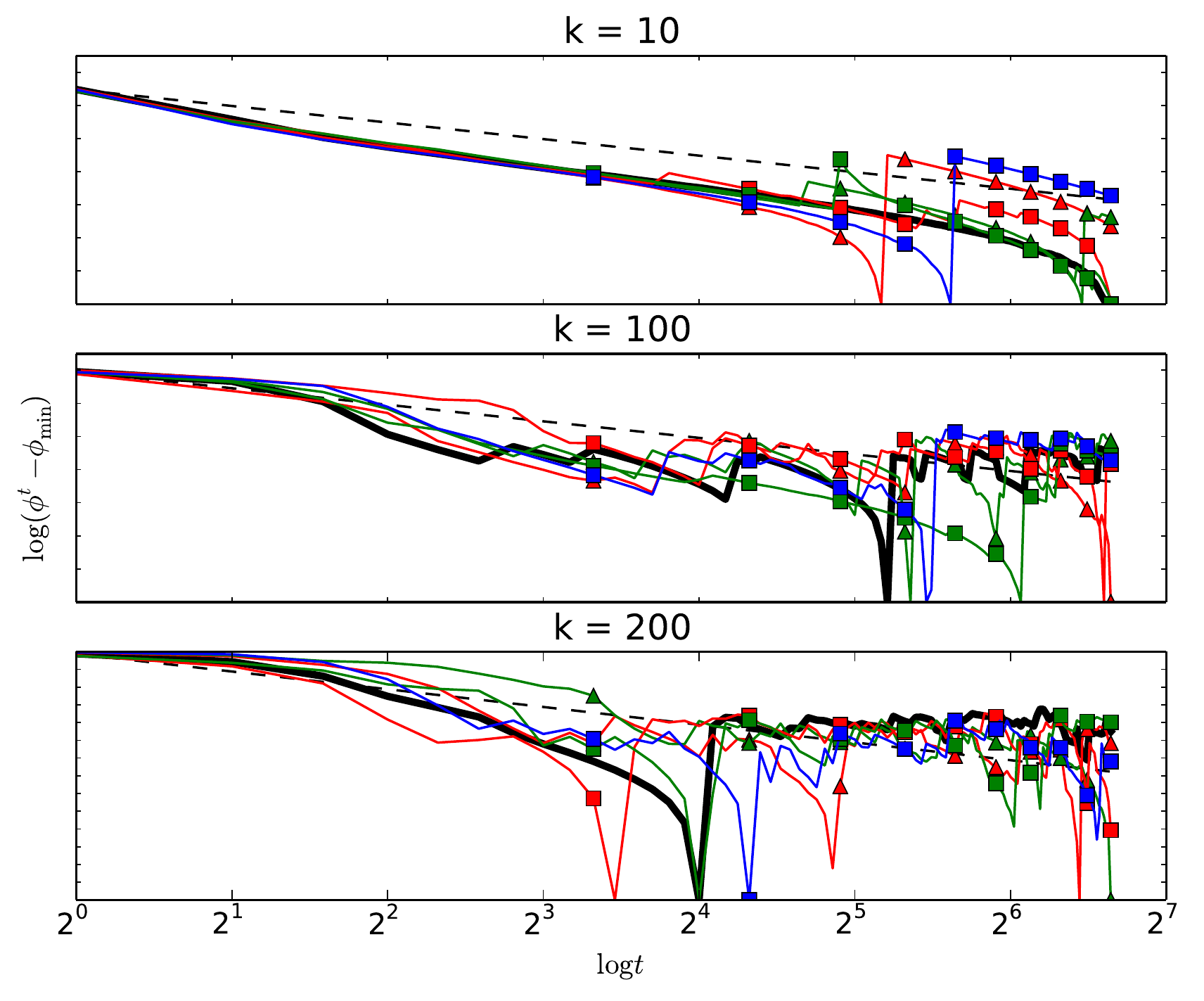}
  \caption{$m=10$}
  \label{fig:mb10_log}
\end{subfigure}%
\begin{subfigure}{.33\linewidth}
  \centering
  \includegraphics[width=\linewidth, height = 0.8\textwidth]{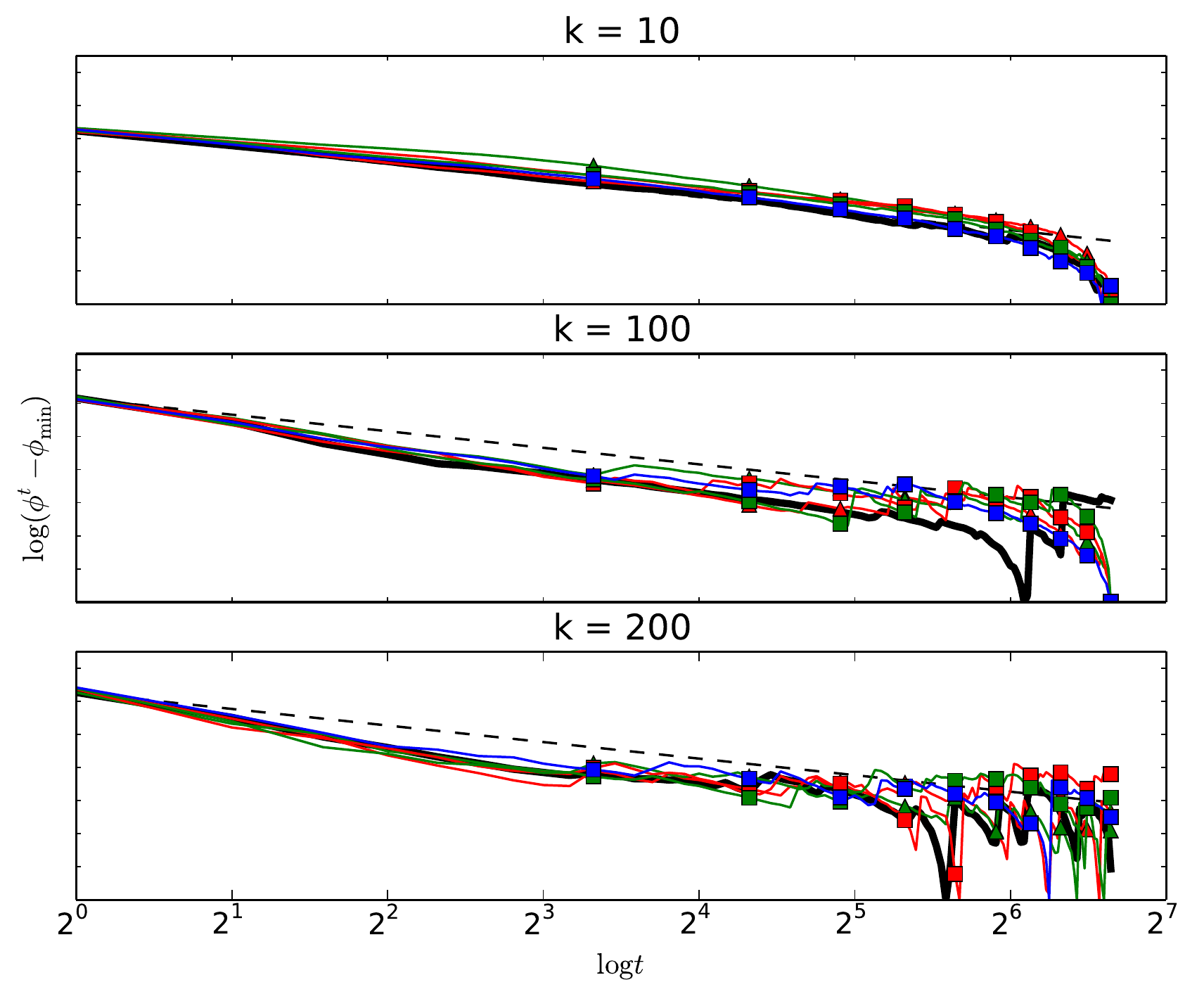}
  \caption{$m=100$}
  \label{fig:mb100_log}
\end{subfigure}
\begin{subfigure}{.33\linewidth}
  \centering
  \includegraphics[width=\linewidth, height = 0.8\textwidth]{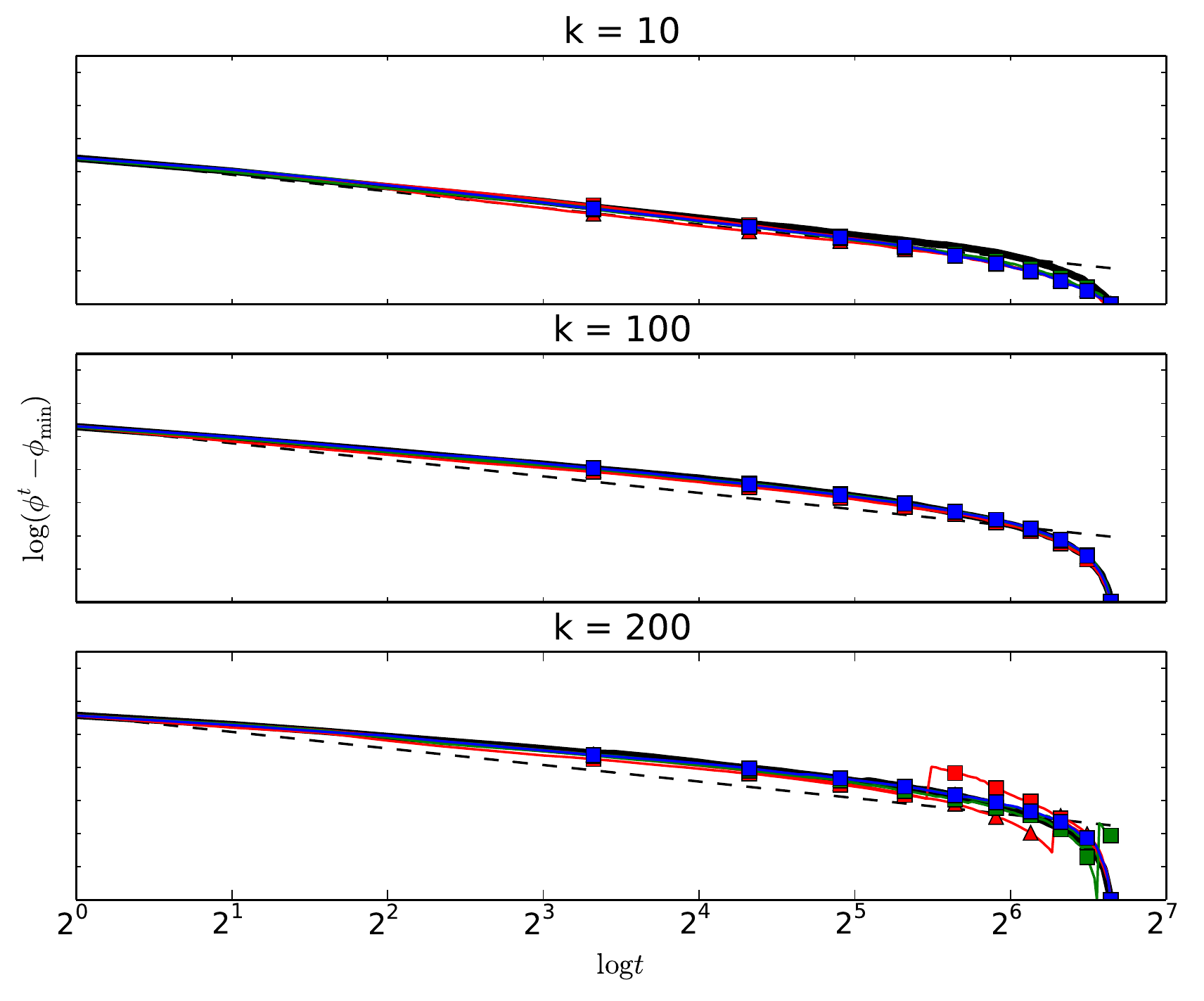}
  \caption{$m=1000$}
  \label{fig:mb1000_log}
\end{subfigure}
\caption{Convergence graphs of stochastic $k$-means}
\label{fig:log}
\vspace{-0.4cm}
\end{figure*}
\section{Experiments}
\label{sec:experiments}
We use \texttt{Python} and its \texttt{scikit-learn} package \cite{scikit-learn} for our experiments, which has stochastic $k$-means implemented. We disabled centroid relocation and modified their source code to allow a user-defined learning rate (their learning rate is fixed as $\eta^t_r:=\frac{\hat{n}_r^t}{\sum_{i\le t}\hat{n}_r^i}$, as in \cite{BottouBengio, Sculley}, which we refer to as \textbf{BBS-rate} subsequently). 
\subsection{Verification of the linear learning rate}\label{sec:exp1}
To verify the $O(\frac{1}{t})$ global convergence rate of Theorem \ref{thm:km}, we first run stochastic $k$-means with varying learning rate, mini-batch size, and $k$ on \texttt{RCV1} \cite{data:rcv1}. The dataset has manually categorized $804414$ newswire stories with $103$ topics, where each story is a $47236$-dimensional sparse vector; it was used in \cite{Sculley} for empirical evaluation of mini-batch $k$-means.
We experiment with both the flat learning rate in \eqref{learning_rate} and the adaptive learning rate in \eqref{defn:adaptive_rate}.
Figure \ref{fig:log} shows the convergence in $k$-means cost of stochastic $k$-means algorithm over $100$ iterations with different choices of $m$ and $k$; fix each pair $(m, k)$, we initialize Algorithm \ref{alg:MBKM} with a same set of $k$ randomly chosen data points and run stochastic $k$-means with varying learning rate parameters $(c^{\prime}, t_o)$, and we average the performance of each learning rate setup over $5$ runs to obtain the original convergence plot. 
Figure \ref{fig:mb1_origin} is an example of a convergence plot before transformation. The dashed black line in each log-log figure is $\frac{\phi^0-\phi_{\min}}{t}$ ($\phi_{\min}$ is the lowest empirical $k$-means cost), a function of order $\Theta(\frac{1}{t})$. To compare the performance of stochastic $k$-means with this baseline, we first transform the original $\phi^t$ vs $t$ plot to that of $\phi^t-\phi_{\min}$ vs $t$. By Theorem \ref{thm:km}, $E[\phi^t-\phi^{*}|G(A^{*})]=O(\frac{1}{t})$, so we expect the slope of the log-log plot of $\phi^t-\phi^{*}$ vs $t$ to be at least as large as that of $\Theta(\frac{1}{t})$. Although we do not know the exact cost of the stationary point, we use $\phi_{\min}$ as a rough estimate of $\phi^{*}$. 
\paragraph{Interpretation}
First, we observe that most log-log convergence graphs fluctuate around a line with a slope at least as steep as that of $\Theta(\frac{1}{t})$, verifying the linear convergence rate (one exception is that this only happens towards the end of convergence in Figure \ref{fig:mb1_log}; we discuss this behavior in the next section). 
Interestingly, the convergence does not seem to be sensitive to the learning rate in our experiment: the adaptive \textbf{BBS-rate} behave similarly to our flat learning rate with different parameters $(c^{\prime}, t_o)$.
On the other hand, the convergence rate of stochastic $k$-means seems sensitive to the ratio $\frac{m}{k}$; when the ratio is higher, faster and more stable convergence is observed. 
\subsection{Further exploration of convergence}\label{sec:exp2}
Our second set of experiments serves to corroborate our observations from previous experiments, and to further explore the convergence behavior subject to different factors.
To this end, we include two more benchmark datasets, \texttt{mnist} and \texttt{covtype}, a simulated dataset \texttt{gauss}, and add stochastic $k$-means with a constant learning rate.
Instead of a running the algorithm for only $100$ iterations, we adopt a setup that is more akin to what is commonly used in practice --- we divide the convergence in to 20 epochs, where the epoch length are chosen from $60$, $600$, $6000$. 
\paragraph{Two-phased convergence explained by $t_o$}
From our previous experiment, we observe that the initial phase of convergence is sometimes slower than $\Theta(\frac{1}{t})$ (e.g., in Figure \ref{fig:mb1_log}). 
This phenomenon also shows up, and in fact more frequently, when we turn to other datasets.
Here is our explanation: the $\frac{b}{t}$ (let $b$ be some constant) model of convergence is not exactly what we derive in our theorems:
the exact form of convergence rate we obtained in Theorem \ref{thm:km}, \ref{thm:mbkm_local}, and \ref{thm:solution}, which we hide behind the Big-$O$ notation, is of the form
$
\frac{b}{t+t_o}
$. After taking into account $t_o$, our theoretical convergence rate well-matches the two-phased convergence we observe in practice. 
For example, in Figure \ref{exp2:covtype}, when $t_o$ is set to $60$ or higher, the actual convergence can be simulated by the proxy to our theoretical bound\footnote{The difference between their intercepts at the $y$-axis is caused by a constant factor.}, $\frac{\phi^o-\phi_{\min}}{t+t_o}$. 
Note the practical requirement on $t_o$ is much more optimistic than what Theorem \ref{thm:km} needs, that is,
$$
t_o\ge 768(c^{\prime})^2(1+\frac{1}{r_{\min}})^2n^2\ln^2\frac{1}{\delta}
$$
Again, we observe that the convergence rate of stochastic $k$-means is not sensitive to the choice of $t_o$, despite the fact that the latter plays a role in explaining the convergence rate.  
\paragraph{Runtime vs final $k$-means cost}
We compare the $k$-means cost achieved by stochastic $k$-means with different learning rate and epoch length to that achieved by batch $k$-means after 20 iterations. 
Each entry in the table is computed as $\frac{\phi^{T}}{\phi_{batch}}$, where $T$ is $20\times E$, where $E$ is the epoch length; $\phi_{batch}$ is the final $k$-means cost after running batch $k$-means for $20$ iterations. \textbf{flat} and \textbf{const} stands for our analyzed learning rate in \eqref{learning_rate} and a constant learning rate, which we set to be $\frac{1}{\sqrt{E}}$. For the flat learning rate, we arbitrarily choose $c^{\prime}=4$, and $t_o$ to be one of $\{10,60,600,6000\}$, which ever gives the lowest $k$-means cost.

As shown in Table \ref{table:cost}, the final $k$-means cost of stochastic $k$-means, using epoch length of 600, is already comparable to its batch counterpart. 
On the other hand, the data sizes of \texttt{mnist, covtype, gauss, rcv1} are $60k$, $500k$, $600k$, and $800k$, respectively. So even using the largest epoch length, $6k$, stochastic $k$-means would save at least one-tenth of the computation, due to distance evaluation, in comparison to batch $k$-means.

From the convergence plots (Figure \ref{exp2:covtype} and \ref{exp2:mnist}), we see that the convergence behavior of stochastic $k$-means is not sensitive to the choice of learning rate. 
Here, we observe that learning rate does not affect the final $k$-means cost too much either; even a constant learning rate works! 
\begin{table}
\caption{Final $k$-means cost relative to batch $k$-means}
\centering
\begin{tabular}{rlll}
\hline
& \texttt{covtype} &&\\
\hline
   k & E=60,flat,BBS,const   & E=600,flat,BBS,const   & E=6k,flat,BBS,const   \\
\hline
  10 & 0.93,0.92,0.93        & 0.99,0.93,0.99         & 1.03,1.01,1.03          \\
  50 & 1.13,1.12,1.13        & 1.02,1.03,1.02         & 1.01,1.01,1.01          \\
 100 & 1.15,1.10,1.15        & 1.05,1.07,1.05         & 1.02,1.01,1.02          \\
\hline
\hline
 &\texttt{mnist}&&\\
\hline
  10 & 1.07,1.07,1.07        & 1.02,1.02,1.02         & 1.03,1.02,1.03          \\
  50 & 1.15,1.15,1.15        & 1.06,1.07,1.06         & 1.02,1.02,1.02          \\
 100 & 1.18,1.18,1.18        & 1.07,1.06,1.07         & 1.02,1.02,1.02          \\
\hline
\end{tabular}
\begin{subtable}{0.46\textwidth}
\begin{tabular}{rll}
\hline
 \texttt{rcv1} & E=60& E=600 \\
\hline
  10 & 1.03,1.03,1.03        & 1.02,1.02,1.02         \\
  50 & 1.06,1.06,1.06        & 1.06,1.06,1.06         \\
 100 & 1.09,1.09,1.09        & 1.07,1.07,1.07         \\
\hline
\end{tabular}
\end{subtable}
\begin{subtable}{0.46\textwidth}
\begin{tabular}{rll}
\hline
 &\texttt{gauss} E=60& E=600 \\
\hline
 & 1.05,1.07,1.05        & 1.03,1.03,1.03         \\
   & 1.16,1.14,1.16        & 1.07,1.05,1.07         \\
 & 1.11,1.11,1.11        & 1.02,1.02,1.02         \\
\hline
\end{tabular}
\end{subtable}
\label{table:cost}
\end{table}
\paragraph{Significance of different factors to convergence}
Finally, from our experiments we summarize the relative importance of different factors to the convergence behavior of stochastic $k$-means as below:
\begin{itemize}
\item
mini-batch size $m$: the larger $m$ is, the convergence becomes more stable and faster.
\item
number of clusters $k$: the smaller $k$ is, the convergence becomes more stable and faster.
\item
dataset: although $\frac{b}{t+t_o}$ is observed for all datasets, stochastic $k$-means seems to favor certain datasets to others. For example, on \texttt{rcv1}, almost $\frac{b}{t}$ (and sub-linear when $m$ is larger) convergence rate is observed. 
\item
learning rate: the algorithm is not sensitive to the choice of learning rate.
\end{itemize}
\begin{figure*}[h]
\begin{subfigure}{.33\textwidth}
  \centering
  \includegraphics[width=\linewidth]{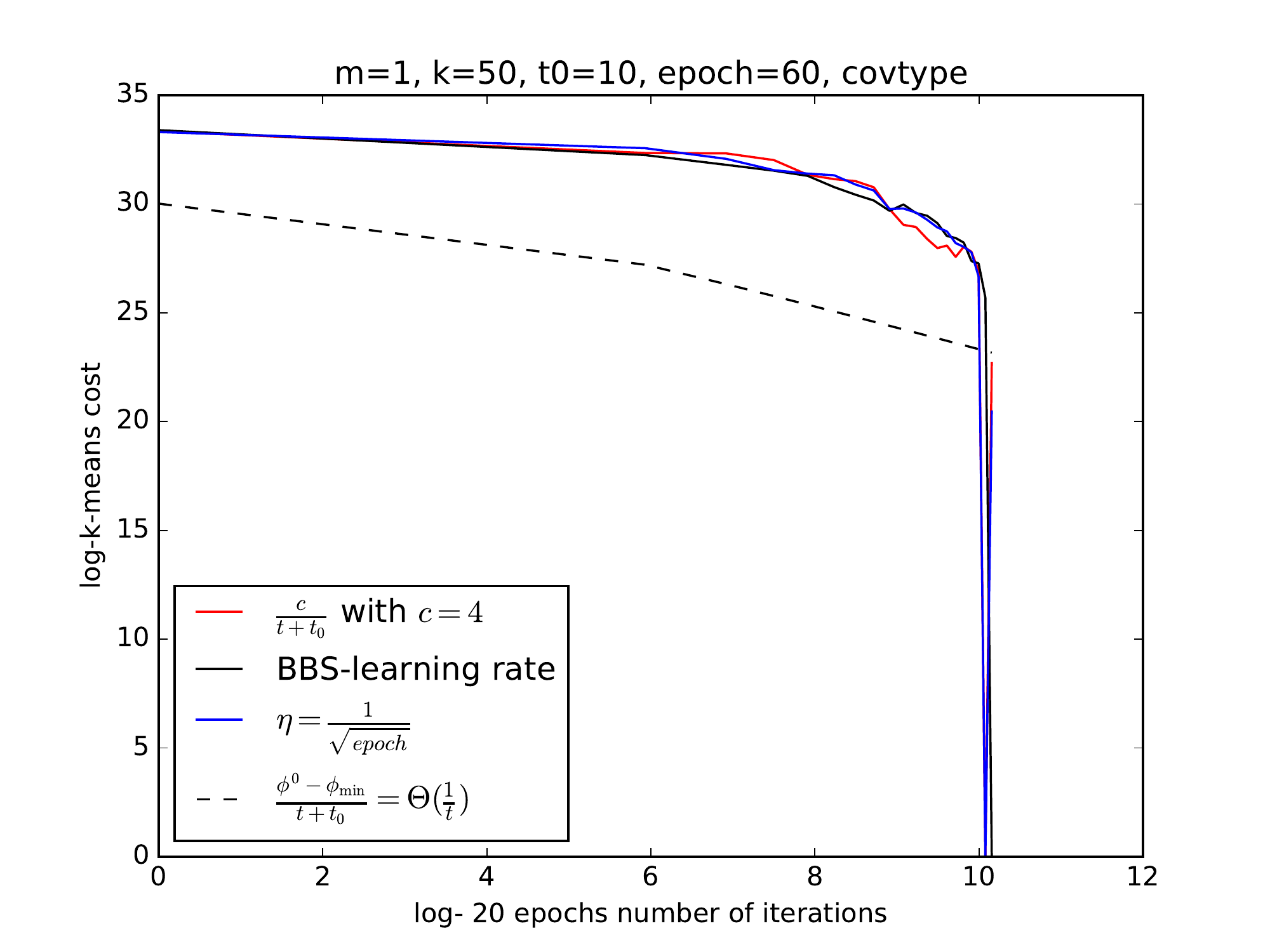}
\end{subfigure}%
\begin{subfigure}{.33\textwidth}
  \centering
  \includegraphics[width=\linewidth,]{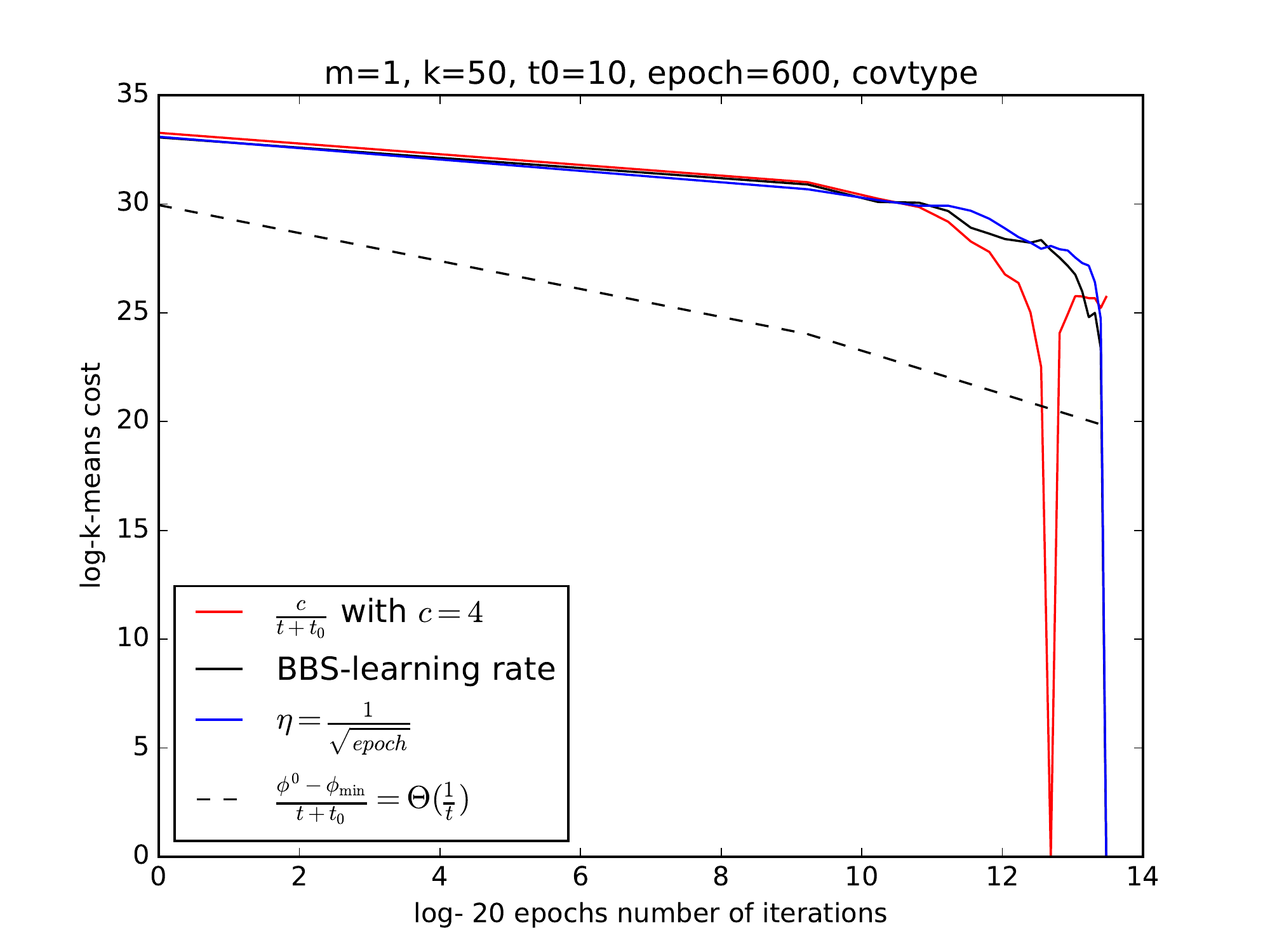}
\end{subfigure}
\begin{subfigure}{.33\textwidth}
  \centering
  \includegraphics[width=\linewidth]{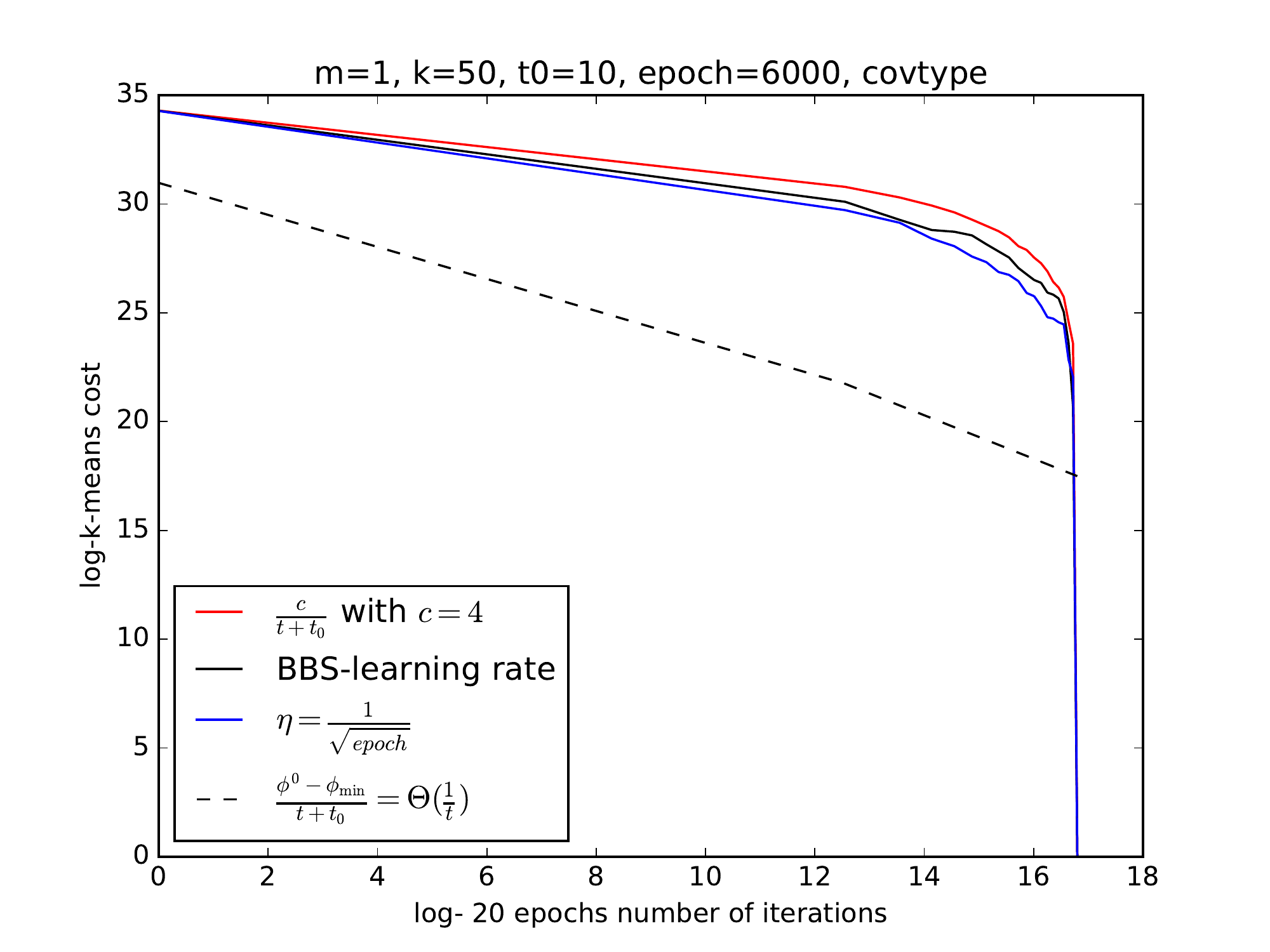}
\end{subfigure}
\begin{subfigure}{.33\textwidth}
  \centering
  \includegraphics[width=\linewidth]{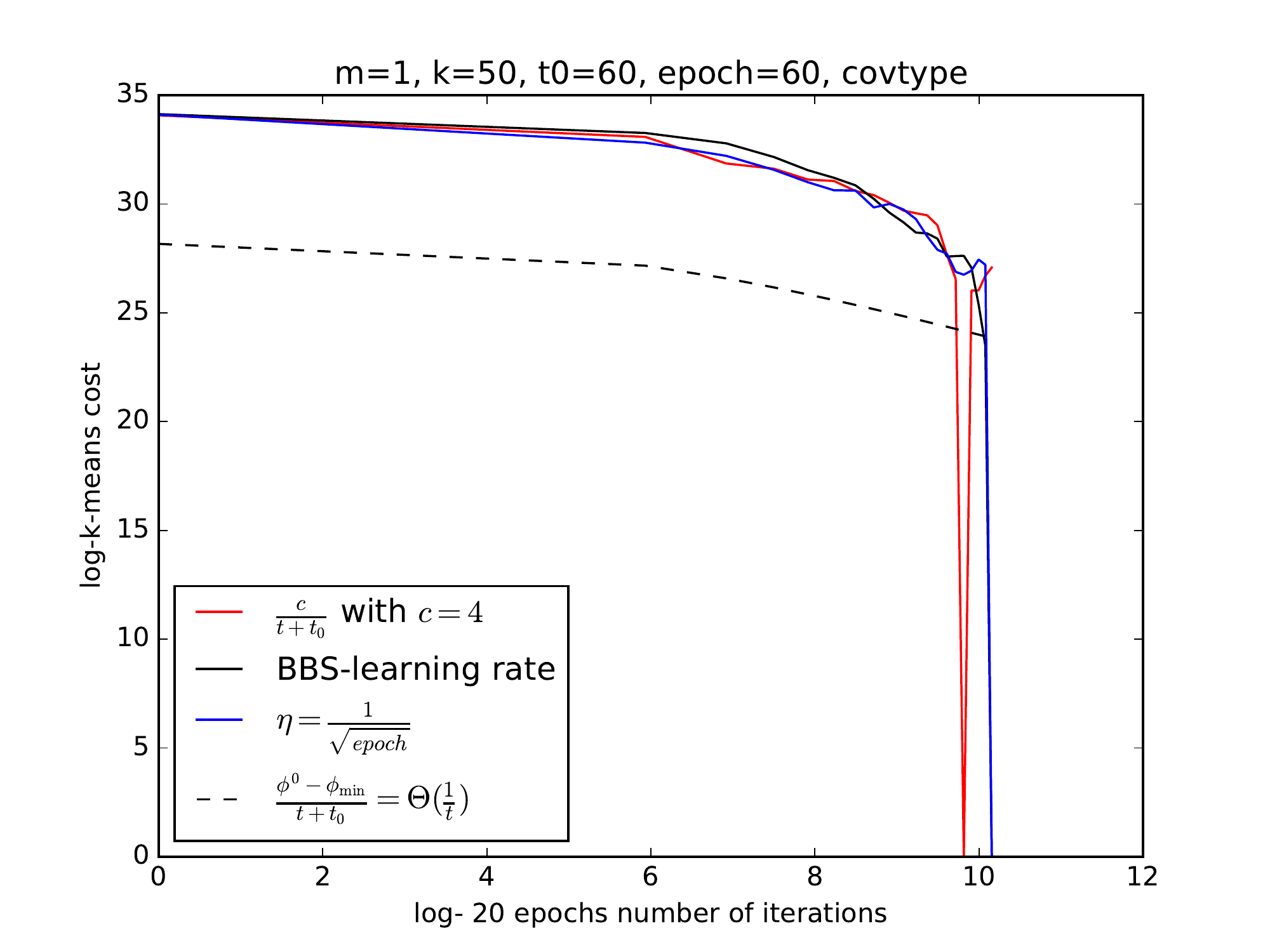}
\end{subfigure}%
\begin{subfigure}{.33\textwidth}
  \centering
  \includegraphics[width=\linewidth,]{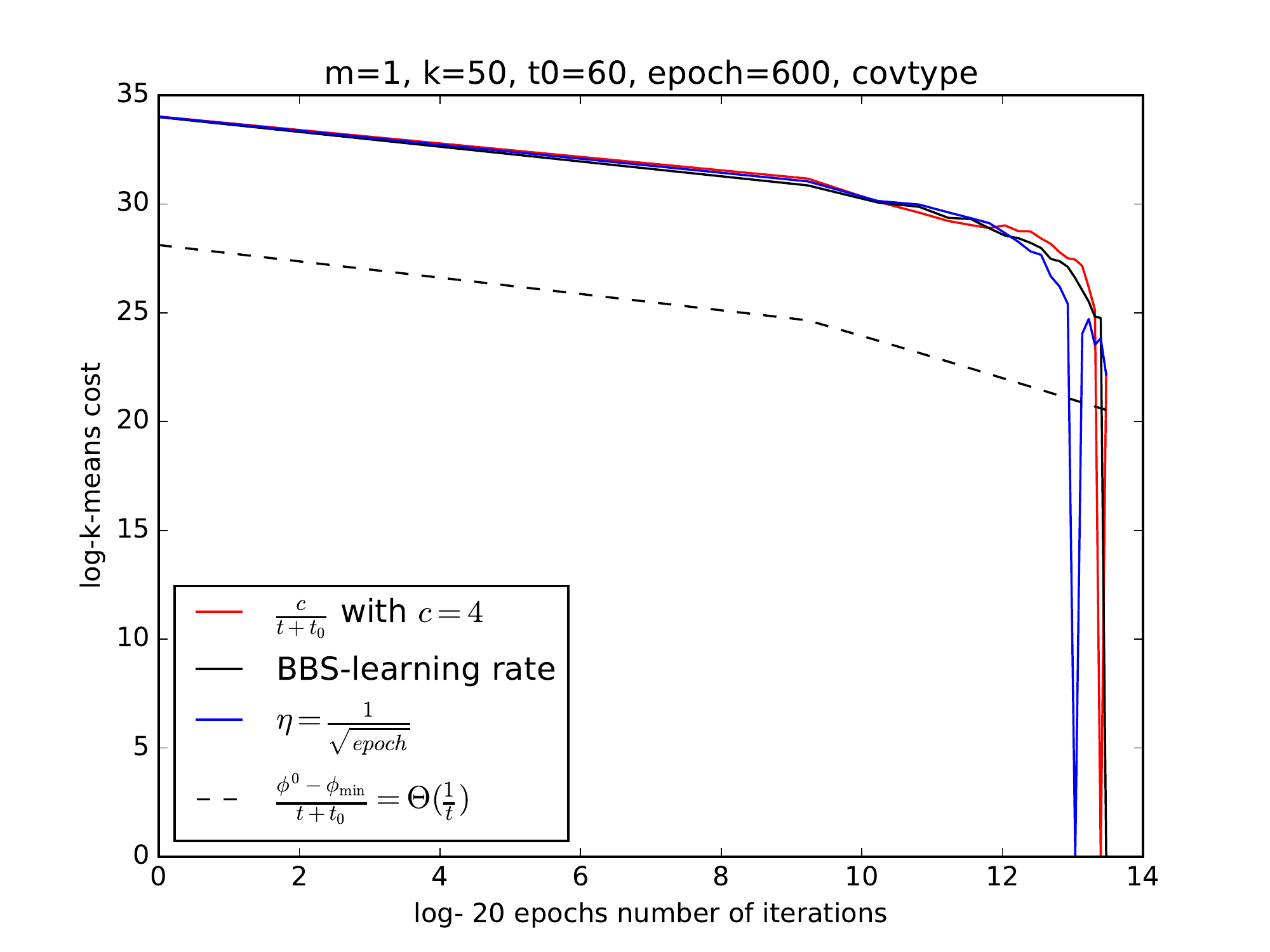}
\end{subfigure}
\begin{subfigure}{.33\textwidth}
  \centering
  \includegraphics[width=\linewidth]{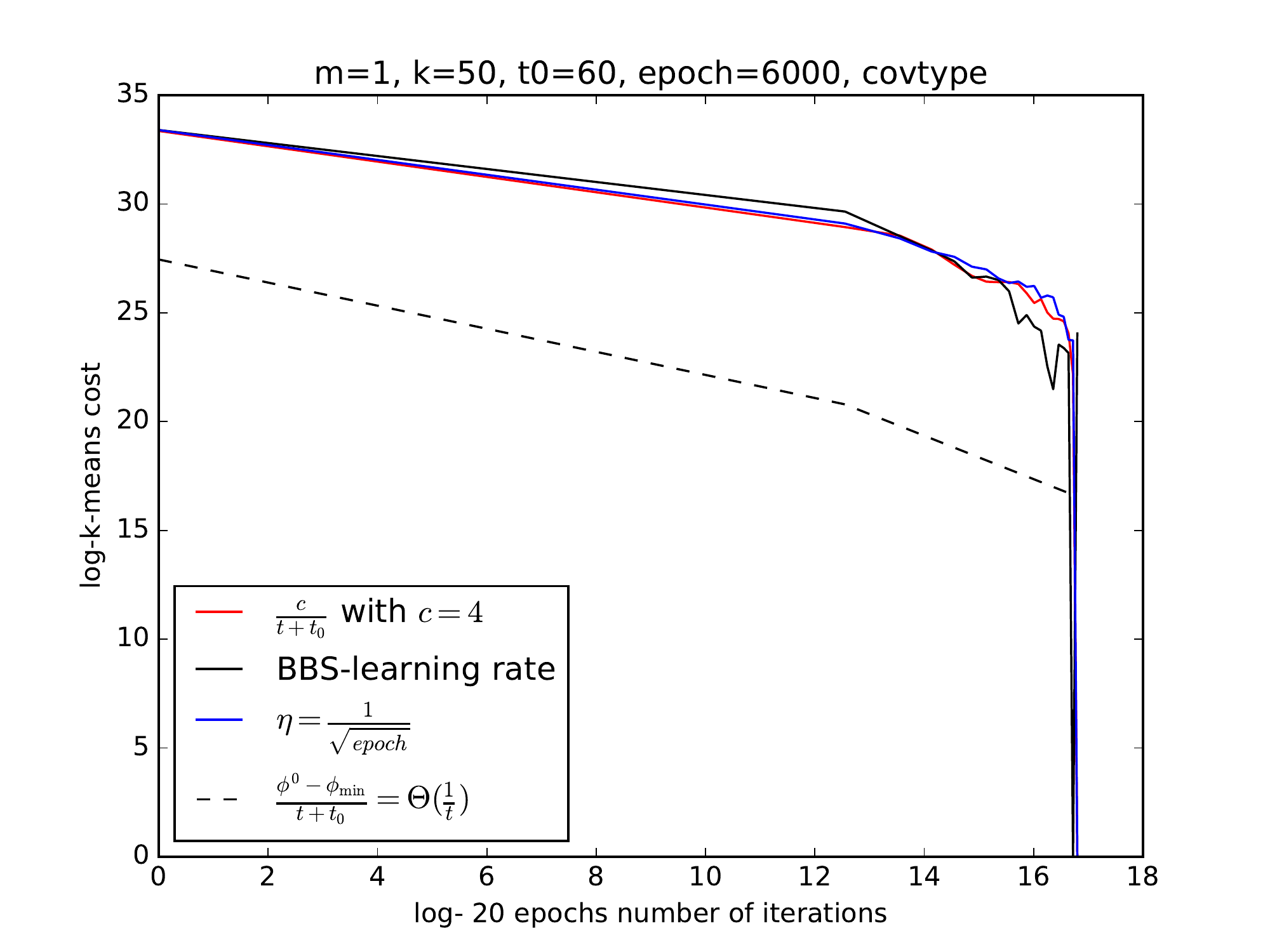}
\end{subfigure}
\begin{subfigure}{.33\textwidth}
  \centering
  \includegraphics[width=\linewidth]{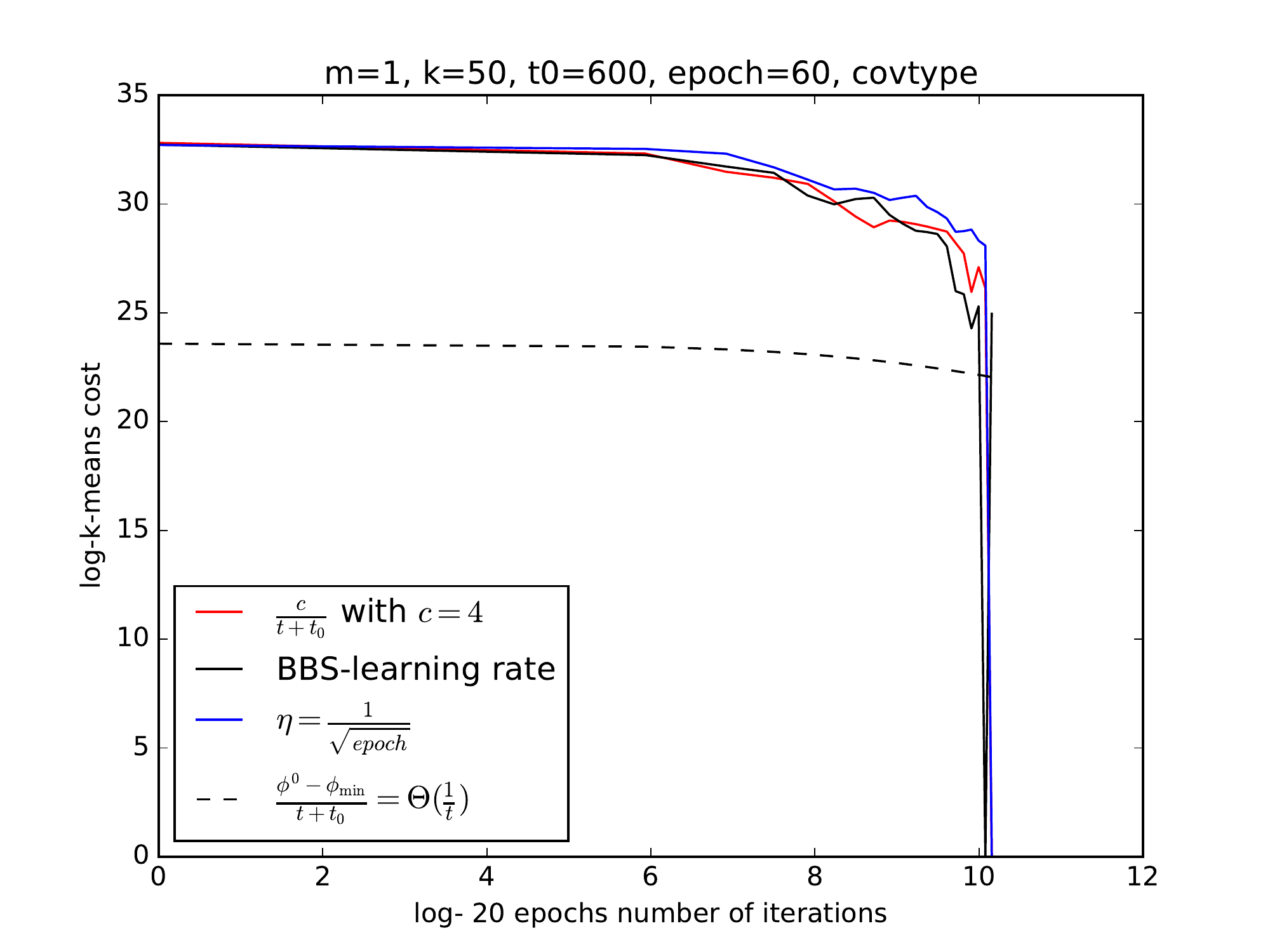}
\end{subfigure}%
\begin{subfigure}{.33\textwidth}
  \centering
  \includegraphics[width=\linewidth,]{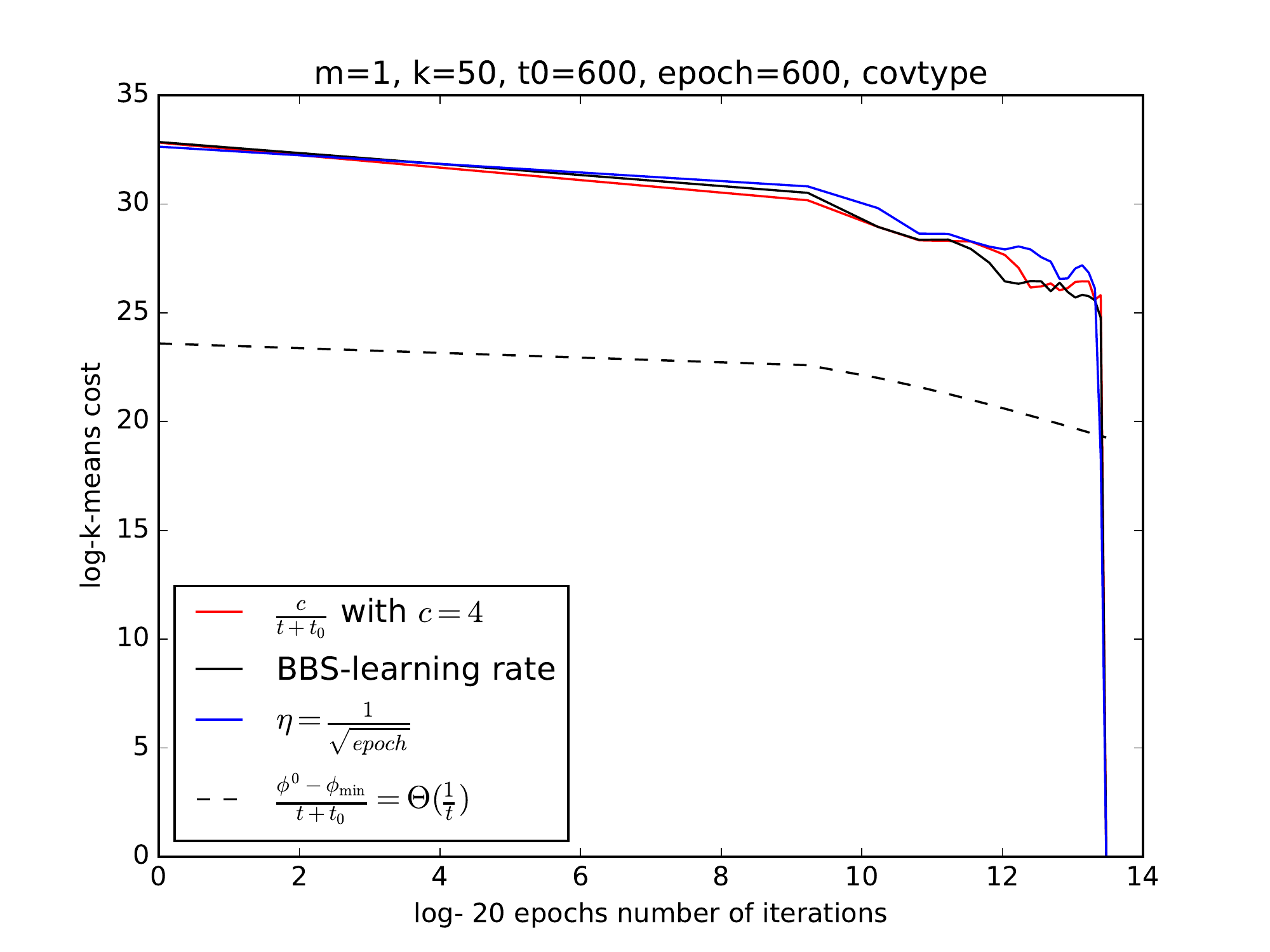}
\end{subfigure}
\begin{subfigure}{.33\textwidth}
  \centering
  \includegraphics[width=\linewidth]{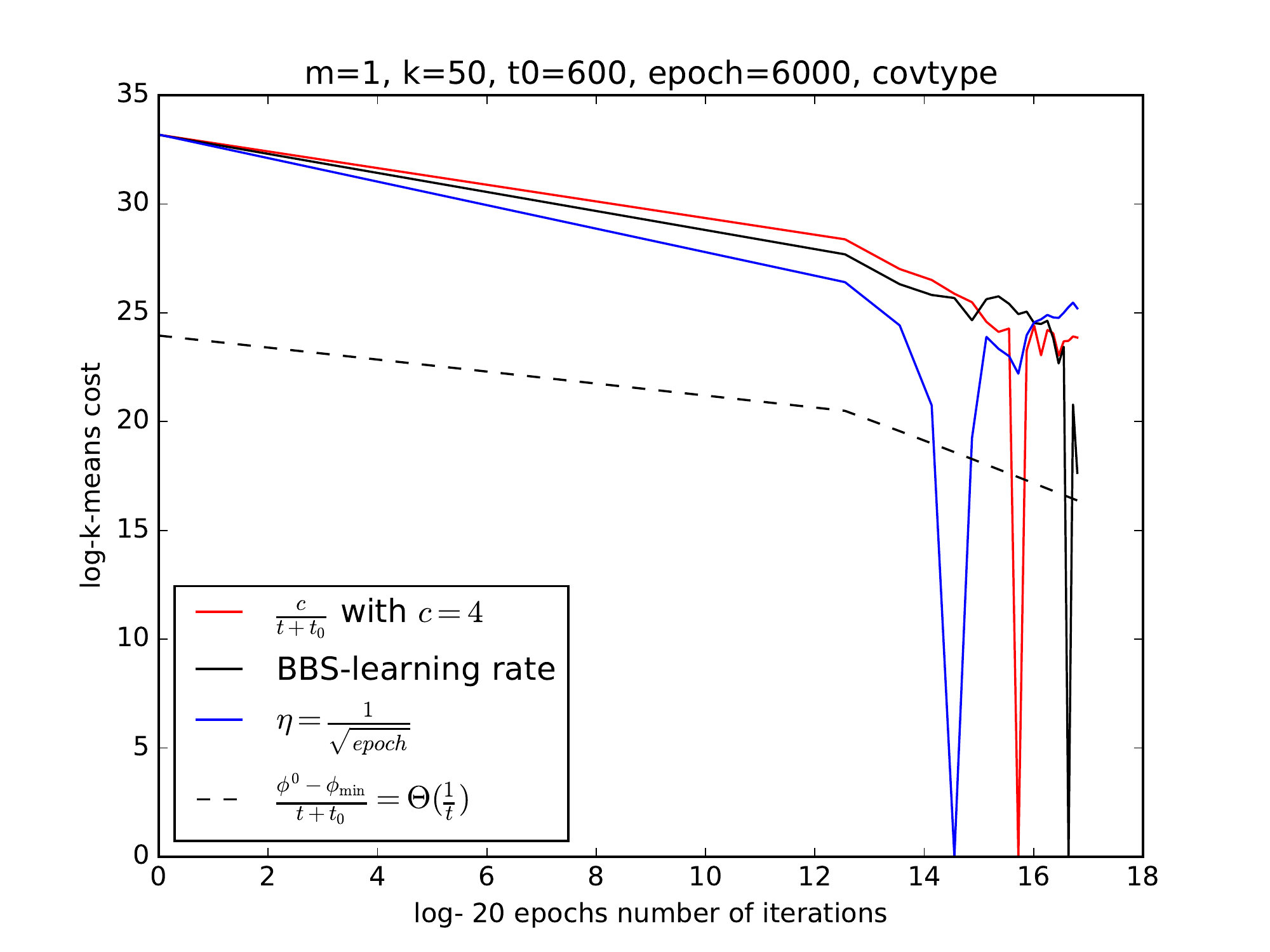}
\end{subfigure}
\begin{subfigure}{.33\textwidth}
  \centering
  \includegraphics[width=\linewidth]{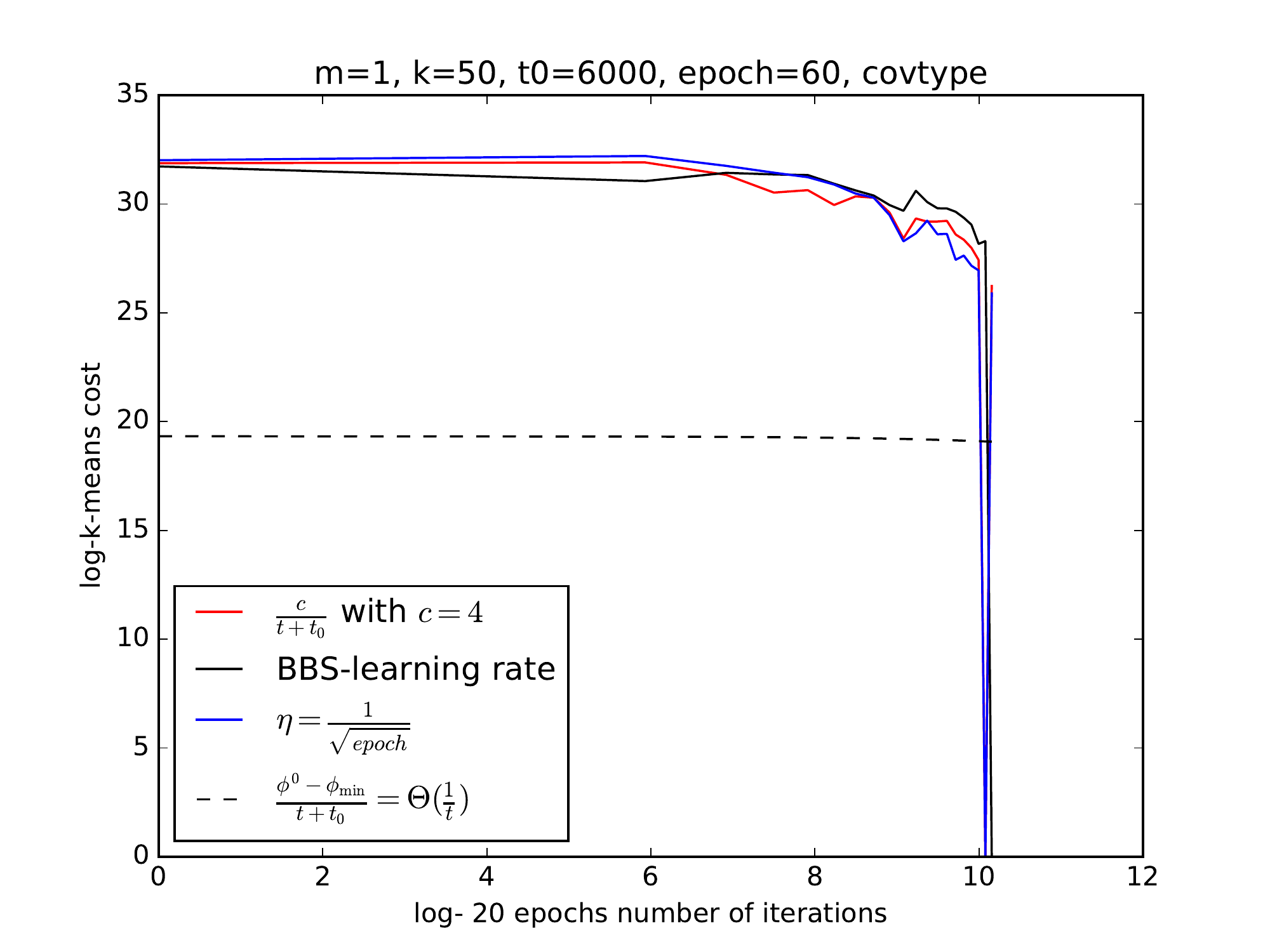}
\end{subfigure}%
\begin{subfigure}{.33\textwidth}
  \centering
  \includegraphics[width=\linewidth,]{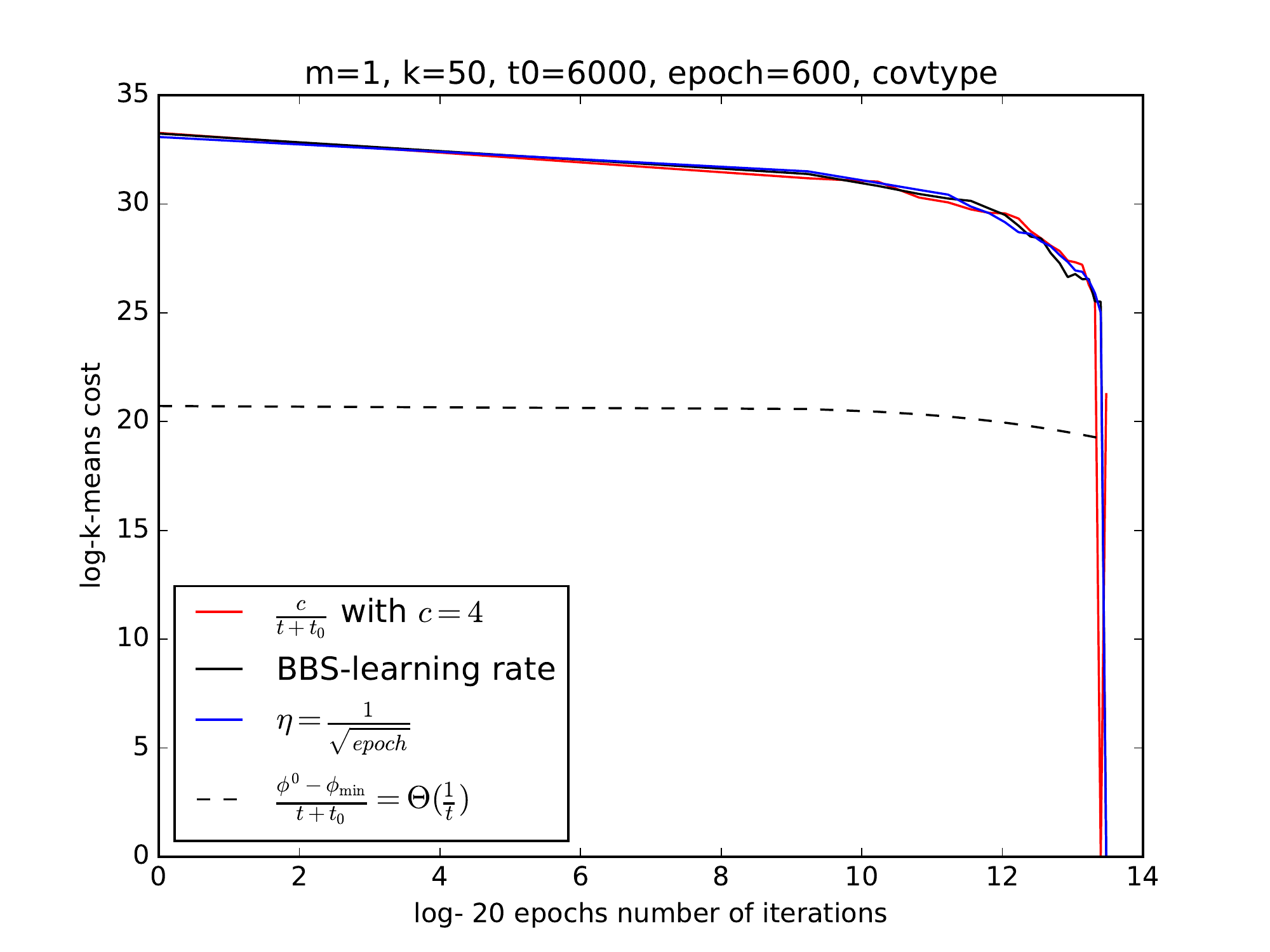}
\end{subfigure}
\begin{subfigure}{.33\textwidth}
  \centering
  \includegraphics[width=\linewidth]{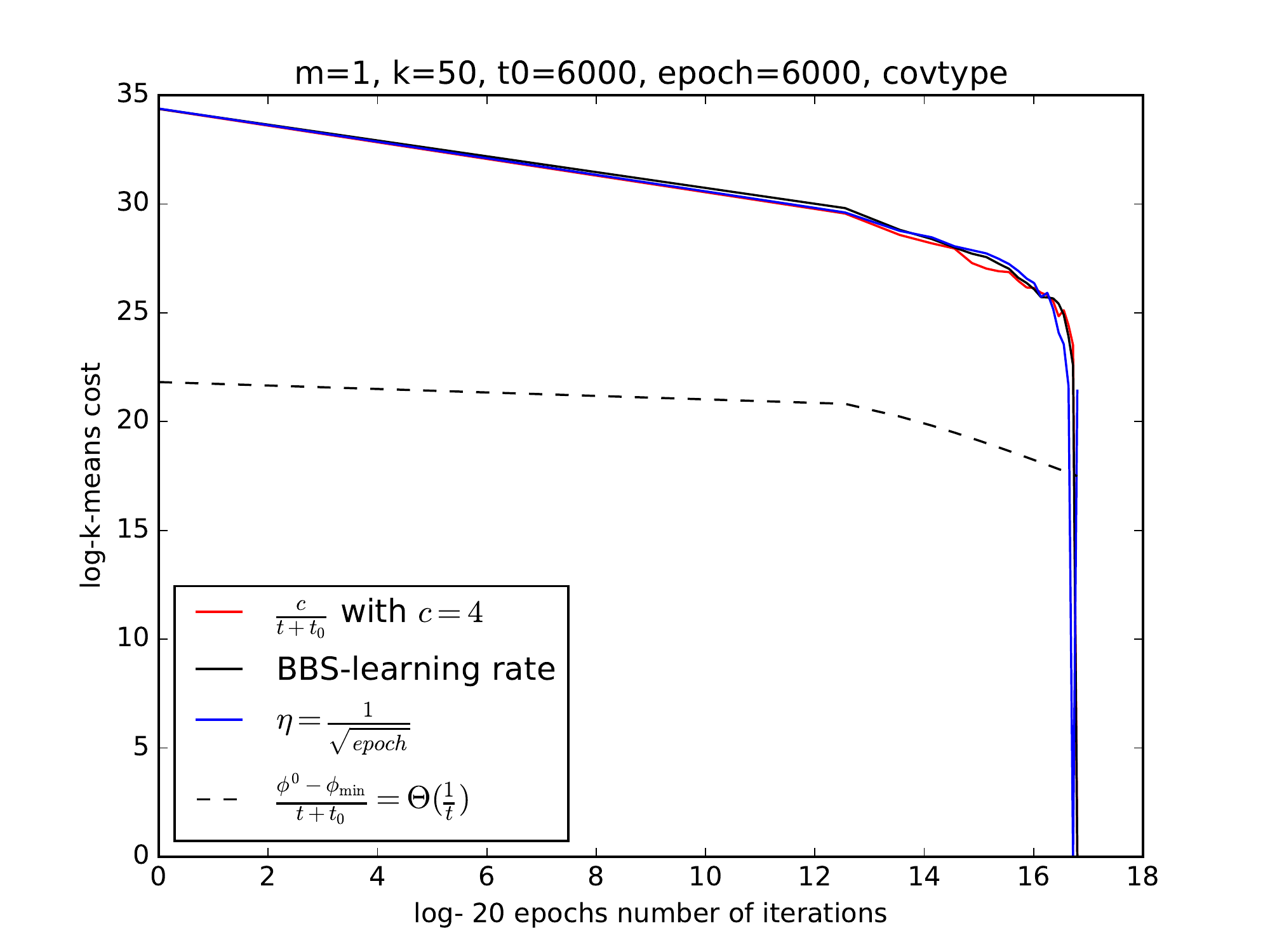}
\end{subfigure}
\vspace{-0.1cm}
\caption{Experiments on \texttt{covtype}}
\label{exp2:covtype}
\end{figure*}
\begin{figure*}[t]
\vspace{-1cm}
\begin{subfigure}{.33\textwidth}
  \centering
  \includegraphics[width=\linewidth]{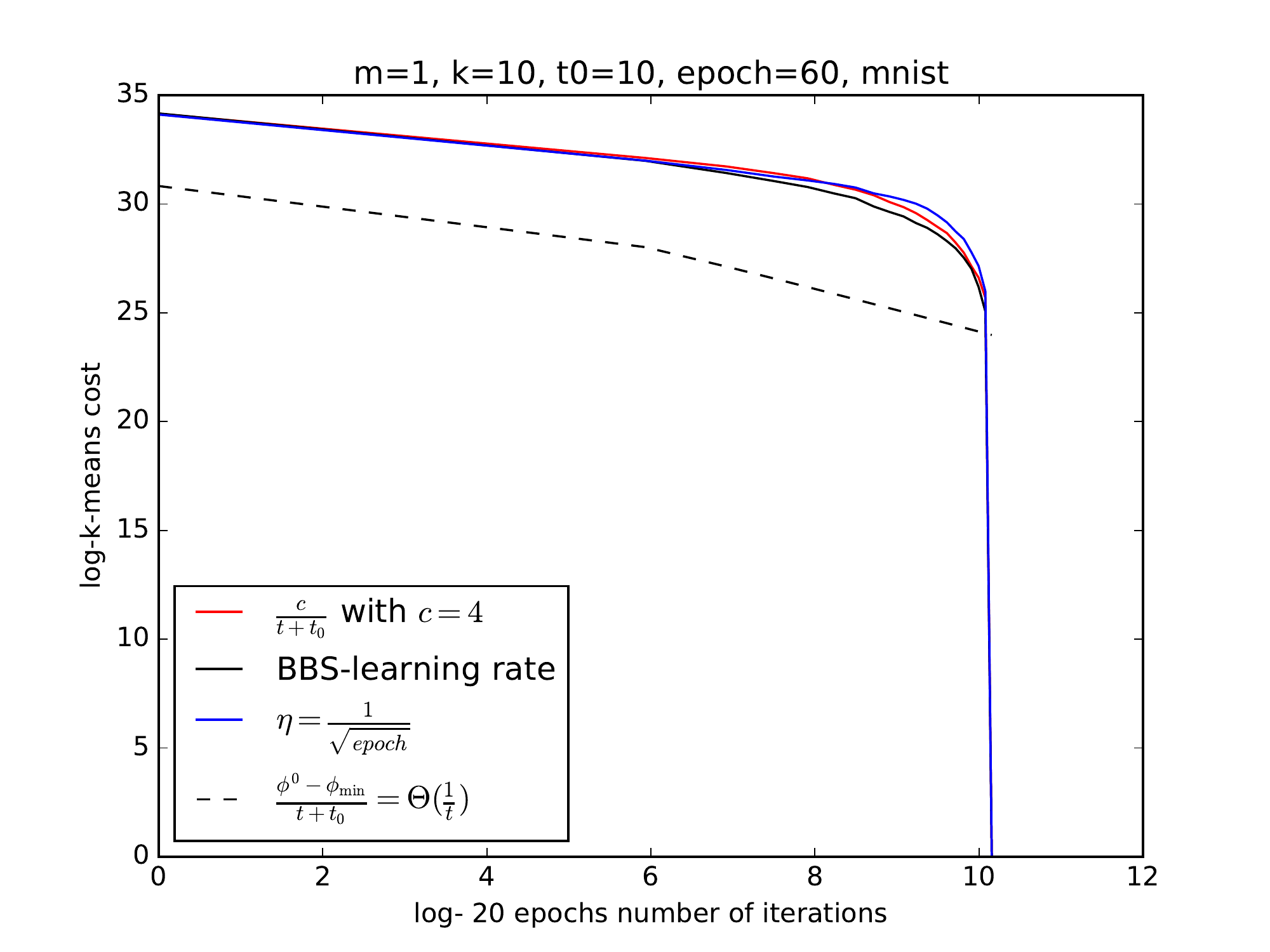}
\end{subfigure}%
\begin{subfigure}{.33\textwidth}
  \centering
  \includegraphics[width=\linewidth,]{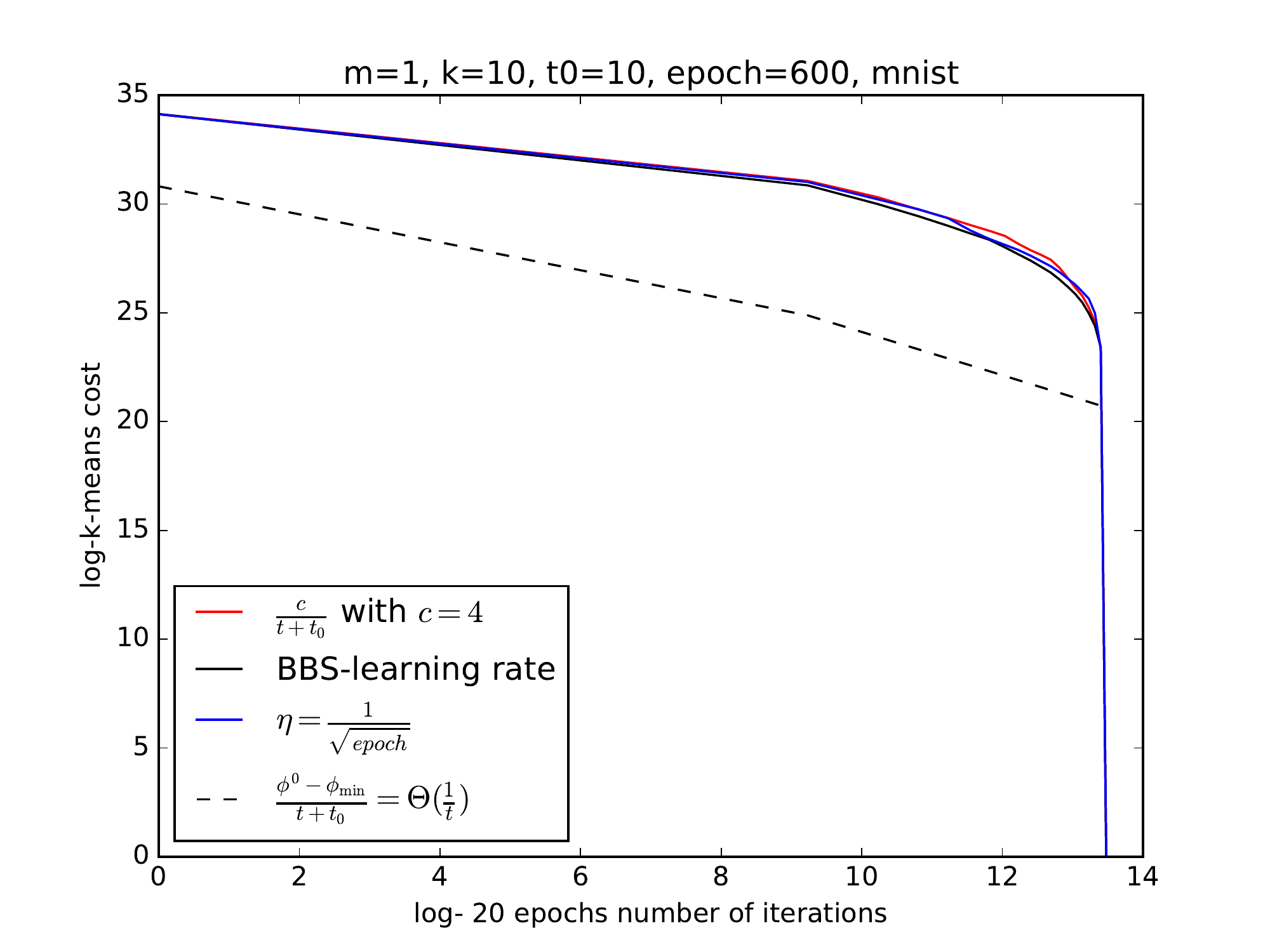}
\end{subfigure}
\begin{subfigure}{.33\textwidth}
  \centering
  \includegraphics[width=\linewidth]{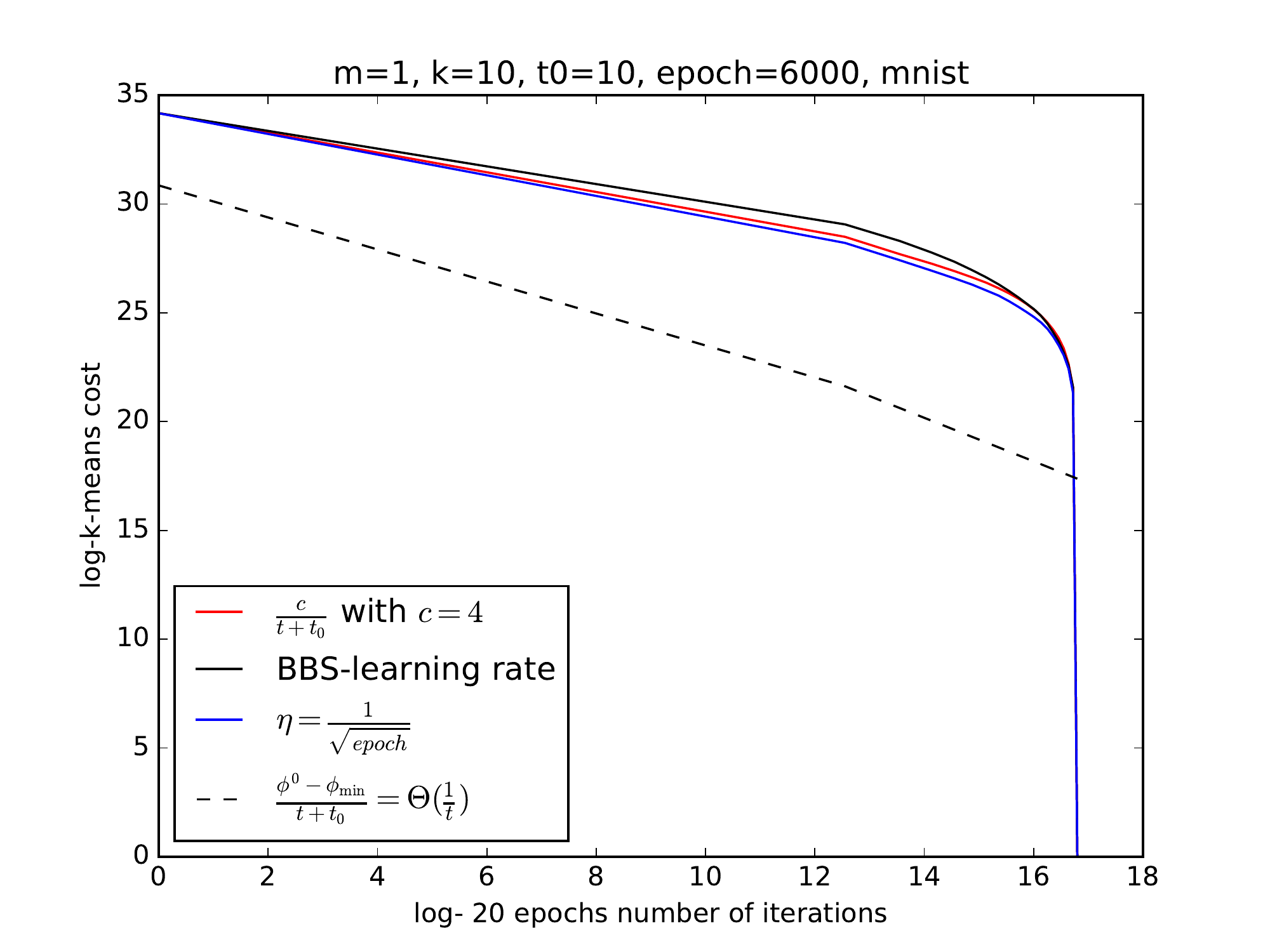}
\end{subfigure}
\begin{subfigure}{.33\textwidth}
  \centering
  \includegraphics[width=\linewidth]{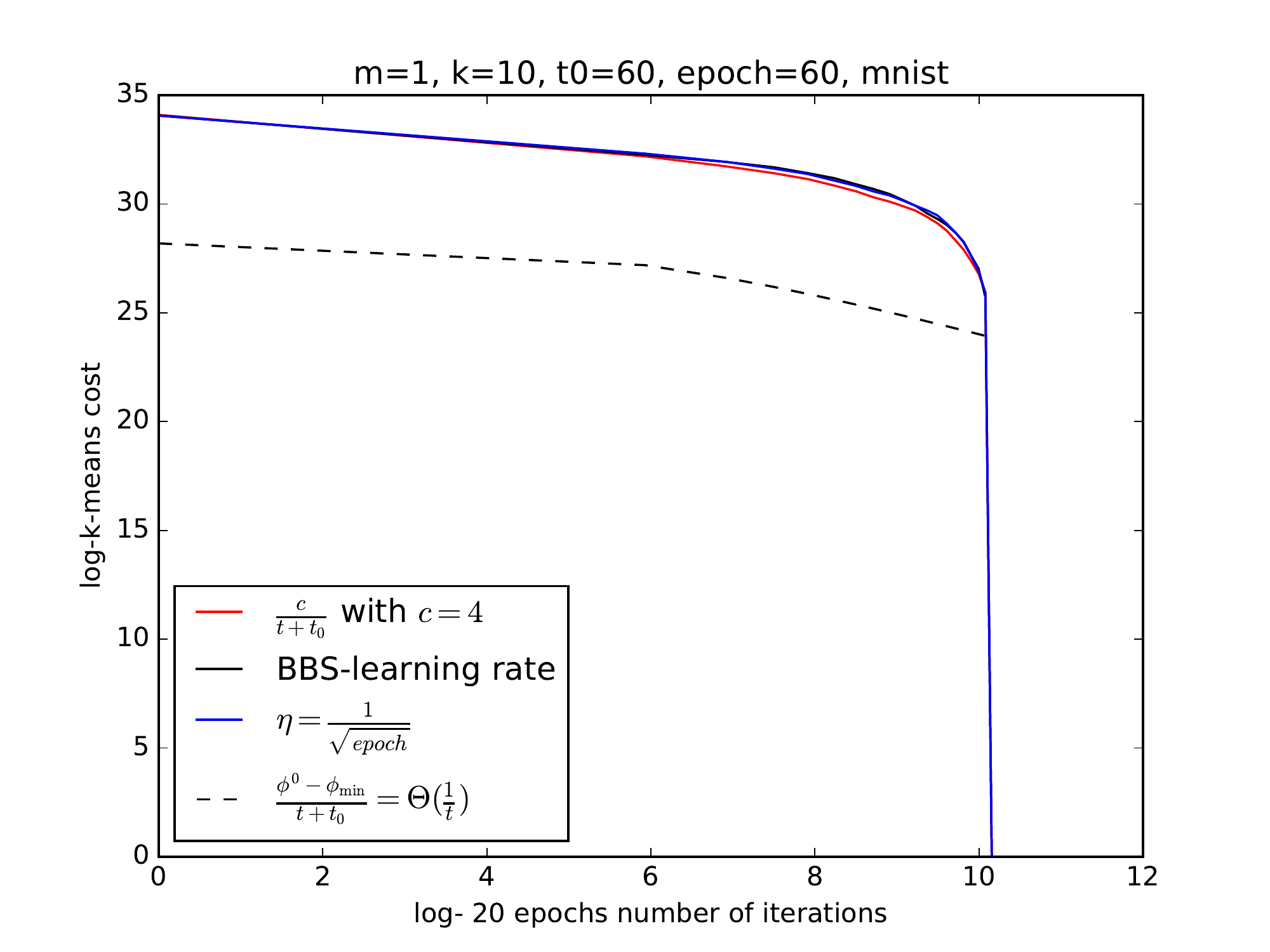}
\end{subfigure}%
\begin{subfigure}{.33\textwidth}
  \centering
  \includegraphics[width=\linewidth,]{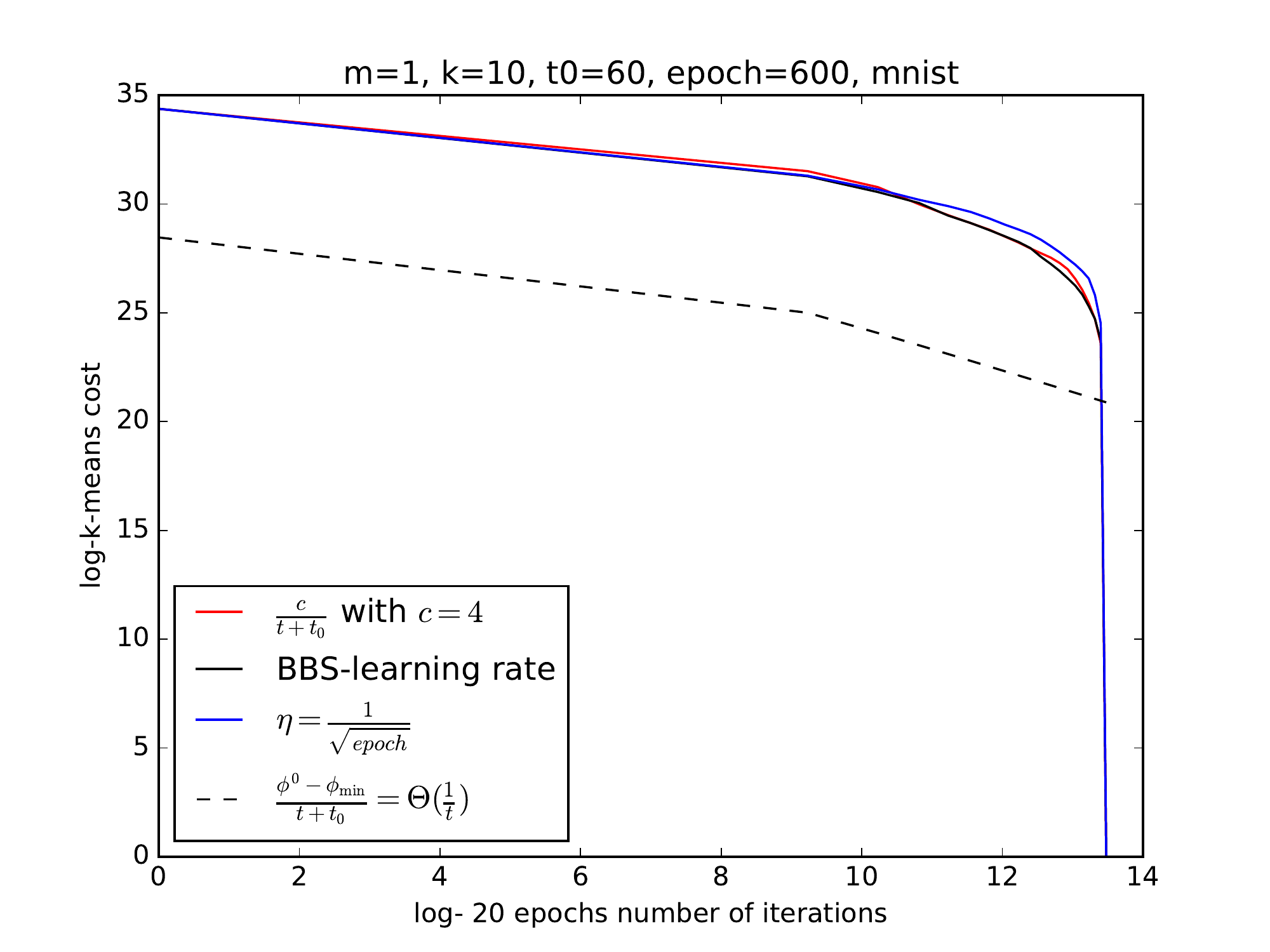}
\end{subfigure}
\begin{subfigure}{.33\textwidth}
  \centering
  \includegraphics[width=\linewidth]{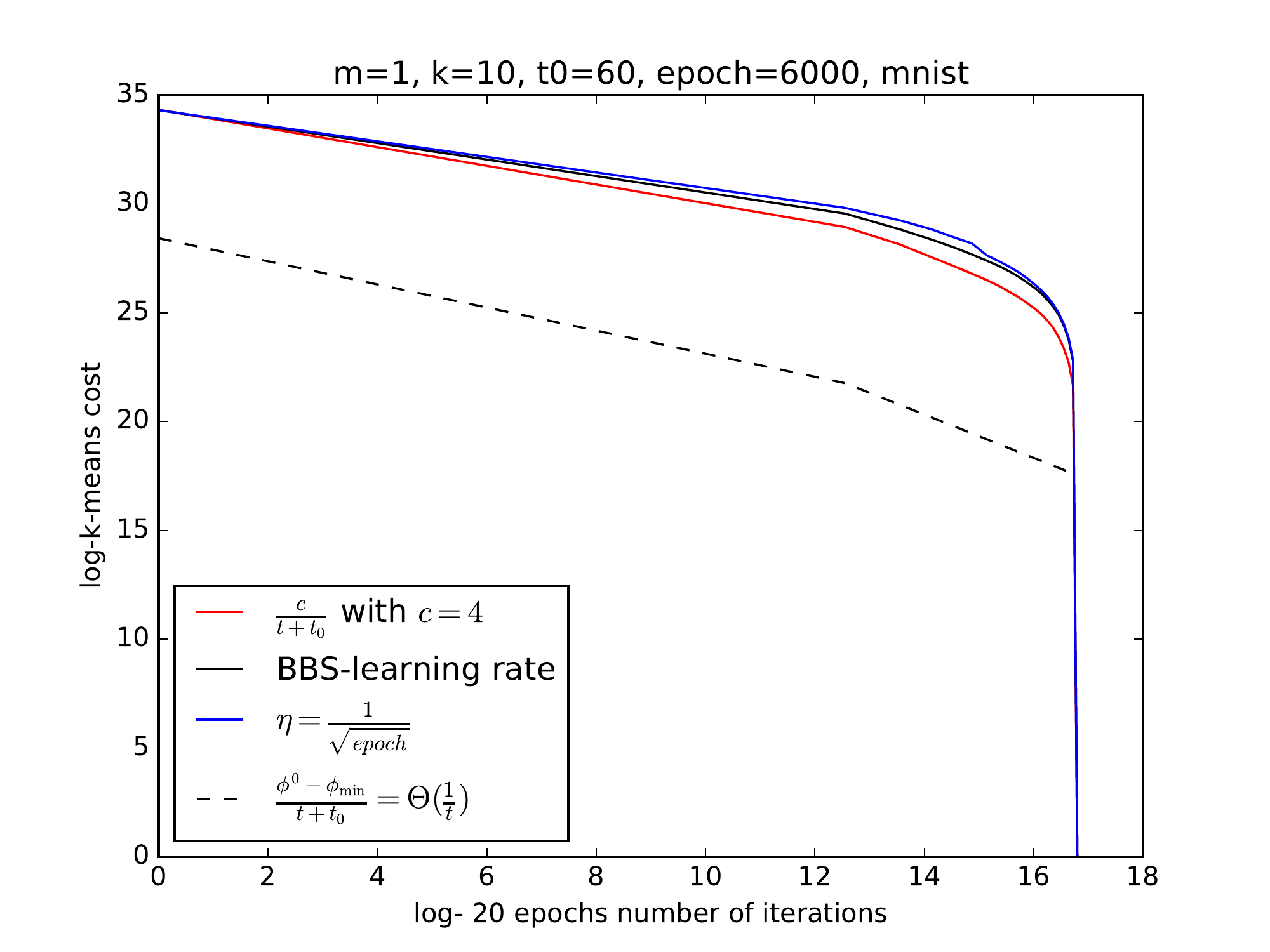}
\end{subfigure}
\begin{subfigure}{.33\textwidth}
  \centering
  \includegraphics[width=\linewidth]{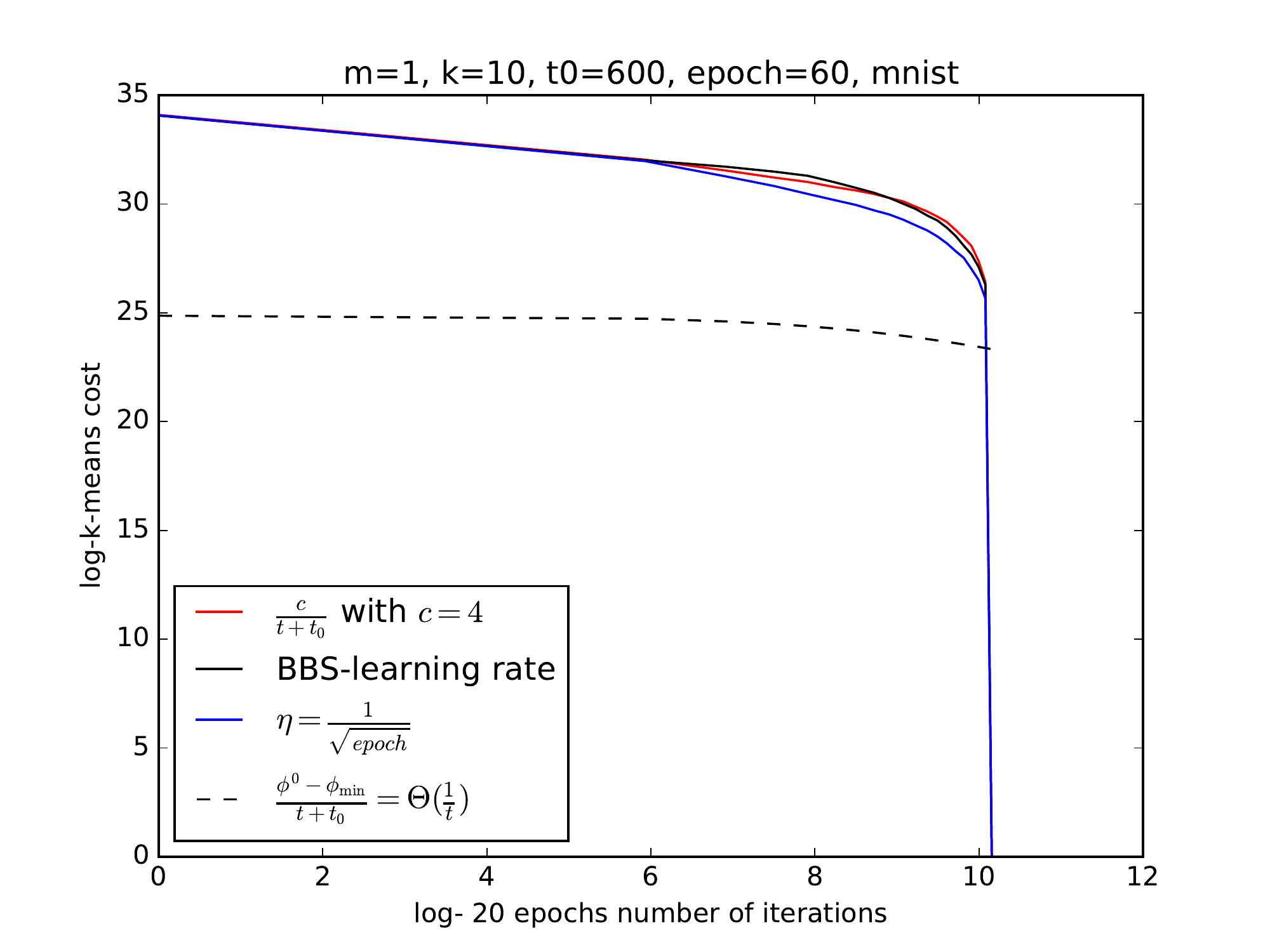}
\end{subfigure}%
\begin{subfigure}{.33\textwidth}
  \centering
  \includegraphics[width=\linewidth,]{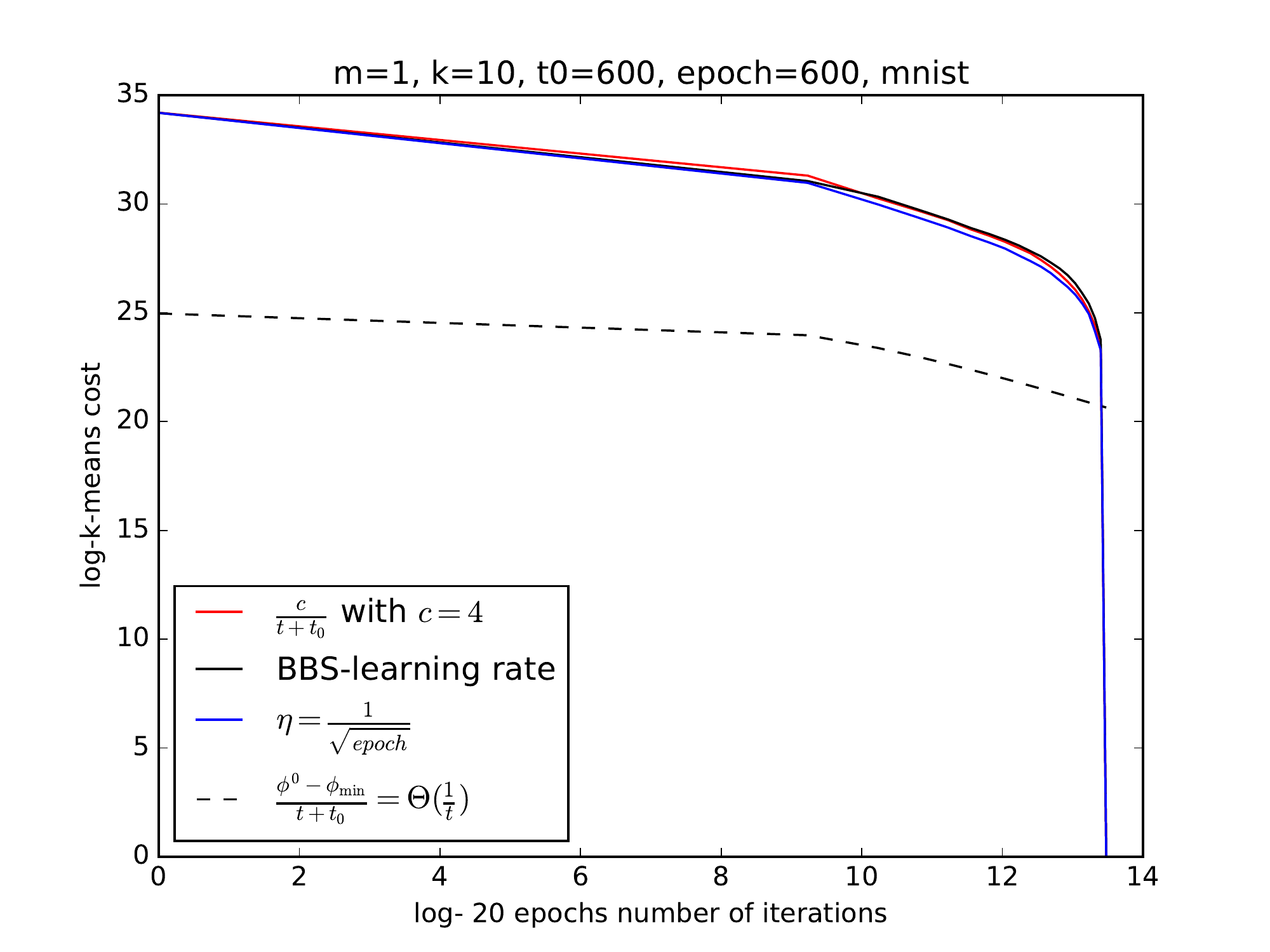}
\end{subfigure}
\begin{subfigure}{.33\textwidth}
  \centering
  \includegraphics[width=\linewidth]{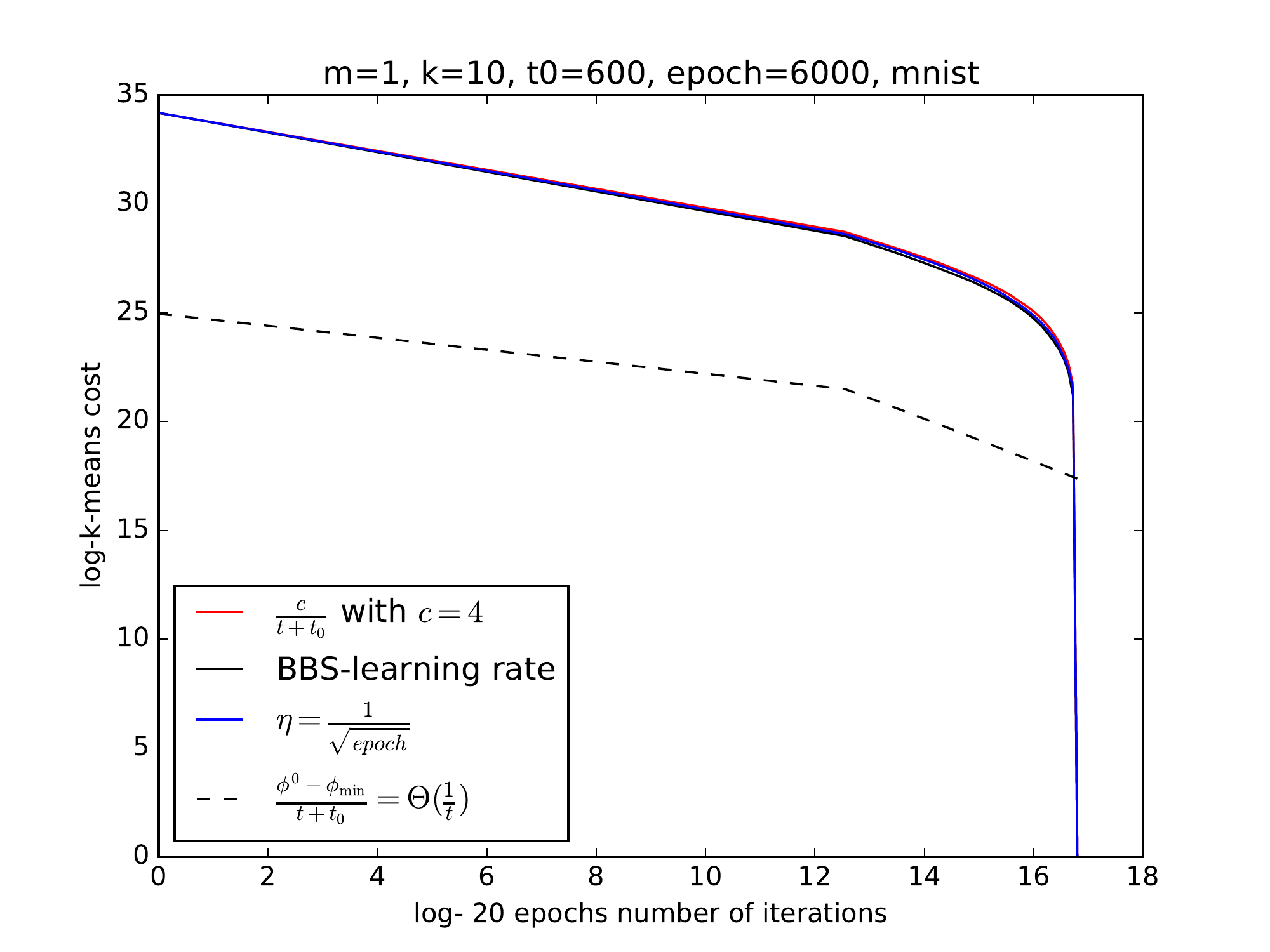}
\end{subfigure}
\caption{Experiments on \texttt{mnist}}
\label{exp2:mnist}
\end{figure*}
\vspace{-0.3cm}
\section{Discussion and open problems}
This work provides the first analysis of the convergence rate of stochastic $k$-means, but several questions remain unanswered.
First, our analysis applies to the flat learning rate in \eqref{learning_rate} while adaptive learning rate in \eqref{defn:adaptive_rate} is more common in practice. From our experiments, we conjecture that $O(\frac{1}{t})$ convergence can also be attained in the latter case. As discussed in Section \ref{sec:martingale}, the key question is whether we can show the adaptive rate, a random quantity that depends on all information prior $t$, is of order $\Theta(\frac{1}{t})$.
Second, we provide two examples of assumptions that imply Lipschitzness of $v$. Can we find other assumptions? In particular, is there a spectrum of assumptions in-between our Assumption (A) and (B) that imply different strength of Lipschitzness? 
We also believe further study of batch $k$-means can be made using our framework in Section \ref{sec:framework}.
For example, we observed that the radius of the neighborhood of an attractor (stationary point) is related to clusterability. Can we use this to relate the number of local attractors to clusterability of the dataset? 
In addition, if a stationary point has a large radius of attraction in $\{C\}$, then intuitively, two different random initializations will likely fall into this same neighborhood. Does this provide another angle to the clustering stability analysis \cite{luxburg:stability, bendavid_luxburg:stability}?
\small{
\bibliography{mbkm_nips16}
\bibliographystyle{plain}
}
\newpage
\section{Appendix A: supplementary materials to Section \ref{sec:framework}}
\begin{figure}
\includegraphics[height=\textwidth, width = 0.8\linewidth, angle = -90]{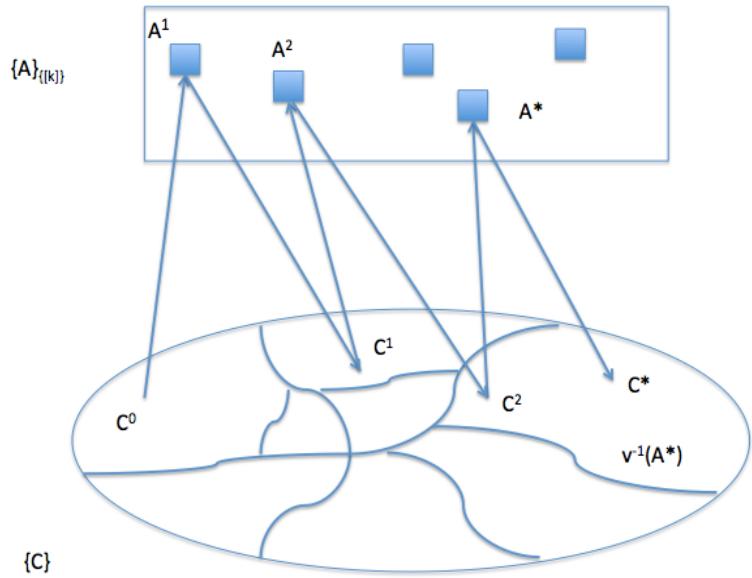}
\caption{An illustration of one run of batch $k$-means in the solution spaces: the rectangle represent the enlarged space of clusterings $\{A\}_{[k]}$ and the ellipse represent the centroidal space $\{C\}$, which is partitioned into equivalences classes.
The arrows represent k-means updates as mappings $v: \{C\}\rightarrow \{A\}_{[k]}$ 
and $m: \{A\}_{[k]}\rightarrow\{C\}$.
The algorithm starts at $C^0$ and stops at $C^*$ after three iterations, where $C^*=m(A^*)\in Cl(v^{-1}(A^*))$ .
}
\label{fig:solution_spaces}
\end{figure}
In this part of the Appendix, we provide details on the construction of our framework that are not included in Section \ref{sec:framework} due to space constraints.
\subsection*{Handling non-degenerate and boundary points}
One problem with $k$-means is it may produce degenerate solutions: if the solution $C^t$ has $k$ centroids, it is possible that data points are mapped to only $k^{\prime}<k$ centroids. To handle degenerate cases, starting with $|C^0|=k$, we
consider an enlarged clustering space $\{A\}_{[k]}$, which is the union of all $k^{\prime}$-clusterings 
with $1\le k^{\prime}\le k$. 
We use the pre-image $v^{-1}(A)\in \{C\}$ to denote the non-boundary points $C$ such that $v(C) = A$, i.e., these are the set of non-boundary points in the equivalence class induced by clustering $A$.
To include boundary points as well, we devise the operator $Cl(\cdot)$ as the ``closure'' of an equivalence class $v^{-1}(A)$, which includes all boundary points $C^{\prime}$ such that $A\in V(C^{\prime})\cap X$.

Using the above two extensions, we give the robust definition of stationary clusterings and stationary points, which we use in our analysis.
 \begin{defn}[Stationary clusterings]
We call $A^*$ a stationary clustering of $X$, if $m(A^*)\in Cl(v^{-1}(A^*))$.
We let $\{A^*\}_{[k]}\subset \{A\}_{[k]}$ denote the set of all stationary clusterings of $X$
with number of clusters $k^{\prime}\in [k]$.
\end{defn}
For each $A^*$, we define a matching centroidal solution $C^*$.
\begin{defn}[Stationary points]
For a stationary clustering $A^*$ with $k^{\prime}$ clusters, we define $C^{*}=\{c_r^*, r\in [k^{\prime}]\}$ to be a stationary point corresponding to $A^*$, so that $\forall A_{r}^{*}\in A^{*}$, $c_r^*:=m(A_r^{*})$.  
We let $\{C^*\}_{[k]}$ denote the corresponding set of all stationary points of $X$ with $k^{\prime}\in [k]$.
\end{defn}
With the robust definitions, Figure \ref{fig:solution_spaces} provides a visualization of batch $k$-means walking on $\{C\}$ (and $\{A\}_{[k]}$) as an iterative mapping $m\circ v$ ($v\circ m$, resp.).
In $\{C\}$, it jumps from one equivalence class to another until it stays in the same equivalence class in two consecutive iterations.

Now we extend $\Delta(\cdot,\cdot)$ to include the degenerate cases.
Fix a clustering $A$ with its induced $k$ centroids $C:=m(A)$, and another set of $k^{\prime}$-centroids $C^{\prime}$ ($k^{\prime}\ge k$) with its induced clustering $A^{\prime}$, if
$|A^{\prime}|=|A|=k$ (this means if $k^{\prime}>k$, then $C^{\prime}$ has at least one degenerate centroid), then
we can pair the subset of non-degenerate $k$ centroids in $C^{\prime}$ with those in $C$, and ignore the degenerate centroids.
Under this condition, we can extend Definition \ref{defn:centroidal_dist} to include degenerate solutions as well, provided $C=m(A)$ for some clustering $A$, which is always satisfied in our subsequent analysis. 

\subsection*{A sufficient condition for the local convergence of batch $k$-means}
We show batch $k$-means algorithm has geometric convergence in the local neighborhood of a stable stationary point in the solution space.
\begin{proof}[\textbf{Proof of Lemma \ref{lm:stable_stat}}]
Without loss of generality, we let $\pi(r)=r,\forall r\in [k]$.
Let 
$
\rho_{out}^r:=\frac{|\cup_{s\ne r}(A_s\cap A_r^*)|}{n^*_r}
$, and
$
\rho_{in}^r:=\frac{|\cup_{s\ne r}(A_r\cap A_s^*)|}{n^*_r}
$;
let $\rho_{\max}:=\max_r\frac{|A_{r}\triangle A_r^*|}{n^*_r}$.
Clearly,
$
(\rho_{out}^r+\rho_{in}^r)= \frac{|A_{r}\triangle A_r^*|}{n_r^*}
\le \rho_{\max},
$ by our definition. 
Now, similar to \cite{tang_montel:aistats16}, we can get
$
\|m(A_r)-c_r^*\|=\|\frac{(1-\rho_{out}^r)n^*_r m(A_r\cap A_r^*)+\sum_{s\ne r}\sum_{x\in A_r\cap A_s^*}x}{(1-\rho_{out}^r+\rho_{in}^r)n^*_r}-c_r^*\|
\le
\frac{1-\rho_{out}}{1-\rho_{out}^r+\rho_{in}^r}\|m(A_r\cap A_r^{*})-c_r^*\|
+\frac{\|\sum_{s\ne r}\sum_{x\in A_r\cap A_s^*}x-c_r^*\|}{(1-\rho_{out}^r+\rho_{in}^r)n^*_r}
$
And as in \cite{tang_montel:aistats16}, we get
$
(1-\rho_{out})\|m(A_r\cap A_r^{*})-c_r^*\| 
\le
\frac{\sqrt{\rho_{out}^r \phi_r^*}}{\sqrt{n^*_r}}
$.
Now we bound the second term:
by Cauchy-Schwarz inequality,
$
\|\sum_{s\ne r}\sum_{x\in A_r\cap A_s^*}x-c_r^*\|^2 
\le
(\sum_{s\ne r}\sum_{x\in A_r\cap A_s^*}1^2)
(\sum_{s\ne r}\sum_{x\in A_r\cap A_s^*}\|x-c_r^*\|^2) 
=\rho_{in}^rn_r^*\sum_{s\ne r}\sum_{x\in A_r\cap A_s^*}\|x-c_r^*\|^2
$.
Thus, $\forall r\in [k]$,
$
\|m(A_r)-c_r^*\|^2
\le 4\frac{\rho_{out}^r\phi_r^*}{n^*_r}+4\frac{\rho^r_{in}\sum_{s\ne r}\sum_{x\in A_r\cap A_s^*}\|x-c_r^*\|^2}{n^*_r}
$, where we use the assumption that $\rho_{\max}< \frac{1}{4}< 1-\frac{1}{\sqrt{2}}$.
Summing over all $r$,
$
\sum_r n^*_r\|m(A_r)-c_r^*\|^2
\le 
4\rho_{\max}\sum_r (\phi_r^*+\sum_{s\ne r}\sum_{x\in A_r\cap A_s^*}\|x-c_r^*\|^2)
$.
By Lemma \ref{lm:diff}, 
$\sum_r\sum_{s\ne r}\sum_{x\in A_r\cap A_s^*}\|x-c_r^*\|^2$ can be upper bounded
by 
$
\phi(C^{\prime})+\sum_rn_{r}\|m(A_{r})-c_r^{*}\|^2
=\phi(C^{\prime})+\sum_r(1-\rho_{out}^r+\rho_{in}^r)n^*_{r}\|m(A_{r})-c_r^{*}\|^2
\le
\phi(C^{\prime})+(1+\rho_{\max})\sum_rn^*_{r}\|m(A_{r})-c_r^{*}\|^2 
$. Substituting this into the previous inequality, we have
$
(1-4\rho_{\max}(1+\rho_{\max}))\sum_r n^*_r\|m(A_r)-c_r^*\|^2
\le 
4\rho_{\max}(\phi^*+\phi(C^{\prime}))
$.
Thus,
$
\sum_r n^*_r\|m(A_r)-c_r^*\|^2
\le 
\frac{\rho_{\max}}{1-4\rho_{\max}(1+\rho_{\max})}[\phi^*+\phi(C^{\prime})]
$. 
By our assumption, $\rho_{\max}\le \frac{b}{5b+4(1+\frac{\phi(C)}{\phi^*})}<\frac{1}{4}$, so
$
\frac{\rho_{\max}}{1-4\rho_{\max}(1+\rho_{\max})}
\le
\frac{\rho_{\max}}{1-5\rho_{\max}}
\le
\frac{b}{1+\frac{\phi(C)}{\phi^*}}
$, and
$
\frac{\rho_{\max}}{1-4\rho_{\max}(1+\rho_{\max})}[\phi^*+\phi(C^{\prime})]
\le b\phi^*
$, since $\phi(C^{\prime})\le \phi(C)$ (equality holds if $C$ is a stationary point).
\end{proof}
\begin{lm}
\label{lm:diff}
Fix any target clustering $C^*$, and another clustering $C$ with a matching
$\pi: [k]\rightarrow [k]$.
Let $C^{\prime}:=\{m(A_r),r\in [k]\}$. Then
\begin{align*}
&\sum_r\sum_{s\ne r}\sum_{x\in A_{\pi(r)}\cap A_s^*}\|x-c_r^*\|^2\\
&\le 
\phi(C^{\prime})-\sum_r\phi(c_r^*;A_{\pi(r)}\cap A_r^*)+\sum_rn_{r}\|m(A_r)-c_r^{*}\|^2
\end{align*}
\end{lm}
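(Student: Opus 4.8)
\textbf{Proof proposal for Lemma \ref{lm:diff}.}

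The plan is to expand the $k$-means cost $\phi(C^{\prime})$ of the clustering $\{A_{\pi(r)}\}$ with centers at the true means $c_r^{\prime}=m(A_{\pi(r)})$, split each cluster $A_{\pi(r)}$ according to which optimal cell $A_s^*$ its points fall into, and then relate the contribution of the ``foreign'' points $A_{\pi(r)}\cap A_s^*$ ($s\neq r$) measured against $m(A_{\pi(r)})$ to the same points measured against $c_r^*$. Concretely, I would first write
\begin{align*}
\phi(C^{\prime}) = \sum_r \sum_{x\in A_{\pi(r)}} \|x-m(A_{\pi(r)})\|^2 = \sum_r \sum_{s} \sum_{x\in A_{\pi(r)}\cap A_s^*} \|x-m(A_{\pi(r)})\|^2,
\end{align*}
and isolate the $s=r$ diagonal term from the $s\neq r$ off-diagonal terms. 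The off-diagonal terms are exactly what we want to bound from above, but they are measured against $m(A_{\pi(r)})$ rather than against $c_r^*$; switching the reference point is where the correction $\sum_r n_r\|m(A_{\pi(r)})-c_r^*\|^2$ will enter.

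The key algebraic device is the bias-variance / parallel-axis identity: for any finite point set $B$ and any point $z$, $\sum_{x\in B}\|x-z\|^2 = \sum_{x\in B}\|x-m(B)\|^2 + |B|\,\|m(B)-z\|^2 \ge \sum_{x\in B}\|x-m(B)\|^2$. Since $m(A_{\pi(r)})$ is the \emph{mean} of the whole cluster $A_{\pi(r)}$ (not of the sub-piece $A_{\pi(r)}\cap A_s^*$), I cannot apply this directly to each sub-piece; instead I would apply it the other way around. The cleanest route: for each $r$, the set $A_{\pi(r)}$ has mean $m(A_{\pi(r)})$, so $\sum_{x\in A_{\pi(r)}}\|x-c_r^*\|^2 = \phi(c_r^*;A_{\pi(r)}) = \sum_{x\in A_{\pi(r)}}\|x-m(A_{\pi(r)})\|^2 + n_r\|m(A_{\pi(r)})-c_r^*\|^2$. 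Now restrict attention to the foreign points: $\sum_{s\neq r}\sum_{x\in A_{\pi(r)}\cap A_s^*}\|x-c_r^*\|^2 = \sum_{x\in A_{\pi(r)}}\|x-c_r^*\|^2 - \sum_{x\in A_{\pi(r)}\cap A_r^*}\|x-c_r^*\|^2 = \phi(c_r^*;A_{\pi(r)}) - \phi(c_r^*;A_{\pi(r)}\cap A_r^*)$. Summing over $r$ and substituting the parallel-axis expansion of $\phi(c_r^*;A_{\pi(r)})$ gives
\begin{align*}
\sum_r\sum_{s\neq r}\sum_{x\in A_{\pi(r)}\cap A_s^*}\|x-c_r^*\|^2 = \phi(C^{\prime}) - \sum_r \phi(c_r^*;A_{\pi(r)}\cap A_r^*) + \sum_r n_r\|m(A_{\pi(r)})-c_r^*\|^2,
\end{align*}
which is in fact the claimed bound with equality. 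I would then note that the lemma states it as an inequality only because in the degenerate / boundary-point handling one may have $n_r$ replaced by the cardinality of a slightly different set, or one simply wants the one-sided bound for later use; if an honest inequality is needed, the step $\sum_{x\in A_{\pi(r)}\cap A_r^*}\|x-c_r^*\|^2 \ge \phi(c_r^*;A_{\pi(r)}\cap A_r^*)$ is trivially an equality, so no slack is introduced there, and the displayed identity is already tight.

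I do not expect a genuine obstacle here — the whole lemma is a bookkeeping exercise built on the parallel-axis theorem. The one place to be careful is matching the notation used in the proof of Lemma \ref{lm:stable_stat}, where $\pi(r)=r$ is assumed and the cluster $A_r$ already denotes $v(C)$'s $r$-th cell, so that ``$n_r$'' there means $|A_r|$ while here $n_r = |A_{\pi(r)}|$; I would make sure the permutation $\pi$ is carried consistently and that the term $\sum_r \phi(c_r^*;A_{\pi(r)}\cap A_r^*)$ — which is subtracted and hence only helps the upper bound — is not accidentally dropped with the wrong sign. A secondary subtlety is ensuring the identity still holds when some $A_{\pi(r)}$ is empty (degenerate centroid): then every term indexed by that $r$ vanishes, so the statement is unaffected. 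With these checks, the proof is a two-line application of $\sum_{x\in B}\|x-z\|^2 = \sum_{x\in B}\|x-m(B)\|^2 + |B|\|m(B)-z\|^2$ followed by summation.
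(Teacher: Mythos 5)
Your proposal is correct and follows essentially the same route as the paper: both arguments are pure bookkeeping over the partition of each $A_{\pi(r)}$ by the cells of $A^*$ combined with the centroidal (parallel-axis) identity of Lemma \ref{centroidal}, and your version merely rearranges the same terms more directly, obtaining the claimed bound as an equality rather than an inequality. Your reading of $\phi(C^{\prime})$ as $\sum_r\sum_{x\in A_{\pi(r)}}\|x-m(A_{\pi(r)})\|^2$ is the same one the paper's own proof uses in its first displayed equation (and is the reading under which the downstream application in Lemma \ref{lm:stable_stat}, via $\phi(C^{\prime})\le\phi(C)$, still goes through), so no gap is introduced.
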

\begin{proof}
Without loss of generality, we let $\pi(r)=r$.
\begin{eqnarray*}
\phi(C^{\prime})-\phi(C^{*})
=\sum_r\sum_{x\in A_r}\|x-c_r^{*}\|^2 - \sum_r\sum_{x\in A_r^{*}}\|x-c_r^{*}\|^2\\
+ \sum_r\sum_{x\in A_r}\|x-m(A_r)\|^2-\sum_r\sum_{x\in A_r}\|x-c_r^{*}\|^2
\end{eqnarray*} 
So
$
\sum_r\sum_{x\in A_r}\|x-c_r^{*}\|^2 - \sum_r\sum_{x\in A_r^{*}}\|x-c_r^{*}\|^2
=\phi(C^{\prime})-\phi(C^{*})
-\sum_r\sum_{x\in A_r}\|x-m(A_r)\|^2+\sum_r\sum_{x\in A_r}\|x-c_r^{*}\|^2
\le
\phi(C)-\phi(C^{*})
+\sum_rn_r\|m(A_r)-c_r^{*}\|^2
$. 
Now, we claim
$
\sum_r\sum_{x\in A_r}\|x-c_r^{*}\|^2 - \sum_r\sum_{x\in A_r^{*}}\|x-c_r^{*}\|^2
=
\sum_r\sum_{s\ne r}\sum_{x\in A_r\cap A_s^*}\{\|x-c_r^*\|^2 - \|x-c_s^*\|^2\}
$.
This is because
we can enumerate $x$ using clustering $\cup_r A_r$:
for each $x\in A_r$, either $x\in A_r\cap A_r^*$, then 
$\|x-c_r^*\|^2-\|x-c_r^*\|^2=0$, or
$x\in A_r\cap A_s^*$ for some $s\ne r$, which means the difference is
$\|x-c_r^*\|^2-\|x-c_s^*\|^2$ (and this term is positive by optimality of clustering $\cup_r A_r^*$ fixing $\{c_r^*\}$).
Thus,
$
\sum_r\sum_{s\ne r}\sum_{x\in A_r\cap A_s^*}\|x-c_r^*\|^2
=\sum_r\sum_{x\in A_r}\|x-c_r^{*}\|^2 - \sum_r\sum_{x\in A_r^{*}}\|x-c_r^{*}\|^2
+\sum_r\sum_{s\ne r}\sum_{x\in A_r\cap A_s^*}\|x-c_s^*\|^2
\le
\phi(C^{\prime})-\phi(C^{*})
+\sum_rn_r\|m(A_r)-c_r^{*}\|^2
+\sum_r\sum_{s\ne r}\sum_{x\in A_r\cap A_s^*}\|x-c_s^*\|^2
=
\phi(C^{\prime})-\sum_r\phi(c_r^*;A_{r}\cap A_r^*)+\sum_rn_{r}\|m(A_r)-c_r^{*}\|^2
$, where the last equality is by observing that
$
\phi(C^*)=\sum_r\sum_{A_r\cap A_r^*}\|x-c_r^*\|^2+\sum_r\sum_{s\ne r}\sum_{x\in A_r\cap A_s^*}\|x-c_s^*\|^2
$.
\end{proof}
\subsection{Local Lipschitzness and clusterability}
\begin{proof}[\textbf{Proof of Lemma \ref{lm:equivalent_boundary}}]
``$1\implies 2$'' obviously holds since $\|x-c_r\|=\|x-c_s\|$ if and only if $\|\bar{x}-c_r\| - \|\bar{x}-c_s\|$.
``$2\implies 3$'': let $A\in V(C)\cap X$ be the clustering achieving the zero margin, and consider $x\in A_r\cup A_s$ s.t. $\|\bar{x}-c_r\| - \|\bar{x}-c_s\|$; without loss of generality, assume $x\in A_r$ according to clustering $A$, and define $A^{\prime}$ to be the same clustering as $A$ for all points in $X$ but $x$, where it assigns $x$ to $A_s$. Then $A^{\prime}\in V(C)\cap X$ and $|V(C)\cap X|\ge 2>1$.
``$3\implies 1$'': Suppose otherwise. Then every point $x$ has a unique center that minimizes its distance to it, which means the clustering determined by $V(C)\cap A$ is unique. A contradiction.
\end{proof}
\begin{lm}\label{lm:stat_sols}
If $C^*\in \{C^*\}$, then  
$C^*=m(A^*)$, where $A^*\in \{A^*\}$ and $A^*=v(C^*)$. 
\end{lm}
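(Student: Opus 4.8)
The plan is to read the statement almost directly off the fixed‑point characterization of stationary points, and then check that the clustering it induces is itself stationary. First I would set $A^{*}:=v(C^{*})$ (in the boundary case, where $V(C^{*})\cap X$ is set‑valued, pick any $A^{*}\in V(C^{*})\cap X$). By the defining property of a stationary point, $m\circ v(C^{*})=C^{*}$; that is, $m(A^{*})=C^{*}$, which is exactly the claimed identity $C^{*}=m(A^{*})$.

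Next I would verify $A^{*}\in\{A^{*}\}$, i.e.\ that $A^{*}$ is a stationary clustering, which requires $v\circ m(A^{*})=A^{*}$. Substituting $m(A^{*})=C^{*}$ from the previous step gives $v\circ m(A^{*})=v(C^{*})=A^{*}$, so $A^{*}$ is stationary. In the non‑boundary case $v(C^{*})$ is a genuine function value and this is literal equality; in the boundary case one instead uses $A^{*}\in V(C^{*})\cap X=V(m(A^{*}))\cap X$, which is precisely the condition $m(A^{*})\in Cl(v^{-1}(A^{*}))$ appearing in the robust definition of a stationary clustering in the Appendix.

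Finally I would clear up the degeneracy and indexing bookkeeping needed to match the Appendix's coordinatewise definition $c_{r}^{*}=m(A_{r}^{*})$: if $C^{*}$ nominally has $k$ centroids but $v(C^{*})$ uses only $k'<k$ of them, one restricts attention to the $k'$ non‑degenerate centroids and the corresponding Voronoi cells, and the per‑cluster relation then follows by matching each surviving centroid $c_{r}^{*}$ to its cell $A_{r}^{*}$ and applying the mean operator $m$ blockwise to $C^{*}=m(A^{*})$.

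The main obstacle I anticipate is not the algebra, which is essentially a one‑line substitution, but making the boundary‑point case rigorous: when $C^{*}$ is a boundary point, $v(C^{*})$ is set‑valued, so one must (a) fix a representative $A^{*}$ consistently, and (b) confirm that $Cl(\cdot)$ and $V(C^{*})\cap X$ interact as the Appendix requires, so that the chosen $A^{*}$ genuinely satisfies the robust definition of a stationary clustering. Everything else is immediate from the definitions of $m$, $v$, and stationarity.
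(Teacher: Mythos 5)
Your proposal is correct and follows essentially the same one-line substitution argument as the paper: from $m\circ v(C^*)=C^*$ set $A^*=v(C^*)$, conclude $m(A^*)=C^*$, and then $v\circ m(A^*)=v(C^*)=A^*$ shows $A^*$ is a stationary clustering. The extra care you take with the boundary and degenerate cases is consistent with the paper's robust definitions in the Appendix, but the paper's own proof simply works with the non-boundary definitions, as you do in the main line of your argument.
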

\begin{proof}
By definition of stationary points, $C^*=m\circ v(C^*)$. Let $A=v(C^*)$, then
$m(A)=C^*$ and $v\circ m(A)=v(C^*)=A$. Thus $A\in \{A^*\}$ by definition of a stationary clustering.
\end{proof}
\begin{lm}
\label{lm:closure}
Fix a clustering $A=\{A_1,\dots, A_k\}$, and let $C\in v^{-1}(A)$. Then 
$\exists \delta >0$ such that the following statement holds: 
\begin{eqnarray}
\label{statement}
\mbox{For~} C^{\prime} s.t.~ \Delta(\cdot,\cdot) \mbox{~is defined~},
\Delta(C^{\prime},C)<\delta \implies C^{\prime}\in v^{-1}(A)
\end{eqnarray}
\end{lm}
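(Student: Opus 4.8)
The plan is to show that each non-boundary point $C$ in the equivalence class $v^{-1}(A)$ sits at a strictly positive ``margin'' from all Voronoi bisectors relative to $X$, and that the centroidal distance $\Delta(C', C)$ controls how far the centroids of $C'$ can wander from those of $C$; if that wandering is small enough, no data point can cross a bisector, so $v(C') = A$ as well.

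First I would invoke Lemma \ref{lm:equivalent_boundary}: since $C \in v^{-1}(A)$ is by definition a non-boundary point, $V(C)$ has a \emph{strictly positive} margin with respect to $X$, i.e.\ there is some $\delta_0 > 0$ with $\min_{r \ne s}\Delta_{rs}(C) = \delta_0$. Unpacking the definition, this means for every $x \in X$ and every pair $r \ne s$, the projection $\bar x$ of $x$ onto the line through $c_r, c_s$ satisfies $\bigl| \|\bar x - c_r\| - \|\bar x - c_s\| \bigr| \ge \delta_0$; equivalently, each $x$ is strictly closer to its assigned center $c_{r(x)}$ than to any other center by a margin that is uniformly bounded below (after accounting for the geometry, one gets a gap of the form $\|x - c_s\|^2 - \|x - c_{r(x)}\|^2 \ge \gamma_0 > 0$ for all $s \ne r(x)$, where $\gamma_0$ depends on $\delta_0$ and the diameter of $X$). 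The point is that this is a finite set of strict inequalities (finitely many $x\in X$, finitely many pairs), hence there is a single uniform slack $\gamma_0 > 0$.

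Next I would show that small $\Delta(C', C)$ forces each matched center $c'_{\pi(r)}$ to be close to $c_r$ in ordinary Euclidean norm. By Definition \ref{defn:centroidal_dist}, $\Delta(C', C) = \min_\pi \sum_r n_r \|c'_{\pi(r)} - c_r\|^2 \ge n_{\min} \max_r \|c'_{\pi(r)} - c_r\|^2$ where $n_{\min} = \min_r n_r \ge 1$, so $\Delta(C', C) < \delta$ implies $\|c'_{\pi(r)} - c_r\| < \sqrt{\delta}$ for all $r$ under the optimal matching $\pi$. Then a standard perturbation estimate for the squared-distance functions shows that for each $x \in X$ and each pair $r \ne s$,
\begin{equation*}
\bigl| (\|x - c'_s\|^2 - \|x - c'_r\|^2) - (\|x - c_s\|^2 - \|x - c_r\|^2) \bigr| \le L\sqrt{\delta},
\end{equation*}
where $L$ depends only on the diameter of $X \cup conv(C)$ (which is bounded once $\sqrt{\delta}$ is, say, $\le 1$). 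Choosing $\delta$ small enough that $L\sqrt{\delta} < \gamma_0$, the sign of $\|x - c'_s\|^2 - \|x - c'_r\|^2$ agrees with that for $C$ for every $x$ and every pair, and moreover stays strictly away from zero; hence $C'$ is not a boundary point and $v(C')$ assigns each $x$ to exactly the same cluster as $v(C) = A$ does (with the centers relabeled by $\pi$). Therefore $C' \in v^{-1}(A)$.

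The main obstacle, and the place where care is needed, is the bookkeeping around matchings and degenerate clusters: $\Delta(\cdot,\cdot)$ is only defined when the number of non-degenerate centers of $C'$ matches that of $A$, and one must verify that the minimizing permutation $\pi$ in $\Delta(C',C)$ is the ``right'' one to use for the assignment comparison — that is, that no data point is tempted by a degenerate (unpopulated) center of $C'$. This is handled because a degenerate center has no points assigned and the margin argument above shows populated centers retain all their points; but writing this cleanly requires threading through the enlarged-space conventions ($Cl(\cdot)$, $v^{-1}$) from the appendix. The geometric perturbation bound itself is routine. I would also note that the constant $\delta$ produced this way depends on $C$ (through $\gamma_0$, hence through the margin of $C$), which is exactly what the statement claims — a per-point, not uniform, radius.
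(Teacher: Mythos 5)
Your proposal is correct and follows essentially the same route as the paper: exploit the finitely many strict inequalities at the non-boundary point $C$ to get a uniform slack, note that $\Delta(C',C)<\delta$ bounds each matched centroid displacement by $\sqrt{\delta}$ (since $n_r\ge 1$), and conclude that all Voronoi assignments are preserved. The only cosmetic difference is that you perturb squared distances with a diameter-dependent Lipschitz constant, whereas the paper works with unsquared distances and the triangle inequality (slack $2\sqrt{\delta}$), which avoids any diameter bound; this does not change the substance of the argument.
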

\begin{proof}
Since $C$ is not a boundary point, $\forall x\in A_r, r\in [k]$, 
$$
\|x - c_{r}\|<\|x-c_{s}\|, \forall s\ne r
$$
So we can choose $\delta > 0$ s.t. $\forall x\in A_r$, $\forall r\in [k], s\ne r$,
$$
\|x - c_{r}\|<\|x-c_{s}\|-2\sqrt{\delta}
$$
Let $\pi^*$ be a permutation such that $\Delta(C^{\prime},C)$ is defined.
We have $\forall x\in A_r, r\in[k], s\ne r$,
\begin{eqnarray*}
\|x-c^{\prime}_{\pi^{*}(s)}\| - \|x-c^{\prime}_{\pi^{*}(r)}\|
\ge
\|x-c_{s}\|-\|c^{\prime}_{\pi^{*}(s)}-c_{s}\|\\
-(\|x-c_{r}\|+\|c_{r}-c^{\prime}_{\pi^*(r)}\|)
>
\|x-c_s\|-\|x-c_r\|-2\sqrt{\delta} 
\ge 0
\end{eqnarray*}
where the second inequality is by the fact that
$$
\max_{r}\|c^{\prime}_{\pi^{*}(r)}-c_r\|^2\le \Delta(C^{\prime},C)<\delta
$$
Therefore,
$V(C^{\prime})\cap X = A$, i.e., $C^{\prime}\in v^{-1}(A)$.
\end{proof}
\begin{lm}
\label{lm:stat_stab1} 
Suppose $\forall C^*\in \{C^*\}_{[k]}$, $C^*$ is not a boundary point (i.e., suppose Assumption (A) holds). 
Let $C=m(A^{\prime})\notin \{C^*\}_{[k]}$ for some $A^{\prime}\in \{A\}$ and let $C^{\prime}\in Cl(v^{-1}(A^{\prime}))$, then 
$\exists \delta>0$ s.t. 
$\Delta(C^{\prime},C)\ge \delta$.
\end{lm}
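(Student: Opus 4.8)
The plan is to argue by contradiction using a compactness/finiteness argument, leveraging Lemma \ref{lm:closure} and the fact that $\{C^*\}_{[k]}$ (equivalently $\{A^*\}_{[k]}$) is finite. Suppose, toward a contradiction, that no such $\delta>0$ exists. Then there is a sequence $C^{\prime}_j \in Cl(v^{-1}(A^{\prime}))$ with $\Delta(C^{\prime}_j, C) \to 0$, where $C = m(A^{\prime})$. Intuitively, $\Delta(C^{\prime}_j, C)\to 0$ forces the (non-degenerate) centroids of $C^{\prime}_j$ to converge to those of $C$ in the usual Euclidean sense (the weights $n_r$ in the definition of $\Delta$ are positive integers, so they only help), hence $C^{\prime}_j \to C$ as sets of points. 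The goal is to conclude $C$ itself must lie in $Cl(v^{-1}(A^{\prime}))$ — i.e., $A^{\prime} \in V(C)\cap X$ — and then derive that $A^{\prime}$ is a stationary clustering, so $C = m(A^{\prime}) \in \{C^*\}_{[k]}$, contradicting the hypothesis.

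The key steps, in order. First, unpack $C^{\prime}_j \in Cl(v^{-1}(A^{\prime}))$: by the definition of the closure operator, this means $A^{\prime} \in V(C^{\prime}_j)\cap X$, i.e., for every $r$ and every $x \in A^{\prime}_r$, $\|x - c^{\prime}_{j,r}\| \le \|x - c^{\prime}_{j,s}\|$ for all $s \ne r$ (after aligning by the permutation attaining $\Delta$). Second, pass to the limit: since $C^{\prime}_j \to C = m(A^{\prime})$ pointwise in the centroids, the weak inequalities are preserved, giving $\|x - c_r\| \le \|x - c_s\|$ for all $x \in A^{\prime}_r$, all $s \ne r$ — that is, $A^{\prime} \in V(C)\cap X$, so $C \in Cl(v^{-1}(A^{\prime}))$. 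Third, observe that since $C = m(A^{\prime})$ and $A^{\prime} \in V(C)\cap X$, we have $m(A^{\prime}) \in Cl(v^{-1}(A^{\prime}))$, which is exactly the (robust) definition of a stationary clustering; hence $A^{\prime} \in \{A^*\}_{[k]}$ and $C = m(A^{\prime}) \in \{C^*\}_{[k]}$ by the definition of stationary points. This contradicts $C \notin \{C^*\}_{[k]}$, completing the proof.

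An alternative, perhaps cleaner route avoids sequences: use Lemma \ref{lm:closure} directly. Actually Lemma \ref{lm:closure} is stated for $C \in v^{-1}(A)$ (a non-boundary point), so it does not immediately apply to an arbitrary $C = m(A^{\prime})$; but it does apply whenever $m(A^{\prime})$ happens to be non-boundary, and one can patch the boundary case by noting that a boundary $C = m(A^{\prime})$ with $A^{\prime}$ arising as $v(C^{\prime})$ for $C^{\prime}$ arbitrarily close would itself have to be a boundary \emph{stationary} point — excluded by Assumption (A). So the real work, and the main obstacle, is handling the degeneracy and boundary bookkeeping: making precise that $\Delta(C^{\prime}_j, C) \to 0$ (with the asymmetric, permutation-minimized, possibly-degenerate definition of $\Delta$) genuinely forces convergence of the relevant centroids, and that the limiting clustering relation $A^{\prime} \in V(C)\cap X$ indeed certifies $A^{\prime}$ as stationary under the robust definitions in Appendix A. The underlying geometric content is elementary; the care needed is entirely in reconciling it with the degenerate/boundary-aware definitions.
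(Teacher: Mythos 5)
Your argument is correct in substance but takes a genuinely different route from the paper's. The paper argues by contradiction with a case split on whether $C=m(A^{\prime})$ is a boundary point: if it is not, Lemma \ref{lm:closure} forces $v(C^{\prime})=v(C)=A^{\prime}$ for any sufficiently close $C^{\prime}$, making $C$ stationary; if it is, the paper argues that a clustering induced by a nearby $C^{\prime}$ can differ from one induced by $C$ only on points sitting on bisectors of $C$, so again $C\in Cl(v^{-1}(A^{\prime}))$ and $C$ would be a boundary stationary point, contradicting Assumption (A). Your main route --- take $C^{\prime}_j\in Cl(v^{-1}(A^{\prime}))$ with $\Delta(C^{\prime}_j,C)\to 0$, pass the weak Voronoi inequalities defining closure membership to the limit, and conclude $C\in Cl(v^{-1}(A^{\prime}))$, hence $A^{\prime}$ stationary and $C\in\{C^*\}_{[k]}$ --- avoids the case split entirely, and in fact it does not even use the ``no boundary stationary points'' content of Assumption (A): it only needs $C\notin\{C^*\}_{[k]}$ under the robust definitions, which already include boundary stationary points. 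In that sense your argument is cleaner and mildly more general than the paper's; your ``alternative route'' paragraph is essentially the paper's own proof.

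The one detail you should actually carry out is the permutation alignment, which you flag but do not resolve. Closure membership says each $x\in A^{\prime}_r$ is weakly closest to the centroid of $C^{\prime}_j$ labelled $r$, whereas $\Delta(C^{\prime}_j,C)\to 0$ only gives $c^{\prime}_{j,\pi_j(r)}\to c_r=m(A^{\prime}_r)$ under some minimizing permutation $\pi_j$, which need not be the identity. Fix $\pi_j=\pi$ along a subsequence (there are finitely many permutations); the limit then says each $x\in A^{\prime}_{\pi(u)}$ is weakly closest to $c_u$, i.e.\ you obtain $A^{\prime}\in V(C)\cap X$ only up to a relabelling, which is not literally the stationarity condition $A^{\prime}_r\subseteq V(m(A^{\prime}_r))$. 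This is repairable in two lines: summing costs, $\sum_u\sum_{x\in A^{\prime}_{\pi(u)}}\|x-c_u\|^2=\phi(C)\le\phi(C,A^{\prime})=\sum_r\sum_{x\in A^{\prime}_r}\|x-c_r\|^2$, while the centroidal property (Lemma \ref{centroidal}) gives the reverse inequality with slack $\sum_u n_{\pi(u)}\|m(A^{\prime}_{\pi(u)})-c_u\|^2$; hence $c_u=m(A^{\prime}_{\pi(u)})=c_{\pi(u)}$ for every nonempty cluster, so the canonical pairing also satisfies the limiting inequalities, $m(A^{\prime})\in Cl(v^{-1}(A^{\prime}))$ follows, and the contradiction with $C\notin\{C^*\}_{[k]}$ stands. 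With that addition your proof is complete.
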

\begin{proof}
We prove the lemma by contradiction: suppose $\forall \delta>0$, $\exists C^{\prime}$ s.t. $C^{\prime}\in Cl(v^{-1}(A^{\prime}))$ and $\Delta(C^{\prime},C)<\delta$.
First, we claim that for $\delta$ sufficiently small, $C$ must be a boundary point: suppose otherwise, then by Lemma \ref{lm:closure}, $v(C^{\prime}) = v(C) = A^{\prime}$, contradicting the fact that $C\notin \{C^*\}_{[k]}$. 
Let $A\in V(C)\cap X$.
Since $C$ is a boundary point, $\exists r, s$ and $x\in A_r\cup A_s$ s.t.  
$$
\|x-c_r\|=\|x-c_s\|
$$
Now, we choose $\delta>0$ to be sufficiently small so that for any $A^{\prime}\in V(C^{\prime})\cap X$, clustering $A^{\prime}$ only differs from $A$ on the assignment of these points sitting on the bisector.
This implies $C \in Cl(v^{-1}(A^{\prime}))$, which implies $C$ is a boundary stationary point, a contradiction.
\end{proof}
\begin{lm}
\label{lm:stat_stab}
If $\forall C^*\in \{C^*\}_{[k]}$, $C^*$ is a non-boundary stationary point, that is, $C^*:=m(A^*)\in v^{-1}(A^*)$. 
Then $\exists r_{\min}>0$ such that $\forall C^*\in \{C^*\}_{[k]}$, $C^*$ is a $(r_{\min},0)$-stable stationary point.
\end{lm}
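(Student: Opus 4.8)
The plan is to reduce the claim to Lemma \ref{lm:closure} together with the finiteness of $\{C^*\}_{[k]}$. First I would unpack what $(r_{\min},0)$-stability means: in Definition \ref{defn:stable}, taking $\alpha=0$ forces $b\le 0\cdot b'=0$, hence $b=0$, and then the required inequality $\mathrm{ClustDist}(v(C),v(C^*))\le \frac{0}{0+4(1+\phi(C)/\phi^*)}=0$ is simply the statement $v(C)=v(C^*)$. So the lemma asks for a single radius $r_{\min}>0$ such that, for every stationary point $C^*$, every $C$ with $\Delta(C,C^*)\le b'\phi^*$ and $b'\le r_{\min}$ induces the same clustering as $C^*$.

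Fix any $C^*\in\{C^*\}_{[k]}$. By Lemma \ref{lm:stat_sols} (and its robust analogue in the Appendix) we have $C^*=m(A^*)$ with $A^*:=v(C^*)$, and the hypothesis of the present lemma is precisely that $C^*\in v^{-1}(A^*)$, i.e.\ $C^*$ is not a boundary point and $v(C^*)=A^*$. Hence Lemma \ref{lm:closure} applies with $A=A^*$ and $C=C^*$: it produces some $\delta(C^*)>0$ such that every $C'$ (for which $\Delta(\cdot,\cdot)$ is defined) with $\Delta(C',C^*)<\delta(C^*)$ lies in $v^{-1}(A^*)$, and in particular $v(C')=A^*=v(C^*)$. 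Note that the proof of Lemma \ref{lm:closure} only uses the strict Voronoi inequalities $\|x-c_r^*\|<\|x-c_s^*\|$ at $C^*$, so $\delta(C^*)$ can be taken to depend on $C^*$ alone.

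Next I would pass to a uniform radius. The set $\{A^*\}_{[k]}$ is finite (finitely many clusterings of a finite set), and since $C^*=m(A^*)$ is determined by $A^*$, the set $\{C^*\}_{[k]}$ is finite as well; moreover $\phi^*=\phi(C^*)\ge \phi_{opt}>0$. Therefore $r_{\min}:=\tfrac12\min_{C^*\in\{C^*\}_{[k]}}\delta(C^*)/\phi(C^*)$ is a well-defined positive number. Given any $C^*\in\{C^*\}_{[k]}$ and any $C$ with $\Delta(C,C^*)\le b'\phi^*$ for some $b'\le r_{\min}$, we get $\Delta(C,C^*)\le r_{\min}\phi^*\le \tfrac12\delta(C^*)<\delta(C^*)$, hence $v(C)=v(C^*)$ and $\mathrm{ClustDist}(v(C),v(C^*))=0$. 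This verifies that $C^*$ is $(r_{\min},0)$-stable, and since the same $r_{\min}$ works for every $C^*\in\{C^*\}_{[k]}$, the proof is complete.

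The argument is essentially a corollary of Lemma \ref{lm:closure}, so there is no genuinely hard step; the only places requiring care are (i) checking that $\alpha=0$ really does force $v(C)=v(C^*)$ as above, and (ii) the bookkeeping for degenerate and boundary configurations — namely that $\Delta(C,C^*)$ in Definition \ref{defn:stable} is read through the extended centroidal distance of the Appendix (so that a $C$ with fewer non-degenerate centroids is matched against $C^*$), and that the proof of Lemma \ref{lm:closure} still goes through for such $C$. Once these are in place, the min-over-a-finite-set step gives the uniform $r_{\min}$.
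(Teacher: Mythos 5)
Your proof is correct and takes essentially the same route as the paper's: the paper's own argument for this lemma is just the margin/triangle-inequality computation of Lemma \ref{lm:closure} redone with the radius measured in units of $\phi^*$ (via the permutation $\pi^*\circ\pi_o$) and then minimized over the finite set $\{C^*\}_{[k]}$, which is exactly what you recover by citing Lemma \ref{lm:closure} and rescaling each $\delta(C^*)$ by $\phi(C^*)$ before taking the minimum. The only caveat, shared implicitly by the paper's construction of $r^*$, is that normalizing by $\phi(C^*)$ presumes $\phi^*>0$.
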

\begin{proof}
Fix any $k$ in the range of $[k]$ (we abuse the notation with the same $k$ here).
For any $C$ such that $\Delta(C, C^*)$ exists (i.e., $|C|=k^{\prime}\ge k=|C^*|$), we first show $\exists r^*>0$, such that the following statement holds:
$$
\Delta(C, C^*) < r^*\phi^* \implies C \in v^{-1}(A^*)
$$
Since $C^*$ is a non-boundary point,
there is a permutation $\pi_{o}$ of $[k]$ such that
$\forall x\in A_r, \forall r\in [k]$ and $\forall s\ne r$,
$$
\|x - c^*_{\pi_{o}(r)}\|<\|x-c^*_{\pi_{o}(s)}\|
$$
We choose $r^* > 0$ so that $\forall x\in A_r, \forall r\in [k], \forall s\ne r$,
$$
\|x - c^{*}_{\pi_{o}(r)}\|\le\|x-c^{*}_{\pi_{o}(s)}\|-2\sqrt{r^*\phi^*}, ~\forall r\in [k], s\ne r
$$
with equality holds for at least one triple of $(x, r, s)$.
Let $\pi^*$ be a permutation satisfying
$$
\pi^* = \arg\min_{\pi}\sum_{r\in [k]}n_r^*\|c_{\pi(r)}-c_r^*\|^2
$$
Let $\pi^{\prime}:=\pi^{*}\circ\pi_{o}$. We have $\forall (x,r,s)$ triples,
\begin{eqnarray*}
\|x-c_{\pi^{\prime}(s)}\| - \|x-c_{\pi^{\prime}(r)}\|\\
\ge
\|x-c^{*}_{\pi_{o}(s)}\|-\|c^{*}_{\pi_{o}(s)}-c_{\pi^{\prime}(s)}\|\\
-(\|x-c^{*}_{\pi_{o}(r)}\|+\|c^{*}_{\pi_{o}(r)}-c_{\pi^{\prime}(r)}\|)\\
>
\|x-c^*_{\pi_{o}(s)}\|-\|x-c^*_{\pi_{o}(r)}\|-2\sqrt{r^*\phi^*} \ge 0
\end{eqnarray*}
where the second inequality is by the fact that
\begin{eqnarray*}
\max_{r}\|c_{\pi^*(r)}-c^*_{r}\|^2\le \Delta(C,C^*)<r^*\phi^*\\
\implies
\max_{r}\|c_{\pi^*(r)}-c^*_{r}\| <\sqrt{r^*\phi^*}
\end{eqnarray*}
Since $\pi^{\prime}$ is the composition of two permutations of $[k]$, it is also a permutation of $[k]$, and 
$\forall r, s\ne r$, 
$\|x-c_{\pi^{\prime}(r)}\|<\|x-c_{\pi^{\prime}(s)}\|$, so $C\in v^{-1}(A^*)$.
Since by our definition, $r^*$ is unique for each $C^*$. 
Since $\{C^*\}_{[k]}$ is finite, taking the minimum over all such $r^*$, i.e., $r_{\min}:=\min_{C^*\in \{C^*\}_{[k]}}r^*$ completes the proof.
\end{proof}
The following is a restatement of Lemma \ref{lm:stat_stab2}, which is robust to degeneracy and boundary points.
\begin{lm}[\textbf{Restatement of Lemma \ref{lm:stat_stab2}}]
If $X$ is a general dataset, then $\exists r_{\min}>0$ s.t.
\begin{enumerate}
\item
$\forall C^*\in \{C^*\}_{[k]}$, $C^*$ is a $(r_{\min},0)$-stable stationary point.
\item
Let $m(A^{\prime})\notin \{C^*\}_{[k]}$ for some $A^{\prime}\in \{A\}$ and let $C^{\prime}\in Cl(v^{-1}(A^{\prime}))$, then 
$\Delta(C^{\prime},m(A))\ge r_{\min}\phi(m(A))$.
\end{enumerate}
\end{lm}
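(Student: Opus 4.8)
My plan is to obtain the restated lemma by gluing together two facts that have already been established for a \emph{single} stationary point and for a \emph{single} non‑stationary clustering, and then to promote them to uniform statements by invoking the finiteness of the clustering space $\{A\}_{[k]}$ and the boundedness of the $k$-means cost. Concretely, part (1) will come essentially verbatim from Lemma \ref{lm:stat_stab}, and part (2) will be a rescaled, quantified version of Lemma \ref{lm:stat_stab1}; the single constant $r_{\min}$ in the statement will be the minimum of the two constants so produced.

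\textbf{Part (1).} Since $X$ is a general dataset, Assumption (A) holds, so by Lemma \ref{lm:equivalent_boundary} every $C^*\in\{C^*\}_{[k]}$ is a non‑boundary stationary point. Lemma \ref{lm:stat_stab} then directly produces, for each such $C^*$, a radius $r^*(C^*)>0$ making it a $(r^*(C^*),0)$-stable stationary point. The set $\{A^*\}_{[k]}$ is a subset of the finite collection of all clusterings of the finite set $X$ into at most $k$ parts, hence finite, and by Lemma \ref{lm:stat_sols} the map $A^*\mapsto m(A^*)$ identifies $\{C^*\}_{[k]}$ with $\{A^*\}_{[k]}$, so $\{C^*\}_{[k]}$ is finite too. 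Taking $r_1:=\min_{C^*\in\{C^*\}_{[k]}}r^*(C^*)>0$ makes every stationary point $(r_1,0)$-stable, and since shrinking the stability radius $b_0$ in Definition \ref{defn:stable} preserves stability, every stationary point is a fortiori $(r,0)$-stable for any $0<r\le r_1$.

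\textbf{Part (2).} Fix any clustering $A^{\prime}\in\{A\}_{[k]}$ with $m(A^{\prime})\notin\{C^*\}_{[k]}$. Because no stationary point is a boundary point, Lemma \ref{lm:stat_stab1} applies and yields a constant $\delta(A^{\prime})>0$ with $\Delta(C^{\prime},m(A^{\prime}))\ge\delta(A^{\prime})$ for \emph{every} $C^{\prime}\in Cl(v^{-1}(A^{\prime}))$; this bound is already uniform over the continuum of admissible $C^{\prime}$, which is the one genuinely nontrivial ingredient, and the place where degeneracy and boundary points have to be handled carefully — that handling is internal to Lemmas \ref{lm:closure} and \ref{lm:stat_stab1}. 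To put the bound into the scale‑invariant form required by the statement, I would use $\phi(m(A^{\prime}))\le\phi_{\max}<\infty$ (finite because $C^0\subset conv(X)$ forces all iterates into $conv(X)$), giving $\Delta(C^{\prime},m(A^{\prime}))\ge\delta(A^{\prime})\ge\frac{\delta(A^{\prime})}{\phi_{\max}}\,\phi(m(A^{\prime}))$. Since only finitely many clusterings $A^{\prime}$ satisfy $m(A^{\prime})\notin\{C^*\}_{[k]}$, setting $r_2:=\min_{A^{\prime}}\frac{\delta(A^{\prime})}{\phi_{\max}}>0$ delivers the claimed inequality uniformly.

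\textbf{Combining and main obstacle.} Putting $r_{\min}:=\min\{r_1,r_2\}>0$ makes both assertions hold simultaneously. In this sense the real work is already absorbed into Lemmas \ref{lm:closure}, \ref{lm:stat_stab1}, and \ref{lm:stat_stab}, and the present statement reduces to bookkeeping; the one point that still needs care — and the step I expect to be the genuine obstacle — is making the degenerate/boundary case fully rigorous: checking that $\Delta(C^{\prime},m(A^{\prime}))$ is well defined on all of $Cl(v^{-1}(A^{\prime}))$ (after discarding degenerate centroids of $C^{\prime}$ when $|C^{\prime}|>|m(A^{\prime})|$), that $Cl(v^{-1}(\cdot))$ is the correct closure of an equivalence class, and that Assumption (A) is exactly what prevents $m(A^{\prime})$ itself from being a boundary point when $A^{\prime}$ fails to be stationary.
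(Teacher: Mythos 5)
Your proposal is correct and takes essentially the same route as the paper: the paper's proof likewise obtains part (1) from Lemma \ref{lm:stat_stab}, part (2) from Lemma \ref{lm:stat_stab1}, and sets $r_{\min}$ to be the minimum of the two resulting constants. The only difference is that you make explicit the conversion of the absolute bound $\delta(A^{\prime})$ from Lemma \ref{lm:stat_stab1} into the relative bound $r\,\phi(m(A^{\prime}))$ via finiteness of $\{A\}_{[k]}$ and boundedness of the cost, a step the paper's one-line proof leaves implicit.
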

\begin{proof}
By Lemma \ref{lm:stat_stab}, $\exists r_{\min}^*>0$ s.t. $\forall C^*$, $C^*$ is $r_{\min}^*$-stable.
Furthermore, by Lemma \ref{lm:stat_stab1}, $\exists r_{\min}^{\prime}>0$ s.t.
$\forall C^*$, $\Delta(C^{\prime},m(A))\ge r_{\min}^{\prime}\phi(m(A))$.
Let $r_{\min}:=\min\{r_{\min}^*, r_{\min}^{\prime}\}$ completes the proof.
\end{proof}
\begin{proof}[\textbf{Proof of Proposition \ref{prop:geom}}]
For all $r\in [k]$,
$$
n_r^*\|c_r-c_r^*\|^2\le \Delta(C,C^*)\le b\phi^*
$$
so
$
\|c_r-c_r^*\|\le \sqrt{\frac{b\phi^*}{n_r^*}}$. 
Then for all $r\ne s$,
\begin{eqnarray*}
\|c_r-c_r^*\|+\|c_s-c_s^*\|
\le\sqrt{b}\sqrt{\phi^*}(\frac{1}{\sqrt{n_r^*}}+\frac{1}{\sqrt{n_s^*}})\\
=\frac{\sqrt{b}}{f} f\sqrt{\phi^*}(\frac{1}{\sqrt{n_r^*}}+\frac{1}{\sqrt{n_s^*}})
\le
\frac{\sqrt{b}}{f}\Delta_{rs}
\le
\frac{1}{16}\Delta_{rs}
\end{eqnarray*} 
where the second inequality is by (B),
and the last inequality by our assumption on $b$.
Thus, we may apply Lemma \ref{lm:kumar} to get
$
\frac{|A_{r}\triangle A_r^*|}{n_r^*}\le \frac{b}{f^3}
$ for all $r$, proving the first statement.
Now by Lemma \ref{lm:kmdist_cdist},
$\phi(C)\le (b+1)\phi^*$, so
\begin{eqnarray*}
\frac{\alpha b}{5\alpha b+4(1+\frac{\phi(C)}{\phi^*})}
\ge  
\frac{\alpha b}{5\alpha b+4(2+b)}\\
\ge
\frac{\alpha b}{5\alpha f^2/16^2+4(2+f^2/16^2)}\\
\ge \frac{b}{f^3(\alpha)}
\ge \frac{|A_{r}\triangle A_r^*|}{n_r^*}
\end{eqnarray*} 
where the third inequality holds
since
$f\ge \max\{64^2, \frac{5\alpha+5}{16^2\alpha}\}$ by (B). 
This proves the second statement since $C^*$ is then $(\frac{f^2}{16^2}, \alpha)$-stable 
by Definition \ref{defn:stable}.
\end{proof}
\section{Appendix B: Proof of Theorem \ref{thm:mbkm_local}}
\begin{thm}
\label{thm:mbkm_local}
Fix any $0<\delta\le \frac{1}{e}$.
Suppose $C^*$ is $(b_o,\alpha)$-stable.
If we run Algorithm \ref{alg:MBKM} with parameters satisfying
$$
m > \frac{\ln (1-\sqrt{\alpha})}{\ln (1-\frac{4}{5}p_{\min}^*)}
$$
$$
c^{\prime}>\frac{\beta}{2[1-\sqrt{\alpha}-(1-\frac{4}{5}p_{\min}^*)^m]}
\mbox{~with~}\beta\ge 2
$$
$$
t_o\ge
768(c^{\prime})^2(1+\frac{1}{b_o})^2n^2\ln^2\frac{1}{\delta}
$$
Then if at some iteration $i$,
$\Delta^i\le \frac{1}{2}b_o\phi^*$,
we have $\forall t> i$, 
$$
Pr(\Omega_t)
\ge 1-\delta
\mbox{~~~and}
$$
\begin{eqnarray*}
E_t[\Delta^t]\le
(\frac{t_o+i+1}{t_o+t+1})^{\beta}\Delta^i \\
+ \frac{(c^{\prime})^2B}{\beta-1}
(\frac{t_o+i+2}{t_o+i+1})^{\beta+1}\frac{1}{t_o+t+1}
\end{eqnarray*}
where
$B:=4(b_o+1)n\phi^*$.
\end{thm}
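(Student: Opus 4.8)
The plan is to treat the sequence $(\Delta^t)$ as a supermartingale-like process conditioned on the event $\Omega_t$ (iterate stays within radius $b_0\phi^*$), as sketched in Section \ref{sec:martingale}, and to carry out the argument in two essentially separate pieces: (i) a high-probability confinement bound, $Pr(\Omega_t)\ge 1-\delta$, and (ii) a conditional rate bound, yielding the displayed $O(1/t)$ recursion for $E_{\Omega_t}[\Delta^t]$. First I would establish the basic one-step inequality. Conditioning on $\Omega_t$, so that $\Delta^{t-1}\le b_0\phi^*$, I invoke Lemma \ref{lm:stable_stat} (via $(b_0,\alpha)$-stability) to control the ``bias'' term produced by one stochastic $k$-means update: the contraction factor is $1-\eta^t\beta/(t_o+t)$-type with $\beta\ge 2$ coming from the choice of $m$ and $c^{\prime}$ in the hypotheses (the condition $m>\ln(1-\sqrt\alpha)/\ln(1-\tfrac45 p^*_{\min})$ is exactly what makes $\sqrt\alpha+(1-\tfrac45 p^*_{\min})^m<1$, so $c'$ can be taken large enough to force $\beta\ge2$). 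The remaining terms are a second-order term of size $(\eta^t)^2 O(\phi^{t-1})$ and a martingale-difference ``noise'' term of size $\eta^t\epsilon_2^t$ with $\epsilon_2^t$ of order $O(\phi^{t-1})$ and conditional mean governed so that it contributes only through its variance. Because $C^t\subset conv(X)$ and $\Omega_t$ holds, every $\phi^{t-1}\le (b_0+1)\phi^*$, so all the noise/second-order constants collapse into the single constant $B=4(b_0+1)n\phi^*$.

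For part (i), I would use the conditional moment-generating-function technique described in Section \ref{sec:martingale}, adapted from \cite{balsubramani13}. Partition the failure event $\Omega\setminus\Omega_\infty$ into the disjoint pieces $\Omega_t\setminus\Omega_{t+1}$, bound $E_{\Omega_t}[\exp(\lambda\Delta^t)]$ recursively from the one-step inequality (the contraction lets the $\exp$ of the bias term be absorbed, and the bounded noise term contributes a $\exp(O(\lambda^2(\eta^t)^2 B))$ factor that telescopes into a convergent sum once $t_o$ is large), then apply conditional Markov's inequality
\[
Pr(\Omega_t\setminus\Omega_{t+1})=Pr\{\Delta^t>b_0\phi^*\mid\Omega_t\}\le \frac{E_{\Omega_t}[\exp(\lambda\Delta^t)]}{\exp(\lambda b_0\phi^*)},
\]
and optimize by choosing $\lambda=\Theta(\ln t)$ scaled against $b_0\phi^*$. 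The quantitative requirement $t_o\ge 768(c^{\prime})^2(1+1/b_0)^2 n^2\ln^2(1/\delta)$ is precisely what is needed to make $\sum_{t} Pr(\Omega_t\setminus\Omega_{t+1})\le\delta$; I would keep track of constants carefully here since this is where the stated $768$ and the $(1+1/b_0)^2 n^2$ come from (the $n$ enters through $p^*_{\min}\ge 1-e^{-m/n}$ and through $B$).

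For part (ii), working entirely on $\Omega_t$, I would take expectations in the one-step inequality and unroll the resulting scalar recursion $E_{\Omega_t}[\Delta^t]\le (1-\tfrac{\beta}{t_o+t})\,E_{\Omega_{t-1}}[\Delta^{t-1}] + \tfrac{(c')^2 B}{(t_o+t)^2}$ — this is the standard SGD-with-$1/t$-stepsize recursion (cf. \cite{rakhlin, arora:sparsecode, balakrishnan:EM}). Using $\prod_{j=i+1}^{t}(1-\tfrac{\beta}{t_o+j})\le \big(\tfrac{t_o+i+1}{t_o+t+1}\big)^{\beta}$ for the homogeneous part and comparing the inhomogeneous sum to an integral (which produces the $\tfrac{1}{\beta-1}$ factor and the $\big(\tfrac{t_o+i+2}{t_o+i+1}\big)^{\beta+1}$ correction from the first term of the sum) gives exactly the displayed two-term bound; one then notes the second term is $O(1/(t_o+t+1))=O(1/t)$ and, since $\beta\ge2$, the first term is $O(1/t^2)$, so $E_{\Omega_t}[\Delta^t]=O(1/t)$.

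The main obstacle I expect is controlling the conditioning consistently: the one-step inequality is only valid on $\Omega_t$, but $\Omega_{t+1}\subsetneq\Omega_t$, so both the mgf recursion and the expectation recursion must be propagated through a shrinking sequence of conditioning events without the conditioning introducing uncontrolled bias into the ``noise'' term $\epsilon_2^t$. The clean way is to verify that $E_{\Omega_{t}}[\,\cdot\mid F_{t-1}]$ differs from the unconditional $E[\,\cdot\mid F_{t-1}]$ only on the low-probability set $\Omega_{t-1}\setminus\Omega_t$, whose contribution is reabsorbed into the $\delta$ budget; making this rigorous while keeping the constants in $t_o$ is the delicate part. The rest — Lemma \ref{lm:stable_stat} for the bias, a Hoeffding/$\exp$ bound for the bounded noise, and the deterministic recursion — is routine.
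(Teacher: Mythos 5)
Your proposal follows essentially the same route as the paper: a one-step contraction from $(b_o,\alpha)$-stability (Lemmas \ref{lm:invariant1}, \ref{lm:iter_wise}) with noise collapsed into $B$ (Lemma \ref{lm:boundB}), a conditional moment-generating-function plus Markov argument with $\lambda=\Theta(\ln t)$ to bound $Pr(\Omega_t\setminus\Omega_{t+1})$ summably (Proposition \ref{prop:high_prob}), and an unrolled $1/t$-stepsize recursion (Lemma \ref{lm:tech_2}) for the conditional rate. The only real divergence is your ``delicate part'': the paper does not reabsorb the nested conditioning into the $\delta$ budget, but instead notes that $\Omega_t$ is $F_{t-1}$-measurable (so the time-$t$ noise keeps zero conditional mean) and passes from $E_{\Omega_t}$ to $E_{\Omega_{t-1}}$ by the monotonicity argument of Lemma \ref{lm:bal}, since $\Delta^{t-1}\le b_o\phi^*$ on $\Omega_t$ but exceeds it on $\Omega_{t-1}\setminus\Omega_t$ --- a cleaner device you would want to adopt to keep the stated constants.
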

\subsection{Proofs leading to Theorem \ref{thm:mbkm_local}}
In the subsequent analysis, we let
$$
\beta^t:= 2c^{\prime}\min_{r}p_r^t(m)(1-\frac{\max_{r}p_r^t(m)}{\min_s p_s^t(m)}\sqrt{\alpha})
$$
where
\begin{eqnarray*}
&&p_r^t(m)
:=Pr\{c_r^{t-1}\mbox{~is updated at~}t\mbox{~with sample size~}m\}\\
&&= 1- (1-\frac{n_r^{t-1}}{n})^m
\end{eqnarray*}
So,
$$
\beta^t=2c^{\prime}(\min_rp_r^t(m)-\sqrt{\alpha}\max_s p_s^t(m))
$$
The noise terms appearing in our analysis are:
\begin{eqnarray}
E[\sum_r\sum_{x\in A_r^{t+1}}\|x-\hat{c}_r^{t+1}\|^2+\phi^t|F_t]\label{noise1}\\
\sum_r n_r^*\langle c_r^{t-1}-c_r^*, \hat{c}_r^t-E[\hat{c}_r^t|F_{t-1}]\rangle\label{noise2}\\
\sum_rn_r^*\|\hat{c}_r^t-c_r^*\|^2 \label{noise3}
\end{eqnarray} 
In the analysis of this section, we use $E_t[\cdot]$ as a shorthand notation for $E[\cdot|\Omega_t]$, where $\Omega_t$ is as defined in the main paper.
Let $F_t$ denote the natural filtration of the stochastic process $C^0, C^1, \dots$, up to $t$.

The main idea of the proof is to show that with proper choice with the algorithm's parameters $m$, $c^{\prime}$, and $t_o$, the following holds at every step $t$:
\begin{itemize}
\item
$\beta^t \ge 2$ |$\Omega_t$
\item
Noise terms \eqref{noise2} and \eqref{noise3} are upper bounded by a function of $\phi^*$|$\Omega_t$  
\item
$Pr(\Omega_t\setminus\Omega_{t+1})$ is negligible $|\Omega_t, \beta^t\ge 2$, bounded noise
\item
$E_t[\Delta^t|F_{t-1}]\le (1-\frac{\beta^t}{t_o+t})\Delta^{t-1}+\epsilon^t$ $|\Omega_t$

where $\epsilon^t$, the noise term, decreases of order $O(\frac{1}{t^2})$.
\end{itemize}
\begin{lm}\label{lm:invariant1}
Suppose $C^*$ is $(b_o, \alpha)$-stable.
If
$$
m > \frac{\ln (1-\sqrt{\alpha})}{\ln (1-\frac{4}{5}p_{\min}^*)}
$$
and 
$$
c^{\prime}>\frac{\beta}{2[1-\sqrt{\alpha}-(1-\frac{4}{5}p_{\min}^*)^m]}
$$
Then conditioning on $\Omega_t$, we have 
$\beta^t\ge \beta$. 
\end{lm}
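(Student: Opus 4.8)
The plan is to reduce the claim to a statement about the cluster sizes $n_r^{t-1}$ of the clustering $v(C^{t-1})$: the quantity $\beta^t=2c'\bigl(\min_r p_r^t(m)-\sqrt{\alpha}\max_s p_s^t(m)\bigr)$ depends on the past only through $(n_r^{t-1})_r$, and conditioning on $\Omega_t$ should pin these sizes near the stationary sizes $n_r^*$. Concretely, on the event $\Omega_t$ we have $\Delta(C^{t-1},C^*)\le b_o\phi^*$ by definition, so $(b_o,\alpha)$-stability (Definition \ref{defn:stable}) applies with exponent $b'=\Delta(C^{t-1},C^*)/\phi^*\le b_o$ and some matching $b\le\alpha b'$, giving
\[
\mbox{ClustDist}(v(C^{t-1}),v(C^*))\ \le\ \frac{b}{5b+4\bigl(1+\phi(C^{t-1})/\phi^*\bigr)}\ \le\ \frac{\alpha b_o}{5\alpha b_o+4}\ <\ \frac{1}{5}.
\]
Unfolding the definition of $\mbox{ClustDist}$ and matching clusters by the permutation attaining $\Delta$ (taken to be the identity without loss of generality), this yields $|A^{t-1}_r\triangle A^*_r|<\frac{1}{5}n^*_r$, hence $n_r^{t-1}>\frac{4}{5}n_r^*\ge\frac{4}{5}n_{\min}^*$ for every $r$; the robust definitions of the Appendix ensure this survives degenerate and boundary configurations.

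Next I would feed these size bounds through the map $q\mapsto 1-(1-q)^m$ defining $p_r^t(m)=1-(1-n_r^{t-1}/n)^m$. Monotonicity in $n_r^{t-1}$ gives $\min_r p_r^t(m)\ge 1-\bigl(1-\frac{4}{5}p^*_{\min}\bigr)^m$, while trivially $\max_s p_s^t(m)\le 1$, so on $\Omega_t$
\[
\min_r p_r^t(m)-\sqrt{\alpha}\,\max_s p_s^t(m)\ \ge\ 1-\sqrt{\alpha}-\Bigl(1-\tfrac{4}{5}p^*_{\min}\Bigr)^m .
\]
The hypothesis $m>\ln(1-\sqrt{\alpha})/\ln(1-\frac{4}{5}p^*_{\min})$ is exactly equivalent (both logarithms are negative) to $(1-\frac{4}{5}p^*_{\min})^m<1-\sqrt{\alpha}$, so the right-hand side above is a positive constant; multiplying by $2c'$ and using $c'>\beta/\bigl(2[1-\sqrt{\alpha}-(1-\frac{4}{5}p^*_{\min})^m]\bigr)$ yields $\beta^t\ge\beta$ on $\Omega_t$, as claimed.

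The only real obstacle is the first step — converting an event phrased in centroidal distances $\Delta(C^i,C^*)$ into honest control of the combinatorial quantities $n_r^{t-1}$. This is precisely the role of $(b_o,\alpha)$-stability, which acts as the bridge from ``$C^{t-1}$ is centroid-close to $C^*$'' to ``$v(C^{t-1})$ is clustering-close to $v(C^*)$''; one must be careful that the permutation realizing $\mbox{ClustDist}$ is the one realizing $\Delta$, and that empty or boundary clusters are absorbed by the enlarged-space conventions. Everything after that is elementary monotonicity (and, for the sharper constants, convexity) of $q\mapsto 1-(1-q)^m$ together with bookkeeping on the parameter inequalities.
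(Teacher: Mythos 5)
Your proposal is correct and follows essentially the same route as the paper's proof: use $(b_o,\alpha)$-stability on $\Omega_t$ to bound the cluster-size deviation by $\frac{\alpha b_o}{5\alpha b_o+4(1+\phi/\phi^*)}\le\frac{1}{5}$, deduce $\frac{n_r^{t-1}}{n}\ge\frac{4}{5}p_{\min}^*$, then bound $\min_r p_r^t(m)\ge 1-(1-\frac{4}{5}p_{\min}^*)^m$ and $\max_s p_s^t(m)\le 1$, and finish with the hypotheses on $m$ and $c'$. The extra care you take about the matching permutation and degenerate/boundary cases is consistent with the paper's robust definitions and does not change the argument.
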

\begin{proof}
Let's first consider $p_r^t(1)=\frac{n_r^{t-1}}{n}$.
Conditioning on $\Omega_t$, using the fact that $C^*$ is $(b_o,\alpha)$-stable,
we have
\begin{eqnarray*}
\frac{n_r^{t-1}}{n}\ge p_{\min}^* (1-\max_{r}\frac{|A_r^t\triangle A_r^*|}{n_r^*})\\
\ge p_{\min}^* (1-\frac{\alpha b_o}{5\alpha b_o+4(1+\frac{\phi^t}{\phi^*})})
\ge \frac{4}{5}p_{\min}^*
\end{eqnarray*}
And hence,
$$
\min_r p_r^t(m)\ge 1-(1-\frac{4}{5}p_{\min}^*)^m
$$
Now,
\begin{eqnarray*}
\beta^t \ge 2c^{\prime} (\min_r p_r^t(m)-\sqrt{\alpha})\\
\ge 2c^{\prime} (1-(1-\frac{4}{5}p_{\min}^*)^m-\sqrt{\alpha})
\ge \beta
\end{eqnarray*}
where the last inequality is by our requirement on $c^{\prime}$ and the fact that 
$1-(1-\frac{4}{5}p_{\min}^*)^m-\sqrt{\alpha}>0$ by our requirement on $m$.
\end{proof}
\begin{lm}
\label{lm:iter_wise}
Suppose $C^*$ is $(b_o, \alpha)$-stable.
Then if we apply one step of Algorithm \ref{alg:MBKM}, with
$m, c^{\prime}$ satisfying conditions in Lemma \ref{lm:invariant1}, then
conditioning on $\Omega_i$, 
\begin{eqnarray*}
\Delta^i\le
\Delta^{i-1}(1-\frac{\beta}{t_o+i})
+[\frac{c^{\prime}}{t_o+i}]^2\sum_r n_r^*\|\hat{c}_r^i-c_r^* \|^2\\
+\frac{2c^{\prime}}{t_o+i}\sum_r n_r^*\langle c_r^{i-1}-c_r^*, \xi_r^i\rangle
\end{eqnarray*}
where $\xi_r^i := \hat{c}_r^i-E[\hat{c}_r^i|F_{i-1}]$.
\end{lm}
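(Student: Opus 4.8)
The plan is to track $\Delta^i=\Delta(C^i,C^*)$ across a single step of Algorithm~\ref{alg:MBKM}, conditioned on $\Omega_i$ (which in particular forces $\Delta^{i-1}\le b_o\phi^*$, so the hypothesis of Lemma~\ref{lm:stable_stat} is available for $C^{i-1}$). Fix the permutation attaining $\Delta^{i-1}$; relabelling the coordinates, assume it is the identity, so $\Delta^{i-1}=\sum_r n_r^*\|c_r^{i-1}-c_r^*\|^2$ and, using the same permutation inside the minimum defining $\Delta^i$, $\Delta^i\le\sum_r n_r^*\|c_r^i-c_r^*\|^2$. For bookkeeping adopt the convention that a cluster not updated at step $i$ has $\hat c_r^i:=c_r^{i-1}$, so that uniformly $c_r^i-c_r^*=(1-\eta^i)(c_r^{i-1}-c_r^*)+\eta^i(\hat c_r^i-c_r^*)$ with $\eta^i=\tfrac{c'}{t_o+i}$. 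Since, conditionally on $F_{i-1}$ and on being nonempty, the subsample falling in cluster $r$ is an i.i.d.\ uniform sample from $A_r^i:=v(c_r^{i-1})\cap X$ --- which is exactly the $r$-th cluster produced by the batch $k$-means Step~1 applied to $C^{i-1}$ --- and since the conditional mean of an i.i.d.\ sample mean equals the population mean independently of the realized sample size, we get $E[\hat c_r^i\mid F_{i-1}]=p_r^i(m)\,m(A_r^i)+(1-p_r^i(m))\,c_r^{i-1}$; thus $\xi_r^i=\hat c_r^i-E[\hat c_r^i\mid F_{i-1}]$ is a martingale difference and $E[\hat c_r^i\mid F_{i-1}]-c_r^*=p_r^i(m)(m(A_r^i)-c_r^*)+(1-p_r^i(m))(c_r^{i-1}-c_r^*)$.

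Expanding the square and substituting this decomposition of $\hat c_r^i-c_r^*$ only into the cross term yields, for each $r$,
\begin{align*}
\|c_r^i-c_r^*\|^2
&= \big[(1-\eta^i)^2+2\eta^i(1-\eta^i)(1-p_r^i(m))\big]\|c_r^{i-1}-c_r^*\|^2\\
&\quad + 2\eta^i(1-\eta^i)p_r^i(m)\,\langle c_r^{i-1}-c_r^*,\,m(A_r^i)-c_r^*\rangle\\
&\quad + 2\eta^i(1-\eta^i)\,\langle c_r^{i-1}-c_r^*,\,\xi_r^i\rangle + (\eta^i)^2\|\hat c_r^i-c_r^*\|^2 .
\end{align*}
The only ``coupling'' term is the one pairing $c_r^{i-1}-c_r^*$ with the batch displacement $m(A_r^i)-c_r^*$. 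On $\Omega_i$, Lemma~\ref{lm:stable_stat} applied to $C^{i-1}$ (with $b'\phi^*=\Delta^{i-1}\le b_o\phi^*$) gives $\Delta(m(A^i),C^*)\le\alpha\Delta^{i-1}$, i.e.\ $\sum_r n_r^*\|m(A_r^i)-c_r^*\|^2\le\alpha\Delta^{i-1}$ with the identity matching. Apply Young's inequality $2\langle a,b\rangle\le\sqrt\alpha\,\|a\|^2+\alpha^{-1/2}\|b\|^2$ to the coupling term, sum over $r$ with weights $n_r^*$, and bound the $p_r^i(m)$-dependent coefficients by their extrema. A short computation collapses the coefficient of $\Delta^{i-1}$ to
\[
(1-\eta^i)\Big(1+\eta^i\big[1-2\min_r p_r^i(m)+\sqrt\alpha\,(\min_r p_r^i(m)+\max_r p_r^i(m))\big]\Big)\le 1-\frac{\beta^i}{t_o+i}+O\!\big((\eta^i)^2\big),
\]
where we used $\min_r p_r^i(m)\le\max_r p_r^i(m)$ and the definition $\beta^i=2c'(\min_r p_r^i(m)-\sqrt\alpha\max_r p_r^i(m))$; by Lemma~\ref{lm:invariant1}, on $\Omega_i$ we have $\beta^i\ge\beta$. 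The surviving terms are precisely $(\eta^i)^2\sum_r n_r^*\|\hat c_r^i-c_r^*\|^2$ and $\tfrac{2c'}{t_o+i}\sum_r n_r^*\langle c_r^{i-1}-c_r^*,\xi_r^i\rangle$, together with a few $O((\eta^i)^2)$ residues (from replacing $(1-\eta^i)$ by $1$ in the martingale cross term, from the $-(\eta^i)^2\zeta$ left over above, and from Young's step), each $O((\eta^i)^2\phi^*)$ on $\Omega_i$ and foldable into the quadratic noise term; this gives the stated inequality.

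I expect two points to carry the weight of the argument. The first is the permutation bookkeeping: Lemma~\ref{lm:stable_stat} controls $\Delta(m(A^i),C^*)$ as a minimum over permutations, whereas the expansion pairs $m(A_r^i)$ with $c_r^*$ through the permutation attaining $\Delta^{i-1}$; one must argue the two optimal permutations coincide --- which they do once $\Delta^{i-1}$ is small relative to the separation of $C^*$, by the same closure reasoning as in Lemma~\ref{lm:closure} --- or re-index consistently from the outset. The second is the algebra that makes the contraction factor land \emph{exactly} on $\beta^i$: it is the interaction between the ``$1-\eta^i p_r^i(m)$''-type coefficients coming from the update probabilities and the $\sqrt\alpha$ produced by Young's inequality balanced at $\mu=\sqrt\alpha$ that forces the asymmetric combination $\min_r p_r^i(m)-\sqrt\alpha\max_r p_r^i(m)$, and one must be disciplined about which $p_r^i(m)$ receives the $\min$ and which the $\max$ so no slack is lost. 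The remaining ingredients --- the conditional-mean identity for $\hat c_r^i$, Young's inequality, and collecting lower-order terms --- are routine.
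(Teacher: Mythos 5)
Your proposal is correct and follows essentially the same route as the paper's proof: expand $\|c_r^i-c_r^*\|^2$ per cluster under the convention $\hat c_r^i=c_r^{i-1}$ for unsampled clusters, split $\hat c_r^i$ into its conditional mean $(1-p_r^i)c_r^{i-1}+p_r^i\,m(A_r^i)$ plus the martingale difference $\xi_r^i$, control the coupling term via Lemma \ref{lm:stable_stat} (the paper uses a weighted Cauchy--Schwarz over the sum where you use a per-cluster Young inequality balanced at $\sqrt{\alpha}$; both yield the same $2\sqrt{\alpha}\max_r p_r^i(m)\,\Delta^{i-1}$ bound), and then invoke Lemma \ref{lm:invariant1} to replace $\beta^i$ by $\beta$. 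The $O((\eta^i)^2)\Delta^{i-1}$ residues you fold away are dropped just as silently in the paper's own derivation, so your treatment matches its level of rigor, and your permutation-bookkeeping concern is resolved exactly as you suggest, by re-indexing consistently with the matching used in Lemma \ref{lm:stable_stat}.
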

\begin{proof}
Let $\Delta^i_r:=n_r^*\|c_r^i-c_r^*\|^2$, so $\Delta^i = \sum_r\Delta^i_r$,
and we use $p_r^t$ as a shorthand for $p_r^t(m)$.
By the update rule of Algorithm \ref{alg:MBKM},
\begin{eqnarray*}
\Delta^i_r
=n_r^*\|(1-\eta^i)(c_r^{i-1}-c_r^*)+\eta^i(\hat{c}_r^i-c_r^*)\|^2\\
\le n_r^*\{(1-2\eta^i)\|c_r^{i-1}-c_r^*\|^2
+2\eta^i\langle c_r^{i-1}-c_r^*,\hat{c}_r^i-c_r^* \rangle\\
+(\eta^i)^2[\|c_r^{i-1}-c_r^*\|^2+\|\hat{c}_r^i-c_r^*\|^2]\}
\end{eqnarray*}
Let $\xi_r^i = \hat{c}_r^i-E[\hat{c}_r^i|F_{i-1}]$, where
$E[\hat{c}_r^i|F_{i-1}] = (1-p_r^i)c_r^{i-1}+p_r^im(A_r^{i})$.
Since
\begin{eqnarray*}
\langle c_r^{i-1}-c_r^*,\hat{c}_r^i-c_r^* \rangle 
=\langle c_r^{i-1}-c_r^*,E[\hat{c}_r^i|F_{i-1}]+\xi_r^i-c_r^* \rangle\\
\le
(1-p_r^i)\|c_r^{i-1}-c_r^*\|^2 \\+ p_r^i\|m(A_r^{i})
-c_r^*\|\|c_r^{i-1}-c_r^*\| 
+ \langle c_r^{i-1}-c_r^*,\xi_r^i \rangle
\end{eqnarray*}
We have
\begin{eqnarray*}
\Delta_r^i 
\le
n_r^*\{
-2\eta^i[\|c_r^{i-1}-c_r^*\|^2-(1-p_r^i)\|c_r^{i-1}-c_r^*\|^2\\
- p_r^i\|c_r^{i-1}-c_r^*\|\|m(A_r^i)-c_r^*\|]
+\|c_r^{i-1}-c_r^*\|^2\\
+2\eta^i\langle \xi_r^i, c_r^{i-1}-c_r^*\rangle
+(\eta^i)^2[\|c_r^{i-1}-c_r^*\|^2+\|\hat{c}_r^i-c_r^*\|^2]
\}\\
\le
n_r^*\{
-\frac{2c^{\prime}}{t_o+i}\min_r p_r^t\|c_r^{i-1}-c_r^*\|^2\\
+\frac{2c^{\prime}}{t_o+i}\max_s p_s^t\|c_r^{i-1}-c_r^*\|\|m(A_r^i)-c_r^*\|\\
+\|c_r^{i-1}-c_r^*\|^2+2\eta^i\langle \xi_r^i, c_r^{i-1}-c_r^*\rangle\\
+(\eta^i)^2[\|c_r^{i-1}-c_r^*\|^2+\|\hat{c}_r^i-c_r^*\|^2]
\}
\end{eqnarray*}
Note
\begin{eqnarray*}
\sum_{r}n_r^*\|c_r^i-c_r^*\|\|m(A_r^i)-c_r^*\|\\
\le
\sqrt{
(\sum_r n_r^* \|c_r^{i-1}-c_r^*\|^2)
(\sum_r n_r^* \|m(A_r^i)-c_r^*\|^2)
}\\
=\sqrt{\Delta^{i-1}\Delta(m(A^i),C^*)}
\le
\sqrt{\alpha}\Delta^{i-1}
\end{eqnarray*}
where the first inequality is by Cauchy-Schwartz and 
the last inequality is by applying Lemma \ref{lm:stable_stat}.
Finally, summing over $\Delta_r^i$, we get
\begin{eqnarray*}
\Delta^i
=\sum_r\Delta^i_r
\le \Delta^{i-1}[1-\frac{2c^{\prime}}{t_o+i}\min_rp_r^t(1-\frac{\max_sp_s^t}{\min_rp_r^t}\sqrt{\alpha})]\\
+ [\frac{c^{\prime}}{(t_o+i)}]^2\sum_r n_r^*\|\hat{c}_r^i-c_r^* \|^2\\
+ \frac{2c^{\prime}}{(t_o+i)p_r^i}\sum_r n_r^*\langle c_r^{i-1}-c_r^*, \xi_r^i\rangle\\
\le
\Delta^{i-1}(1-\frac{\beta}{t_o+i})
+[\frac{c^{\prime}}{t_o+i}]^2\sum_r n_r^*\|\hat{c}_r^i-c_r^* \|^2\\
+\frac{2c^{\prime}}{t_o+i}\sum_r n_r^*\langle c_r^{i-1}-c_r^*, \xi_r^i\rangle\\
\end{eqnarray*}
The second inequality is by $\beta^t\ge \beta$, as proven in Lemma \ref{lm:invariant1}.
\end{proof}
\begin{lm}
\label{lm:conditional}
Suppose $X$ satisfies (A1), $C^o\in conv(X)$, and $C^*$ is $(b_o,\alpha)$-stable.
If we run one step of Algorithm \ref{alg:MBKM}, with
$m, c^{\prime}$ satisfying conditions in Lemma \ref{lm:invariant1},
then conditioning on $\Omega_i$, we have,
for any $\lambda>0$, 
\begin{eqnarray*}
&&E_i\{\exp\{\lambda \Delta^i\}|F_{i-1}\}\\
&&\le \exp\left\{\lambda\{(1-\frac{\beta}{t_0+i})\Delta^{i-1}
+\frac{(c^{\prime})^2 B}{(t_0+i)^2}
+\frac{\lambda (c^{\prime})^2 B^2}{2(t_0+i)^2}\}
\right\}
\end{eqnarray*}
\end{lm}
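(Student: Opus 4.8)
The plan is to begin from the deterministic one-step inequality of Lemma~\ref{lm:iter_wise}, replace its two stochastic contributions by a deterministic bound plus a conditionally mean-zero, bounded fluctuation, and then apply Hoeffding's lemma to that fluctuation. Throughout, fix the iteration $i$ and observe that $\Omega_i$ is $F_{i-1}$-measurable, being determined by $C^{\tau},\dots,C^{i-1}$; since the fresh mini-batch $S^i$ is independent of $F_{i-1}$, additionally conditioning on $\Omega_i$ leaves the conditional law of $S^i$ given $F_{i-1}$ unchanged, so $E[\xi_r^i\mid F_{i-1}]=0$ still holds on $\Omega_i$, where $\xi_r^i:=\hat c_r^i-E[\hat c_r^i\mid F_{i-1}]$. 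On $\Omega_i$ the hypotheses of Lemma~\ref{lm:invariant1} hold, hence $\beta^t\ge\beta$, and Lemma~\ref{lm:iter_wise} applies pathwise:
\[
\Delta^i\le\Big(1-\tfrac{\beta}{t_o+i}\Big)\Delta^{i-1}+\tfrac{(c^{\prime})^2}{(t_o+i)^2}U^i+\tfrac{2c^{\prime}}{t_o+i}V^i,
\]
with $U^i:=\sum_r n_r^*\|\hat c_r^i-c_r^*\|^2$ and $V^i:=\sum_r n_r^*\langle c_r^{i-1}-c_r^*,\xi_r^i\rangle$.

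Next I would establish, on $\Omega_i$, the deterministic estimates $U^i\le B$ and $|V^i|\le\tfrac{B}{2}$ with $B=4(b_o+1)n\phi^*$. For $U^i$: using $C^0\subset conv(X)$ (so all iterates and sample means lie in $conv(X)$), $\hat c_r^i$ is a convex combination of points of $A_r^i$ (or equals $c_r^{i-1}$ if cluster $r$ is not hit), so $\|\hat c_r^i-m(A_r^i)\|\le\max_{x\in A_r^i}\|x-m(A_r^i)\|$ and hence $n_r^*\|\hat c_r^i-m(A_r^i)\|^2\le n_r^*\phi_r(m(A^i))\le n_r^*\phi(m(A^i))$; on $\Omega_i$ one has $\Delta(m(A^i),C^*)\le\alpha\Delta^{i-1}\le b_o\phi^*$, whence $\phi(m(A^i))\le(1+b_o)\phi^*$ by Lemma~\ref{lm:kmdist_cdist} and $\sum_r n_r^*\|m(A_r^i)-c_r^*\|^2=\Delta(m(A^i),C^*)\le\alpha b_o\phi^*$ by Lemma~\ref{lm:stable_stat}; combining through $\|\hat c_r^i-c_r^*\|^2\le 2\|\hat c_r^i-m(A_r^i)\|^2+2\|m(A_r^i)-c_r^*\|^2$ and $\sum_r n_r^*=n$ gives $U^i\le B$. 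For $V^i$: Cauchy--Schwarz gives $|V^i|\le\sqrt{\Delta^{i-1}}\,\sqrt{\sum_r n_r^*\|\xi_r^i\|^2}$, and since $\hat c_r^i$ and $E[\hat c_r^i\mid F_{i-1}]$ both lie in $conv(X)$ the same diameter-to-$\phi^*$ estimates bound $\sum_r n_r^*\|\xi_r^i\|^2$ by a constant multiple of $n\phi^*$, which with $\Delta^{i-1}\le b_o\phi^*$ yields $|V^i|\le\tfrac{B}{2}$ once $n$ is not too small.

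Now set $\tilde a:=(1-\tfrac{\beta}{t_o+i})\Delta^{i-1}+\tfrac{(c^{\prime})^2B}{(t_o+i)^2}$ (which is $F_{i-1}$-measurable) and $Z:=\tfrac{2c^{\prime}}{t_o+i}V^i$, so that $\Delta^i\le\tilde a+Z$ on $\Omega_i$, $E[Z\mid F_{i-1}]=0$, and $|Z|\le\tfrac{c^{\prime}B}{t_o+i}$. Hoeffding's lemma applied to a conditionally mean-zero random variable supported in an interval of half-length $\tfrac{c^{\prime}B}{t_o+i}$ gives $E_i[e^{\lambda Z}\mid F_{i-1}]\le\exp\!\big(\tfrac{\lambda^2(c^{\prime})^2B^2}{2(t_o+i)^2}\big)$; pulling $e^{\lambda\tilde a}$ out of the expectation (permissible since $\tilde a\in F_{i-1}$) then gives
\[
E_i[e^{\lambda\Delta^i}\mid F_{i-1}]\le e^{\lambda\tilde a}E_i[e^{\lambda Z}\mid F_{i-1}]\le\exp\!\Big(\lambda\tilde a+\tfrac{\lambda^2(c^{\prime})^2B^2}{2(t_o+i)^2}\Big),
\]
which is exactly the asserted bound after substituting $\tilde a$. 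I expect the second step to be the main obstacle: it is where the soft geometric facts --- that sample means and their conditional expectations never leave $conv(X)$, and that on $\Omega_i$ the clustering $A^i$ stays close to $v(C^*)$ --- must be turned into the clean deterministic constant $B$, and in particular where $\sum_r n_r^*\|\xi_r^i\|^2$ has to be controlled in terms of $\phi^*$ rather than the raw diameter of the dataset. The remaining steps are essentially bookkeeping: the supermartingale recursion is handed to us by Lemma~\ref{lm:iter_wise}, and the moment-generating-function estimate is a one-line invocation of Hoeffding's lemma once $Z$ is known to be bounded.
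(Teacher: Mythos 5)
Your proposal is correct and follows essentially the same route as the paper: the recursion from Lemma~\ref{lm:iter_wise}, a deterministic bound $B=4(b_o+1)n\phi^*$ on the two noise terms conditioned on $\Omega_i$ (which is exactly the content of the paper's Lemma~\ref{lm:boundB}, re-derived inline in your second step with slightly different intermediate points), and then Hoeffding's lemma applied to the conditionally mean-zero bounded inner-product term after pulling out the $F_{i-1}$-measurable part. Your explicit tracking of the two-sided bound $|V^i|\le B/2$ and of why $E[\xi_r^i\mid F_{i-1}]=0$ survives the conditioning on $\Omega_i$ is, if anything, slightly more careful than the paper's write-up.
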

\begin{proof}
By Lemma \ref{lm:boundB}, we have
\eqref{noise2} and \eqref{noise3} are both upper bounded by $B$.
By Lemma \ref{lm:iter_wise}, we have
\begin{eqnarray*}
E_i\{\exp(\lambda \Delta^i)|F_{i-1}\}
\le \exp \lambda [\Delta^{i-1}(1-\frac{\beta}{t_o+i})+\frac{(c^{\prime})^2B}{(t_o+i)^2}]\\
E_i\{\exp\lambda\frac{2c^{\prime}}{t_o+i}\sum_{r}n_r^*\langle c_r^{i-1}-c_r^*,\xi_r^i\rangle|F_{i-1}\}
\end{eqnarray*}
Since 
$$
\frac{2\lambda c^{\prime}}{i+t_0} \sum_{r}n_r^*\langle \xi_r^i, c_r^{i-1}-c_r^{*}\rangle 
\le \frac{2\lambda c^{\prime}}{i+t_0} B
$$ and
$E_i\{\frac{2\lambda c^{\prime}}{i+t_0} \sum_{r}n_r^*\langle \xi_r^i, c_r^{i-1}-c_r^{*}\rangle|F_{i-1}\} = 0$,
by Hoeffding's lemma
\begin{eqnarray*}
E_i\left\{\exp\{\frac{2\lambda c^{\prime}}{i+t_0}\sum_{r}n_r^*\langle \xi_r^i, c_r^{i-1}-c_r^{*}\rangle|F_{i-1}\}
\right\} \\
\le \exp\{\frac{\lambda^2 (c^{\prime})^2B^2}{2(i+t_0)^2}\}
\end{eqnarray*}
Combining this with the previous bound completes the proof.
\end{proof}
\begin{lm}[adapted from \cite{balsubramani13}]
\label{lm:bal}
For any $\lambda >0$,
$E_i\{e^{\lambda \Delta^{i-1}}\}\le E_{i-1}\{e^{\lambda \Delta^{i-1}}\}$
\end{lm}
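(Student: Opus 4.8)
The plan is to reduce the statement to a one-dimensional negative-correlation inequality. First I would record the nesting structure of the events: since $\Omega_t=\{\Delta^j\le b_0\phi^*,\ \tau\le j<t\}$ and $\Delta^{i-1}=\Delta(C^{i-1},C^*)$ is $F_{i-1}$-measurable, we have exactly
\[
\Omega_i \;=\; \Omega_{i-1}\cap B,\qquad B:=\{\Delta^{i-1}\le b_0\phi^*\}\in F_{i-1}.
\]
Consequently $E_i\{e^{\lambda\Delta^{i-1}}\}=E\{e^{\lambda\Delta^{i-1}}\mathbf{1}_B\mid\Omega_{i-1}\}\big/P(B\mid\Omega_{i-1})$, and the claim $E_i\{e^{\lambda\Delta^{i-1}}\}\le E_{i-1}\{e^{\lambda\Delta^{i-1}}\}$ is equivalent to
\[
E\{e^{\lambda\Delta^{i-1}}\mathbf{1}_B\mid\Omega_{i-1}\}\;\le\;P(B\mid\Omega_{i-1})\,E\{e^{\lambda\Delta^{i-1}}\mid\Omega_{i-1}\}.
\]

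Second, I would observe that under the conditional law $P(\cdot\mid\Omega_{i-1})$ both sides involve only the scalar random variable $\Delta^{i-1}$: the map $x\mapsto e^{\lambda x}$ is nondecreasing (as $\lambda>0$) while $x\mapsto\mathbf{1}\{x\le b_0\phi^*\}$ is nonincreasing. Two monotone functions of the same real random variable with opposite monotonicity are negatively correlated — this is Chebyshev's association inequality. Concretely, letting $Y,Y'$ be i.i.d.\ copies of $\Delta^{i-1}$ drawn under $P(\cdot\mid\Omega_{i-1})$, one has the pointwise bound
\[
\bigl(e^{\lambda Y}-e^{\lambda Y'}\bigr)\bigl(\mathbf{1}\{Y\le b_0\phi^*\}-\mathbf{1}\{Y'\le b_0\phi^*\}\bigr)\le 0,
\]
and taking expectations, expanding, and using that $Y,Y'$ are identically distributed yields exactly the displayed inequality. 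Dividing by $P(B\mid\Omega_{i-1})$ — which is positive since $\Omega_i$ occurs with positive probability throughout the analysis, the statement being vacuous otherwise — completes the argument.

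I do not expect a genuine obstacle here: the content is the bookkeeping identity $\Omega_i=\Omega_{i-1}\cap\{\Delta^{i-1}\le b_0\phi^*\}$ together with the elementary fact that an increasing and a decreasing function of the same random variable are negatively correlated. The only point requiring a little care is that $\Delta^{i-1}$ must be $F_{i-1}$-measurable, so that the extra conditioning event distinguishing $\Omega_i$ from $\Omega_{i-1}$ is carved out by precisely the quantity sitting inside the exponential; this is immediate from the definition of the natural filtration.
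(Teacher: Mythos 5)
Your proof is correct, and it reaches the conclusion by a slightly different vehicle than the paper. Both arguments rest on the same bookkeeping identity, $\Omega_i=\Omega_{i-1}\cap\{\Delta^{i-1}\le b_0\phi^*\}$, but you finish by recasting the claim as a covariance inequality under $Pr(\cdot\mid\Omega_{i-1})$ --- the nondecreasing function $e^{\lambda\Delta^{i-1}}$ and the nonincreasing function $\mathbf{1}\{\Delta^{i-1}\le b_0\phi^*\}$ of the same scalar variable are negatively correlated --- and prove that via Chebyshev's association inequality with the i.i.d.-copy symmetrization. The paper instead argues directly from the two-piece decomposition $\Omega_{i-1}=\Omega_i\cup(\Omega_{i-1}\setminus\Omega_i)$: every value of $\Delta^{i-1}$ on the removed piece exceeds $b_0\phi^*$ while every value on $\Omega_i$ is at most $b_0\phi^*$, so $E_{i-1}\{e^{\lambda\Delta^{i-1}}\}$ is a convex combination of $E_i\{e^{\lambda\Delta^{i-1}}\}$ and a term bounded below by $e^{\lambda b_0\phi^*}\ge E_i\{e^{\lambda\Delta^{i-1}}\}$, which gives the inequality in one line. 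What each buys: the paper's pointwise threshold comparison is maximally elementary and tailored to the specific structure of $\Omega_t$; your association-inequality route uses a touch more machinery but would go through verbatim if the extra conditioning event were any nonincreasing functional of $\Delta^{i-1}$, not just a threshold event, and it makes the monotonicity hypothesis ($\lambda>0$) explicit. Your side remarks --- positivity of $P(B\mid\Omega_{i-1})$ (otherwise the statement is vacuous) and $F_{i-1}$-measurability of $\Delta^{i-1}$ --- are accurate and are indeed the only points needing care.
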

\begin{proof}
By our partitioning of the sample space, $\Omega_{i-1}=\Omega_i\cup (\Omega_{i-1}\setminus \Omega_i)$,
and for any $\omega \in \Omega_i$ and  $\omega^{\prime} \in \Omega_{i-1}\setminus \Omega_i$,
$\Delta^{i-1}(\omega) \le b_{o}\phi^*<\Delta^{i-1}(\omega^{\prime})$. 
Taking expectation
over $\Omega_i$ and $\Omega_{i-1}$, we get
$
E_i\{e^{\lambda \Delta^{i-1}}\}\le E_{i-1}\{e^{\lambda \Delta^{i-1}}\}
$.
\end{proof}
\begin{prop}
\label{prop:high_prob}
Fix any $0<\delta\le \frac{1}{e}$.
Suppose $C^*$ is $(b_o,\alpha)$-stable.
If $\Delta^o\le \frac{1}{2}b_o\phi^*$, and if
$$
m > \frac{\ln (1-\sqrt{\alpha})}{\ln (1-\frac{4}{5}p_{\min}^*)}
$$
$$
c^{\prime}>\frac{\beta}{2[1-\sqrt{\alpha}-(1-\frac{4}{5}p_{\min}^*)^m]}
\mbox{~with~}\beta\ge 2
$$
$$
t_o\ge
768(c^{\prime})^2(1+\frac{1}{b_o})^2n^2\ln^2\frac{1}{\delta}
$$ 
Then
$$
P(\Omega_{\infty})\le \delta
$$
(here we used $\Delta^0$ instead of $\Delta^i$ and treat the starting time, the $i$-th iteration in Theorem \ref{thm:mbkm_local} as the zeroth iteration for cleaner presentation).
\end{prop}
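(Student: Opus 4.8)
The plan is to carry out the martingale-large-deviation argument outlined in Section~\ref{sec:martingale}. Reindex so that the starting iteration is $0$, with $\Delta^0\le\tfrac12 b_o\phi^*$ deterministically and $\Omega_0=\Omega_1=\Omega$. The escape event decomposes as the disjoint union $\Omega\setminus\Omega_\infty=\bigsqcup_{t\ge1}(\Omega_t\setminus\Omega_{t+1})$ with $\Omega_t\setminus\Omega_{t+1}=\Omega_t\cap\{\Delta^t>b_o\phi^*\}$, so it suffices to bound $\sum_{t\ge1}Pr(\Omega_t\setminus\Omega_{t+1})$ by $\delta$. For each $t$, conditional Markov's inequality gives, for every $\lambda>0$,
\[
Pr(\Omega_t\setminus\Omega_{t+1})=Pr\{\Delta^t>b_o\phi^*\mid\Omega_t\}\le e^{-\lambda b_o\phi^*}\,E_t[e^{\lambda\Delta^t}],
\]
so the whole task reduces to an upper bound on the conditional moment generating function $E_t[e^{\lambda\Delta^t}]$.

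To estimate $E_t[e^{\lambda\Delta^t}]$ I would unroll the one-step bounds down to time $0$. Conditioning on $F_{t-1}$ and applying Lemma~\ref{lm:conditional} (using $\beta^t\ge\beta\ge2$ on $\Omega_t$ from Lemma~\ref{lm:invariant1} and the noise constant $B=4(b_o+1)n\phi^*$) peels off the last step, at the cost of turning $\lambda\Delta^t$ into $\lambda(1-\tfrac{\beta}{t_o+t})\Delta^{t-1}$ plus the additive terms $\tfrac{\lambda(c')^2B}{(t_o+t)^2}+\tfrac{\lambda^2(c')^2B^2}{2(t_o+t)^2}$; Lemma~\ref{lm:bal} then lets me replace $E_t$ by $E_{t-1}$ (legitimate because $\Delta^{t-1}$ is $F_{t-1}$-measurable and is $\le b_o\phi^*$ on $\Omega_t$ while $>b_o\phi^*$ on $\Omega_{t-1}\setminus\Omega_t$), after which the step repeats. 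Writing $\lambda_i:=\lambda\prod_{j=i+1}^{t}(1-\tfrac{\beta}{t_o+j})$ (so $\lambda_t=\lambda$) and using $\Delta^0\le\tfrac12 b_o\phi^*$ at the bottom, this produces
\[
E_t[e^{\lambda\Delta^t}]\le\exp\Big\{\lambda_0\,\tfrac12 b_o\phi^*+\sum_{i=1}^{t}\Big(\tfrac{\lambda_i(c')^2B}{(t_o+i)^2}+\tfrac{\lambda_i^2(c')^2B^2}{2(t_o+i)^2}\Big)\Big\}.
\]
Since $\beta\ge2$, the elementary estimate $\prod_{j=i+1}^{t}(1-\tfrac{\beta}{t_o+j})\le\big(\tfrac{t_o+i+1}{t_o+t+1}\big)^{\beta}\le\big(\tfrac{t_o+i+1}{t_o+t+1}\big)^{2}$ forces the $\lambda_i$ to decay quickly, so the two sums are $O\big(\tfrac{\lambda(c')^2B}{t_o+t+1}\big)$ and $O\big(\tfrac{\lambda^2(c')^2B^2}{t_o+t+1}\big)$ respectively; and because $\lambda_0\le\lambda$, the factor $e^{-\lambda b_o\phi^*}$ from Markov leaves a genuine gap term $e^{-\frac12\lambda b_o\phi^*}$.

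Putting these together, $Pr(\Omega_t\setminus\Omega_{t+1})\le\exp\{-\tfrac12\lambda b_o\phi^*+O(\tfrac{\lambda(c')^2B}{t_o+t+1})+O(\tfrac{\lambda^2(c')^2B^2}{t_o+t+1})\}$. I would then choose $\lambda=\lambda(t)=\Theta\big(\tfrac{\ln\frac1\delta+\ln(t+1)}{b_o\phi^*}\big)$: the linear noise sum is absorbed into the gap as soon as $t_o+t+1\gtrsim(c')^2(1+\tfrac1{b_o})n$ (a bound with no logarithm), and the quadratic noise sum is absorbed provided $\lambda\lesssim\tfrac{b_o(t_o+t+1)}{(c')^2(b_o+1)^2n^2\phi^*}$, where the $(b_o+1)^2n^2$ enters through $B^2$. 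Checking that the lower bound on $\lambda(t)$ needed to make $e^{-\frac14\lambda b_o\phi^*}\le\tfrac{\delta}{(t+1)^2}$ stays below this upper bound simultaneously for all $t\ge1$ — small $t$ handled by taking $t_o$ large, large $t$ handled for free since the $\ln(t+1)$ requirement is dwarfed by the $t_o+t$ budget — is precisely what forces a lower bound on $t_o$ of the stated form $\Theta\big((c')^2(1+\tfrac1{b_o})^2n^2\ln^2\tfrac1\delta\big)$ (with $\beta\ge2$, and with $\delta<\tfrac1e$ used to absorb lower-order additive constants into $\ln\tfrac1\delta$). With such a $\lambda(t)$ each term is at most $\tfrac{\delta}{(t+1)^2}$, so $\sum_{t\ge1}Pr(\Omega_t\setminus\Omega_{t+1})\le\delta\sum_{t\ge1}(t+1)^{-2}<\delta$, which is the asserted bound $P(\Omega\setminus\Omega_\infty)\le\delta$.

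The main obstacle is the last step: producing one choice of $\lambda$ that works uniformly in $t$, together with the honest constant-tracking that converts ``the admissible window for $\lambda$ is non-empty for every $t\ge1$'' into the explicit $t_o$ hypothesis; the dependence on $B=4(b_o+1)n\phi^*$ (hence on $n$ and $b_o$) and the interplay between the logarithmically growing lower bound and the linearly growing upper bound on $\lambda$ are the delicate points. A secondary technical nuisance is the repeated application of Lemma~\ref{lm:bal} to descend through the nesting $\Omega_t\subset\Omega_{t-1}\subset\cdots$: at each level one must verify that the exponentiated quantity is $F$-measurable and is controlled by $b_o\phi^*$ on the smaller event, which is exactly what the construction of $\Omega_t$ guarantees.
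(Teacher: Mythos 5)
Your proposal is correct and follows essentially the same route as the paper's proof: the same disjoint decomposition of the escape event $\Omega\setminus\Omega_\infty=\bigcup_{t\ge 1}(\Omega_t\setminus\Omega_{t+1})$, conditional Markov's inequality applied to the moment generating function, unrolling via Lemma~\ref{lm:conditional} and Lemma~\ref{lm:bal} with the $\beta\ge 2$ decay of the $\lambda^{(k)}$, and the choice $\lambda(t)\asymp\frac{\ln\frac{(t+1)^2}{\delta}}{b_o\phi^*}$ absorbed by the stated $t_o$ bound (the paper's two-case analysis with Lemma~\ref{lm:tech}), yielding $\sum_t\frac{\delta}{(t+1)^2}\le\delta$. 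You also correctly read the conclusion as a bound on the escape probability $P(\Omega\setminus\Omega_\infty)$, which is what the paper's argument actually establishes.
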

\begin{proof}
By Lemma \ref{lm:conditional}, for any $\lambda>0$,
\begin{eqnarray*}
E_i\{e^{\lambda \Delta^i}\}
\le 
E_i\{e^{\lambda\{(1-\frac{\beta}{t_{o}+i})\Delta^{i-1}}\}
\exp\{
\frac{\lambda (c^{\prime})^2 B}{(t_o+i)^2}
+\frac{\lambda^2 (c^{\prime})^2 B^2}{2(t_o+i)^2}\}\\
\le
E_{i-1}\{e^{\lambda^{(1)}\Delta^{i-1}}\}
\exp\{
\frac{\lambda (c^{\prime})^2 B}{(t_o+i)^2}
+\frac{\lambda^2 (c^{\prime})^2 B^2}{2(t_o+i)^2}\}
\end{eqnarray*} 
where $\lambda^{(1)} = \lambda(1-\frac{\beta}{t_{o}+i})$, and the second inequality
is by Lemma \ref{lm:bal}.
Similarly, the following recurrence relation holds for $k=0,\dots,i$:
\begin{eqnarray*}
E_{i-k}\{e^{\lambda^{(k)} \Delta^{i-k}}\}
\le
E_{i-(k+1)}\{e^{\lambda^{(k+1)}\Delta^{i-k-1}}\}\\
\exp\{
\frac{\lambda^{(k)} (c^{\prime})^2 B}{(t_o+i-k)^2}
+\frac{(\lambda^{(k)})^2 (c^{\prime})^2 B^2}{2(t_o+i-k)^2}
\}
\end{eqnarray*}
where $\lambda^{(0)}:=\lambda$, 
and for $k\ge 1$, $\lambda^{(k)}:=\Pi_{t=1}^{k}(1-\frac{\beta}{t_o+(i-t+1)})\lambda^{(0)}$.

Note (see, e.g., \cite{balsubramani13})
$\forall \beta>0, k\ge 1$, 
$$
\lambda^{(k)}
=
\Pi_{t=1}^{k}(1-\frac{\beta}{t_o+(i-t+1)})\le (\frac{t_o+i-k+1}{t_o+i})^{\beta}
$$
Since the bound is shrinking as $\beta$ increases and $\beta\ge 2$,
$$
\frac{\lambda^{(k)}}{(t_0+i-k)^2}
\le  (\frac{t_o+i-k+1}{t_o+i})^{2}\frac{\lambda}{(t_o+i-k)^2}
\le \frac{4\lambda}{(t_o+i)^2}
$$ 
Repeatedly applying the relation, we get
\begin{eqnarray*}
E_i\{e^{\lambda \Delta^i}\}
\le e^{\lambda^{(i)}\Delta^{0}}
\exp\{\sum_{k=0}^{i-1}(\frac{4\lambda (c^{\prime})^2 B}{(t_o+i)^2}
+\frac{4\lambda^2 (c^{\prime})^2 B^2}{2(t_o+i)^2})\}\\
\le 
\exp\{
\lambda(\frac{t_o+1}{t_o+i})^{\beta}\Delta^{0}
+[\lambda(c^{\prime})^2 B+\frac{\lambda^2(c^{\prime})^2B^2}{2}]
\frac{4i}{(t_o+i)^2}
\}\\
\le
\exp\{
\lambda(\frac{t_o+1}{t_o+i})^{\beta}\frac{1}{2}b_{o}\phi^*
+[\lambda(c^{\prime})^2 B+\frac{\lambda^2(c^{\prime})^2B^2}{2}]
\frac{4i}{(t_o+i)^2}
\}
\end{eqnarray*}
Then we can apply the conditional Markov's inequality, for any $\lambda_i>0$,
\begin{eqnarray*}
Pr(\omega \in \Omega_{i}\setminus\Omega_{i+1})
= Pr(\Delta^i>b_o\phi^*|\Omega_{i})\\
=Pr(e^{\lambda_i\Delta^i}>e^{\lambda_i b_o\phi^*}|\Omega_{i})
\le 
\frac{E[e^{\lambda_i\Delta_r^i}|\Omega_{i}]}{e^{\lambda_ib_o\phi^*}}
\end{eqnarray*}
Combining this with the upper bound on $E_{i}e^{\lambda_i\Delta^i}$, we get 
\begin{eqnarray*}
Pr(\omega \in \Omega_{i}\setminus\Omega_{i+1})\\
\le 
\exp\{-\lambda_i
\{
\frac{1}{2}b_{o}[2-(\frac{t_o+1}{t_o+i})^{\beta}]\\
-(B+\frac{\lambda_i B^2}{2})\frac{4(c^{\prime})^2i}{(t_o+i)^{2}}
\}
\}\\
\le
\exp\left\{-\lambda_i
\{
\frac{b_o\phi^*}{2}
-(B+\frac{\lambda_i B^2}{2})\frac{4(c^{\prime})^2i}{(t_o+i)^{2}}
\}
\right\}
\end{eqnarray*} 
since $i\ge 1$. 
We choose $\lambda_i = \frac{1}{\Delta}\ln \frac{(i+1)^2}{\delta}$ with
$\Delta = \frac{b_o\phi^*}{4}$, and show that 
$\frac{b_o\phi^*}{2}-(B+\frac{\lambda_i B^2}{2})\frac{4(c^{\prime})^2i}{(t_o+i)^{2}}$ 
is lower bounded by $\Delta$.
\paragraph{Case 1:}
$B>\frac{\lambda_i B^2}{2}$. We get
$$
\frac{1}{2}b_o\phi^*
-(B+\frac{\lambda_i B^2}{2})\frac{4(c^{\prime})^2i}{(t_o+i)^{2}}
\ge \Delta
$$ 
since 
$
t_o \ge 
\frac{128(c^{\prime})^2(b_o+1)n}{b_o}
=
\frac{64(c^{\prime})^2(b_o+1)n\phi^*}{\frac{1}{2}b_o\phi^*}
=\frac{16(c^{\prime})^2B}{\frac{1}{2}b_o\phi^*}
$.
\paragraph{Case 2:} $B\le \frac{\lambda_i B^2}{2}$. We get
\begin{eqnarray*}
\frac{1}{2}b_o\phi^*
-(B+\frac{\lambda_i B^2}{2})\frac{4(c^{\prime})^2i}{(t_o+i)^{2}}\\
\ge
2\Delta
-\lambda_i B^2\frac{4(c^{\prime})^2i}{(t_o+i)^{2}}\\
=
2\Delta-\frac{1}{\Delta}\ln \frac{(1+i)^2}{\delta}\frac{4(c^{\prime})^2B^2i}{(t_o+i)^{2}}\\
\ge
2\Delta-\frac{1}{\Delta}\ln \frac{(t_o+i)^2}{\delta}\frac{4(c^{\prime})^2B^2(t_o+i)}{(t_o+i)^{2}}
\end{eqnarray*}
Now we show 
$$
\frac{1}{\Delta}\ln \frac{(t_o+i)^2}{\delta}\frac{4(c^{\prime})^2B^2}{t_o+i}
\le \Delta
$$
Since 
\begin{eqnarray*}
t_o+i\ge t_o
\ge
768(c^{\prime})^2(1+\frac{1}{b_o})^2n^2\ln^2 \frac{1}{\delta}\\
=
\frac{48(c^{\prime})^2B^2}{(\frac{1}{2}b_o\phi^*)^2}\ln^2 \frac{1}{\delta}
\end{eqnarray*}
$\ln \frac{1}{\delta}\ge 1$,
and 
$
\frac{16(c^{\prime})^2B^2}{(\frac{1}{2}b_o\phi^*)^2}\ge \frac{1}{3}
$,
we can apply Lemma \ref{lm:tech} with 
$b=2$,
$C:=\frac{16(c^{\prime})^2B^2}{(\frac{1}{2}b_o\phi^*)^2}$, 
$t := t_o+i \ge (\frac{3C}{b-1}\ln\frac{1}{\delta})^{\frac{2}{b-1}}$, and get 
$$
\frac{4(c^{\prime})^2B^2}{\Delta^2}\ln\frac{(t_o+i)^2}{\delta}
:=
2C\ln t +C\ln\frac{1}{\delta} <t^{b-1} = t_o+i
$$
That is,
$\frac{1}{\Delta}\ln \frac{(t_o+i)^2}{\delta}\frac{4(c^{\prime})^2B^2}{t_o+i}
\le \Delta
$.
Thus, for both cases, 
$$
2\Delta-(B+\frac{\lambda_i B^2}{2})\frac{4(c^{\prime})^2i}{(t_o+i)^{2}}
\ge =\Delta
$$
and hence,
$$
Pr(\omega \in \Omega_{i}\setminus\Omega_{i+1})
\le e^{-\frac{1}{\Delta}(\ln\frac{(1+i)^2}{\delta})\Delta}
= \frac{\delta}{(i+1)^2}
$$
Finally, we have
$$
Pr(\cup_{i\ge 1}\Omega_{i}\setminus \Omega_{i+1})
\le
\sum_{i=1}^{\infty}Pr(\omega \in \Omega_{i}\setminus\Omega_{i+1})
\le \delta
$$
\end{proof}
\begin{proof}[\textbf{Proof of Theorem \ref{thm:mbkm_local}}]
Since the conditions in Proposition \ref{prop:high_prob} holds for any $t>i$,
we apply it and get
$$
Pr(\Omega_t)\ge 1-Pr(\cup_{t>i}\Omega_{t}\setminus\Omega_{t+1})
\ge 1-\delta
$$ 
This proves the first statement.
Taking expectation over $\Omega_t$ conditioning on filtration $F_{t-1}$ with respect to the inequality derived in Lemma \ref{lm:iter_wise}, we get
\begin{eqnarray*}
E_t[\Delta^t|F_{t-1}]
\le 
\Delta^{t-1}(1-\frac{\beta}{t_o+t})
+[\frac{c^{\prime}}{t_o+t}]^2B
\end{eqnarray*}
since \eqref{noise3} is bounded by $B$ by Lemma \ref{lm:boundB}, and since $E_t\{\xi_r^t|F_{t-1}\}=0$, $\forall r\in [k]$.
Taking total expectation over $\Omega_t$, we get
\begin{eqnarray*}
E_t[\Delta^t]
\le 
E_{t}[\Delta^{t-1}](1-\frac{\beta}{t_o+t})+\frac{(c^{\prime})^2B}{(t+t_o)^2}\\
\le
E_{t-1}[\Delta^{t-1}](1-\frac{\beta}{t_o+t})+\frac{(c^{\prime})^2B}{(t+t_o)^2}\\
\end{eqnarray*}
We can apply Lemma \ref{lm:tech_2} by letting
$u_{t}\leftarrow E_{t+t_o}[\Delta^{t+t_o}]$
(we temporarily change the notation $E_t[\Delta^t]$ to $E_{t+t_o}[\Delta^{t+t_o}]$ to match the notation in Lemma \ref{lm:tech_2}),
$t_o\leftarrow t_o+i$,
$a \leftarrow \beta$,
and
$b \leftarrow (c^{\prime})^2B$
\begin{eqnarray*}
E_t[\Delta^t]\le
(\frac{t_o+i+1}{t_o+t+1})^{\beta}\Delta^i 
+ \frac{(c^{\prime})^2B}{\beta-1}
(\frac{t_o+i+2}{t_o+i+1})^{\beta+1}\frac{1}{t_o+t+1}
\end{eqnarray*}
\end{proof}
\section{Proofs of Theorem \ref{thm:km} and Theorem \ref{thm:solution}}
One subtlety we need to point out before the proofs is that, in Algorithm \ref{alg:MBKM}, the learning rate $\eta_r^t$ as well as the update rule:
$$
c_r^t \leftarrow (1-\eta^t_r)c_r^{t-1} + \eta^t_r\hat{c}_r^t
$$
 is only defined for a cluster $r$ that is ``sampled'' at the $t$-th iteration. However, even if the cluster is not ``sampled'', i.e., $c_r^t = c_r^{t-1}$, the same update rule
with $\hat{c}_r^t=c_r^{t-1}$ and and the same learning rate still holds for this case.
So in our analysis, we equivalently treat each cluster $r$ as updated with learning rate $\eta_r^t$, and differentiates between a sampled  and not-sampled cluster only through the definition of $\hat{c}_r^t$. 
\subsection*{Proof leading to Theorem \ref{thm:km}}
\begin{lm}\label{lm:kmeans}
Suppose $\forall r\in [k]$, $\eta_r^t\le \eta_{\max}^t$ w.p. $1$. 
Then,
$
E[\phi^{t+1}-\phi^t|F_t]
\le
-2\min_{r,t; p_r^{t+1}>0}\eta_r^{t+1}p_r^{t+1} (\phi^t - \tilde\phi^t) 
+(\eta_{\max}^{t+1})^26\phi^t
$,
where $\tilde{\phi}^t:=\sum_r\sum_{x\in A_r^{t+1}}\|x-m(A_r^{t+1})\|^2$.
\end{lm}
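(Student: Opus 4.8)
The plan is to establish a one-step descent inequality by reducing it to a per-cluster second-moment estimate. Let $A^{t+1}:=v(C^t)$ denote the clustering of $X$ determined by the closest-centroid assignments $I(\cdot)$ used in iteration $t+1$ (for a boundary $C^t$, fix any consistent tie-break), so that $\phi(C^t,A^{t+1})=\phi^t$ and every $s\in S^{t+1}$ lands in $A^{t+1}_{I(s)}$. Abbreviate $n_r:=|A^{t+1}_r|$, $m_r:=m(A^{t+1}_r)$, $\phi^t_r:=\sum_{x\in A^{t+1}_r}\|x-c^t_r\|^2$ and $\tilde\phi^t_r:=\sum_{x\in A^{t+1}_r}\|x-m_r\|^2$; the parallel-axis identity $\phi^t_r=\tilde\phi^t_r+n_r\|c^t_r-m_r\|^2$ then gives $\sum_r\phi^t_r=\phi^t$, $\sum_r\tilde\phi^t_r=\tilde\phi^t$, and $\phi^t-\tilde\phi^t=\sum_r n_r\|c^t_r-m_r\|^2\ge 0$. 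Since $\phi^{t+1}=\phi(C^{t+1})\le\phi(C^{t+1},A^{t+1})=\sum_r\big(\tilde\phi^t_r+n_r\|c^{t+1}_r-m_r\|^2\big)$, I would first record
\[
\phi^{t+1}-\phi^t\ \le\ \sum_r n_r\big(\|c^{t+1}_r-m_r\|^2-\|c^t_r-m_r\|^2\big),
\]
so that it remains to bound $E[\,\|c^{t+1}_r-m_r\|^2-\|c^t_r-m_r\|^2\mid F_t\,]$ for each $r$.

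Fixing $r$: if $p_r^{t+1}=0$ then $n_r=0$ and the cluster contributes nothing, so assume $p_r^{t+1}>0$ and write $\eta:=\eta_r^{t+1}$. I would condition on $F_t$ and split on whether cluster $r$ is sampled. On the complementary event ($\hat n_r^{t+1}=0$, probability $1-p_r^{t+1}$), $c^{t+1}_r=c^t_r$; on the event that $r$ is sampled (probability $p_r^{t+1}$), $c^{t+1}_r=c^t_r+\eta(\hat c^{t+1}_r-c^t_r)$, whence
\[
\|c^{t+1}_r-m_r\|^2=\|c^t_r-m_r\|^2+2\eta\,\langle c^t_r-m_r,\ \hat c^{t+1}_r-c^t_r\rangle+\eta^2\|\hat c^{t+1}_r-c^t_r\|^2 .
\]
The key point is that, conditioned on the sampled event and on $\hat n^{t+1}_r=j\ge 1$, the $j$ sampled points lying in $A^{t+1}_r$ are i.i.d.\ uniform on $A^{t+1}_r$ (exchangeability of the with-replacement sample), so $E[\hat c^{t+1}_r\mid F_t,\ \text{$r$ sampled}]=m_r$, which makes the cross term contribute exactly $-2\eta\|c^t_r-m_r\|^2$, and, by convexity of $\|\cdot\|^2$, $E\big[\|\hat c^{t+1}_r-c^t_r\|^2\,\big|\,F_t,\ \text{$r$ sampled}\big]\le \tfrac1{n_r}\sum_{x\in A^{t+1}_r}\|x-c^t_r\|^2=\phi^t_r/n_r$. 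Taking the conditional expectation over both cases (weighted by $p_r^{t+1}$ and $1-p_r^{t+1}$) gives
\[
E[\,\|c^{t+1}_r-m_r\|^2\mid F_t\,]\le \|c^t_r-m_r\|^2-2p_r^{t+1}\eta_r^{t+1}\|c^t_r-m_r\|^2+p_r^{t+1}(\eta_r^{t+1})^2\,\phi^t_r/n_r .
\]

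Finally I would multiply by $n_r$, sum over $r$, and substitute into the first display. Using $\phi^t-\tilde\phi^t=\sum_r n_r\|c^t_r-m_r\|^2$, the first-order terms give $-2\sum_r p_r^{t+1}\eta_r^{t+1}n_r\|c^t_r-m_r\|^2\le -2\min_{r:\,p_r^{t+1}>0}(\eta_r^{t+1}p_r^{t+1})(\phi^t-\tilde\phi^t)$; bounding $p_r^{t+1}\le 1$ and $\eta_r^{t+1}\le\eta_{\max}^{t+1}$, the second-order terms are at most $(\eta_{\max}^{t+1})^2\sum_r\phi^t_r=(\eta_{\max}^{t+1})^2\phi^t$ (the factor $6$ in the statement leaves room for a cruder accounting, e.g.\ expanding around $m_r$ rather than $c^t_r$ and using $\|a+b\|^2\le 2\|a\|^2+2\|b\|^2$). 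This yields the claim. The step I expect to need the most care is the conditional-distribution argument above — unbiasedness $E[\hat c^{t+1}_r\mid\text{$r$ sampled}]=m_r$ and the $\phi^t_r/n_r$ bound given that at least one point is sampled — together with the bookkeeping for empty/degenerate clusters and boundary $C^t$, and keeping track of $\phi^t-\tilde\phi^t\ge 0$ so that the first term is a genuine decrease.
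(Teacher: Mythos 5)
Your proposal is correct, and it reaches the stated bound (in fact with the sharper constant $1$ in place of $6$) by a slightly different decomposition than the paper. Both arguments share the same skeleton: the first step $\phi^{t+1}\le\phi(C^{t+1},A^{t+1})$ by optimality of the Voronoi reassignment, the conditional unbiasedness of $\hat c_r^{t+1}$ toward $m(A_r^{t+1})$ on the event that cluster $r$ is hit, and the Jensen bound $E\big[\|\hat c_r^{t+1}-c_r^t\|^2\mid F_t\big]\le \phi_r^t/n_r$. Where you diverge is the algebra after that: the paper expands $\|x-c_r^{t+1}\|^2$ point by point over $A_r^{t+1}$, uses $E_t[\hat c_r^{t+1}]=(1-p_r^{t+1})c_r^t+p_r^{t+1}m(A_r^{t+1})$ (with the convention $\hat c_r^{t+1}=c_r^t$ on the unsampled event), and controls the resulting second-order noise through a separate lemma bounding $E_t[\sum_r\sum_{x\in A_r^{t+1}}\|x-\hat c_r^{t+1}\|^2+\phi^t]\le 6\phi^t$, which is where the factor $6$ comes from. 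You instead apply the centroidal (parallel-axis) identity to both $\phi(C^{t+1},A^{t+1})$ and $\phi^t=\phi(C^t,A^{t+1})$, reducing everything to the centroid displacements $n_r\|c_r^{\cdot}-m(A_r^{t+1})\|^2$, and then condition explicitly on the per-cluster sampling events; this collapses the second-order term to $\sum_r p_r^{t+1}(\eta_r^{t+1})^2\phi_r^t\le(\eta_{\max}^{t+1})^2\phi^t$, avoiding the auxiliary noise lemma entirely and making the identity $\phi^t-\tilde\phi^t=\sum_r n_r\|c_r^t-m(A_r^{t+1})\|^2$ (the paper's Lemma \ref{centroidal}) the organizing device rather than an afterthought. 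Your handling of empty clusters ($p_r^{t+1}=0\Rightarrow n_r=0$) and of boundary $C^t$ via a fixed tie-break matches the paper's bookkeeping, and your per-$t$ minimum $\min_{r:\,p_r^{t+1}>0}\eta_r^{t+1}p_r^{t+1}$ only strengthens the stated inequality. The one point to keep explicit is that $\eta_r^{t+1}$ must be $F_t$-measurable (as it is for the flat rate \eqref{learning_rate}) so that it can be pulled out of the conditional expectations; the paper's proof makes the same implicit assumption.
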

\begin{proof}[\textbf{Proof of Lemma \ref{lm:kmeans}}]
For simplicity, we denote $E[\cdot|F_t]$ by $E_t[\cdot]$ (the same notation is also used as a shorthand to $E[\cdot|\Omega_t]$ in the proof of Theorem \ref{thm:mbkm_local}; we abuse the notation here).
\begin{eqnarray*}
E_t[\phi^{t+1}]
= E_t[\sum_{r=1}^k\sum_{x\in A_r^{t+2}}\|x-c_r^{t+1}\|^2]\\
\le E_t[\sum_r\sum_{x\in A_r^{t+1}}\|x-c_r^{t+1}\|^2] \\
=E_t[\sum_r\sum_{x\in A_r^{t+1}}\|x-(1-\eta_r^{t+1})c_r^t-\eta_r^{t+1}\hat{c}_r^{t+1}\|^2]\\
=E_t[\sum_r\sum_{x\in A_r^{t+1}}(1-\eta_r^{t+1})^2\|x-c_r^t\|^2\\
+(\eta_r^{t+1})^2\|x-\hat{c}_r^{t+1}\|^2
+2\eta_r^{t+1}(1-\eta_r^{t+1})\langle x-c_r^t,x-\hat{c}_r^{t+1}\rangle]\\
\end{eqnarray*} 
where the inequality is due to the optimality of clustering $A^{t+2}$ for centroids $C^{t+1}$.
Since
$$
E_t[\hat{c}_r^{t+1}]=(1-p_r^{t+1})c_r^{t}+p_r^{t+1}m(A_r^{t+1})
$$ we have
\begin{eqnarray*}
&&\langle x-c_r^t,x-\hat{c}_r^{t+1}\rangle\\
&&=
(1-p_r^{t+1})\|x-c_r^t\|^2
+p_r^{t+1} \langle x-c_r^t, x-m(A_r^{t+1}) \rangle
\end{eqnarray*}
Plug this into the previous inequality, we get
\begin{eqnarray*}
E_t[\phi^{t+1}]
\le
\sum_r(1-2\eta_r^{t+1})\phi^t_r
+(\eta_r^{t+1})^2\phi^t_r \\
+(\eta_r^{t+1})^2\sum_{x\in A_r^{t+1}}\|x-\hat{c}_r^{t+1}\|^2\\
+2\eta_r^{t+1}\{(1-p_r^{t+1})\sum_{x\in A_r^{t+1}}\|x-c_r^t\|^2\\
+p_r^{t+1} \sum_{x\in A_r^{t+1}}\langle x-c_r^t, x-m(A_r^{t+1}) \rangle\}\\
=\phi^t-2\sum_r\eta_r^{t+1}p_r^{t+1}\phi^t_r\\
+2\sum_r\eta_r^{t+1}p_r^{t+1}\sum_{x\in A_r^{t+1}}\langle x-c_r^t, x-m(A_r^{t+1}) \rangle\}\\
+(\eta_r^{t+1})^2\phi^t_r
+(\eta_r^{t+1})^2\sum_{x\in A_r^{t+1}}\|x-\hat{c}_r^{t+1}\|^2
\end{eqnarray*}
Now,
\begin{eqnarray*}
\sum_{x\in A_r^{t+1}}\langle x-c_r^t, x-m(A_r^{t+1}) \rangle\\
=\sum_{x\in A_r^{t+1}}\langle x-m(A_r^{t+1})+m(A_r^{t+1})-c_r^t, x-m(A_r^{t+1}) \rangle\\
=\sum_{x\in A_r^{t+1}}\|x-m(A_r^{t+1})\|^2\\
+\sum_{x\in A_r^{t+1}}\langle m(A_r^{t+1})-c_r^t,x-m(A_r^{t+1})  \rangle
=\phi^t_r
\end{eqnarray*}
since $\sum_{x\in A_r^{t+1}}\langle m(A_r^{t+1})-c_r^t,x-m(A_r^{t+1})\rangle=0$, by property of the mean of a cluster.
Then
\begin{eqnarray*}
E_t[\phi^{t+1}]
\le
\phi^t+\sum_r 2\eta_r^{t+1}p_r^{t+1}(-\phi_r^t+\tilde\phi_r^t)\\
+(\eta_r^{t+1})^2 [\phi_r^t + E_t[\sum_{x\in A_r^{t+1}}\|x-\hat{c}_r^{t+1}\|^2] 
\end{eqnarray*}
Now a key observation is that $p_r^{t+1}=0$ if and only if cluster $A_r^{t+1}$ is empty, i.e.,
degenerate.
Since the degenerate clusters do not contribute to the $k$-means cost, 
we have 
$\sum_{r; p_r^{t+1}>0}\phi^t_r=\phi^t$, and similarly,
$\sum_{r; p_r^{t+1}>0}\tilde\phi^t_r=\tilde\phi^t$.
Therefore,
\begin{eqnarray*}
E_t[\phi^{t+1}]
\le
\phi^t
-2\min_{r,t; p_r^{t+1}>0}\eta_r^{t+1}p_r^{t+1} (\phi^t - \tilde\phi^t) \\
+(\eta_{\max}^{t+1})^2(E_t\sum_r\sum_{x\in A_r^{t+1}}\|x-\hat{c}_r^{t+1}\|^2+\phi^t)\\
=\phi^t
-2\min_{r,t; p_r^{t+1}>0}\eta_r^{t+1}p_r^{t+1} (\phi^t - \tilde\phi^t) 
+(\eta_{\max}^{t+1})^26\phi^t
\end{eqnarray*}
where the last inequality is by Lemma \ref{lm:boundB1}.
\end{proof}
 \begin{lm}
 \label{lm:limit_cluster}
Suppose Assumption (A) holds.
 If we run Algorithm \ref{alg:MBKM} on $X$ with $\eta^t=\frac{c^{\prime}}{t_o+t}$, 
and $t_o>1$, with any initial set of $k$ centroids $C^0\in conv(X)$.
Then for any $\delta>0$, $\exists t$ s.t. $\Delta(C^t,C^*)\le \delta$ with $C^*:=m(A^*)$ for some $A^*\in \{A^*\}_{[k]}$.
\end{lm}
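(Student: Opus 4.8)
The plan is to argue by contradiction: if the iterate $C^t$ stayed forever at centroidal distance more than $\delta$ from \emph{every} stationary point, then Lemma~\ref{lm:kmeans} would force a strictly positive expected decrease of the $k$-means cost at every step, of order $\Omega(1/t)$; summing such decreases along the divergent series $\sum_t\eta^t$ would drive the non-negative cost to $-\infty$, which is impossible.

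First I would fix $\delta>0$ and introduce the stopping time $T_\delta:=\inf\{t\ge 1:\ \min_{C^*\in\{C^*\}_{[k]}}\Delta(C^t,C^*)\le\delta\}$, which is adapted to the natural filtration $F_t$ of $(C^t)$. The goal is $\Pr(T_\delta<\infty)=1$, so suppose instead $p_\infty:=\Pr(T_\delta=\infty)>0$. Write $A^{t+1}$ for the clustering $v(C^t)$ (any element of $V(C^t)\cap X$ if $C^t$ is a boundary point), so that $\tilde\phi^t=\phi(A^{t+1})$ and, by the bias--variance decomposition of the $k$-means cost, $\phi^t-\tilde\phi^t=\Delta\!\big(C^t,m(A^{t+1})\big)$. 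On the event $\{T_\delta>t\}$ I would lower bound this quantity by a fixed positive constant via a dichotomy. If $m(A^{t+1})\in\{C^*\}_{[k]}$, then $\Delta(C^t,m(A^{t+1}))>\delta$ directly from $\{T_\delta>t\}$. If $m(A^{t+1})\notin\{C^*\}_{[k]}$, then, since Assumption~(A) makes Lemma~\ref{lm:stat_stab2} applicable and $C^t\in Cl(v^{-1}(A^{t+1}))$ (because $A^{t+1}\in V(C^t)\cap X$), part~2 of that lemma gives $\Delta(C^t,m(A^{t+1}))\ge r_{\min}\phi(m(A^{t+1}))\ge r_{\min}\phi_{opt}$. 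Hence $\phi^t-\tilde\phi^t\ge\gamma:=\min\{\delta,\ r_{\min}\phi_{opt}\}>0$ on $\{T_\delta>t\}$ (if $\phi_{opt}=0$ the dataset has at most $k$ distinct points and the statement is trivial). I would also record $p_{\min}^{t+1}\ge 1-(1-\tfrac1n)^m\ge 1-e^{-m/n}>0$, since $p_r^{t+1}>0$ forces $n_r^t\ge 1$.

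Next I would plug these bounds, together with $\phi^t\le\phi_{\max}$ (valid for all $t$ because $C^0\subset conv(X)$ implies $C^t\subset conv(X)$, Claim~\ref{claim:conv_hull}), into Lemma~\ref{lm:kmeans} with the flat rate $\eta_r^{t+1}=\eta_{\max}^{t+1}=\eta^{t+1}$. Multiplying its conclusion by the $F_t$-measurable indicator $\mathbf 1\{T_\delta>t\}$ and taking expectations yields, with $a_t:=E[\phi^t\,\mathbf 1\{T_\delta>t-1\}]$ and using $\{T_\delta>t\}\subseteq\{T_\delta>t-1\}$, $\phi^t\ge0$, and the tower property,
$$a_{t+1}\ \le\ a_t\ -\ 2(1-e^{-m/n})\gamma\,\eta^{t+1}\Pr(T_\delta>t)\ +\ 6\phi_{\max}(\eta^{t+1})^2 .$$
Summing from $t=1$ to $N$, and using $\Pr(T_\delta>t)\ge p_\infty$ together with $a_1\le\phi_{\max}$, gives
$$0\ \le\ a_{N+1}\ \le\ \phi_{\max}\ -\ 2(1-e^{-m/n})\gamma\,p_\infty\sum_{t=2}^{N+1}\eta^t\ +\ 6\phi_{\max}\sum_{t=2}^{N+1}(\eta^t)^2 .$$
Since $\eta^t=c'/(t_o+t)$, the first sum diverges while the second converges, so the right-hand side tends to $-\infty$ as $N\to\infty$, a contradiction. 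Therefore $p_\infty=0$, i.e.\ $T_\delta<\infty$ almost surely; as $\delta>0$ was arbitrary, this is exactly the claim (with $A^*$ the stationary clustering realizing the minimum at that time $t$).

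I expect the main obstacle to be the dichotomy step that produces the uniform positive lower bound $\gamma$ on $\phi^t-\tilde\phi^t$ over the ``far from stationary'' region. This is precisely where Assumption~(A) and the geometric content of Lemma~\ref{lm:stat_stab2} are indispensable: without excluding boundary stationary points, a non-stationary $m(A^{t+1})$ could lie arbitrarily close to the boundary of an equivalence class, making $\Delta(C^t,m(A^{t+1}))$ arbitrarily small even though $C^t$ itself is not near any stationary point, which would destroy the $\Omega(1/t)$ per-step decrease. The remaining work --- the measurability bookkeeping for $T_\delta$ and the handling of degenerate (empty) clusters, which is already absorbed into the statement of Lemma~\ref{lm:kmeans} --- is routine.
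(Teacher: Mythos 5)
Your proposal is correct and follows essentially the same route as the paper's proof: the same dichotomy on whether $m(A^{t+1})$ is stationary (using part~2 of Lemma~\ref{lm:stat_stab2} in the non-stationary case), the same per-step expected decrease from Lemma~\ref{lm:kmeans} with $p_{\min}^{t+1}\ge 1-e^{-m/n}$ and $\phi^t\le\phi_{\max}$, and the same divergent-series contradiction. The only difference is that your stopping-time/indicator bookkeeping makes the probabilistic content explicit and yields the almost-sure finiteness $\Pr(T_\delta<\infty)=1$ directly, which is in fact the form of the conclusion the paper later invokes in the proof of Theorem~\ref{thm:km}, whereas the paper argues somewhat informally as if the negation held on every sample path.
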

\begin{proof}[\textbf{Proof of Lemma \ref{lm:limit_cluster}}]
First note that since $\{C^*\}_{[k]}$ includes all stationary points with $1\le k^{\prime}\le k$
non-degenerate centroids, and at any time $t$, $C^t$ must have $k^{t}\in [k]$ non-degenerate centroids, so there exists $C^*\in\{C^*\}_{k^{t}}\in\{C^*\}_{[k]}$ such that
$
\Delta(C^t, C^*)
$ is well defined.
For a contradiction, suppose $\forall t\ge 1$,
$\Delta(C^t,C^*)>\delta$, for all $C^*\in\{C^*\}_{k^{t}}$. 
Then
\paragraph{Case 1:}
$m(A^{t+1})\in\{C^*\}_{k^{t}}$ 

Then 
$$
\Delta(C^t, m(A^{t+1}))>\delta
$$ 
by our assumption.
\paragraph{Case 2:}
$m(A^{t+1})\notin \{C^*\}_{k^{t}}$ 

Since $C^{t}\in Cl(v^{-1}(A^{t+1}))$ by our definition, applying Lemma \ref{lm:stat_stab2}, 
$$
\Delta(C^t,m(A^{t+1}))\ge r_{\min}\phi(m(A^{t+1}))
$$
So for both cases, 
$$
\Delta(C^t, m(A^{t+1}))\ge \min\{\delta, r_{\min}\phi_{opt}\}
$$
Let denote $\delta_o:=\min\{\delta, r_{\min}\phi(m(A^{t+1}))\}$, then
by Lemma \ref{lm:kmeans}, 
\begin{eqnarray*}
E[\phi^{t+1}-\phi^t|F_t]\\
\le 
-\frac{2c^{\prime}\min_{\substack{r\in [k];p_r^{t+1}(m)>0}}p_r^{t+1}(m)}{t+1+t_o}\phi^t(1-\frac{\tilde\phi^t}{\phi^t})\\
+(\frac{c^{\prime}}{t+1+t_o})^26\phi_{\max}
\end{eqnarray*}
Note for $p_r^{t+1}(m)>0$, by the discrete nature of the dataset, 
$
\frac{n_r^{t+1}}{n}\ge  \frac{1}{n}
$, therefore,
$$
\min_{\substack{r\in [k];p_r^t(m)>0}}p_r^t(m)
\ge 1-(1-\frac{1}{n})^m
\ge 1-e^{-\frac{m}{n}}
$$
Also note
\begin{eqnarray*}
\phi^t- \tilde\phi^t 
=\sum_{r\in [k^{\prime}]} \sum_{x\in A^{t+1}_r}\|x-C^t\|^2-\|x-m(A_r^{t+1})\|^2\\
= \sum_r \|c_r^t-m(A_r^{t+1})\|^2n_r^{t+1}
= \Delta(C^t, m(A^{t+1}))\ge \delta_o
\end{eqnarray*}
Then $\forall t\ge 1$,
\begin{eqnarray*}
E[\phi^{t+1}]-E[\phi^t] \\
\le -\frac{2c^{\prime}(1-e^{-\frac{m}{n}})}{t+1+t_o}\delta_o
+\frac{6\phi_{\max}(c^{\prime})^2}{(t+1+t_o)^2}
\end{eqnarray*} 
Summing up all inequalities,
\begin{eqnarray*}
E[\phi^{t+1}]-E[\phi^0]\\
\le -2c^{\prime}(1-e^{-\frac{m}{n}})\delta_o\ln \frac{t+t_o+1}{t_o}
+\frac{6\phi_{\max}(c^{\prime})^2}{t_o-1}
\end{eqnarray*}
 Since $t$ is unbounded and $\ln \frac{t+t_o+1}{t_o}$ increases with $t$ while
 $\frac{6\phi_{\max}(c^{\prime})^2}{t_o-1}$ is a constant, 
 $\exists T$ such that for all $t\ge T$,
 $E\phi^t - \phi^0 \le -\phi^0$, which means
 $E[\phi^{t}]\le 0$, for all $t$ large enough. 
 This implies the $k$-means cost of some clusterings is negative, which is impossible.
 So we have a contradiction. 
 \end{proof}
\paragraph{Proof setup of Theorem \ref{thm:km}}
The goal of the proof is to show that first, with high probability, the algorithm converges to some stationary clustering, $A^{*}\in \{A^*\}_{[k]}$. We call this event $G$; formally,
$$
G := \{\exists T\ge 1, \exists A^{*}\in \{A^*\}_{[k]}, \mbox{~s.t.~} A^t = A^{*}, \forall t\ge T\}
$$
Second, we want to establish the $O(\frac{1}{t})$ expected convergence rate of the algorithm to this stationary clustering $A^{*}$.

To prove that the event $G$ has high probability, we first consider random variable $\tau$:
$$
\tau:=\min\{t\ge 0~|\min_{A^*\in \{A^*\}_{[k]}}\Delta(C^t, m(A^*)) \le \frac{1}{2}r_{\min}\phi^*\}
$$
That is, $\tau$ is the first time the algorithm ``hits'' a stationary clustering; $\tau$ is a stopping time since $\forall t\ge 0$, $\{\tau\le t\}$ is $F_t$-measurable. 
By Lemma \ref{lm:limit_cluster}
\begin{eqnarray}\label{eqn:sum_one}
Pr(\{\tau <\infty\})=Pr(\{\tau \in \mathbb{N}\})
=Pr(\cup_{T\ge 0}\{\tau=T\})
=1
\end{eqnarray}
Fixing $\tau$, we denote the stationary clustering that the algorithm ``hits'' by
$$
A^{*}(\tau):=\arg\min_{A^*\in \{A^*\}_{[k]}}\Delta(C^{\tau}, m(A^*)) 
$$
$A^{*}(\tau)$ is well defined; the reason is that when $\Delta(C^{\tau}, m(A^*)) \le \frac{1}{2}r_{\min}\phi^*$, $A^{\tau}=A^*$, so there can be only one minimizer. 

We will prove a subset $G_o\subset G$ holds with high probability.
To do this, we construct $G_o$ as a union of disjoint events determined by the realization of $\tau$ and $A^{*}(\tau)$: we define events
$$
G_{T}(A^{*}):= \{\tau =T\}\cap\{A^{*}(\tau)=A^*\}\cap\{\forall t\ge T, \Delta^t\le r_{\min}\phi^*\} 
$$
Then we can represent the event where the algorithm's iterate converges to a particular stationary clustering $A^*$ as
$$
G(A^{*}):=\cup_{T\ge 0}G_{T}(A^{*})
$$
Finally, we define
$$
G_o:= \cup_{A^*\in \{A^*\}_{[k]}}G(A^*)
$$
$G_o\subset G$ since the event $\Delta^t\le r_{\min}\phi^*$ implies 
$A^t=A^*$.
\begin{proof}[\textbf{Proof of Theorem \ref{thm:km}}]
Fix any $(T, A^*)$, conditioning on $\{\tau=T\}\cap\{A^*(\tau)=A^*\}$, since we have
\begin{eqnarray*}
c^{\prime}
>
\frac{\phi_{\max}}{(1-e^{-\frac{m}{n}})r_{\min}\phi_{opt}}
\end{eqnarray*}
We can envoke Lemma \ref{lm:thm_part1} to get $\forall t < T$,
\begin{eqnarray}\label{eqn:globconv}
E\{\phi^t-\phi(A^{*})|G_T(A^*)\} = O(\frac{1}{t})
\end{eqnarray}
Now let's consider the case $t\ge T$.
Since by Lemma \ref{lm:stat_stab2}, $A^{*}$ is $(r_{\min},0)$-stable, we can apply Theorem \ref{thm:mbkm_local}:
in this context, the parameters in the statement of Theorem \ref{thm:mbkm_local} are
$b_o=r_{\min}$, $\alpha=0$, $p^*_{\min}\ge \frac{1}{n}$.
Thus, for any
$$
m\ge 1
$$
$$
c^{\prime}>\frac{\beta}{2(1-e^{\frac{4m}{5n}})} \mbox{~~with~~}\beta\ge 2
$$
and
$$
t_o\ge 768(c^{\prime})^2(1+\frac{1}{r_{\min}})^2n^2\ln^2\frac{1}{\delta}
$$
the conditions required by Theorem \ref{thm:mbkm_local} are satisfied. 
Then by the first statement of Theorem \ref{thm:mbkm_local},
\begin{eqnarray}\label{eqn:conditional}
Pr(\{\forall t\ge T, \Delta^t\le r_{\min}\phi^*\}|\{\tau=T\}\cap\{A^*(\tau)=A^*\})\nonumber\\
= P(\Omega_{\infty}|\{\tau=T\}\cap\{A^*(\tau)=A^*\})
\ge 1-\delta \label{eqn:conditional}
\end{eqnarray}
and by the second statement of Theorem \ref{thm:mbkm_local}, $\forall t> T$,
\begin{eqnarray*}
E\{\phi^t - \phi(A^{*}) |\Omega_t, \{\tau=T\}\cap\{A^*(\tau)=A^*\}\}\\
\le E\{\Delta(C^t,C^{*})|\Omega_t, \{\tau=T\}\cap\{A^*(\tau)=A^*\}\} 
=O(\frac{1}{t})
\end{eqnarray*} 
where
the first inequality is by Lemma \ref{lm:kmdist_cdist}. 
Since $\Omega_{\infty}\subset \Omega_t$, $\forall t\ge 0$, this implies
\begin{eqnarray} 
E\{\Delta(C^t,C^{*})|\Omega_{\infty}, \{\tau=T\}\cap\{A^*(\tau)=A^*\}\}\nonumber\\
=E\{\Delta(C^t,C^{*})|G_T(A^*)\}
=O(\frac{1}{t}) \label{eqn:localconv}
\end{eqnarray} 
Finally, we turn to prove $Pr(G)$ is large.
Recall
\begin{eqnarray*}
Pr\{G\}\ge Pr\{G_o\}
= Pr\{\cup_{T\ge 0}\cup_{A^{*}\in \{A^*\}_{[k]}} G_T(A^*)\}\\
= \sum_{T\ge 0,A^{*}\in \{A^*\}_{[k]}} Pr\{G_T(A^*)\}
\end{eqnarray*}
where the second equality holds because the events $G_{T}(A^*)$ are disjoint for different pairs of $(T, A^*)$, since the stopping time $\tau$ and the minimizer $A^*(\tau)$ are unique for each experiment.
Since
\begin{eqnarray*}
\sum_{T\ge 0,A^{*}\in \{A^*\}_{[k]}} Pr\{G_T(A^*)\}\\
=\sum_{T,A^{*}} Pr\{\Omega_{\infty}|\{\tau=T\}\cap\{A^*(\tau)=A^*\}\}\\
Pr(\{\tau=T\}\cap\{A^*(\tau)=A^*\})\\
 \ge (1-\delta)\sum_{T,A^{*}}Pr(\{\tau=T\}\cap\{A^*(\tau)=A^*\})\\
 =(1-\delta)Pr\{\cup_{T}\cup_{A^{*}}\{\tau=T\}\cap\{A^*(\tau)=A^*\}\}\\
 =(1-\delta)Pr\{\cup_{T\ge 0}\{\tau=T\}\}
 =1-\delta
\end{eqnarray*}
where the inequality is by \eqref{eqn:conditional}, and the last two equalities are due to the finiteness of $\{A^*\}_{[k]}$ and by \eqref{eqn:sum_one}, respectively.
Therefore,
$
Pr\{G\}\ge 1-\delta
$,
which completes the proof of the first statement.
In addition,
\begin{eqnarray*}
Pr\{\cup_{A^{*}\in\{A^*\}_{[k]}}G(A^{*})\}\\
= Pr\{\cup_{T\ge 0,A^{*}\in \{A^*\}_{[k]}}\Omega_{\infty}\cap\{\tau=T\}\cap\{A^*(\tau)=A^*\}\}\\
\ge 1-\delta
\end{eqnarray*} which proves the second statement.
Finally, combining inequalities \eqref{eqn:globconv} and \eqref{eqn:localconv}, we have $\forall \ge 1$ and $\forall t\ge 1$,
$$
E\{\phi^t - \phi(A^{*}) |G_T(A^*)\} = O(\frac{1}{t})
$$
Since the quantity $\phi^t - \phi(A^{**})$ is independent of $T$, we reach the conclusion
$$
E\{\phi^t - \phi(A^{*}) |G(A^{*})\} = O(\frac{1}{t})
$$
\end{proof}
\begin{lm}
\label{lm:thm_part1}
Suppose the assumptions and settings in Theorem \ref{thm:km} hold, 
conditioning on any $G_{T}(A^{*})$, we have $\forall 1\le t < T$,
$$
E\{\phi^t-\phi(A^{*})|G_{T}(A^{*})\} = O(\frac{1}{t})
$$
\end{lm}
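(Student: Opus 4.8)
The plan is to run the argument entirely inside the \emph{global convergence phase}: on $G_{T}(A^{*})\subseteq\{\tau=T\}$ and for $1\le t<T$, the definition of $\tau$ guarantees $\Delta(C^{t},m(\tilde A^{*}))>\tfrac12 r_{\min}\phi^{*}$ for every stationary clustering $\tilde A^{*}$. In this regime I would show that the expected per-step decrease of $\phi^{t}$ is of order $1/t$, which is simultaneously (i) summable to a divergent series, so the phase must terminate quickly, and (ii) small enough that the crude, $t$-independent bound $\phi^{t}-\phi^{*}\le\phi^{t}\le\phi_{\max}$ (valid since $C^{0}\subset conv(X)$ forces $C^{t}\subset conv(X)$, Claim~\ref{claim:conv_hull}) can be carried across to yield $O(1/t)$.

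First I would specialize Lemma~\ref{lm:kmeans} to the flat rate $\eta^{t}_{r}=\eta^{t}=c^{\prime}/(t+t_{o})$ and, on the $F_{t}$-measurable event $\{\tau>t\}$, lower bound its two ingredients: by discreteness of the assignment a nonempty cluster has at least one point, so $p^{t+1}_{\min}\ge 1-(1-1/n)^{m}\ge 1-e^{-m/n}$; and by Lemma~\ref{centroidal} together with part~2 of Lemma~\ref{lm:stat_stab2} (when $m(v(C^{t}))$ is not stationary) or the definition of the global phase (when it is), $\phi^{t}-\tilde\phi^{t}=\Delta(C^{t},m(v(C^{t})))\ge\tfrac12 r_{\min}\phi_{opt}$. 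This gives, on $\{\tau>t\}$,
\[
E[\phi^{t+1}-\phi^{t}\mid F_{t}]\ \le\ -\frac{A}{t+1+t_{o}}+\frac{B}{(t+1+t_{o})^{2}},\qquad A:=c^{\prime}(1-e^{-m/n})r_{\min}\phi_{opt},\quad B:=6(c^{\prime})^{2}\phi_{\max},
\]
where the hypothesis on $c^{\prime}$ in Theorem~\ref{thm:km} makes $A>\phi_{\max}$. Summing this telescoping bound over the global phase exactly as in the proof of Lemma~\ref{lm:limit_cluster}, and using $0\le\phi^{t}\le\phi_{\max}$, shows that staying in the global phase up to time $T$ costs a total expected decrease of order $A\ln\frac{T+t_{o}}{t_{o}}$, which cannot exceed $\phi_{\max}$; hence the probability of $\{\tau=T\}$ (equivalently the mass of $G_{T}(A^{*})$) decays super-polynomially in $T$, so the global phase is of length $O(1)$ on an overwhelmingly likely event. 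Combining this with the universal bound on $\phi^{t}$ should give $E[\phi^{t}-\phi^{*}\mid G_{T}(A^{*})]=O(1/t)$ uniformly in the admissible $T$ for all $1\le t<T$, which is the claim.

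The main obstacle will be reconciling the conditioning: Lemma~\ref{lm:kmeans} is a statement adapted to $F_{t}$, whereas $G_{T}(A^{*})$ is \emph{not} $F_{t}$-measurable --- it constrains the entire future trajectory ($\tau=T$, the identity of the limiting cluster, and $\Delta^{s}\le r_{\min}\phi^{*}$ for all $s\ge T$). The argument therefore has to be routed through the $F_{t}$-measurable enclosing event $\{\tau>t\}\supseteq G_{T}(A^{*})$, controlling $Pr(\tau>t)$ before descending to the conditional expectation, and one must check that the constant hidden in $O(1/t)$ does not grow with $T$. This last point is exactly where the $\Omega(1/t)$ drift is used in tandem with the fact that a single iteration changes $\phi$ by only $O(1/t)$ (the centroids move by $O(\eta^{t})$, and re-clustering across the slightly shifted Voronoi boundaries changes the cost by $O(n\eta^{t})$ times the squared diameter of $conv(X)$): together these make the cost trajectory near-deterministic, so the global phase cannot persist for long, which is what pins the constant down.
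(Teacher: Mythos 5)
There is a genuine gap, and it lies exactly where you convert your additive drift estimate into the claimed bound. The lemma demands $E\{\phi^t-\phi(A^{*})\,|\,G_{T}(A^{*})\}\le c/t$ for \emph{every} admissible $T$ and every $1\le t<T$, with the constant independent of $T$. Your route is: (i) lower bound the per-step drop by the absolute constant $\tfrac12 r_{\min}\phi_{opt}$, (ii) telescope to argue the global phase is short with high probability, (iii) fall back on the crude bound $\phi^t\le\phi_{\max}$. But conditioning on $G_{T}(A^{*})$ is conditioning on the (possibly very rare) event that the global phase lasted exactly until $T$; an unconditional (or $\{\tau>t\}$-conditional) statement that "$\tau$ is $O(1)$ with overwhelming probability" cannot be imported into $E\{\cdot\,|\,G_{T}(A^{*})\}$ without dividing by $Pr(G_{T}(A^{*}))$, which may be arbitrarily small and $T$-dependent. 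Concretely, for a large admissible $T$ and $t=T-1$ your argument delivers only $\phi_{\max}=O(1)$, not $O(1/t)$. (Moreover, the telescoping bound only gives a logarithmic-in-$T$ total expected drop, so the claimed "super-polynomial" decay of $Pr\{\tau=T\}$ is not substantiated; Lemma \ref{lm:limit_cluster} itself only yields $\tau<\infty$ a.s., with no rate.)

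The missing idea is to make the drop \emph{relative} to the current cost rather than absolute. On $G_{T}(A^{*})$ and $t<T$ one has $\phi^t-\tilde\phi^t=\Delta(C^t,m(v(C^t)))\ge\tfrac12 r_{\min}\phi_{opt}$, hence
\begin{equation*}
1-\frac{\tilde\phi^t}{\phi^t}\ \ge\ \frac{r_{\min}}{2}\cdot\frac{\phi_{opt}}{\phi_{\max}},
\end{equation*}
so Lemma \ref{lm:kmeans} yields the multiplicative recursion
\begin{equation*}
E[\phi^t\,|\,G_{T}(A^{*})]\ \le\ E[\phi^{t-1}\,|\,G_{T}(A^{*})]\Bigl(1-\frac{a}{t+t_o}\Bigr)+\frac{6(c^{\prime})^2\phi_{\max}}{(t+t_o)^2},
\qquad a:=2c^{\prime}\bigl(1-e^{-\frac{m}{n}}\bigr)\frac{r_{\min}\phi_{opt}}{2\phi_{\max}},
\end{equation*}
and the hypothesis $c^{\prime}>\frac{\phi_{\max}}{(1-e^{-m/n})r_{\min}\phi_{opt}}$ is there precisely to force $a>1$ (you noticed $A>\phi_{\max}$ in your additive form but did not use it this way). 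Applying the recursion lemma (Lemma \ref{lm:tech_2}) to $u_t:=E[\phi^t\,|\,G_{T}(A^{*})]$ then gives $E[\phi^t-\phi(A^{*})\,|\,G_{T}(A^{*})]=O(1/t)$ for all $1\le t<T$, with a constant depending only on $\phi^0$, $c^{\prime}$, $t_o$, $a$ --- uniformly in $T$, which is what the "phase is short w.h.p.\ plus trivial bound" argument cannot provide. (Your observation that $G_{T}(A^{*})$ is not $F_t$-measurable is a fair criticism of the conditioning --- the paper itself passes over this point --- but it is orthogonal to the gap above.)
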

\begin{proof}
First observe that conditioning on the event $G_{T}(A^{*})$,
$
\Delta(C^t, C^{*})>\frac{1}{2}r_{\min}\phi^{*}
$,
$\forall t<T$.
Now we are in a setup similar to that in the proof Lemma \ref{lm:limit_cluster}, and the argument therein will lead us to the conclusion that
$$
\phi^t-\tilde\phi^t> \min\{\frac{1}{2}r_{\min}, r_{\min}\}\tilde\phi^{t}=\frac{1}{2}r_{\min}\tilde\phi^{t}
$$
Proceeding as in Lemma \ref{lm:limit_cluster}, we have conditioning on $G_{T}(A^{*})$,
\begin{align*}
\label{eqn:kmeans}
&E[\phi^t|G_{T}(A^{*})]\\
&\le E[\phi^{t-1}|G_{T}(A^{*})]
\{1-\frac{2c^{\prime}\min_{\substack{r\in [k];p_r^{t}(m)>0}}p_r^{t}(m)}{t+t_o}\frac{r_{\min}\phi_{opt}}{2\phi_{\max}}\}
+(\frac{c^{\prime}}{t+t_o})^26\phi_{\max}
\end{align*} 
since $\forall t\ge 1$,
$$
1-\frac{\tilde\phi^t}{\phi^t}\ge \frac{r_{\min}}{2}\frac{\tilde\phi^t}{\phi^t}
\ge\frac{r_{\min}}{2}\frac{\phi_{opt}}{\phi_{\max}}
$$
Now, since we set
\begin{eqnarray*}
c^{\prime}
>
\frac{\phi_{\max}}{(1-e^{-\frac{m}{n}})r_{\min}\phi_{opt}}
\end{eqnarray*}
we have
\begin{eqnarray*}
2c^{\prime}\min_{\substack{r\in [k];p_r^{t}(m)>0}}p_r^{t}(m)\frac{r_{\min}\phi_{opt}}{2\phi_{\max}}\\
\ge
2c^{\prime}(1-(1-\frac{1}{n})^m)\frac{r_{\min}\phi_{opt}}{2\phi_{\max}}\\
\ge
2c^{\prime}(1-e^{-\frac{m}{n}})\frac{r_{\min}\phi_{opt}}{2\phi_{\max}}\\
>
2\frac{\phi_{\max}}{(1-e^{-\frac{m}{n}})r_{\min}\phi_{opt}}(1-e^{-\frac{m}{n}})\frac{r_{\min}\phi_{opt}}{2\phi_{\max}}
>1
\end{eqnarray*}
Applying Lemma \ref{lm:tech_2} with 
$$
a:=2c^{\prime}\min_{\substack{r\in [k];p_r^{t}(m)>0}}p_r^{t}(m)\frac{r_{\min}\phi_{opt}}{2\phi_{\max}}>1
$$ 
$$
b:=\frac{6(c^{\prime})^2\phi_{\max}}{(t_o+t)^2}
$$
We conclude that $\forall 1\le t<T$,
\begin{eqnarray*}
E[\phi^t|G_{T}(A^{*})]
\le \frac{t_o+1}{t_o+t+1}\phi^o
+\frac{b}{a-1}(\frac{t_o+2}{t_o+1})^{a+1}\frac{1}{t_o+t+1}
\end{eqnarray*}
Subtracting $\phi(A^{*})$ from both sides of the equation, we get
\begin{eqnarray*}
E[\phi^t-\phi(A^{*})|G_{T}(A^{*})]
\le \frac{t_o+1}{t_o+t+1}(\phi^o-\phi(A^{*}))\nonumber\\
+\frac{b}{a-1}(\frac{t_o+2}{t_o+1})^{a+1}\frac{1}{t_o+t+1}
=O(\frac{1}{t})\label{eqn:less}
\end{eqnarray*}
\end{proof}
\subsection*{Proofs leading to Theorem \ref{thm:solution}}
Here, we additionally define two quantities that characterizes $C^*$:
Let $A^*=v(C^*)$,
we use $p^*_{\min}:=\min_{r\in [k]}\frac{n^*_r}{n}$ to characterize the fraction of the smallest cluster in $A_{*}$ to the entire dataset.
We use $w_r:=\frac{\frac{\phi_{*}^r}{n^*_r}}{\max_{x\in A^*_r} \|x-c^*_r\|^2}$ to characterize the ratio between average and maximal ``spread'' of cluster $A_r^*$, and we let $w_{\min}:=\min_{r\in [k]}w_r$.
\subsection{Existence of stable stationary point under geometric assumptions on the dataset}
First, we observe that our Assumption (B) implies two lower bounds on $\|c_r^*-c_s^*\|$, 
$\forall r, s\ne r$. 
Let $x\in A^{*}_r\cap A^{t}_s$. Split $x$ into its projection on the line joining $c^{*}_r$ and $c^{*}_s$, and its orthogonal component:
\begin{eqnarray}\label{eqn:xsplit}
x = \frac{1}{2}(c^{*}_r+c^{*}_s)+\lambda (c^{*}_r-c^{*}_s)+u
\end{eqnarray}
with $u\perp c^{*}_r-c^{*}_s$. 
Note $\lambda$ measures the ratio between departure of the projected point from the mid-point of $c^{*}_r$ and $c^{*}_s$ and the norm $\|c_r^*-c_s^*\|$. 
By minimality of our definition of margin $\Delta_{rs}$,
\begin{eqnarray}
\label{eqn:margin}
\|\bar{x}-\frac{1}{2}(c^{*}_r+c^{*}_s)\|=\lambda\|c^{*}_r-c^{*}_s\|
\ge \frac{1}{2}\Delta_{rs} 
\end{eqnarray}
In addition, since $c_r^*$ is the mean of $A_r^*$, we know there exists $x\in A_r^*$ such that $\bar x$ falls outside of the line segment $c_r^*-c_s^*$ (or exactly on $c_r^*$ in the special case where all points projects on $c_r^*$). Similar holds for $c_s^*$. Thus,
\begin{eqnarray}\label{eqn:mean_sep}
\|c_r^*-c_s^*\|\ge \Delta_{rs}\ge f(\alpha)\sqrt{\phi^*}(\frac{1}{\sqrt{n_r^*}}+\frac{1}{\sqrt{n_s^*}})
\end{eqnarray}
\begin{lm}[Theorem 5.4 of \cite{kumar}]
\label{lm:kumar}
Suppose $(X,C^*)$ satisfies (B).
If $\forall r\in [k], s\ne r$, $\Delta_r^t+\Delta_s^t\le \frac{\Delta_{rs}}{16}$. Then for any $s\ne r$,
$
|A_r^*\cap A_s^t| \le \frac{b^2}{f(\alpha)}
$, where $b\ge \max_{r,s}\frac{\Delta_r^t+\Delta_s^t}{\Delta_{rs}}$.
\end{lm}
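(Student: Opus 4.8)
The plan is to follow the proof of Theorem~5.4 in Kumar--Kannan \cite{kumar}, recast in the notation of this paper. There are three ingredients: (i) a geometric lower bound on $\|x-c_r^*\|$ for every ``misassigned'' point $x\in A_r^*\cap A_s^t$; (ii) a variance-budget counting step that turns this lower bound into an upper bound on $|A_r^*\cap A_s^t|$; and (iii) substitution of Assumption~(B) to rewrite that bound in the claimed form. Throughout, as in the application in Proposition~\ref{prop:geom}, I read $\Delta_r^t$ as the Euclidean displacement $\|c_r^t-c_r^*\|$, and I write $\rho:=\Delta_r^t+\Delta_s^t$, $L:=\|c_r^*-c_s^*\|$.

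First I would establish the geometric claim. Take $x\in A_r^*\cap A_s^t$. Since $x\in A_s^t$, $\|x-c_s^t\|\le\|x-c_r^t\|$; combining with $\|c_s^t-c_s^*\|\le\Delta_s^t$, $\|c_r^t-c_r^*\|\le\Delta_r^t$ and the triangle inequality yields $\|x-c_s^*\|\le\|x-c_r^*\|+\rho$. Now decompose $x$ as in \eqref{eqn:xsplit}, so $\|x-c_s^*\|^2-\|x-c_r^*\|^2=2\lambda L^2$; since $x\in A_r^*$ we have $\lambda\ge 0$, and squaring the previous inequality (both sides nonnegative) gives $2\lambda L^2\le 2\rho\|x-c_r^*\|+\rho^2$. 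By the margin inequality \eqref{eqn:margin}, $\lambda L\ge\tfrac12\Delta_{rs}$, and by \eqref{eqn:mean_sep}, $L\ge\Delta_{rs}$, hence $2\lambda L^2\ge\Delta_{rs}L\ge\Delta_{rs}^2$. Using the hypothesis $\rho\le\Delta_{rs}/16$ to absorb the $\rho^2$ term leaves $\|x-c_r^*\|\ge c_0\,\Delta_{rs}^2/\rho$ for an absolute constant $c_0>0$ (e.g.\ $c_0=\tfrac13$), so $\|x-c_r^*\|^2\ge c_0^2\,\Delta_{rs}^4/\rho^2$.

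Next, for the counting step, I would use that $c_r^*=m(A_r^*)$, so $\sum_{x\in A_r^*}\|x-c_r^*\|^2=\phi_r^*\le\phi^*$. Summing the geometric bound over $x\in A_r^*\cap A_s^t$ gives $|A_r^*\cap A_s^t|\cdot c_0^2\Delta_{rs}^4/\rho^2\le\phi^*$, i.e.\ $|A_r^*\cap A_s^t|\le c_0^{-2}\phi^*\rho^2/\Delta_{rs}^4$. Finally I would invoke Assumption~(B), which gives $\Delta_{rs}\ge f(\alpha)\sqrt{\phi^*}\bigl(1/\sqrt{n_r^*}+1/\sqrt{n_s^*}\bigr)$ and hence $\Delta_{rs}^2\ge f(\alpha)^2\phi^*/n_r^*$; plugging this and $\rho\le b\,\Delta_{rs}$ (from $b\ge\max_{r,s}\rho/\Delta_{rs}$) into the previous line cancels the $\phi^*$ and produces $|A_r^*\cap A_s^t|=O\bigl(b^2 n_r^*/f(\alpha)^2\bigr)$. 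Since $f(\alpha)$ exceeds both a large absolute constant and $\max_{r,s} n_r^*/n_s^*$ by (B), the stray constants and the remaining power of $f(\alpha)$ are absorbed, yielding the bound in the form $|A_r^*\cap A_s^t|\le b^2/f(\alpha)$ (normalized by $n_r^*$, which is exactly the form consumed in Proposition~\ref{prop:geom}).

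The hard part will not be the idea but the bookkeeping. One must carry out the case analysis on where the projection $\bar x$ falls relative to the segment $[c_r^*,c_s^*]$ --- this is precisely what makes \eqref{eqn:margin} valid in every case, including when $\bar x$ lies outside the segment --- and then track the numerical constants and the normalization by $n_r^*$ carefully enough that the final estimate lands in the exact shape $b^2/f(\alpha)$ used downstream. Since the statement is quoted verbatim as Theorem~5.4 of \cite{kumar}, the most delicate point is reconciling their constants and their notion of cluster displacement with the ones fixed here, so that the lemma can be applied as a black box in the proof of Proposition~\ref{prop:geom}.
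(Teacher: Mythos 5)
Your proposal is sound for the statement as quoted, but it takes a genuinely different route through the geometry than the paper does. The paper follows Kumar--Kannan literally: from $x\in A_s^t$ it writes the bisector condition $x\cdot(c_s^t-c_r^t)\ge\frac12(c_s^t+c_r^t)\cdot(c_s^t-c_r^t)$, substitutes the decomposition \eqref{eqn:xsplit}, controls the cross terms, and lower-bounds the \emph{orthogonal} component, obtaining $\|x-c_r^*\|\ge\|u\|\ge\frac{\Delta_{rs}\|c_r^*-c_s^*\|}{64\Delta}$ with $\Delta=\Delta_r^t+\Delta_s^t$. You instead pass immediately to the stationary centers via the triangle inequality, $\|x-c_s^*\|\le\|x-c_r^*\|+\rho$, and combine the exact identity $\|x-c_s^*\|^2-\|x-c_r^*\|^2=2\lambda L^2$ (with $\lambda\ge 0$ because $x\in A_r^*$) with \eqref{eqn:margin} and \eqref{eqn:mean_sep}; this is shorter, avoids the inner-product bookkeeping entirely, and yields a pointwise bound of the same shape $\|x-c_r^*\|\ge c_0\Delta_{rs}L/\rho$ with an at least comparable constant. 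Your reading of $\Delta_r^t$ as $\|c_r^t-c_r^*\|$ is the correct one, and the substitution of (B) ($\Delta_{rs}^2\ge f(\alpha)^2\phi^*/n_r^*$, $\rho\le b\Delta_{rs}$, $f(\alpha)>64^2$) does land you at $\frac{|A_r^*\cap A_s^t|}{n_r^*}\le\frac{b^2}{f(\alpha)}$; the unnormalized form in the lemma statement is a slip in the paper as well, since its own proof only ever produces quantities divided by $n_r^*$.

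One caveat you should repair: in the counting step you charge the entire budget $\phi^*$ (or $\phi_r^*$) to each pair $(r,s)$ separately. That suffices for the per-pair claim, but the paper's proof deliberately stops at the intermediate inequality $\frac{|A_r^*\cap A_s^t|}{n_r^*}\le\frac{b^2}{f(\alpha)\phi^*}\sum_{x\in A_r^*\cap A_s^t}\|x-c_r^*\|^2$ (and its mirror image for $A_s^*\cap A_r^t$), and only then sums over all pairs, so that the total charged cost is at most $\phi^*$ and one concludes $\frac{|A_r\triangle A_r^*|}{n_r^*}\le\frac{b^2}{f(\alpha)}$. It is this summed bound, not the per-pair one, that Proposition \ref{prop:geom} consumes (there the lemma's $b$ is $\sqrt{b}/f$, giving $\frac{|A_r\triangle A_r^*|}{n_r^*}\le\frac{b}{f^3}$). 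Summing your per-pair bounds instead would cost an extra factor of order $k$, which Assumption (B) does not let you absorb since $f(\alpha)$ is not required to grow with $k$. The fix is easy --- keep the per-pair misassigned cost on the right-hand side and defer the bound by $\phi^*$ until after summation --- but without it the lemma cannot be used as the black box that Proposition \ref{prop:geom} needs.
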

The proof is almost verbatim of Theorem 5.4 of \cite{kumar}; we include it here for completeness.
\begin{proof}
Since the projection of $x$ on the line joining $c^t_r, c^t_s$ is closer to $s$, we have
\begin{eqnarray*}
x(c^t_s -c^t_r)\ge \frac{1}{2}(c^t_s-c^t_r)(c^t_s+c^t_r)
\end{eqnarray*}
Substituting \eqref{eqn:xsplit} into the inequality above,
\begin{eqnarray}
\frac{1}{2}(c^{*}_r+c^{*}_s)(c^t_s-c^t_r)
+\lambda(c^{*}_r-c^{*}_s)(c^t_s-c^t_r)\nonumber\\
+u(c^t_s-c^t_r)\ge \frac{1}{2}(c^t_s-c^t_r)(c^t_s+c^t_r)\label{eqn:goodpoints}
\end{eqnarray}
Since $u\perp c^{*}_r-c^{*}_s$, let $\Delta=\Delta^t_s+\Delta^t_r$. We have
$$
u(c^t_s-c^t_r) = u(c^t_s-c^{*}_s-(c^t_r-c^{*}_r))
\le \|u\|\Delta
$$
Rearranging \eqref{eqn:goodpoints}, we have
\begin{eqnarray*}
 \frac{1}{2}(c^{*}_r+c^{*}_s-c^t_s-c^t_r)(c^t_s-c^t_r)\nonumber\\
 +\lambda(c^{*}_r-c^{*}_s)(c^t_s-c^t_r)
+u(c^t_s-c^t_r)\ge 0\nonumber\\
\equiv
\frac{\Delta^2}{2}+\frac{\Delta}{2}\|c^{*}_r-c^{*}_s\|
-\lambda\|c^{*}_r-c^{*}_s\|^2\nonumber\\
+\lambda\Delta \|c^{*}_r-c^{*}_s\|+\|u\|\Delta\ge 0
\end{eqnarray*}
Therefore,
\begin{eqnarray*}
\|x - c^{*}_r\| = \|(\frac{1}{2}-\lambda)(c^{*}_s-c^{*}_r)+u\|\ge \|u\| \\
\ge \frac{\lambda}{\Delta}\|c^{*}_r-c^{*}_s\|^2-\frac{\Delta}{2}\\
-\frac{1}{2}\|c^{*}_r-c^{*}_s\|
-\lambda \|c^{*}_r-c^{*}_s\| \label{ineq:1}
\ge \frac{\Delta_{rs}\|c^*_r-c^*_s\|}{64\Delta}
\end{eqnarray*}
where the last inequality is by our assumption that $\Delta\le\frac{\Delta_{rs}}{16}$, 
and $\lambda \ge \frac{\Delta_{rs}}{2\|c^{*}_r-c^{*}_s\|}$ by \eqref{eqn:margin}.
By previous inequality and our assumption on $f$,
\footnote{We use $f$ as a shorthand for $f(\alpha)$ in the subsequent proof.} for all $s\ne r$
\begin{eqnarray*}
|A^{*}_r\cap A^{t}_s|\frac{\Delta^2_{rs}\|c_r^*-c_s^*\|^2}{f\Delta^2} 
\le \sum_{x\in A^{*}_r\cap A^t_s}\|x-c^{*}_r\|^2
\end{eqnarray*}
So
$
|A^{*}_r\cap A^{t}_s|\le \sum_{x\in A^{*}_r\cap A^t_s}\|x-c^{*}_r\|^2
\frac{f(\Delta^t_r+\Delta^t_s)^2}{\Delta_{rs}^2\|c_r^*-c_s^*\|^2}
\le\frac{f b^2}{f^2\phi^*(\frac{1}{n^*_r})}( \sum_{A^{*}_r\cap A^t_s}\|x-c^{*}_r\|^2)
$, where the second inequality is by \eqref{eqn:mean_sep}. 
That is,
$
\frac{|A^{*}_r\cap A^{t}_s|}{n^*_r}
\le \frac{b^2}{f\phi^*} \sum_{A^{*}_r\cap A^t_s}\|x-c^{*}_r\|^2
$.
Similarly, for all $s\ne r$,
$
\frac{|A^{*}_s\cap A^{t}_r|}{n^*_r}
\le \frac{b^2}{f\phi^*} \sum_{A^{*}_s\cap A^t_r}\|x-c^{*}_s\|^2
$
Summing over all $s\ne r$,
$\frac{|A_{r}\triangle A_r^*|}{n_r^*}
=\rho_{out}+\rho_{in}\le \frac{b^2}{f\phi^*}\phi^* = \frac{b^2}{f}$.
\end{proof}
\begin{lm}
\label{lm:kmdist_cdist}
Fix a stationary point $C^*$ with $k$ centroids, and any other set of $k^{\prime}$-centroids, $C$, with $k^{\prime}\ge k$ so
that $C$ has exactly $k$ non-degenerate centroids.
We have
$$
\phi(C)-\phi^*\le \min_{\pi}\sum_{r}n_r^*\|c_{\pi(r)}-c_r^*\|^2 = \Delta(C,C^*)
$$
\end{lm}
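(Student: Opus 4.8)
The plan is to exhibit a particular (sub-optimal) clustering for the centroid set $C$ whose $k$-means cost already gives the desired bound, and then invoke optimality of the induced clustering $\phi(C)=\min_A\phi(C,A)$. Concretely, let $A^*=v(C^*)$ be the stationary clustering associated with $C^*$, and fix any permutation (matching) $\pi$ that pairs the $k$ non-degenerate centroids of $C$ with the centroids of $C^*$. I would use $\pi$ to define the assignment that sends every $x\in A^*_r$ to the centroid $c_{\pi(r)}$ of $C$. Since $\phi(C)$ is the minimum over all assignments, this yields
$$
\phi(C)\le \sum_{r}\sum_{x\in A_r^*}\|x-c_{\pi(r)}\|^2 .
$$

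The next step is the standard bias–variance (parallel-axis) identity, applied cluster by cluster: for each $r$,
$$
\sum_{x\in A_r^*}\|x-c_{\pi(r)}\|^2
=\sum_{x\in A_r^*}\|x-m(A_r^*)\|^2 + n_r^*\,\|m(A_r^*)-c_{\pi(r)}\|^2 ,
$$
which holds because $m(A_r^*)$ is the minimizer of $\sum_{x\in A_r^*}\|x-y\|^2$ over $y$. Here I would invoke the fact that $C^*$ is a stationary point, so $c_r^*=m(A_r^*)$ for every $r$ (this is exactly the defining property of a stationary point in the robust sense of the Appendix, cf. Lemma \ref{lm:stat_sols}). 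Substituting $c_r^*=m(A_r^*)$ and recognizing $\sum_r\sum_{x\in A_r^*}\|x-c_r^*\|^2=\phi^*$, summing over $r$ gives
$$
\phi(C)\le \phi^* + \sum_{r} n_r^*\,\|c_{\pi(r)}-c_r^*\|^2 .
$$
Since this is valid for every admissible matching $\pi$, taking the minimum over $\pi$ yields $\phi(C)-\phi^*\le \Delta(C,C^*)$, as claimed.

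The only subtlety — and the place I would be most careful — is the degeneracy bookkeeping: $C$ has $k'\ge k$ centroids but exactly $k$ non-degenerate ones, and the extended definition of $\Delta(C,C^*)$ from the Appendix discards the degenerate centroids of $C$ before matching. The assignment I construct above only ever routes points to the $k$ non-degenerate centroids of $C$, so the matching $\pi$ ranges exactly over the set used to define $\Delta$, and the inequality $\phi(C)\le\sum_r\sum_{x\in A_r^*}\|x-c_{\pi(r)}\|^2$ remains a legitimate upper bound (it is the cost of a possibly-sub-optimal clustering for $C$). Beyond this, every step is an equality or the elementary ``minimum is at most any particular value'' bound, so I do not anticipate a genuine obstacle; the work is just in stating the degeneracy conventions cleanly so that $\pi$ and $\Delta(C,C^*)$ refer to the same object.
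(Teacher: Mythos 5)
Your proposal is correct and follows essentially the same route as the paper's proof: upper-bound $\phi(C)$ by the cost of the (sub-optimal) assignment sending each $A_r^*$ to $c_{\pi(r)}$, apply the centroidal/parallel-axis identity (Lemma \ref{centroidal}) together with stationarity $c_r^*=m(A_r^*)$, and minimize over $\pi$, with the same convention of discarding degenerate centroids. No gaps.
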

\begin{proof}
Since degenerate centroids do not contribute to $k$-means cost, in the following we only consider
the sets of non-degenerate centroids $\{c_s, s\in [k]\}\subset C$ and $\{c_r^*, r\in [k]\}\subset C^*$.
We have for any permutation $\pi$,
\begin{eqnarray*}
\phi(C)-\phi^*
=\sum_s\sum_{x\in A_s}\|x-c_s\|^2-\sum_r\sum_{x\in A^*_r}\|x-c^*_r\|^2\\
\le
\sum_r\sum_{x\in A^*_r}\|x-c_{\pi(r)}\|^2-\sum_r\sum_{x\in A^*_r}\|x-c^*_r\|^2\\
= \sum_r n_r^*\|c_{\pi(r)}-c_r^*\|^2
\end{eqnarray*}
where the last inequality is by optimality of clustering assignment based on Voronoi diagram,
and the second inequality is by applying the centroidal property in Lemma \ref{centroidal} to each centroid in $C^*$.
Since the inequality holds for any $\pi$, it must holds for 
$\min_{\pi}\sum_{r}n_r^*\|c_{\pi(r)}-c_r^*\|^2$, which completes the proof.
\end{proof}
\subsubsection*{Proofs regarding seeding guarantee}
\begin{lm}[Theorem 4 of \cite{tang_montel:aistats16}]
\label{lm:adapt_tang}
Suppose $(X, C^*)$ satisfies (B). If we obtain seeds from Algorithm \ref{alg:seeding}, then
$$
\Delta(C^0,C^*)\le \frac{1}{2}\frac{f(\alpha)^2}{16^2}\phi^*
$$
with probability at least
$
1-m_o\exp(-2(\frac{f(\alpha)}{4}-1)^2w_{\min}^2)-k \exp (-m_op^*_{\min})
$.
\end{lm}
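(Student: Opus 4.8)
The plan is to carry out the whole argument conditioned on two ``good'' events about the $m_o$ points $\{\nu_i\}$ sampled with replacement in Algorithm \ref{alg:seeding}: that every cluster of $A^*=v(C^*)$ is hit, and that each $\nu_i$ is close to its own center. Set $\phi_r^*:=\sum_{x\in A_r^*}\|x-c_r^*\|^2$ and, for $r\in[k]$, the ``good radius'' $\rho_r:=\tfrac12\sqrt{f(\alpha)}\,\sqrt{\phi_r^*/n_r^*}$; call $\nu_i$ \emph{good} if $\|\nu_i-c_r^*\|\le\rho_r$ for the $r$ with $\nu_i\in A_r^*$. Let $\mathcal G_1$ be the event that all $k$ clusters are represented in the sample and $\mathcal G_2$ the event that all $\nu_i$ are good. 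Since the sampling is with replacement, $Pr(\text{cluster }r\text{ missed})=(1-n_r^*/n)^{m_o}\le e^{-m_o p_{\min}^*}$, so a union bound gives $Pr(\mathcal G_1^c)\le k\,e^{-m_o p_{\min}^*}$. For $\mathcal G_2$, condition one draw on $\nu_i\in A_r^*$; then $\|\nu_i-c_r^*\|^2$ is supported on $[0,\phi_r^*/(n_r^* w_r)]$ with mean $\phi_r^*/n_r^*$, exactly by the definition of $w_r$, so Hoeffding's inequality for a single bounded variable yields $Pr\big(\|\nu_i-c_r^*\|^2>\tfrac{f(\alpha)}{4}\tfrac{\phi_r^*}{n_r^*}\big)\le\exp\!\big(-2(\tfrac{f(\alpha)}{4}-1)^2 w_r^2\big)\le\exp\!\big(-2(\tfrac{f(\alpha)}{4}-1)^2 w_{\min}^2\big)$; averaging over which cluster $\nu_i$ lands in and union-bounding over the $m_o$ draws gives $Pr(\mathcal G_2^c)\le m_o\exp(-2(\tfrac{f(\alpha)}{4}-1)^2 w_{\min}^2)$. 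Hence $Pr(\mathcal G_1\cap\mathcal G_2)$ is at least the probability claimed in the lemma.

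Next I would show that on $\mathcal G_1\cap\mathcal G_2$, Single-Linkage halted at $k$ components recovers exactly the cluster memberships of the sampled points. Two good points in the same cluster $r$ lie within $2\rho_r\le\sqrt{f(\alpha)}\sqrt{\phi^*/n_r^*}$ of each other, whereas for good $x\in A_r^*$, $y\in A_s^*$ with $s\ne r$, the triangle inequality and \eqref{eqn:mean_sep} give $\|x-y\|\ge\|c_r^*-c_s^*\|-\rho_r-\rho_s\ge\sqrt{f(\alpha)}\sqrt{\phi^*}\big(\tfrac{1}{\sqrt{n_r^*}}+\tfrac{1}{\sqrt{n_s^*}}\big)\big(\sqrt{f(\alpha)}-\tfrac12\big)$. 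Using the hypothesis $f(\alpha)>\max_{r,s}n_r^*/n_s^*$ from Assumption (B), one checks that $\max_r 2\rho_r$ is strictly below $\min_{r\ne s}\|x-y\|$ over good points, so in the agglomerative (Kruskal-style) execution of Single-Linkage every within-cluster merge is performed before any cross-cluster merge; since $\mathcal G_1$ makes all $k$ cluster groups $S_r=\{\nu_i:\nu_i\in A_r^*\}$ nonempty, stopping at $k$ components leaves precisely those groups. Therefore $C^0=\{\nu_r^*:r\in[k]\}$ with $\nu_r^*=$ mean of $S_r$, and each $\nu_r^*$, being an average of points in the ball $B(c_r^*,\rho_r)$, satisfies $\|\nu_r^*-c_r^*\|\le\rho_r$. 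Pairing $\nu_r^*$ with $c_r^*$ by the identity permutation, $\Delta(C^0,C^*)\le\sum_r n_r^*\|\nu_r^*-c_r^*\|^2\le\sum_r n_r^*\cdot\tfrac{f(\alpha)}{4}\tfrac{\phi_r^*}{n_r^*}=\tfrac{f(\alpha)}{4}\phi^*$, and $\tfrac{f(\alpha)}{4}\le\tfrac12\tfrac{f(\alpha)^2}{16^2}$ because $f(\alpha)>64^2>128$, which is exactly the claimed bound.

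The main obstacle is choosing the good radius $\rho_r$ so that it serves three masters simultaneously: it must be small enough that the within-cluster balls stay disjoint from the rest (the separation step, which is precisely where the hypothesis $f(\alpha)>\max_{r,s}n_r^*/n_s^*$ is consumed) and that $\sum_r n_r^*\rho_r^2$ fits inside half the attraction radius of Proposition \ref{prop:geom}, yet large enough that a uniformly random point of a cluster misses it only with exponentially small probability — which is what pins down the particular exponent $(\tfrac{f(\alpha)}{4}-1)^2 w_{\min}^2$ appearing in the statement. Everything else is triangle inequalities, the standard fact that Single-Linkage recovers well-separated clusters, and two union bounds.
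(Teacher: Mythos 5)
Your proposal is correct, but it proves more than the paper does: the paper's own proof of Lemma \ref{lm:adapt_tang} is essentially a reduction to the cited result — it checks via \eqref{eqn:mean_sep} that Assumption (B) implies the center-separation hypothesis (including $f(\alpha)\ge \max_{r,s\ne r} n_r^*/n_s^*$) of Theorem 4 in the buckshot-seeding paper, invokes that theorem as a black box to get the per-center bound $\|c_r^0-c_r^*\|\le \tfrac{\sqrt{f(\alpha)}}{2}\sqrt{\phi_r^*/n_r^*}$ with exactly the stated failure probability, and then performs the same final computation you do ($\sum_r n_r^*\|c_r^0-c_r^*\|^2\le \tfrac{f(\alpha)}{4}\phi^*\le \tfrac12\tfrac{f(\alpha)^2}{16^2}\phi^*$ using $f\ge 64^2$). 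You instead reconstruct the content of that cited theorem from scratch: the two good events with the union bounds giving $k e^{-m_o p^*_{\min}}$ and $m_o e^{-2(f/4-1)^2 w_{\min}^2}$ (your single-variable Hoeffding step, with $Z=\|\nu_i-c_r^*\|^2$ supported on $[0,\phi_r^*/(n_r^*w_r)]$ and mean $\phi_r^*/n_r^*$, reproduces the exponent exactly), and the separation argument showing Single-Linkage stopped at $k$ components recovers the sampled clusters, where your check $\sqrt{n^*_{\max}/n^*_{\min}}<\sqrt{f}\le 2\sqrt{f}-1$ is precisely where the ratio condition in (B) is consumed; this yields the same per-center radius $\rho_r=\tfrac{\sqrt f}{2}\sqrt{\phi_r^*/n_r^*}$ and then the identical summation. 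What the paper's route buys is brevity and deferral of correctness to the citation; what yours buys is a self-contained argument that makes explicit where the margin, the ratio bound, $w_{\min}$, and $p^*_{\min}$ each enter, and it verifiably matches the claimed probability. I see no gap: the Hoeffding bound for a single bounded variable, the convexity step for the component means, the Kruskal-style ordering of merges, and the use of the identity matching as an upper bound for $\Delta(C^0,C^*)$ are all valid.
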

\begin{proof}
First recall that, as in \eqref{eqn:mean_sep}, assumption (B) implies center-separability assumption in Definition 1 of \cite{tang_montel:aistats16}, i.e.
$$
\forall r\in [k], s\ne r,
\|c_r^*-c_s^*\|\ge f(\alpha)\sqrt{\phi^*}(\frac{1}{\sqrt{n_r^*}}+\frac{1}{\sqrt{n_s^*}})
$$
with $f(\alpha)\ge \max_{r\in [k],s\ne r}\frac{n_r^*}{n_s^*}$.
\footnote{note: ``$\alpha$'' in \cite{tang_montel:aistats16} is defined as $\min_{r\in [k],s\ne r}\frac{n_r^*}{n_s^*}$,which is not to be confused with our ``$\alpha$''.}
Applying Theorem 4 of \cite{tang_montel:aistats16} with $\mu_r=c_r^*$ and $\nu_r=c_r^0$, we get
$\forall r\in [k]$,
$
\|c_r^0-c_r^*\|\le \frac{\sqrt{f(\alpha)}}{2}\sqrt{\frac{\phi_r^*}{n_r^*}}
$
with probability at least
$
1-m_o\exp(-2(\frac{f(\alpha)}{4}-1)^2w_{\min}^2)-k \exp (-m_op^*_{\min})
$.
Summing over all $r$, the previous event implies
$
\sum_{r}n_r^*\|c_r^0-c_r^*\|^2\le \frac{f(\alpha)}{4}\phi^*
\le 
 \frac{1}{2}\frac{f(\alpha)^2}{16^2}\phi^*
$, where the last inequality is by the assumption that
$f\ge 64^2$ in (B).
\end{proof}
\begin{lm}
\label{lm:augmented}
Assume the conditions Lemma \ref{lm:adapt_tang} hold. For any $\xi>0$, if 
in addition, 
$$
f(\alpha)\ge 5\sqrt{\frac{1}{2w_{\min}}\ln (\frac{2}{\xi p^*_{\min}}\ln\frac{2k}{\xi})}
$$
If we obtain seeds from Algorithm \ref{alg:seeding} choosing 
$$
\frac{\ln \frac{2k}{\xi}}{p^*_{\min}}<m_o<\frac{\xi}{2}\exp\{2(\frac{f(\alpha)}{4}-1)^2w_{\min}^2\}
$$ 
Then
$\Delta(C^0,C^*)\le \frac{1}{2}\frac{f(\alpha)^2}{16^2}\phi^*$
with probability at least
$
1-\xi
$.
\end{lm}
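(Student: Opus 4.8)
The plan is to take the union bound supplied by Lemma~\ref{lm:adapt_tang} and to choose $m_o$ so that the two failure terms appearing there are each at most $\xi/2$. Recall that Lemma~\ref{lm:adapt_tang} guarantees $\Delta(C^0,C^*)\le \tfrac12\tfrac{f(\alpha)^2}{16^2}\phi^*$ outside an event of probability at most $p_1+p_2$, where $p_1 := m_o\exp\!\big(-2(\tfrac{f(\alpha)}{4}-1)^2 w_{\min}^2\big)$ is the probability that some seed $c_r^0$ fails to concentrate around its target $c_r^*$, and $p_2 := k\exp(-m_o p^*_{\min})$ is the probability that the $m_o$ sampled points miss some true cluster $A_r^*$ (so that Single-Linkage cannot produce $k$ components aligned with $A^*$).

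First I would handle $p_2$: the requirement $p_2\le \xi/2$ rearranges to $m_o\ge \tfrac{1}{p^*_{\min}}\ln\tfrac{2k}{\xi}$, which is exactly the lower bound imposed on $m_o$ in the statement. Next, the requirement $p_1\le \xi/2$ rearranges to $m_o\le \tfrac{\xi}{2}\exp\{2(\tfrac{f(\alpha)}{4}-1)^2 w_{\min}^2\}$, which is exactly the upper bound imposed on $m_o$. Any $m_o$ in the stated interval therefore satisfies $p_1+p_2\le \xi$, and the conclusion follows immediately from Lemma~\ref{lm:adapt_tang}.

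The one genuinely substantive step is to verify that this interval is nonempty, so that an admissible $m_o$ actually exists; this is where the added hypothesis on $f(\alpha)$ enters. After taking logarithms, nonemptiness is the inequality $\ln\!\big(\tfrac{2}{\xi p^*_{\min}}\ln\tfrac{2k}{\xi}\big)\le 2\,w_{\min}^2\big(\tfrac{f(\alpha)}{4}-1\big)^2$. I would prove this by combining the assumed bound $f(\alpha)\ge 5\sqrt{\tfrac{1}{2w_{\min}}\ln(\tfrac{2}{\xi p^*_{\min}}\ln\tfrac{2k}{\xi})}$ with the fact, built into Assumption~(B), that $f(\alpha)>64^2$, which yields $\tfrac{f(\alpha)}{4}-1\ge \tfrac{f(\alpha)}{5}$; squaring this and substituting the hypothesis then produces the required bound, where $0<w_{\min}\le 1$ is used to absorb the discrepancy between $w_{\min}$ and $w_{\min}^2$, which is precisely the slack left in the constant $5$.

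The main obstacle is exactly this last reconciliation: the tail bound of Lemma~\ref{lm:adapt_tang} is phrased through $(\tfrac{f(\alpha)}{4}-1)$, whereas the hypothesis is the cleaner $5\sqrt{\tfrac{1}{2w_{\min}}\ln(\cdots)}$, so the argument has to track the constants and the powers of $w_{\min}$ carefully in order to certify that the interval of valid $m_o$ is nonempty. Everything else reduces to the routine two-term union bound described above.
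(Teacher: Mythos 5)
Your overall route is exactly the paper's: take the union bound from Lemma \ref{lm:adapt_tang}, force each of the two failure terms to be at most $\xi/2$, observe that these two requirements are precisely the stated lower and upper bounds on $m_o$, and then certify that the resulting interval of admissible $m_o$ is nonempty using the extra hypothesis on $f(\alpha)$. Up to that last step your argument is correct and coincides with the paper's proof.

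The gap is in the final reconciliation, and it is a genuine one. The nonemptiness condition you (correctly) derive is $\ln\bigl(\frac{2}{\xi p^*_{\min}}\ln\frac{2k}{\xi}\bigr)\le 2\,w_{\min}^{2}\bigl(\frac{f(\alpha)}{4}-1\bigr)^{2}$, i.e.\ $\bigl(\frac{f(\alpha)}{4}-1\bigr)^{2}\ge \frac{1}{2w_{\min}^{2}}\ln\bigl(\frac{2}{\xi p^*_{\min}}\ln\frac{2k}{\xi}\bigr)$, whereas the hypothesis only yields $f(\alpha)^{2}\ge \frac{25}{2w_{\min}}\ln\bigl(\frac{2}{\xi p^*_{\min}}\ln\frac{2k}{\xi}\bigr)$. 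Your claim that ``$0<w_{\min}\le 1$ absorbs the discrepancy'' points the wrong way: because $w_{\min}\le 1$ we have $\frac{1}{2w_{\min}^{2}}\ge\frac{1}{2w_{\min}}$, so the quantity you need to dominate is the \emph{larger} one, and the slack between the constants $5$ and $4$ (via $\frac{f}{4}-1\ge\frac{f}{5}$, which is valid since $f(\alpha)>64^{2}$) is only the fixed factor $\frac{25}{16}$; it cannot compensate for a factor $\frac{1}{w_{\min}}$, which may be large. So as written the step fails whenever $w_{\min}$ is small. For what it is worth, the paper's own proof contains the same mismatch: after ``taking logarithm'' it writes the condition with $\frac{1}{2w_{\min}}$ rather than $\frac{1}{2w_{\min}^{2}}$, and only for that weakened (mis-transcribed) condition does the stated hypothesis suffice. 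The clean fix is to strengthen the hypothesis to $f(\alpha)\ge \frac{5}{w_{\min}}\sqrt{\frac{1}{2}\ln\bigl(\frac{2}{\xi p^*_{\min}}\ln\frac{2k}{\xi}\bigr)}$ (equivalently, replace $w_{\min}$ by $w_{\min}^{2}$ under the square root), after which your two-term union bound argument closes verbatim.
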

\begin{proof}
By Lemma \ref{lm:adapt_tang},
a sufficient condition for the success probability to be at least $1-\xi$ is:
$$
m_o\exp(-2(\frac{f(\alpha)}{4}-1)^2w^2_{\min})\le \frac{\xi}{2}
$$ and 
$$
k\exp(-m_o p^*_{\min})\le \frac{\xi}{2}
$$ 
This translates to requiring
$$
\frac{1}{p^*_{\min}}\ln \frac{2k}{\xi}\le m_o\le  \frac{\xi}{2}\exp(2(\frac{f(\alpha)}{4}-1)^2w^2_{\min})
$$
Note for this inequality to be possible, we also need
$
\frac{1}{p^*_{\min}}\ln \frac{2k}{\xi}\le  \frac{\xi}{2}\exp(2(\frac{f(\alpha)}{4}-1)^2w^2_{\min})
$, 
imposing a constraint on $f(\alpha)$. 
Taking logarithm on both sides and rearrange, we get
$$
(\frac{f(\alpha)}{4}-1)^2\ge \frac{1}{2w_{\min}}\ln (\frac{2}{\xi p^*_{\min}}\ln\frac{2k}{\xi})
$$
This satisfied since
$
f(\alpha)\ge 5\sqrt{\frac{1}{2w_{\min}}\ln (\frac{2}{\xi p^*_{\min}}\ln\frac{2k}{\xi})}
$.
\end{proof}
\begin{proof}[\textbf{Proof of Theorem \ref{thm:solution}}]
By Proposition \ref{prop:geom},
$(X, C^{*})$ satisfying (B) implies $C^*$ is $(\frac{f(\alpha)^2}{16^2},\alpha)$-stable.
Let $b_0:=\frac{f(\alpha)^2}{16^2}$,
and we denote event
$
F:=\{\Delta(C^0,C^{opt})\le \frac{1}{2}b_0\phi^{*}\}
$.
Since 
$
f(\alpha)\ge 5\sqrt{\frac{1}{2w_{\min}}\ln (\frac{2}{\xi p^*_{\min}}\ln\frac{2k}{\xi})}
$, 
and 
$
\frac{\log\frac{2k}{\xi}}{p^*_{\min}}
<m_o
<\frac{\xi}{2}\exp\{2(\frac{f(\alpha)}{4}-1)^2w_{\min}^2\}
$, we can apply
Lemma \ref{lm:augmented} to get
$$
Pr\{F\}\ge 1-\xi
$$ 
Conditioning on $F$, we can invoke Theorem \ref{thm:mbkm_local},
since (A1) is satisfied implicitly by (B), $C^o\subset conv(X)$ by the sampling method used in Algorithm \ref{alg:seeding}, and we can guarantee that the setting of our parameters, 
$m$, $c^{\prime}$, and $t_o$, satisfies the condition required in Theorem \ref{thm:mbkm_local}.
Let $\Omega_t$ be as defined in the main paper, by Theorem \ref{thm:mbkm_local}, 
$\forall t\ge 1$,
$$
E\{\Delta^t|\Omega_t,F\}=O(\frac{1}{t})
\mbox{~~and~~} 
Pr\{\Omega_t|F\}\ge 1-\delta
$$
So
$$
Pr\{\Omega_t\cap F\}
=Pr\{\Omega_t|F\}Pr\{F\}
\ge (1-\delta)(1-\xi)
$$
Finally, using Lemma \ref{lm:kmdist_cdist}, and letting $G_t:=\Omega_t\cap F$, we get
the desired result.
\end{proof}
\section{Appendix D: auxiliary lemmas}
\paragraph{Equivalence of Algorithm \ref{alg:MBKM} to stochastic $k$-means}
Here, we formally show that Algorithm~\ref{alg:MBKM} with specific instantiation of sample size $m$ and learning rates $\eta^t_r$ is equivalent to online k-means \cite{BottouBengio} and mini-batch k-means \cite{Sculley}. 
\begin{claim}\label{claim:equivAlg}
In Algorithm~\ref{alg:MBKM}, if we set a counter for $\hat{N}^t_r:=\sum_{i=1}^t\hat{n}_r^i$ and
if we set the learning rate 
$\eta_r^t:=\frac{\hat{n}_r^t}{\hat{N}^t_r}$,
then provided the same random sampling scheme is used,
\begin{enumerate}
\item
When mini-batch size $m=1$, the update of Algorithm~\ref{alg:MBKM} is equivalent to that described in [Section 3.3, \cite{BottouBengio}].
\item
When $m>1$, the update of Algorithm~\ref{alg:MBKM} is equivalent to that described from line 3 to line 14 in [Algorithm 1, \cite{Sculley}] with mini-batch size $m$.
\end{enumerate}
\end{claim}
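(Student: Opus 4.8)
The plan is to unwind both algorithms' update rules and match them term by term, the only nontrivial point being the reconciliation of Sculley's sequential inner loop with the single batched step of Algorithm~\ref{alg:MBKM}. For part (1) I would first recall the online update of [Section 3.3, \cite{BottouBengio}]: it keeps a per-center counter $N_r$, and upon receiving a sampled point $x$ whose nearest center is $c_r$ it sets $N_r \leftarrow N_r + 1$ and $c_r \leftarrow c_r + \frac{1}{N_r}(x - c_r) = (1 - \frac{1}{N_r})c_r + \frac{1}{N_r}x$, leaving all other centers fixed. Taking $m = 1$ in Algorithm~\ref{alg:MBKM}, the sample $S^t = \{s\}$ is a singleton, so $\hat{n}_{I(s)}^t = 1$, $\hat{n}_r^t = 0$ for $r \ne I(s)$, and $\hat{c}_{I(s)}^t = s$. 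The counter $\hat{N}_{I(s)}^t = \sum_{i \le t}\hat{n}_{I(s)}^i$ is exactly the number of iterations up to $t$ at which center $I(s)$ was the winner, hence equals the \cite{BottouBengio} counter $N_{I(s)}$ after this step; with $\eta_{I(s)}^t = \hat{n}_{I(s)}^t/\hat{N}_{I(s)}^t = 1/\hat{N}_{I(s)}^t$, the update $c_{I(s)}^t \leftarrow (1 - \eta_{I(s)}^t)c_{I(s)}^{t-1} + \eta_{I(s)}^t s$ is literally the online update, while the non-winning centers satisfy $\eta_r^t = 0$ and stay put. That settles part (1).

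For part (2) I would recall lines 3--14 of [Algorithm~1, \cite{Sculley}]: a mini-batch $M$ of $m$ points is drawn, each point's nearest center is cached, and then the points of $M$ are processed one at a time --- for each $x$ with cached center $c$, the counter $v[c]$ is incremented and $c \leftarrow (1 - \frac{1}{v[c]})c + \frac{1}{v[c]}x$. Since assignments are cached before any update, no within-batch reassignment occurs, updates to distinct centers do not interact, and the order in which $M$ is processed is immaterial. Now fix an iteration $t$ and a center $r$; let $s_1,\dots,s_p$ be the sampled points assigned to $r$ (so $p = \hat{n}_r^t$) and let $N := \hat{N}_r^{t-1}$ be its counter before iteration $t$. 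A short induction on $j$ shows that after processing $s_1,\dots,s_j$ the \cite{Sculley} center equals $\frac{N c_r^{t-1} + \sum_{\ell \le j} s_\ell}{N + j}$ (the telescoping is exactly because the step size is the harmonic $1/(N+j)$), so after the whole batch it is $\frac{N c_r^{t-1} + \sum_{\ell \le p} s_\ell}{N + p}$. On the other hand Algorithm~\ref{alg:MBKM} uses $\hat{c}_r^t = \frac{1}{p}\sum_{\ell \le p} s_\ell$ and $\eta_r^t = \hat{n}_r^t/\hat{N}_r^t = \frac{p}{N+p}$, giving $c_r^t = (1 - \frac{p}{N+p})c_r^{t-1} + \frac{p}{N+p}\cdot\frac{1}{p}\sum_{\ell} s_\ell = \frac{N c_r^{t-1} + \sum_{\ell} s_\ell}{N+p}$, the same value; centers with $\hat{n}_r^t = 0$ are untouched in both. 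Finally I would note that the two algorithms maintain identical counters, $\hat{N}_r^t = v[c_r]$ at every step, and by hypothesis use the same sampling, so by induction on $t$ their iterates coincide.

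The main obstacle is precisely this sequential-versus-batched comparison in Sculley's inner loop: one must justify that $p$ successive single-point gradient steps with harmonic step sizes produce the same center as one interpolation toward the batch mean with step size $p/(N+p)$. This is the telescoping identity above, and it also rests on the easy but essential observation that caching the nearest-center assignments makes the within-batch updates order-independent; for $m=1$ this point is vacuous. Everything else is routine bookkeeping about the counters, so I would keep it brief.
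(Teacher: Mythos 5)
Your proposal is correct and follows essentially the same route as the paper: unwind both update rules, note that with $m=1$ and $\eta^t_r=\hat n^t_r/\hat N^t_r\in\{0,1/\hat N^t_r\}$ the step is literally the online update, and for $m>1$ show that Sculley's cached-assignment inner loop with harmonic step sizes collapses to a single interpolation toward the batch mean with step $\hat n^t_r/\hat N^t_r$. The only cosmetic difference is bookkeeping: the paper invokes its running-average lemma to write Sculley's center as the cumulative mean of all points ever assigned and then differences consecutive iterates, whereas you telescope within the current batch starting from the symbolic state $(c_r^{t-1},\hat N_r^{t-1})$ --- the same computation, organized locally.
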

\begin{proof}
For the first claim, we first re-define the variables used in [Section 3.3, \cite{BottouBengio}]. We substitute index $k$ in \cite{BottouBengio} with $r$ used in Algorithm~\ref{alg:MBKM}. For any iteration $t$, we define the equivalence of definitions: $s \leftarrow x_i$, 
$c^t_r \leftarrow w_k$, $\hat{n}^t_r \leftarrow \Delta n_k$, $\hat{N}^t_r \leftarrow n_k$. 
According to the update rule in \cite{BottouBengio}, $\Delta n_k = 1$ if the sampled point $x_i$ is assigned to cluster with center $w_k$. Therefore, the update of the k-th centroid according to online $k$-means in \cite{BottouBengio} is:
$$
w_k\leftarrow w_k + \frac{1}{n_k}(x_i - w_k)1_{\{\Delta n_k =1\}}
$$
Using the re-defined variables, at iteration $t$, this is equivalent to 
$$
c^{t}_r = c^{t-1}_r + \frac{1}{\hat{N}^{t}_r}(s - c^{t-1}_r)1_{\{\hat{n}^t_r=1\}}
$$
Now the update defined by Algorithm~\ref{alg:MBKM} with $m=1$ and $\eta_r^t=\frac{\hat{n}_r^t}{\hat{N}_r^t}$ is:
\begin{align*}
c^{t}_r &= c^{t-1}_r + \eta_r^t(\hat{c}_r^{t} - c^{t-1}_r)1_{\{\hat{n}^t_r\ne 0\}}\\
&=c^{t-1}_r + \frac{\hat{n}_r^t}{\hat{N}_r^t}(s - c^{t-1}_r)1_{\{\hat{n}^t_r=1\}}\\
&=c^{t-1}_r + \frac{1}{\hat{N}_r^t}(s - c^{t-1}_r)1_{\{\hat{n}^t_r=1\}}
\end{align*}
since $\hat{n}_r^t$ can only take value from $\{0,1\}$. This completes the first claim.

For the second claim, consider line 4 to line 14 in [Algorithm 1, \cite{Sculley}]. We substitute their index of time $i$ with $t$ in Algorithm~\ref{alg:MBKM}. We define the equivalence of definitions: $m\leftarrow b$, $S^t\leftarrow M$, $s \leftarrow x$, $c^{t-1}_{I(s)}\leftarrow d[x]$, $c^{t-1}_r\leftarrow c$. 

At iteration $t$, we let $v[c_r^{t-1}]_t$ denote the value of counter $v[c]$ upon completion of the loop from line 9 to line 14 for each center $c$, then $\hat{N}^t_{r}\leftarrow v[c_r^{t-1}]_t$. Since according to Lemma~\ref{techlm:avgAlg}, from line 9 to line 14, the updated centroid $c_r^{t}$ after iteration $t$ is
\begin{eqnarray*}
c_r^t = \frac{1}{v[c_r^{t-1}]_t}\sum_{s\in \cup_{i=1}^t S_r^{i}}s
= \frac{1}{\hat{N}^t_{r}}\sum_{s\in \cup_{i=1}^t S_r^{i}}s
\end{eqnarray*} 
This implies
\begin{eqnarray*}
c^t_r-c^{t-1}_r = \frac{1}{\hat{N}^t_r}\sum_{s\in\cup_{i=1}^t S_r^{i}}s -c^{t-1}_r\\
=\frac{1}{\hat{N}^t_r}[\sum_{s\in S^t_r}s+\sum_{s^{\prime}\in \cup_{i=1}^{t-1}S_r^{i}}s^{\prime} ]-c^{t-1}_r\\
=\frac{1}{\hat{N}^t_r}[\sum_{s\in S^t_r}s+\hat{N}^{t-1}_{r}c_r^{t-1}]-c^{t-1}_r\\
=-\frac{\hat{n}^t_r}{\hat{N}^{t}_r}c^{t-1}_r 
+ \frac{\hat{n}^t_r}{\hat{N}^{t}_r}\frac{\sum_{s\in S^t_r}s}{\hat{n}^t_r}
=-\eta_r^t c_r^{t-1}+\eta_r^t\hat{c}_r^t
\end{eqnarray*}
Hence, the updates in Algorithm~\ref{alg:MBKM} and line 4 to line 14 in [Algorithm 1, \cite{Sculley}] are equivalent.
\end{proof}
\begin{lm}[Centroidal property, Lemma 2.1 of \cite{kanungo}]
\label{centroidal}
For any point set $Y$ and any point $c$ in $\mathbb{R}^d$, 
$$
\sum_{x\in Y}\|x-c\|^2= \sum_{x\in Y}\|x-m(Y)\|^2+|Y|\|m(Y)-c\|^2
$$
\end{lm}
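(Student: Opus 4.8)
The plan is to prove this identity by the familiar ``add and subtract the mean'' decomposition, which is exactly the bias--variance split. Write $\mu := m(Y) = \frac{1}{|Y|}\sum_{x\in Y} x$. For each $x\in Y$, I would decompose $x - c = (x-\mu) + (\mu - c)$ and expand the square:
\[
\|x-c\|^2 = \|x-\mu\|^2 + 2\langle x-\mu,\, \mu-c\rangle + \|\mu-c\|^2 .
\]
Summing over all $x\in Y$ produces three terms: the first is $\sum_{x\in Y}\|x-\mu\|^2$, and the third is $|Y|\,\|\mu-c\|^2$ because $\mu-c$ does not depend on $x$.

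The only term that needs an argument is the cross term, and here I would pull the constant vector $\mu-c$ out of the sum, writing $\sum_{x\in Y}\langle x-\mu,\mu-c\rangle = \big\langle \sum_{x\in Y}(x-\mu),\ \mu-c\big\rangle$. By the definition of the mean, $\sum_{x\in Y}(x-\mu) = \big(\sum_{x\in Y} x\big) - |Y|\mu = 0$, so this inner product vanishes. Combining the surviving two pieces gives the claimed equality.

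There is essentially no obstacle: the statement is a one-line inner-product identity, and the only thing worth noting is that $|Y|\ge 1$ so that $m(Y)$ is well defined (the $|Y|=0$ case being vacuous). I would phrase the argument for $\mathbb{R}^d$ as stated, but the same computation goes through verbatim in any real inner-product space, which is why this lemma is used freely throughout the paper (e.g.\ in Lemma~\ref{lm:diff} and Lemma~\ref{lm:kmdist_cdist}).
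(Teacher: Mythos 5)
Your proof is correct, and it is exactly the standard argument: the paper itself gives no proof of this lemma (it simply cites Lemma 2.1 of Kanungo et al.), and the canonical proof there is precisely this add-and-subtract-the-mean expansion with the cross term vanishing because $\sum_{x\in Y}(x-m(Y))=0$. Nothing further is needed.
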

\begin{lm}\label{techlm:avgAlg}
Let $w_t, g_t$ denote vectors of dimension $\mathbb{R}^{d}$ at time $t$. If we choose $w_0 $ arbitrarily, and for $t=1\dots T$, we repeatdly apply the following update
$$
w_t = (1-\frac{1}{t})w_{t-1}+\frac{1}{t}g_t
$$
Then
$$
w_T = \frac{1}{T}\sum_{t=1}^{T}g_t
$$ 
\end{lm}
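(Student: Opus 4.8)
The plan is to prove this averaging identity by a straightforward induction on $T$. The recursion $w_t = (1-\tfrac{1}{t})w_{t-1} + \tfrac{1}{t}g_t$ is a convex combination in which the weight on the running iterate $w_{t-1}$ shrinks like $\tfrac{t-1}{t}$ while the fresh weight on $g_t$ is $\tfrac{1}{t}$, so I expect the telescoping to collapse exactly to an unweighted average. First I would verify the base case $T=1$: plugging $t=1$ into the update gives $w_1 = (1-1)\,w_0 + 1\cdot g_1 = g_1$, which equals $\tfrac{1}{1}\sum_{t=1}^{1}g_t$. Note this base case is precisely where the ``arbitrarily chosen'' $w_0$ disappears, since its coefficient $(1-\tfrac{1}{1})$ vanishes; this is the one point worth stating explicitly.

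For the inductive step I would assume the claim holds at $T-1$, i.e.
$$
w_{T-1} = \frac{1}{T-1}\sum_{t=1}^{T-1}g_t,
$$
and substitute into the update at step $T$:
$$
w_T = \Bigl(1-\frac{1}{T}\Bigr)w_{T-1} + \frac{1}{T}g_T
= \frac{T-1}{T}\cdot\frac{1}{T-1}\sum_{t=1}^{T-1}g_t + \frac{1}{T}g_T
= \frac{1}{T}\sum_{t=1}^{T}g_t.
$$
The factor $\tfrac{T-1}{T}$ from the update cancels the normalization $\tfrac{1}{T-1}$ supplied by the induction hypothesis, leaving $\tfrac{1}{T}$ on every term, which closes the induction.

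There is essentially no hard part here: the argument is a one-line telescoping induction and involves no estimates, no probabilistic reasoning, and no appeal to the geometry of $k$-means. The only mild subtlety is making sure the base case is handled at $T=1$ rather than $T=0$, so that the arbitrary initialization $w_0$ is legitimately absorbed and never needs to be specified; I would flag this in the write-up to justify the ``choose $w_0$ arbitrarily'' clause in the statement. This lemma is used only to identify the Sculley-style mini-batch update with a genuine running mean in Claim~\ref{claim:equivAlg}, so no further quantitative refinement is needed.
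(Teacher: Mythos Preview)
Your proof is correct and follows essentially the same approach as the paper: a straightforward induction on $T$ with base case $T=1$ (where the arbitrary $w_0$ drops out) and the inductive step obtained by substituting the hypothesis into the update rule. The only cosmetic difference is that the paper phrases the inductive step as ``assume for $T$, prove for $T+1$'' rather than ``assume for $T-1$, prove for $T$''.
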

\begin{proof}
We prove by induction on $T$. 
For $T=1$, $w_1 = (1-1)w_0+g_1 = \frac{1}{1}\sum_{t=1}^{1}g_t$. So the claim holds for $T=1$.

Suppose the claim holds for $T$, then for $T+1$, by the update rule
\begin{eqnarray*}
w_{T+1}=(1-\frac{1}{T+1})w_T + \frac{1}{T+1}g_{T+1}\\
=(1-\frac{1}{T+1}) \frac{1}{T}\sum_{t=1}^{T}g_t + \frac{1}{T+1}g_{T+1}\\
=\frac{T}{T+1}\frac{1}{T}\sum_{t=1}^{T}g_t+ \frac{1}{T+1}g_{T+1}\\
=\frac{1}{T+1}\sum_{t=1}^{T+1}g_t
\end{eqnarray*}
So the claim holds for any $T\ge 1$.
\end{proof}
\begin{lm}
\label{lm:boundB1}
$\forall t\ge 1$, conditioning on $F_t$, the noise term
\eqref{noise1} is upper bounded by
$B_1:=5\phi^t$. 
\end{lm}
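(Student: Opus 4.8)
The plan is to bound, cluster by cluster, the random quantity $\sum_{x\in A_r^{t+1}}\|x-\hat c_r^{t+1}\|^2$ by comparing $\hat c_r^{t+1}$ to the current centroid $c_r^t$ (which is $F_t$-measurable), and then take the conditional expectation using the fact that the subsample of $S^{t+1}$ landing in $A_r^{t+1}$ is, conditionally, an i.i.d.\ uniform sample from $A_r^{t+1}$. First I would fix a cluster $r$ with $A_r^{t+1}:=v(c_r^t)\cap X$ nonempty (empty clusters contribute nothing and have $\hat n_r^{t+1}=0$). Writing $\phi_r^t:=\sum_{x\in A_r^{t+1}}\|x-c_r^t\|^2$ and using $\sum_{x\in A_r^{t+1}}(x-c_r^t)=n_r^{t+1}(m(A_r^{t+1})-c_r^t)$, expand
\[
\sum_{x\in A_r^{t+1}}\|x-\hat c_r^{t+1}\|^2
=\phi_r^t-2n_r^{t+1}\langle \hat c_r^{t+1}-c_r^t,\ m(A_r^{t+1})-c_r^t\rangle+n_r^{t+1}\|\hat c_r^{t+1}-c_r^t\|^2 .
\]

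I would then bound the cross term by $2\langle a,b\rangle\le\|a\|^2+\|b\|^2$ and use the centroidal identity (Lemma~\ref{centroidal}) with $c=c_r^t$, which gives $n_r^{t+1}\|m(A_r^{t+1})-c_r^t\|^2=\phi_r^t-\tilde\phi_r^t\le\phi_r^t$, to reach the inequality $\sum_{x\in A_r^{t+1}}\|x-\hat c_r^{t+1}\|^2\le 2\phi_r^t+2n_r^{t+1}\|\hat c_r^{t+1}-c_r^t\|^2$, whose first part is $F_t$-measurable. Next I would control $E\bigl[n_r^{t+1}\|\hat c_r^{t+1}-c_r^t\|^2\mid F_t\bigr]$ by conditioning additionally on $\hat n_r^{t+1}=j$. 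For $j=0$ the update convention sets $\hat c_r^{t+1}=c_r^t$, so the term is $0$. For $j\ge1$, the $j$ points of $S^{t+1}$ falling into $A_r^{t+1}$ are i.i.d.\ uniform on $A_r^{t+1}$ conditionally on $F_t$ and on $\hat n_r^{t+1}=j$ (since $S^{t+1}$ is a uniform-with-replacement sample from $X$ drawn independently of $F_t$), so Jensen's inequality gives $\|\hat c_r^{t+1}-c_r^t\|^2=\|\tfrac1j\sum_{s\in S_r^{t+1}}(s-c_r^t)\|^2\le\tfrac1j\sum_{s\in S_r^{t+1}}\|s-c_r^t\|^2$, and since each summand has conditional mean $\phi_r^t/n_r^{t+1}$, we get $E\bigl[n_r^{t+1}\|\hat c_r^{t+1}-c_r^t\|^2\mid F_t,\hat n_r^{t+1}=j\bigr]\le\phi_r^t$ for all $j\ge0$.

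Averaging over $j$ yields $E\bigl[n_r^{t+1}\|\hat c_r^{t+1}-c_r^t\|^2\mid F_t\bigr]\le\phi_r^t$; plugging this into the bound above and summing over $r$ (using $\sum_r\phi_r^t=\phi^t$) gives $E\bigl[\sum_r\sum_{x\in A_r^{t+1}}\|x-\hat c_r^{t+1}\|^2\mid F_t\bigr]\le 4\phi^t$, hence \eqref{noise1}, which adds the $F_t$-measurable $\phi^t$, is at most $5\phi^t=B_1$.

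The only place probability enters is the conditional-distribution claim used above, so that is where I would be most careful; it is a routine symmetry fact (conditioning a uniform-with-replacement sample on how many of its points hit a fixed set leaves those points i.i.d.\ uniform on that set), but one must average correctly over the binomial variable $\hat n_r^{t+1}$ and track the degenerate case $\hat n_r^{t+1}=0$. Everything else is deterministic algebra plus the centroidal identity, so I do not anticipate a serious obstacle; a slightly tighter variant (giving the constant $3$ rather than $5$) would instead compute $E[\|\hat c_r^{t+1}-m(A_r^{t+1})\|^2\mid F_t,\hat n_r^{t+1}=j]=\tilde\phi_r^t/(j\,n_r^{t+1})$ via the variance of a sample mean, but this sharper bound is not needed here.
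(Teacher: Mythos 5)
Your proposal is correct and follows essentially the same route as the paper's proof: both reduce the problem to the bound $\sum_{x\in A_r^{t+1}}\|x-\hat c_r^{t+1}\|^2\le 2\phi_r^t+2n_r^{t+1}\|\hat c_r^{t+1}-c_r^t\|^2$ (the paper gets it directly from $\|a+b\|^2\le 2\|a\|^2+2\|b\|^2$, you via the exact expansion plus the centroidal identity), and then control $E[\|\hat c_r^{t+1}-c_r^t\|^2\mid F_t]$ by $\phi_r^t/n_r^{t+1}$ using Jensen and the conditional uniformity of the subsample, yielding $4\phi^t+\phi^t=5\phi^t$. Your treatment is in fact slightly more careful than the paper's, since you make the conditioning on $\hat n_r^{t+1}=j$ and the degenerate case $j=0$ explicit, which the paper glosses over.
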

\begin{proof}
Since
$$
\|x-\hat{c}_r^{t+1}\|^2
\le 2\|x-c_r^{t}\|^2+2\|c_r^{t}-\hat{c}_r^{t+1}\|^2
$$
We have
\begin{eqnarray*}
E[\sum_r\sum_{x\in A_r^{t+1}}\|x-\hat{c}_r^{t+1}\|^2+\phi^t|F_t]\\
\le
2\sum_r\sum_{x\in A_r^{t+1}}\|x-c_r^{t}\|^2\\
+2\sum_r\sum_{x\in A_r^{t+1}}E[\|c_r^{t}-\hat{c}_r^{t+1}\|^2|F_t]
+\phi^t
\end{eqnarray*}
Now,
$$
E[\|c_r^{t}-\hat{c}_r^{t+1}\|^2|F_t]
\le E\frac{\sum_{s\in S_r^t}\|c_r^t-s\|^2}{|S_r^t|}
=\frac{\phi_r^t}{n_r^t}
$$
where $S_r^t$ is the sampled from $A_r^t$ in Algorithm \ref{alg:MBKM}, and the inequality is by convexity of $l_2$-norm. 
Substituting this into the previous inequality completes the proof.
\end{proof}
\begin{lm}
\label{lm:boundB}
Suppose $C^*$ is $(b_o,\alpha)$-stable.
Conditioning on $\Omega_i$, we have,
The terms 
\eqref{noise2}, and \eqref{noise3}, for $t=i$, are upper bounded by
$B:=4(b_o+1)n\phi^*$. 
\end{lm}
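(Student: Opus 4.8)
The plan is to bound each of the two noise terms \eqref{noise2} and \eqref{noise3} separately, and in both cases reduce everything to two facts that hold conditionally on $\Omega_i$: first, that $\Delta^{i-1} = \Delta(C^{i-1},C^*) \le b_o\phi^*$ by definition of $\Omega_i$; second, that $\hat c_r^i$ is a mean of a subset of $A_r^i$, so it lies in $conv(A_r^i) \subset conv(X)$. I would also use the elementary inequalities $\langle a,b\rangle \le \tfrac12(\|a\|^2+\|b\|^2)$ and $\|a+b\|^2 \le 2\|a\|^2+2\|b\|^2$ freely, together with the trivial bounds $n_r^* \le n$ and $\phi_r^* \le \phi^*$.

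First I would handle \eqref{noise3}, i.e. $\sum_r n_r^*\|\hat c_r^i - c_r^*\|^2$. Write $\hat c_r^i - c_r^* = (\hat c_r^i - c_r^{i-1}) + (c_r^{i-1} - c_r^*)$, so $\|\hat c_r^i - c_r^*\|^2 \le 2\|\hat c_r^i - c_r^{i-1}\|^2 + 2\|c_r^{i-1}-c_r^*\|^2$. Summing against $n_r^*$, the second piece gives $2\Delta^{i-1} \le 2b_o\phi^*$ on $\Omega_i$. For the first piece, since $\hat c_r^i$ is an average of points in $A_r^i$ and $c_r^{i-1}$ is a single point, each $\|\hat c_r^i - c_r^{i-1}\|$ is at most the diameter-type quantity $\max_{s\in S_r^i}\|s - c_r^{i-1}\|$; a cleaner route is to note that since $\hat c_r^i,c_r^{i-1}\in conv(X)$ and, on $\Omega_i$, $C^{i-1}$ is close to $C^*$ so $c_r^{i-1}$ is within the $r$-th cluster region, one bounds $\|\hat c_r^i - c_r^{i-1}\|^2$ crudely by a constant multiple of $\phi^*/n_r^* $ (using that the points of $A_r^i$ are spread by at most $O(\phi^*)$ since $\phi(C^{i-1})\le(b_o+1)\phi^*$ by Lemma \ref{lm:kmdist_cdist}, and $n_r^i\ge$ a constant fraction of $n_r^*$ on $\Omega_i$ as already shown in the proof of Lemma \ref{lm:invariant1}). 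Multiplying by $n_r^*$ and summing over $r$ then gives a bound of the form $O(n\phi^*)$ for the first piece. Combining, $\eqref{noise3} \le 4(b_o+1)n\phi^* = B$ after absorbing constants.

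For \eqref{noise2}, $\sum_r n_r^*\langle c_r^{i-1}-c_r^*,\ \hat c_r^i - E[\hat c_r^i|F_{i-1}]\rangle$, I would apply Cauchy–Schwarz termwise: $|\langle c_r^{i-1}-c_r^*, \xi_r^i\rangle| \le \|c_r^{i-1}-c_r^*\|\cdot\|\xi_r^i\|$, where $\xi_r^i = \hat c_r^i - E[\hat c_r^i|F_{i-1}]$. Then $\sum_r n_r^*\|c_r^{i-1}-c_r^*\|\|\xi_r^i\| \le \tfrac12\sum_r n_r^*(\|c_r^{i-1}-c_r^*\|^2 + \|\xi_r^i\|^2) = \tfrac12\Delta^{i-1} + \tfrac12\sum_r n_r^*\|\xi_r^i\|^2$. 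The first summand is $\le \tfrac12 b_o\phi^*$ on $\Omega_i$. The second is bounded exactly as the first piece of \eqref{noise3}, since $\xi_r^i$ is a difference of two convex combinations of points of $conv(X)$ (namely $\hat c_r^i$ and its conditional mean $E[\hat c_r^i|F_{i-1}] = (1-p_r^i)c_r^{i-1}+p_r^i m(A_r^i)$, both in $conv(X)$), so $\|\xi_r^i\|^2 = O(\phi^*/n_r^*)$ on $\Omega_i$; summing gives $O(n\phi^*)$. Again the total is $\le B$ after bookkeeping of constants, which is why the same constant $4(b_o+1)n\phi^*$ serves for both terms.

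The main obstacle is making the "$\|\hat c_r^i - c_r^{i-1}\|^2 = O(\phi^*/n_r^*)$ on $\Omega_i$" step rigorous: one must use that conditioning on $\Omega_i$ forces $c_r^{i-1}$ to be genuinely near the centroid $c_r^*$ of cluster $A_r^*$ and forces $A_r^i$ to overlap heavily with $A_r^*$ (both consequences of $(b_o,\alpha)$-stability, as already extracted in Lemma \ref{lm:invariant1}), so that every sampled point in $S_r^i\subset A_r^i$ is at distance $O(\sqrt{\phi^*/n_r^*}+\sqrt{\phi_r^*})$ from $c_r^{i-1}$; here one leans on $\phi(C^{i-1})\le (b_o+1)\phi^*$ via Lemma \ref{lm:kmdist_cdist}. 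Once that geometric step is in hand, the rest is the routine Cauchy–Schwarz / AM–GM manipulation above, and collecting the constants to land on exactly $B = 4(b_o+1)n\phi^*$.
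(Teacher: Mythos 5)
Your decompositions for \eqref{noise2} and \eqref{noise3} (splitting $\hat c_r^i-c_r^*$ through $c_r^{i-1}$, and splitting $\xi_r^i$ through $c_r^{i-1}$ and $E[\hat c_r^i|F_{i-1}]=(1-p_r^i)c_r^{i-1}+p_r^i m(A_r^i)$) match the paper's, but the step you yourself flag as ``the main obstacle'' is a genuine gap, and it is resolved the wrong way. You claim that on $\Omega_i$ one has $\|\hat c_r^i-c_r^{i-1}\|^2 = O(\phi^*/n_r^*)$ (and similarly $\|\xi_r^i\|^2=O(\phi^*/n_r^*)$) by leaning on $(b_o,\alpha)$-stability, the overlap $A_r^i\approx A_r^*$, and $\phi(C^{i-1})\le(b_o+1)\phi^*$. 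None of these control the \emph{maximum} intra-cluster spread: stability constrains which points belong to which cluster and how far $c_r^{i-1}$ is from $c_r^*$, but $A_r^i$ may contain a point whose squared distance to $c_r^{i-1}$ is of order the whole cluster cost $\phi_r^{i-1}$ (up to $(b_o+1)\phi^*$), and the mini-batch $S_r^i$ may consist of exactly that point, so $\hat c_r^i$ is that point and $\|\hat c_r^i-c_r^{i-1}\|^2$ is of order $\phi_r^{i-1}$, not $\phi^*/n_r^*$. (Your own later estimate $O(\sqrt{\phi^*/n_r^*}+\sqrt{\phi_r^*})$ concedes this; also note that if the $O(\phi^*/n_r^*)$ claim were true, summing $n_r^*$ times it would give $O(k\phi^*)$, not the $O(n\phi^*)$ you state, so the bookkeeping is inconsistent as written. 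Likewise, ``difference of two convex combinations of points in $conv(X)$'' only bounds $\|\xi_r^i\|$ by the diameter of $conv(X)$, which has no relation to $\phi^*/n_r^*$.)

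The fix is that no fine geometric control is needed at all: the factor $n$ in $B=4(b_o+1)n\phi^*$ is there precisely to absorb a crude bound. Since every $s\in S_r^i\subset A_r^i$ satisfies $\|s-c_r^{i-1}\|^2\le\phi_r^{i-1}$, convexity of $\|\cdot\|^2$ gives $\|\hat c_r^i-c_r^{i-1}\|^2\le\phi_r^{i-1}$ directly, hence $\sum_r n_r^*\|\hat c_r^i-c_r^{i-1}\|^2\le n\phi^{i-1}\le n(b_o+1)\phi^*$ on $\Omega_i$ by Lemma \ref{lm:kmdist_cdist}; and the term $\|c_r^{i-1}-E[\hat c_r^i|F_{i-1}]\|^2\le\|c_r^{i-1}-m(A_r^i)\|^2$ is handled by the centroidal property (Lemma \ref{centroidal}), yielding $\sum_r n_r^*\|c_r^{i-1}-E[\hat c_r^i|F_{i-1}]\|^2\le n(\phi^{i-1}-\phi^*)$. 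With these two inputs, the paper bounds \eqref{noise2} by a sum-level Cauchy--Schwarz (your termwise Cauchy--Schwarz plus AM--GM would also work once the correct per-cluster bounds are substituted) and \eqref{noise3} by the same $\|a+b\|^2\le2\|a\|^2+2\|b\|^2$ split you use, landing exactly on $B$ without any appeal to stability beyond $\Delta^{i-1}\le b_o\phi^*$. As written, your argument does not establish the lemma, and its reliance on the unproved $O(\phi^*/n_r^*)$ spread bound is the concrete missing piece.
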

\begin{proof}
Conditioning on $\Omega_i$,
$$
\Delta^{i-1}\le b_o\phi^*
$$
By Lemma \ref{lm:kmdist_cdist}, we also have
$$
\phi^{i-1}-\phi^*\le \Delta^{i-1}\le b_o\phi^*
$$ 
By Cauchy-Schwarz,
\begin{eqnarray*}
\sum_r n_r^*\langle c_r^{i-1}-c_r^*,\hat{c}_r^i-c_r^{i-1}\rangle\\
\le
\sqrt{\sum_rn_r^*\|c_r^{i-1}-c_r^*\|^2}\sqrt{\sum_rn_r^*\|\hat{c}_r^i-c_r^{i-1}\|^2}
\end{eqnarray*}
Now, since 
$\hat{c}_r^i$ is the mean of a subset of $A_r^{i}$,
$$
\|\hat{c}_r^i-c_r^{i-1}\|^2\le \phi_r^{i-1}
$$
Hence
$$
\sum_rn_r^*\|\hat{c}_r^i-c_r^{i-1}\|^2
\le
n\phi^{i-1}
$$
On the other hand,
\begin{eqnarray*}
\sum_rn_r^*\|c_r^{i-1}-E[\hat{c}_r^i|F_{i-1}]\|^2
=\sum_rn_r^*\|c_r^{i-1}-m(A_r^i)\|^2 \\
\le n\sum_r\phi(c_r^{i-1})-\phi(m(A_r^i))\\
= n [\phi^{i-1}-\phi(m(A^i))]
\le n(\phi^{i-1}-\phi^*)
\end{eqnarray*}
Now we first bound \eqref{noise2}:
\begin{eqnarray*}
\sum_rn_r^*\langle c_r^{i-1}-c_r^*,\hat{c}_r^i-E[\hat{c}_r^i|F_{i-1}]\rangle\\
=\sum_r n_r^*\langle c_r^{i-1}-c_r^*,\hat{c}_r^i-c_r^{i-1}\rangle\\
+\sum_r n_r^*\langle c_r^{i-1}-c_r^*,c_r^{i-1}-E[\hat{c}_r^i|F_{i-1}]\rangle\\
\le
\sqrt{\Delta^{i-1}}\sqrt{n\phi^{i-1}}+\sqrt{\Delta^{i-1}}\sqrt{n(\phi^{i-1}-\phi^*)}\\
\le
\sqrt{b_o\phi^*}\sqrt{n(b_o+1)\phi^*}+\sqrt{n}b_o\phi^*
\le
2(b_o+1)\sqrt{n}\phi^*
\end{eqnarray*}
To bound \eqref{noise3},
\begin{eqnarray*}
\sum_rn_r^*\|\hat{c}_r^i-c_r^*\|^2\\
\le 2\sum_rn_r^*\|\hat{c}_r^i-c_r^{i-1}\|^2+2\sum_rn_r^*\|c_r^{i-1}-c_r^*\|^2\\
\le
2n\phi^{i-1} + 2\Delta^{i-1}
\le 2n(b_o+1)\phi^*+2b_o\phi^*
\le 4n(b_o+1)\phi^*
\end{eqnarray*}
\end{proof}
\begin{claim}
\label{claim:conv_hull}
In the context of Algorithm \ref{alg:MBKM}, if $\forall c_r^t\in C^t, c_r^t\in conv(X)$, 
then $\forall c_r^{t+1}\in C^{t+1}$, $c_r^{t+1}\in conv(X)$.
\end{claim}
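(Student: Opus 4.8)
The plan is to use the convexity of $conv(X)$ together with the explicit form of the centroid update in Algorithm \ref{alg:MBKM}. First I would record the elementary fact that $conv(X)$ is a convex set containing $X$, so any average $\hat c_r^{t+1}=\frac{1}{\hat n_r^{t+1}}\sum_{s\in S^{t+1}_r}s$ of a non-empty multiset of sampled points $S^{t+1}_r\subseteq X$ is a finite convex combination of elements of $X$ and hence lies in $conv(X)$.

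Next I would split into the two cases that occur in Algorithm \ref{alg:MBKM}. If cluster $r$ is sampled at iteration $t+1$ (i.e.\ $\hat n_r^{t+1}\ne 0$), then $c_r^{t+1}=(1-\eta_r^{t+1})c_r^t+\eta_r^{t+1}\hat c_r^{t+1}$, which is a convex combination of $c_r^t\in conv(X)$ (by the hypothesis $C^t\subset conv(X)$) and $\hat c_r^{t+1}\in conv(X)$ (by the previous paragraph); since $conv(X)$ is convex, $c_r^{t+1}\in conv(X)$, provided $\eta_r^{t+1}\in[0,1]$. If cluster $r$ is not sampled, then $c_r^{t+1}=c_r^t\in conv(X)$ trivially; equivalently one may set $\hat c_r^{t+1}=c_r^t$ and apply the same argument.

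The only point that needs a word of justification is $\eta_r^{t+1}\in[0,1]$: for the adaptive rate $\eta_r^t=\hat n_r^t/\sum_{i\le t}\hat n_r^i$ this holds by definition, and for the flat rate $\eta^t=c'/(t+t_o)$ used in Theorems \ref{thm:km} and \ref{thm:solution} it holds because the prescribed lower bound on $t_o$ makes $c'/(1+t_o)\le 1$, so $\eta^t\le 1$ for all $t\ge 1$. Combining the two cases gives the claim, and iterating it from the hypothesis $C^0\subset conv(X)$ yields $C^t\subset conv(X)$ for all $t\ge 1$, as invoked in the remark after Theorem \ref{thm:km}. I do not expect any genuine obstacle here: the statement is essentially the observation that each centroid update is a convex interpolation between the previous centroid and an average of data points.
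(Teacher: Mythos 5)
Your proof is correct and follows essentially the same route as the paper: $\hat c_r^{t+1}$ is an average of sampled points and hence lies in $conv(X)$, so the update $c_r^{t+1}=(1-\eta_r^{t+1})c_r^t+\eta_r^{t+1}\hat c_r^{t+1}$ is a convex combination of two points of $conv(X)$. Your extra checks (the unsampled case $c_r^{t+1}=c_r^t$ and the verification that $\eta_r^{t+1}\in[0,1]$, which the paper leaves implicit in the phrase ``convex combination'') are welcome but do not change the argument.
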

\begin{proof}[Proof of Claim]
By the update rule in Algorithm \ref{alg:MBKM},
$c_r^{t+1}$ is a convex combination of $c^t_r$ and $\hat{c}_r^{t+1}$,
where $\hat{c}_r^{t+1}$ is the mean of a subset of $X$, and hence $\hat{c}_r^{t+1}\in conv(X)$.
Since both $c_r^t$ and $\hat{c}_r^{t+1}$ are in $conv(X)$, 
$c_r^{t+1}\in conv(X)$.
\end{proof}
\begin{lm}[technical lemma]
\label{lm:tech}
For any fixed $b\in (1,2]$.
If $C\ge\frac{b-1}{3},\delta\le\frac{1}{e}$, and $t\ge (\frac{3C}{b-1}\ln\frac{1}{\delta})^{\frac{2}{b-1}}$, then
$
t^{b-1}-2C\ln t - C\ln\frac{1}{\delta}> 0
$. 
\end{lm}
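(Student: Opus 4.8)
The plan is to reduce the claim, via the substitution $u = t^{(b-1)/2}$, to a single one–variable inequality which is then handled with the elementary estimate $\ln x \le x/e$. Write $\gamma := b-1 \in (0,1]$ and $L := \ln\tfrac1\delta \ge 1$; the goal is $t^{\gamma} - 2C\ln t - CL > 0$. Since $C \ge \gamma/3$ we have $\tfrac{3C}{\gamma} \ge 1$, so the hypothesis $t \ge \bigl(\tfrac{3C}{\gamma}L\bigr)^{2/\gamma}$ forces $u := t^{\gamma/2} \ge \tfrac{3C}{\gamma}L \ge 1$, in particular $t \ge 1$. Using $\ln t = \tfrac{2}{\gamma}\ln u$ and $t^{\gamma} = u^2$, the target inequality becomes
\[
u^2 > \frac{4C}{\gamma}\ln u + CL .
\]

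First I would bound the logarithmic term. The tangent–line inequality $\ln x \le x/e$, valid for all $x>0$ with equality at $x=e$, gives $\tfrac{4C}{\gamma}\ln u \le \tfrac{4C}{\gamma e}u$. Next, rearranging $u \ge \tfrac{3C}{\gamma}L$ yields $CL \le \tfrac{\gamma}{3}u$. Substituting both estimates, it suffices to prove
\[
\Bigl(\frac{4C}{\gamma e} + \frac{\gamma}{3}\Bigr) u < u^2, \qquad\text{i.e.}\qquad \frac{4C}{\gamma e} + \frac{\gamma}{3} < u .
\]

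Since $L \ge 1$ we have $u \ge \tfrac{3C}{\gamma}L \ge \tfrac{3C}{\gamma}$, so the remaining step is the purely algebraic inequality $\tfrac{4C}{\gamma e} + \tfrac{\gamma}{3} < \tfrac{3C}{\gamma}$, equivalently (multiplying by $\gamma$) $C\bigl(3 - \tfrac{4}{e}\bigr) > \tfrac{\gamma^2}{3}$. This holds because $C \ge \tfrac{\gamma}{3}$ gives $C\bigl(3-\tfrac4e\bigr) \ge \tfrac{\gamma}{3}\bigl(3-\tfrac4e\bigr)$, and $3-\tfrac4e > 1 \ge \gamma$ gives $\tfrac{\gamma}{3}\bigl(3-\tfrac4e\bigr) > \tfrac{\gamma^2}{3}$. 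Chaining the inequalities back up yields $t^{\gamma} > 2C\ln t + CL$, as required.

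The only point requiring any care is the choice of bound for $\ln t$: a cruder estimate (e.g.\ $\ln u \le u$, which would leave the coefficient $\tfrac{4C}{\gamma}$ in front of $u$) cannot be absorbed into the available slack $\tfrac{3C}{\gamma}$, so using the sharp constant $1/e$ is exactly what makes the constants close. Everything else is routine bookkeeping, so I do not anticipate a genuine obstacle.
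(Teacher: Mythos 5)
Your proof is correct. After the substitution $u=t^{(b-1)/2}$, the reduction to $u^2>\frac{4C}{\gamma}\ln u+CL$ is exact, the tangent-line bound $\ln u\le u/e$ and the bound $CL\le\frac{\gamma}{3}u$ (from $u\ge\frac{3C}{\gamma}L$) are both valid, and the residual constant inequality $\frac{4C}{\gamma e}+\frac{\gamma}{3}<\frac{3C}{\gamma}$ does follow from $C\ge\gamma/3$ together with $3-\frac{4}{e}>1\ge\gamma$ (this is the one place where $b\le 2$ enters); the strictness is preserved throughout. The route is genuinely different from the paper's: the paper defines $f(t)=t^{b-1}-2C\ln t-C\ln\frac{1}{\delta}$, shows via the derivative that $f$ is nondecreasing for $t\ge(\frac{2C}{b-1})^{\frac{1}{b-1}}$, checks that the hypothesized threshold exceeds this, and then verifies positivity by explicitly evaluating $f$ at the threshold, using the elementary fact $x-\ln(2x)>0$ and the assumption $C\ge\frac{b-1}{3}$ to sign the two resulting terms. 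Your argument dispenses with the monotonicity step entirely: the estimates $\ln u\le u/e$ and $CL\le\frac{\gamma}{3}u$ hold pointwise for every admissible $t$, so you get the conclusion uniformly above the threshold without any calculus on $f$, at the price of needing the sharp constant $1/e$ in the logarithm bound (as you note, $\ln u\le u$ is too lossy). The paper's endpoint evaluation is slightly more mechanical but transparently shows where the exponent $\frac{2}{b-1}$ in the threshold comes from; your version is shorter, avoids differentiating, and in fact would tolerate $b-1$ slightly larger than $1$. Either proof is acceptable here.
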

\begin{proof}
Let $f(t):=t^{b-1}-2C\ln t - C\ln\frac{1}{\delta}$.
Taking derivative, we get
$f^{\prime}(t)=(b-1)t^{b-2}-\frac{2C}{t}\ge0$ when $t\ge (\frac{2C}{b-1})^{\frac{1}{b-1}}$. 
Since $\ln\frac{1}{\delta}\frac{3C}{b-1}\ge\frac{3C}{b-1}\ge1$,
$(\ln\frac{1}{\delta}\frac{3C}{b-1})^{\frac{2}{b-1}}\ge(\frac{2C}{b-1})^{\frac{1}{b-1}}$,
it suffices to show $f((\ln\frac{1}{\delta}\frac{3C}{b-1})^{\frac{2}{b-1}})> 0$ for our statement to hold.
$f((\ln\frac{1}{\delta}\frac{3C}{b-1})^{\frac{2}{b-1}})
=(\ln\frac{1}{\delta}\frac{3C}{b-1})^{2}
-2C\ln\{(\ln\frac{1}{\delta}\frac{3C}{b-1})^{\frac{2}{b-1}}\}-C\ln\frac{1}{\delta}
=(\ln\frac{1}{\delta})^{2}\frac{9C^2}{(b-1)^2}
-\frac{4C}{b-1}\ln(\ln\frac{1}{\delta}\frac{3C}{b-1})
-C\ln\frac{1}{\delta}
=\frac{4C}{b-1}[\frac{\frac{3}{2}C}{b-1}\ln\frac{1}{\delta}
-\ln(\frac{3C}{b-1}\ln\frac{1}{\delta})]
+C\ln\frac{1}{\delta}[\frac{3C}{(b-1)^2}-1]
>0
$,
where the first term is greater than zero because $x-\ln(2x)>0$ for $x>0$,
and the second term is greater than zero by our assumption on $C$.
\end{proof}
\begin{lm}[Lemma D1 of \cite{balsubramani13}]
\label{lm:tech_2}
Consider a nonnegative sequence $(u_t: t\ge t_o)$, such that for some constants $a,b>0$
and for all $t>t_{o}\ge 0$,
$u_t\le (1-\frac{a}{t})u_{t-1}+\frac{b}{t^2}$.
Then, if $a>1$,
$$
u_t\le (\frac{t_o+1}{t+1})^a u_{t_o} + \frac{b}{a-1}(1+\frac{1}{t_o+1})^{a+1}\frac{1}{t+1}
$$
\end{lm}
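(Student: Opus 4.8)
The plan is to unroll the one-step recursion into closed form and then control the resulting product and sum by elementary exponential and integral comparisons.

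First I would iterate the inequality $u_t \le (1-\frac{a}{t})u_{t-1} + \frac{b}{t^2}$ from index $t_o$ up to $t$, obtaining
\[
u_t \le \Big(\prod_{s=t_o+1}^{t}\big(1-\tfrac{a}{s}\big)\Big)u_{t_o} + \sum_{s=t_o+1}^{t}\frac{b}{s^2}\prod_{j=s+1}^{t}\big(1-\tfrac{a}{j}\big).
\]
(For $t\le a$ some factors $1-\frac{a}{s}$ can be nonpositive; since each $u_{t-1}\ge 0$ this only makes the bound easier, so I would assume $t_o\ge a$, as holds in all our applications, and dispose of the finitely many small indices directly.) The decisive estimate is
\[
\prod_{j=s+1}^{t}\big(1-\tfrac{a}{j}\big)\le \exp\Big(-a\sum_{j=s+1}^{t}\tfrac{1}{j}\Big)\le \exp\Big(-a\ln\tfrac{t+1}{s+1}\Big)=\Big(\tfrac{s+1}{t+1}\Big)^{a},
\]
using $1-x\le e^{-x}$ and $\sum_{j=s+1}^{t}\frac{1}{j}\ge \int_{s+1}^{t+1}\frac{dx}{x}$.

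Applying this with $s=t_o$ bounds the homogeneous term by $(\frac{t_o+1}{t+1})^{a}u_{t_o}$. For the noise term I would write $\sum_{s}\frac{b}{s^2}(\frac{s+1}{t+1})^{a}=\frac{b}{(t+1)^{a}}\sum_{s=t_o+1}^{t}\frac{(s+1)^{a}}{s^{2}}$, estimate $\frac{(s+1)^{a}}{s^{2}}=s^{a-2}(1+\frac{1}{s})^{a}\le (1+\frac{1}{t_o+1})^{a}\,s^{a-2}$, and then bound $\sum_{s=t_o+1}^{t}s^{a-2}\le \frac{(t+1)^{a-1}}{a-1}$ by a one-line integral comparison valid for every $a>1$ (the map $x\mapsto x^{a-2}$ is monotone on each relevant range). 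Multiplying out gives the noise contribution $\le \frac{b}{a-1}(1+\frac{1}{t_o+1})^{a}\frac{1}{t+1}$, which is dominated by the claimed $\frac{b}{a-1}(1+\frac{1}{t_o+1})^{a+1}\frac{1}{t+1}$; adding the two pieces yields the lemma.

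The step I expect to require the most care is the bookkeeping in the noise-sum estimate: aligning the exponents and the stray $(1+\frac{1}{t_o+1})$ factors so that a clean $\Theta(1/(t+1))$ tail emerges, and making the $\sum s^{a-2}$ integral comparison uniform across the regimes $1<a<2$, $a=2$, and $a>2$. An alternative route, perhaps cleaner, is a short induction on $t$: the base case $t=t_o$ is immediate from nonnegativity of the noise term, and the inductive step reduces to the two elementary inequalities $(1-\frac{a}{t})(1+\frac{1}{t})^{a}\le 1$ for the homogeneous part (from $\ln(1-x)\le -x$ and $a\ln(1+x)\le ax$) and $(a-1)\cdot\frac{b}{a-1}(1+\frac{1}{t_o+1})^{a+1}\ge b\cdot\frac{t}{t+1}$ for the noise part — the exponent $a+1$ being chosen precisely so that the induction is self-sustaining.
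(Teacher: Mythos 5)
Correct, and worth noting: the paper does not prove this lemma at all --- it is imported verbatim as Lemma D1 of \cite{balsubramani13} --- so your unrolling argument is a self-contained substitute rather than a variant of anything in the text. Your two key estimates check out: $\prod_{j=s+1}^{t}(1-\frac{a}{j})\le(\frac{s+1}{t+1})^{a}$ via $1-x\le e^{-x}$ and the integral comparison for the harmonic tail, and $\sum_{s=t_o+1}^{t}s^{a-2}\le\frac{(t+1)^{a-1}}{a-1}$, which indeed holds for every $a>1$ by the monotone-in-either-direction comparison you indicate (increasing case $a\ge 2$: $f(s)\le\int_{s}^{s+1}f$; decreasing case $1<a<2$: $f(s)\le\int_{s-1}^{s}f$). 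Your bookkeeping even yields the noise constant with exponent $a$ instead of $a+1$, so the stated bound follows a fortiori.

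The only point needing care is the one you flag yourself: the unrolled identity is legitimate only when every coefficient $1-\frac{a}{s}$, $s=t_o+1,\dots,t$, is nonnegative, i.e.\ $t_o\ge a-1$, whereas the lemma as stated allows any $t_o\ge 0$ and $a>1$. Your remark that nonnegativity of $u$ lets you ``dispose of the finitely many small indices'' is right in spirit but is not quite automatic: for $s<a$ one gets $u_s\le b/s^2$ directly, and one must then restart the unrolling at $s_0=\lceil a\rceil$ and verify that the restarted homogeneous term $(\frac{s_0+1}{t+1})^{a}\frac{b}{s_0^2}$ is absorbed into the claimed $\frac{b}{a-1}(1+\frac{1}{t_o+1})^{a+1}\frac{1}{t+1}$ (it is, since it decays like $(t+1)^{-a}$ and $(t+1)^{a-1}\ge a^{a-1}$ in that range), rather than invoking $u_{t_o}$, which no longer appears. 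This is a two-line patch, and in every application in this paper $t_o$ exceeds $a$ by orders of magnitude, so nothing is lost there; your sketched induction with the $(1-\frac{a}{t})(1+\frac{1}{t})^{a}\le 1$ step is also a viable cleaner route, subject to the same caveat for iterations with $t<a$.
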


\end{document}